\def\eqref#1{Eq.~\ref{#1}}
\def\1{\bm{1}}
\DeclareMathAlphabet{\mathsfit}{\encodingdefault}{\sfdefault}{m}{sl}
\SetMathAlphabet{\mathsfit}{bold}{\encodingdefault}{\sfdefault}{bx}{n}
\newtheorem{theorem}{Theorem}
\newtheorem{definition}{Definition}
\newtheorem*{theorem*}{Theorem}
\newcommand{\vct}[1]{\boldsymbol{#1}}
\newcommand{\norm}[1]{\left\| #1 \right\|}
\newcommand{\abs}[1]{\left| #1 \right|}
\def\I{\mathbf{I}}
\def\g{\mathbf{g}}
\def\x{\mathbf{x}}
\def\z{\mathbf{z}}
\def\bmu{\boldsymbol{\mu}}
\def\0{\mathbf{0}}
\def\1{\mathbf{1}}
\newcommand{\tomtm}[1]{\textcolor{black}{#1}}
\title{Does the Data Processing Inequality Reflect Practice? On the  Utility of Low-Level Tasks}
\author{Roy Turgeman %
\\
Faculty of Engineering\\
Bar-Ilan University\\
Ramat Gan, Israel \\
\And
Tom Tirer \\
Faculty of Engineering \\
Bar-Ilan University \\
Ramat Gan, Israel \\
}
\begin{document}

\maketitle

\begin{abstract}
The data processing inequality is an information-theoretic principle stating that the information content of a signal cannot be increased by processing the observations. In particular, it suggests that there is no benefit in enhancing the signal or encoding it before addressing a classification problem.
This assertion can be proven to be true for the case of the optimal Bayes classifier.
However, in practice, it is common to perform ``low-level'' tasks before ``high-level'' downstream tasks despite the overwhelming capabilities of modern deep neural networks. 
In this paper, we aim to understand when and why low-level processing can be beneficial for classification. 
We present a comprehensive theoretical study of a binary classification setup, where we consider a classifier that is tightly connected to the optimal Bayes classifier and converges to it as the number of training samples increases. We prove that for any finite number of training samples, there exists a pre-classification processing that improves the classification accuracy. 
We also explore the effect of class separation, training set size, and class balance on the relative gain from this procedure.
We support our theory with an empirical investigation of the theoretical setup.
Finally, we conduct an empirical study where we investigate the effect of denoising and encoding on the performance of practical deep classifiers on benchmark datasets.
Specifically, we vary the size and class distribution of the training set, and the noise level, and demonstrate trends that are consistent with our theoretical results.
\end{abstract}

\section{Introduction}
\label{sec:intro}

Deep neural networks (DNNs) have demonstrated remarkable performance across an extensive range of tasks, spanning from image and speech recognition to natural language processing and scientific discovery.
When the end goal is to address ``high-level'' tasks, e.g., classification and detection, a natural approach is to train a DNN to directly solve the task using the raw data/observations as input \citep{yim2017enhancing,hendrycks2018benchmarking,singh2019dual}.
Yet, it is a common practice to begin with addressing a ``low-level'' task in order to improve the quality of the input for the high-level task. 
Such low-level tasks include signal/image restorations 
\citep{tirer2018image,tirer2020back,zhang2021plug}
as considered in \citep{liu2018image,dai2016image,li2023detection,son2020urie,haris2021task,pei2018does}, or encoding to a learned embedding space \citep{lee2022fifo,zhou2017anomaly,wu2023denoising}.

This common pipeline, however, stands in contrast to the data processing inequality, a foundational concept in information theory \citep{cover1999elements}, which states that 
the information content of a signal cannot be increased by processing the observations. 
Concretely, consider the Markov chain of three random variables: $y \to x \to z$, 
{\color{black} which denotes that $z$ is independent of $y$ given $x$, i.e., $p_{z|x,y}(z|x,y) = p_{z|x}(z|x)$ in terms of probability distributions.}
This implies that $p_{x,y,z}(x,y,z)=p_y(y)p_{x|y}(x|y)p_{z|x}(z|x)=p_x(x)p_{y|x}(y|x)p_{z|x}(z|x)$.
The data processing inequality reads as
\begin{align}
\label{eq:dpi}
    I(x,y) \geq I(z,y)
\end{align}
where $I(x,y)$ is the mutual information of the random variables $x$ and $y$.\footnote{The mutual information is defined as $I(x,y)=\iint p_{x,y}(x,y)\log \left( \frac{p_{x,y}(x,y)}{p_x(x)p_y(y)} \right) \mathrm{d}x\mathrm{d}y$.}
In particular, if $y$ is the class of a data sample $x$, this implies that there is no ``benefit'' in low-level processing of the sample (e.g., obtaining $z$ by denoising $x$) before directly considering the classification problem.

Focusing on classification and referring to better (top-1) accuracy as ``benefit'', the previous assertion can be proven to be true for the case of the optimal Bayes classifier (more details in Section \ref{sec:background}).
Clearly, we expect performance gaps between practical classifiers and the optimal Bayes classifier. However, modern DNN-based classifiers reach outstanding classification performance, sometimes even exceeding human capabilities.
This raises the question: What can we say about the margin between this implication of the data processing inequality and practical classifiers? 
To the best of our knowledge, no prior work has attempted to theoretically and systematically investigate this question.

In this paper, we aim to understand when and why low-level processing can be beneficial for classification, even when the classifier is ``strong'' (e.g., converges to the optimal Bayes classifier when the number of training samples grows). 
Our main contributions include:

\begin{itemize}[leftmargin=*]
    \item 
    We present a theoretical study of a binary classification setup, where we consider a classifier that is tightly connected to the optimal Bayes classifier (and converges to it). In the high-dimensional setting, we prove that for any finite number of training samples, there exist a pre-classification processing (specifically, a dimensionality reduction procedure) that improves the classification accuracy.

    \item 
     We establish theoretical results on the effect of various factors, such as the number of training samples, the level of class separation and training set imbalance, on the relative gain from the data processing procedure that we construct. \tomtm{For example, we show that, non-intuitively, the maximal relative gain increases when the class separation improves.}

     \item 
     We present an empirical investigation of the theoretical model that corroborates our theory and sheds more light on the gains from low-level processing. %

    \item 
    We complement our theoretical work with an empirical study. We investigate the effect of image denoising and self-supervised encoding on the performance of practical deep classifiers on benchmark datasets, where we vary the size of the training set, the class  distribution in the training set, and the noise level in the samples. We demonstrate trends that are consistent with our theoretical results \tomtm{(e.g., the one on the maximal gain)}, highlighting the usefulness of the theoretical setup. %

\end{itemize}

\section{Background and related work}
\label{sec:background}

Consider the classification task, where the data $(x,y)$ is distributed on $\mathcal{X} \times [C]$, with $[C]:=\{1,\ldots,C\}$ and distribution denoted by $p_{x,y}$.
For the binary $0-1$ criterion, i.e., $\ell(\hat{y},y)=\mathbb{I}(\hat{y} \neq y)$, the expected risk is equivalent to the error probability $\mathbb{E}[\ell(\hat{y}(x),y)]=\mathbb{P}(\hat{y}(x) \neq y)$. It is well-known that this objective is minimized by the (optimal) Bayes classifier: $c_{opt}(x)=\textrm{argmax}_{y \in [C]}\,\,p_{y|x}(y|x)$, where $p_{y|x}$ is the true conditional probability of $y$ given $x$ \citep{bishop2006pattern,fukunaga2013introduction}. 
In practice, of course, the distributions are unknown and a classifier must be learned from data samples. %

Consider a data processing operation $\mathcal{A}:\mathcal{X} \to \mathcal{Z}$. 
This can be denoising, super-resolution, encoding, etc.  Let $z=\mathcal{A}(x)$.
Notice that $y \to x \to z$ is a Markov chain because $z$ is a function of $x$ and thus $p_{z|x,y}(z|x,y)=p_{z|x}(z|x)$.
Therefore, the data processing inequality in \eqref{eq:dpi} holds.
The optimal Bayes classifier that operates on a processed sample $z=\mathcal{A}(x)$ is given by $\tilde{c}_{opt}(z)=\textrm{argmax}_{y \in [C]}\,\,p_{y|z}(y|z)$, where $p_{y|z}$ is the true conditional probability of $y$ given $z$.

Focusing on the case of binary classification ($C=2$), the following result shows that, similarly to the fact that no $\mathcal{A}$ can increase \textcolor{black}{mutual} information, there is also no hope in improving the accuracy of optimal Bayes classifiers via data processing.

\begin{theorem}
\label{thm:no_gain_Bayes}
    Let $y \to x \to z$ be a Markov chain where $y \in \{1,2\}$ denotes the sample class. We have
    \begin{align}
        \label{eq:no_gain_Bayes}
    \mathbb{P}(c_{opt}(x) \neq y) \leq  \mathbb{P}(\tilde{c}_{opt}(z) \neq y),
    \end{align}
    where $c_{opt}$ and $\tilde{c}_{opt}$ denote optimal Bayes classifiers.
\end{theorem}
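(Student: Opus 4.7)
The plan is to avoid invoking the data processing inequality for mutual information directly and instead reason pointwise about posterior probabilities, exploiting only the conditional independence $y \perp z \mid x$ encoded by the Markov chain $y \to x \to z$.

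First, I would rewrite both sides of \eqref{eq:no_gain_Bayes} as expectations of the Bayes posterior. By the definition of the Bayes classifier, the error of $c_{opt}$ equals $\mathbb{E}_x[\,1 - \max_{y\in\{1,2\}} p_{y|x}(y|x)\,]$, and likewise the error of $\tilde c_{opt}$ equals $\mathbb{E}_z[\,1 - \max_{y\in\{1,2\}} p_{y|z}(y|z)\,]$. Hence \eqref{eq:no_gain_Bayes} is equivalent to the reverse inequality for the maximal posteriors,
\begin{equation*}
\mathbb{E}_z\!\left[\max_{y}\, p_{y|z}(y|z)\right] \;\leq\; \mathbb{E}_x\!\left[\max_{y}\, p_{y|x}(y|x)\right].
\end{equation*}

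Next, I would use the Markov property to express $p_{y|z}$ as a conditional expectation of $p_{y|x}$. Because $y \perp z \mid x$, we have $p_{y|x,z}(y|x,z)=p_{y|x}(y|x)$, so the tower property yields
\begin{equation*}
p_{y|z}(y|z) \;=\; \mathbb{E}\!\left[\,p_{y|x}(y|x)\,\big|\,z\,\right].
\end{equation*}
Taking the maximum over $y$ and applying the elementary inequality $\max_y \mathbb{E}[f_y\mid z] \leq \mathbb{E}[\max_y f_y \mid z]$ (a Jensen-type bound using that $\max$ is convex, or simply that the maximum of expectations is dominated by the expectation of the maximum) gives
\begin{equation*}
\max_{y} p_{y|z}(y|z) \;\leq\; \mathbb{E}\!\left[\,\max_y p_{y|x}(y|x)\,\big|\,z\,\right].
\end{equation*}

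Finally, I would take the expectation over $z$ on both sides and use the tower property once more to cancel the conditioning on $z$, arriving at exactly the inequality displayed above, which is equivalent to \eqref{eq:no_gain_Bayes}. The main technical point—and the only place the Markov structure is used—is the step $p_{y|x,z}=p_{y|x}$; the rest is bookkeeping with the maximum operator and iterated expectations. I do not anticipate a real obstacle, since the argument is just a posterior-averaging version of the optimality of the Bayes rule, and as a bonus it generalizes transparently to $C>2$ classes without any change.
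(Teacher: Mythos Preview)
Your argument is correct, but it takes a genuinely different route from the paper. The paper works in the likelihood representation: it writes the binary Bayes error as $\tfrac{1}{2}-\tfrac{1}{2}\sum_{\xi}|P_1 p_{x_1}(\xi)-P_2 p_{x_2}(\xi)|$ (and similarly for $z$), uses the Markov property to express $p_{z_i}(\zeta)=\sum_\xi p_{z|x}(\zeta|\xi)\,p_{x_i}(\xi)$, and then applies the triangle inequality inside the $L^1$ sum to show that the total variation between the class-conditional distributions can only shrink under the transition $x\to z$. Your approach instead works in the posterior representation: you express the Bayes error as $1-\mathbb{E}[\max_y p_{y|\cdot}]$, use the Markov property as $p_{y|x,z}=p_{y|x}$ to get $p_{y|z}=\mathbb{E}[p_{y|x}\mid z]$, and then apply the ``max of expectations $\le$ expectation of max'' inequality together with the tower property. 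Both proofs hinge on the same conditional-independence step; what differs is the inequality that carries the weight (triangle inequality versus Jensen-type bound for $\max$). Your version has the advantage of extending verbatim to $C>2$ classes, whereas the paper's total-variation identity is specific to the binary case as written.
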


A similar statement and proof can be found in an arXiv version of \citep{liu2019classification}. %
For completeness, we present a clearer proof  in Appendix \ref{app:proofs}. 
\tomtm{Note that \citep{liu2019classification} studies a potential tradeoff between the error of a low-level restoration task and the accuracy of a \emph{fixed} classifier, where only the restoration model is trained using the training data.
In contrast, our work focuses on the high-level end goal---the classification performance---and allows training the classifier after the low-level processing, as is done in practice.
Therefore, \citep{liu2019classification} does not provide any reason why in practice it is common to address a low-level task before high-level ones, which is the central question of our paper.}

Our work is motivated by the contrast between common practice and the information-theoretic concept of the data processing inequality, as well as Theorem \ref{thm:no_gain_Bayes}.
There exist works that use information-theoretic concepts or compute approximate metrics to analyze DNNs, e.g., \citep{tishby2015deep,shwartz2017opening,saxe2019information,gabrie2018entropy,jeon2022information}. Interestingly, since a DNN processes data gradually, layer by layer, the features across the layers form a Markov chain, and thus the data processing inequality applies.
Yet, avoiding the loss of information relevant to the task being learned can be attributed to penalizing failures in predicting the target labels during training, while discarding task-irrelevant information (akin to compression) may be explained by the information bottleneck principle \citep{tishby2015deep,shwartz2017opening,saxe2019information}.
\tomtm{The contrast between representation learning and the data processing inequality has also motivated theoretical works \citep{Xu2020Theory,goldfeld2021sliced} to study variants of the mutual information, incorporating transformations of the signal or line projections.} 
None of the aforementioned works consider a sequence of tasks or explain when and why low-level processing can be beneficial to practical 
classifiers. 
\tomtm{Moreover, here we directly analyze the classifier's probability of error, which is more interpretable than the information-theoretic objectives studied before.}

Finally, we emphasize that in the case of pre-trained classifier under distribution shift, data processing that `reduces the gap' between the test data distribution and the training data distribution is trivially expected to improve the classifier performance. However, we focus in this paper on the non-intuitive case where no distribution shift occurs, and the classifier is strong, in the sense that it converges to the optimal Bayes classifier as the training set increases (with good statistical properties). %

\section{Theory}
\label{sec:theory}

In this section, we present our theoretical contributions. First, we describe the problem setup, the data distribution, the classifier under study, and a data processing operation. %
Next, we present our theoretical results demonstrating the benefits of this data processing. Finally, we validate our results through experiments and provide additional insights into the factors that affect the performance gain.

\subsection{Problem setup: data model, classifier, and data processing}
\label{sec:setup}

\textbf{Data model.} 
Similar to a vast body of theoretical work on classifiers \citep{cao2021risk,deng2022model,wang2022binary,kothapalli2025can}, we consider binary classification ($C = 2$), where the data is distributed according to a Gaussian Mixture Model (GMM) of order two in $\mathcal{X}=\mathbb{R}^d$, with one mixture component per class. 
Formally,
\begin{equation}
\label{eq:gmm_setup}
y \in \{1,2\},
\qquad
\vct{x} \ | \ y = j \sim \mathcal{N}(\boldsymbol{\mu}_j, \sigma_j^2\boldsymbol{I}_d),
\qquad
\mathbb{P}(y=j) = \pi_j.
\end{equation}
Similar to previous theoretical works, 
we further assume that
\begin{equation}
\label{eq:model assumptions}
\boldsymbol{\mu}_2=-\boldsymbol{\mu}_1=\boldsymbol{\mu},
\qquad
\sigma_1^2=\sigma_2^2=\sigma^2,
\qquad
\pi_1=\pi_2=1/2,
\end{equation}
{\color{black} where the magnitudes of the entries of $\bmu$ are bounded by some universal constant, and $\sigma$ is independent of $d$.} %
Let us now define the separation quality factor of the GMM data, which can be understood as the signal-to-noise ratio (SNR): %
\begin{equation}
    \mathcal{S} := \left(\frac{\norm{\boldsymbol{\mu}_2 - \boldsymbol{\mu}_1}}{\sigma_1+\sigma_2}\right)^2 = \frac{\norm{\boldsymbol{\mu}}^2}{\sigma^2}.
\label{eq:quality factor}
\end{equation}
\tomtm{
Note that the considered setup is standard in theoretical works that aim at rigorous mathematical analysis \citep{cao2021risk,deng2022model,wang2022binary,kothapalli2025can}.
Despite its compactness, the learning problem studied in this paper can be arbitrarily hard because (unlike some of the aforementioned works) our analysis covers SNR arbitrarily close to zero, i.e., nearly indistinguishable classes.}

The training data consists of $N_j$ labeled i.i.d.~samples per class $j$, denoted by
$\mathcal{D} = \{\vct{x}_{i,j}: j\in\{1,2\}, i=1,\ldots,N_j\}$. %
Without loss of generality, we denote $N_1=N$ and $N_2 = \gamma N$ for some $\gamma \in (0,1]$.

\textbf{The classifier.} In the considered setting, the optimal Bayes classifier reads:
\begin{align*}
    c_{opt}(\vct{x}) &= \underset{j\in\{1,2\}}{\arg\max}  \ \pi_j p_{x| y}(\vct{x}|j) = \underset{j\in\{1,2\}}{\arg\max}  \ \exp\left(-\tfrac{\norm{\vct{x}-\boldsymbol{\mu}_j}^2}{2\sigma^2}\right)
    =\underset{j\in\{1,2\}}{\arg\min} \norm{\vct{x}-\boldsymbol{\mu}_j}.
\end{align*}

In practice, the data distribution is unknown and thus a classifier cannot use the class means, $\{\boldsymbol{\mu}_i\}$, but rather estimate them from the training set.
We therefore study the classifier:
\begin{align}
    \widehat{c}(\vct{x};\mathcal{D}) = \underset{j \in \{1,2\}}{\arg\min} \norm{\vct{x} - \boldsymbol{\widehat{\mu}}_j},
\label{eq:classifier}
\end{align}
where $\widehat{\boldsymbol{\mu}}_j = \tfrac{1}{N_j} \sum \nolimits_{i=1}^{N_j} \vct{x}_{i,j}$ is the maximum likelihood estimate of $\bmu_j$ from the $j$-th class's samples.

We want to explore if data processing can be beneficial even for a ``strong'' classifier.
It is easy to see that $\widehat{\boldsymbol{\mu}}_j \sim \mathcal{N}\left(\boldsymbol{\mu}_j, \tfrac{\sigma^2}{N_j} \boldsymbol{I}_d\right)$. In fact, this is an efficient estimator that attains \tomtm{the Cramér–Rao} lower bound on the variance for \emph{any} $N_j$ \citep{kay1993fundamentals}.
Therefore, in our setting, not only that $\widehat{c}(\cdot)$ is structurally similar to $c_{opt}(\cdot)$ and converges to it for $N_j\to \infty$, but it also has strong statistical properties for finite  $N_j$, making it a natural choice for our study. 
\tomtm{Demonstrating the benefit of low-level processing for such a classifier, which is ``almost optimal'' for the considered setup, underscores the potential advantages for weaker classifiers.}

\textbf{Data processing.} 
As the pre-classification data processing, we are going to study a certain linear dimensionality reduction to $1 \le k < d$. Specifically, we consider
\begin{equation*}
    \vct{z} = \boldsymbol{A} \vct{x}
\end{equation*}
with $\boldsymbol{A} \in \mathbb{R}^{k \times d}$ that obeys
\begin{align}
\label{eq:A_properties}
    \boldsymbol{A} \boldsymbol{A}^\top = \boldsymbol{I}_k, %
    \qquad    \norm{\boldsymbol{A}\boldsymbol{\mu}} = \norm{\boldsymbol{\mu}}.
\end{align}
\tomtm{Note that, for establishing our main theoretical claim on the practical limitation of \eqref{eq:no_gain_Bayes}, we just need the \emph{existence} of a processing for which we can rigorously show improved classification.
Nevertheless, in the sequel, we provide a constructive proof that also shows how such $\boldsymbol{A}$}
\emph{can be learned from unlabeled data} %
without 
prior knowledge of $\boldsymbol{\mu}$. Hence, showing that this procedure improves classification performance in our setup underscores the promise of practical low-level procedures learned from unlabeled data.

\textbf{Additional notations.} 
We will analyze and compare the performance of the classifier in \eqref{eq:classifier} before and after the data processing procedure, namely, $\widehat{c}(\vct{x};\mathcal{D})$ versus $\widehat{c}(\vct{z};\mathcal{D}_{\vct{z}})$, where $\mathcal{D}_{\vct{z}} = \{\vct{z}_{i,j}=\boldsymbol{A}\vct{x}_{i,j}: j\in\{1,2\}, i=1,\ldots,N_j\}$. 
We denote the probability of error in these two cases by $p_{\vct{x}}(\mathrm{error}):=\mathbb{P}(\widehat{c}(\vct{x};\mathcal{D})\neq y)$ and $p_{\vct{z}}(\mathrm{error}):=\mathbb{P}(\widehat{c}(\vct{z};\mathcal{D}_{\vct{z}})\neq y)$.
Finally, we define the widely-used $\mathcal{Q}$-function, which will be used to characterize the classification error probability:
\begin{align}
\label{eq:q_func}
    \mathcal{Q}(x) = \mathbb{P}\left(\mathcal{N}(0,1) > x\right) = \frac{1}{\sqrt{2\pi}} \int_{x}^\infty \exp\left(-\tfrac{t^2}{2}\right) \ dt.
\end{align}

\subsection{Theoretical results}
\label{sec:theoretical_results}

In this subsection, we present our theoretical results.
{\color{black}
In Section \ref{seq:theory_gain}, we prove that the error probability decreases due to the data processing. To this end, we establish expressions that accurately approximate the probability of error of the data-driven classifier before and after the processing. We then analyze their relation, where, due to different proof strategies, this is done separately for the balanced and imbalanced training set cases. 
In Section \ref{seq:theory_factors}, we provide a fine-grained analysis of the factors that affect the efficiency of the processing, and, for the balanced training set case, we also establish a connection between the maximal gain and the SNR. 
}

The proofs for all the claims are deferred to Appendix \ref{app:proofs}.

{\color{black}
\subsubsection{Performance gain due to data processing}
\label{seq:theory_gain}

}

We begin with characterizing the probability of error when the classifier is applied without pre-processing. Recall the definitions of $\mathcal{S}$ and $ \mathcal{Q}(x)$ in \eqref{eq:quality factor} and \eqref{eq:q_func}, respectively.

\begin{theorem}[The probability of error before the processing]\label{thm:thm2} 
Consider the setup in Section \ref{sec:setup}. With approximation accuracy $\mathcal{O}(1/\sqrt{d})$ we have $p_{\vct{x}}(\mathrm{error}) \approx \hat{p}_{\vct{x}}(\mathrm{error}) = 
    \hat{p}(\mathcal{S},N,\gamma,d)$, 
where
\begin{equation}
\begin{aligned}
    \hat{p}(\mathcal{S},N,\gamma,d)
    &:= \frac{1}{2}\cdot \mathcal{Q}\left( \frac{\sqrt{\mathcal{S}} + \frac{1}{4N}\cdot \frac{1-\gamma}{\gamma}\cdot \frac{d}{\sqrt{\mathcal{S}}}}{\sqrt{\frac{1}{4N}\cdot \frac{1+\gamma}{\gamma}\cdot \frac{d}{\mathcal{S}} + \frac{1}{8N^2} \cdot \frac{1+\gamma^2}{\gamma^2}\cdot \frac{d}{\mathcal{S}} + \frac{1}{\gamma N} + 1}}\right) \\
    &\quad + \frac{1}{2}\cdot  \mathcal{Q}\left(\frac{\sqrt{\mathcal{S}} - \frac{1}{4N}\cdot \frac{1-\gamma}{\gamma}\cdot \frac{d}{\sqrt{\mathcal{S}}}}{\sqrt{\frac{1}{4N}\cdot \frac{1+\gamma}{\gamma}\cdot \frac{d}{\mathcal{S}} + \frac{1}{8N^2} \cdot \frac{1+\gamma^2}{\gamma^2} \cdot \frac{d}{\mathcal{S}} + \frac{1}{N} + 1}}\right).
\end{aligned}
\label{approx prob x}
\end{equation}
\end{theorem}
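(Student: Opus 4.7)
The plan is to cast the probability of misclassification as the probability that a single scalar statistic is positive, compute its mean and variance in closed form, and then invoke a central-limit argument to replace its law by a Gaussian at the cost of an $\mathcal{O}(1/\sqrt{d})$ correction. By the balanced prior and the law of total probability,
\begin{equation*}
p_{\vct{x}}(\mathrm{error})=\tfrac{1}{2}\mathbb{P}(T>0\mid y=1)+\tfrac{1}{2}\mathbb{P}(T<0\mid y=2),
\end{equation*}
where $T:=\|\vct{x}-\widehat{\boldsymbol{\mu}}_1\|^2-\|\vct{x}-\widehat{\boldsymbol{\mu}}_2\|^2$ is the signed margin of the classifier in \eqref{eq:classifier}. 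The two conditional probabilities are symmetric, so I would first work out the $y=1$ case and obtain the $y=2$ one by relabelling.

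Condition on $y=1$ and write $\vct{x}=-\boldsymbol{\mu}+\sigma\bepsilon$, $\widehat{\boldsymbol{\mu}}_1=-\boldsymbol{\mu}+\tfrac{\sigma}{\sqrt{N}}\vct{\xi}_1$, and $\widehat{\boldsymbol{\mu}}_2=\boldsymbol{\mu}+\tfrac{\sigma}{\sqrt{\gamma N}}\vct{\xi}_2$, where $\bepsilon,\vct{\xi}_1,\vct{\xi}_2$ are independent $\mathcal{N}(\boldsymbol{0},\boldsymbol{I}_d)$ vectors. Expanding the two squared norms turns $T$ into a polynomial of total degree at most two in these three independent Gaussian vectors whose summands fall into three groups: (i) exactly Gaussian linear terms such as $\boldsymbol{\mu}^\top\bepsilon$ and $\boldsymbol{\mu}^\top\vct{\xi}_2$; (ii) bilinear forms $\bepsilon^\top\vct{\xi}_i$; and (iii) scaled chi-squared quadratic forms $\|\vct{\xi}_i\|^2$. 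Direct moment computations (using $\mathbb{E}[\|\vct{\xi}_i\|^2]=d$, $\mathrm{Var}[\|\vct{\xi}_i\|^2]=2d$, and $\mathbb{E}[(\bepsilon^\top\vct{\xi}_i)^2]=d$) together with the vanishing of all pairwise cross-covariances (by parity/independence) yield $\mathbb{E}[T\mid y=1]=-4\|\boldsymbol{\mu}\|^2-\tfrac{\sigma^2 d}{N}\cdot\tfrac{1-\gamma}{\gamma}$ and a variance that is the sum of the six individual contributions.

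Next, I would approximate $\mathbb{P}(T>0\mid y=1)$ by $\mathcal{Q}\!\left(-\mathbb{E}[T\mid y=1]/\sqrt{\mathrm{Var}[T\mid y=1]}\right)$ and argue that the error of this replacement is $\mathcal{O}(1/\sqrt{d})$. The non-Gaussian constituents of $T$, namely $\|\vct{\xi}_i\|^2$ and $\bepsilon^\top\vct{\xi}_i$, are each an i.i.d.\ sum of $d$ terms (a $\chi^2_1$ summand and a product of two independent standard normals, respectively) having finite third absolute moments, so a Berry--Esseen bound yields a Kolmogorov distance of order $1/\sqrt{d}$ between each of them and its Gaussian limit. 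A standard argument for linear combinations of such sums (e.g.\ via joint characteristic functions or Stein's method) extends this rate to $T$, and combining with the exact Gaussianity of the group-(i) terms gives the claimed $\mathcal{O}(1/\sqrt{d})$ control. Normalising the numerator and denominator by $4\sigma^2\sqrt{\mathcal{S}}=4\sigma\|\boldsymbol{\mu}\|$ and invoking $\mathcal{S}=\|\boldsymbol{\mu}\|^2/\sigma^2$ then reproduces the first $\mathcal{Q}$ term of \eqref{approx prob x}.

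The $y=2$ case is identical after the substitution $\boldsymbol{\mu}\to-\boldsymbol{\mu}$, which swaps which estimator plays the role of the ``correct'' class. This flips the sign of the drift $\tfrac{\sigma^2 d(1-\gamma)}{\gamma N}$ in the numerator and replaces the $1/(\gamma N)$ term in the denominator (originating from $\boldsymbol{\mu}^\top\vct{\xi}_2$) by $1/N$ (now coming from $\boldsymbol{\mu}^\top\vct{\xi}_1$), yielding precisely the second $\mathcal{Q}$ term. Averaging the two conditional probabilities with weights $\tfrac12$ completes the proof. The main obstacle is the Berry--Esseen-type step: one has to justify the Gaussian approximation uniformly at rate $1/\sqrt{d}$ for a linear combination of a quadratic and a bilinear form in Gaussian vectors, whereas all the remaining work is routine bookkeeping of second moments.
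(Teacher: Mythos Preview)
Your approach is correct and arrives at the same formula, but it is organised differently from the paper's proof, and the difference is worth noting. The paper writes the margin as $w=\sum_{i=1}^d y_i$ with $y_i$ depending only on the $i$-th coordinates of $\vct{x},\widehat{\boldsymbol{\mu}}_1,\widehat{\boldsymbol{\mu}}_2$; under the isotropic covariance these $y_i$ are independent, so the non-i.i.d.\ Berry--Esseen theorem applies \emph{directly} to the sum, yielding the $\mathcal{O}(1/\sqrt{d})$ rate without further work. The price is that the paper must compute a nonzero cross-term $\operatorname{Cov}(a_i,b_i)=-\tfrac{1-\gamma}{\gamma}\tfrac{2\sigma^2\mu_i^2}{N}$ between the linear and quadratic pieces of each $y_i$. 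Your centering parametrisation $(\bepsilon,\vct{\xi}_1,\vct{\xi}_2)$ is in this respect cleaner: the six building blocks you isolate really are pairwise uncorrelated, so the variance of $T$ is just the sum of six easily computed pieces, and the mean you state matches the paper's. What you trade away is the immediacy of the Gaussian-approximation step: grouped by type, the terms $\|\vct{\xi}_i\|^2$, $\bepsilon^\top\vct{\xi}_1$, $\bepsilon^\top\vct{\xi}_2$ are \emph{not} mutually independent (they share $\bepsilon$ or a $\vct{\xi}_i$), so the ``standard argument for linear combinations'' you allude to is not automatic. The simplest way to close that gap is to observe that your $T$ admits the very same coordinate-wise decomposition $T=\sum_i T_i$ with independent $T_i$, each a fixed polynomial in three independent standard normals with coefficients bounded by the assumption $|\mu_i|\le M$; then a single application of Berry--Esseen for independent summands gives the rate, exactly as in the paper. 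In short: your moment bookkeeping is slicker, the paper's CLT step is slicker, and combining the two gives the tidiest proof.
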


\textbf{Remark.} 
{\color{black} The proof is mathematically involved. We express the error event as thresholding a scalar random variable, suitable for an application of a generalized Berry–Esseen theorem. However, this variable depends on the interrelation between the entries of $\widehat{\bmu}_1$, $\widehat{\bmu}_2$, and computing the required moments is a technical challenge.} %

\textbf{Discussion.} 
Note that:
1) $\hat{p}$ is symmetric in the following sense:
        $\hat{p}\left(\mathcal{S}, N, \gamma, d\right) = \hat{p}\left(\mathcal{S}, \gamma N, \tfrac{1}{\gamma}, d\right)$, which is expected because swapping the amount of samples between the classes does not change the problem; 
{\color{black} 2) As $\mathcal{S} \rightarrow 0^+$ we have: $\lim_{\mathcal{S} \rightarrow 0^+} \hat{p}(\mathcal{S},N,\gamma,d) = 1/2$, %
aligned with uniform guess;
3) As $\mathcal{S} \rightarrow \infty$ we have: $\lim_{\mathcal{S} \rightarrow \infty} \hat{p}(\mathcal{S},N,\gamma,d) = 0$, aligned with the classes being deterministically separable;}
and
4) As $N \rightarrow \infty$ we have:
        $\lim_{N \rightarrow \infty} \hat{p}(\mathcal{S},N,\gamma,d) = \mathcal{Q}(\sqrt{\mathcal{S}})$,
    which is the probability of error of $c_{opt}$, which knows the exact distribution of the data \citep{fukunaga2013introduction}. 
Let us explore the result of Theorem \ref{thm:thm2} for the case of balanced training data, $\gamma = 1$ ($N_2=N_1=N$), in which the expression simplifies to:
\begin{equation}
\hat{p}_{\vct{x}}(\mathrm{error}) = 
    \mathcal{Q}\left(\frac{\sqrt{\mathcal{S}}}{\sqrt{\left(\frac{d}{2\mathcal{S}} + 1\right)\cdot \frac{1}{N} + \frac{d}{4\mathcal{S}}\cdot \frac{1}{N^2} + 1}}\right). %
\end{equation} 
{\color{black} Fix $d \gg 1$, which ensures that the approximation is accurate. It is easy to see that when the separation quality factor (SNR), $\mathcal{S}$, decreases (with fixed $d$), the argument of the $\mathcal{Q}$-function decreases, and thus the probability of error increases. In addition, as the number of training samples $N$ increases, the argument increases, and thus the probability of error decreases. These two results are aligned with intuition. Interestingly, the effect of increasing/decreasing $d$ depends on its relation with $\mathcal{S}$. For example, if $d$ increases and $\mathcal{S}$ is fixed, which means that the average entry-wise SNR decreases, then the argument of the $\mathcal{Q}$-function increases.
The contrary holds if $\mathcal{S} \propto d$, which means that the average entry-wise SNR is fixed. In the latter case, high-dimensionality is advantageous in terms of the probability of error.}

Next, let us establish the existence \tomtm{and learnability} of the data processing proposed in Section \ref{sec:setup}.

\begin{theorem}[The existence and learnability of the processing]\label{thm:thm3} For all $1 \le k < d$, there exists a dimension-reducing  matrix $\boldsymbol{A} \in \mathbb{R}^{k \times d}$ with the properties stated in \eqref{eq:A_properties}.
Furthermore, given sufficiently many unlabeled samples, such a matrix can be learned to arbitrary accuracy. 
\end{theorem}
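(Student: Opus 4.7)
The plan is to first reformulate \eqref{eq:A_properties} geometrically, then handle existence by explicit construction, and finally address learnability via a second-moment estimator. Observe that $\boldsymbol{A}\boldsymbol{A}^\top = \boldsymbol{I}_k$ forces $\boldsymbol{A}^\top\boldsymbol{A}$ to be the orthogonal projector onto $\mathrm{Row}(\boldsymbol{A})$, so $\|\boldsymbol{A}\boldsymbol{\mu}\|^2 = \boldsymbol{\mu}^\top \boldsymbol{A}^\top \boldsymbol{A}\boldsymbol{\mu}$ equals $\|\boldsymbol{\mu}\|^2$ if and only if $\boldsymbol{\mu}\in\mathrm{Row}(\boldsymbol{A})$. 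Thus, both conditions in \eqref{eq:A_properties} are jointly equivalent to requiring that the rows of $\boldsymbol{A}$ form an orthonormal basis of some $k$-dimensional subspace of $\mathbb{R}^d$ that contains $\boldsymbol{\mu}$.

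For existence, I would explicitly set the first row of $\boldsymbol{A}$ to $\boldsymbol{\mu}^\top/\|\boldsymbol{\mu}\|$ and complete it to an orthonormal set of $k$ rows by choosing any $k-1$ orthonormal vectors in the $(d-1)$-dimensional subspace orthogonal to $\boldsymbol{\mu}$ (which is possible since $k-1 < d-1$). The resulting matrix satisfies both properties by construction. This handles the first claim of the theorem.

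For learnability from unlabeled data, I would exploit the second-moment structure of the mixture. Under the assumptions of Section \ref{sec:setup}, the population mean is zero and the population covariance satisfies
\begin{equation*}
\boldsymbol{\Sigma} := \mathbb{E}[\boldsymbol{x}\boldsymbol{x}^\top] = \sigma^2\boldsymbol{I}_d + \boldsymbol{\mu}\boldsymbol{\mu}^\top,
\end{equation*}
so $\boldsymbol{\mu}/\|\boldsymbol{\mu}\|$ is the unique (up to sign) top eigenvector of $\boldsymbol{\Sigma}$, with an eigengap of $\|\boldsymbol{\mu}\|^2 > 0$. Given $M$ unlabeled samples, form the empirical covariance $\widehat{\boldsymbol{\Sigma}}_M$, let $\widehat{\boldsymbol{u}}_1$ be its top eigenvector, and construct $\boldsymbol{A}$ by placing $\widehat{\boldsymbol{u}}_1^\top$ in its first row and completing to an orthonormal basis of any chosen $k$-dimensional subspace spanned by $\widehat{\boldsymbol{u}}_1$ together with $k-1$ arbitrary orthonormal vectors. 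Standard concentration bounds for sample covariances of sub-Gaussian vectors give $\|\widehat{\boldsymbol{\Sigma}}_M - \boldsymbol{\Sigma}\|_{\mathrm{op}} \to 0$ as $M \to \infty$, and the Davis--Kahan $\sin\Theta$ theorem then yields $\|\widehat{\boldsymbol{u}}_1\widehat{\boldsymbol{u}}_1^\top - \tfrac{\boldsymbol{\mu}\boldsymbol{\mu}^\top}{\|\boldsymbol{\mu}\|^2}\|_{\mathrm{op}} \to 0$, so the resulting $\boldsymbol{A}$ satisfies \eqref{eq:A_properties} to arbitrary accuracy.

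The main subtlety, rather than a genuine obstacle, is that the sign of $\boldsymbol{\mu}$ is not identifiable from unlabeled data because the mixture is symmetric under $\boldsymbol{\mu}\mapsto-\boldsymbol{\mu}$. Fortunately, both conditions in \eqref{eq:A_properties} are invariant under this sign flip (as $\|\boldsymbol{A}(-\boldsymbol{\mu})\|=\|\boldsymbol{A}\boldsymbol{\mu}\|$), so this ambiguity is harmless. The rest of the argument is routine once the eigengap is identified as the fixed positive quantity $\|\boldsymbol{\mu}\|^2$, which is guaranteed by the standing assumption that $\boldsymbol{\mu}\ne\boldsymbol{0}$.
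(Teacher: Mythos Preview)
Your proposal is correct and follows essentially the same route as the paper: the same explicit construction (first row $\boldsymbol{\mu}^\top/\|\boldsymbol{\mu}\|$, then orthonormal completion) for existence, and the same second-moment identity $\boldsymbol{\Sigma}=\sigma^2\boldsymbol{I}_d+\boldsymbol{\mu}\boldsymbol{\mu}^\top$ with top-eigenvector extraction for learnability, including the observation that the sign ambiguity is harmless. The only cosmetic difference is that the paper invokes power iteration and almost-sure convergence of $\widehat{\boldsymbol{\Sigma}}_M$, while you invoke sub-Gaussian concentration plus Davis--Kahan; both deliver the same conclusion.
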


\textbf{Remark.}
{\color{black} The proof of Theorem \ref{thm:thm3} is constructive. It provides an algorithm for computing such $\boldsymbol{A}$ and efficiently estimating the direction of $\boldsymbol{\mu}$ from \textit{unlabeled} data.} 

Note that the semi-orthonormality of $\boldsymbol{A}$ implies that it cannot increase the norm of any vector, 
while the property $\norm{\boldsymbol{A}\boldsymbol{\mu}} = \norm{\boldsymbol{\mu}}$ ensures that the separation quality remains unchanged (equal to \eqref{eq:quality factor}) and is not reduced after the processing.
{\color{black} In more detail, this implies that when applying $\boldsymbol{A}\boldsymbol{x}$, the class-dependent component of $\boldsymbol{x}$ (i.e., the projection of $\boldsymbol{x}$ onto $\pm \boldsymbol{\mu}$) is not attenuated. In contrast, the complementary component of $\boldsymbol{x}$, which corresponds to within-class variability, is attenuated as the overall dimension is reduced and the semi-orthonormality of $\boldsymbol{A}$ prevents amplification. Taken together, this is expected to facilitate classification, as will be rigorously proven below. More details and a graphical illustration of the action of $\boldsymbol{A}$ are presented in Appendix \ref{app:experiments_verification_extended}.}

We now turn to  characterizing the probability of error when applying the classifier on the processed data $\boldsymbol{z} = \boldsymbol{A}\boldsymbol{x}$.

\begin{theorem}[The probability of error on the processed data]\label{thm:thm4} 
Consider the setup in Section \ref{sec:setup}. With approximation accuracy $\mathcal{O}(1/\sqrt{k})$ we have $p_{\vct{z}}(\mathrm{error}) \approx \hat{p}_{\vct{z}}(\mathrm{error}) = \hat{p}\left(\mathcal{S},N, \gamma, k\right)$,
where $\hat{p}$ is defined in \eqref{approx prob x}.
\end{theorem}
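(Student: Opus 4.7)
The plan is to reduce Theorem \ref{thm:thm4} to a direct invocation of Theorem \ref{thm:thm2}, by showing that the processed training set $\mathcal{D}_{\vct{z}}$ and test point $\vct{z}=\boldsymbol{A}\vct{x}$ satisfy exactly the same GMM model as in Section \ref{sec:setup}, but in $\mathbb{R}^k$ with class means $\pm\boldsymbol{A}\boldsymbol{\mu}$ instead of $\pm\boldsymbol{\mu}$. First I would note that $\vct{z}_{i,j}=\boldsymbol{A}\vct{x}_{i,j}$ is an affine map of a Gaussian, hence Gaussian with mean $\boldsymbol{A}\boldsymbol{\mu}_j$ and covariance $\sigma^2 \boldsymbol{A}\boldsymbol{A}^\top$. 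The properties in \eqref{eq:A_properties} then do all the work: $\boldsymbol{A}\boldsymbol{A}^\top=\boldsymbol{I}_k$ collapses the covariance to the isotropic form $\sigma^2\boldsymbol{I}_k$, while $\|\boldsymbol{A}\boldsymbol{\mu}\|=\|\boldsymbol{\mu}\|$ keeps the two class means antipodal with the same norm. In particular, the SNR in the reduced space is $\|\boldsymbol{A}\boldsymbol{\mu}\|^2/\sigma^2=\mathcal{S}$, the class counts $N$ and $\gamma N$ are unchanged, and independence of the samples is preserved under the linear map.

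Next I would verify that the data-driven classifier translates correctly. The nearest-centroid classifier $\widehat{c}(\vct{z};\mathcal{D}_{\vct{z}})=\arg\min_j \|\vct{z}-\widehat{\boldsymbol{\nu}}_j\|$ with $\widehat{\boldsymbol{\nu}}_j=\tfrac{1}{N_j}\sum_i \vct{z}_{i,j}$ is exactly the classifier of the form \eqref{eq:classifier} instantiated in the reduced-dimensional GMM; moreover $\widehat{\boldsymbol{\nu}}_j$ is the maximum-likelihood estimator of $\boldsymbol{A}\boldsymbol{\mu}_j$ and attains the Cramér–Rao bound in $\mathbb{R}^k$. Therefore the error event $\{\widehat{c}(\vct{z};\mathcal{D}_{\vct{z}})\neq y\}$ admits the same functional description used in the proof of Theorem \ref{thm:thm2}, with $d$ replaced by $k$, $\boldsymbol{\mu}$ replaced by $\boldsymbol{A}\boldsymbol{\mu}$, and $\sigma$, $N$, $\gamma$ left unchanged.

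One mild technical point to address is the universal-constant bound on the entries of the mean vector that underlies the Berry–Esseen approximation in Theorem \ref{thm:thm2}. Since the nearest-centroid classifier is invariant under orthogonal transformations of the ambient space, I can compose $\boldsymbol{A}$ on the left with any orthogonal $\boldsymbol{R}\in\mathbb{R}^{k\times k}$ without altering $p_{\vct{z}}(\mathrm{error})$ (both $\boldsymbol{A}\boldsymbol{A}^\top=\boldsymbol{I}_k$ and $\|\boldsymbol{A}\boldsymbol{\mu}\|=\|\boldsymbol{\mu}\|$ are preserved), and a suitable $\boldsymbol{R}$ can redistribute the energy of $\boldsymbol{A}\boldsymbol{\mu}$ uniformly across its $k$ coordinates so that each entry has magnitude $\|\boldsymbol{\mu}\|/\sqrt{k}$, which is bounded by a universal constant under the original hypothesis on $\boldsymbol{\mu}$.

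With these reductions in place, the conclusion is immediate: applying Theorem \ref{thm:thm2} to the equivalent $k$-dimensional GMM yields $p_{\vct{z}}(\mathrm{error})\approx\hat{p}(\mathcal{S},N,\gamma,k)$, and the approximation accuracy that in Theorem \ref{thm:thm2} was $\mathcal{O}(1/\sqrt{d})$ becomes $\mathcal{O}(1/\sqrt{k})$ here, since the Berry–Esseen-type bound scales with the inverse square root of the ambient dimension of the analyzed GMM. The only real obstacle is the bookkeeping just described—checking that every structural assumption of Theorem \ref{thm:thm2} carries over under $\boldsymbol{A}$—rather than any new probabilistic argument.
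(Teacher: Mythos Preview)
Your proposal is correct and follows exactly the paper's approach: verify that $\vct{z}\mid y=j\sim\mathcal{N}(\boldsymbol{A}\boldsymbol{\mu}_j,\sigma^2\boldsymbol{I}_k)$ via the properties in \eqref{eq:A_properties}, note that the antipodal mean structure and the SNR $\mathcal{S}$ are preserved, and then invoke Theorem~\ref{thm:thm2} with $d\mapsto k$. Your third paragraph---handling the coordinatewise bound on the new mean vector via a harmless orthogonal rotation---is a technical caveat that the paper's own proof simply glosses over.
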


\tomtm{
The approximate probability of error of the processed data, $\hat{p}_{\vct{x}}(\mathrm{error})$, admits an expression similar to the one obtained for the raw data, $\hat{p}_{\vct{z}}(\mathrm{error})$, but with a different dimension parameter ($k$ instead of $d$). 
Note that in the high-dimensional case, i.e., $d, k \gg 1$, these estimators are guaranteed to be accurate.
Now, let us present the main outcomes of our theoretical study, which build on these expressions.}

We start with the case where there is no class imbalance in the training set, i.e., $\gamma=1$ so $N_2=N_1=N$.
The next theorem shows that the considered data processing yields a gain for any finite value $N \geq 1$. We assume $\mathcal{S} > 0$, as 
$\mathcal{S} = 0$ is an uninteresting degenerate case.

\begin{theorem}[Performance gain under balanced training data]\label{thm:thm5}
For $\gamma = 1$, and for all
    $\mathcal{S} > 0, \ 1 \le k < d$, and $N\in \mathbb{N}$,
we have
\begin{align}
    \hat{p}_{\vct{x}}(\mathrm{error}) > \hat{p}_{\vct{z}}(\mathrm{error}).
\label{eq:theorem 4}
\end{align}
\end{theorem}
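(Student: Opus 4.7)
The plan is to reduce the claim to a direct monotonicity comparison between the two $\mathcal{Q}$-function arguments obtained by specializing Theorems \ref{thm:thm2} and \ref{thm:thm4} to $\gamma=1$. Substituting $\gamma=1$ into \eqref{approx prob x}, the two $\mathcal{Q}$-terms collapse into a single one (their arguments coincide because the factor $1-\gamma$ kills the asymmetric contribution in the numerator, and the $\frac{1}{\gamma N}$ vs. $\frac{1}{N}$ terms in the two denominators become equal). This yields the compact form already displayed in the discussion after Theorem \ref{thm:thm2}:
$$\hat{p}_{\vct{x}}(\mathrm{error})=\mathcal{Q}\!\left(\frac{\sqrt{\mathcal{S}}}{\sqrt{R(d)}}\right),\qquad \hat{p}_{\vct{z}}(\mathrm{error})=\mathcal{Q}\!\left(\frac{\sqrt{\mathcal{S}}}{\sqrt{R(k)}}\right),$$
where I define $R(m):=\left(\frac{m}{2\mathcal{S}}+1\right)\frac{1}{N}+\frac{m}{4\mathcal{S}}\cdot\frac{1}{N^{2}}+1$.

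Next, I would exploit two elementary monotonicity facts. The $\mathcal{Q}$-function is strictly decreasing on $\mathbb{R}$, and since $\sqrt{\mathcal{S}}>0$, the composition $t\mapsto \mathcal{Q}(\sqrt{\mathcal{S}}/\sqrt{t})$ is strictly \emph{increasing} in $t>0$. Therefore, the desired strict inequality $\hat{p}_{\vct{x}}(\mathrm{error})>\hat{p}_{\vct{z}}(\mathrm{error})$ is equivalent to showing $R(d)>R(k)$.

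Finally, $R(m)$ is affine in $m$ with slope $\frac{1}{2\mathcal{S} N}+\frac{1}{4\mathcal{S} N^{2}}$, which is strictly positive because $\mathcal{S}>0$ and $N\in\mathbb{N}$. Hence $R$ is strictly increasing on $\mathbb{R}_{+}$, and the hypothesis $k<d$ immediately gives $R(d)>R(k)$, completing the argument.

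There is essentially no technical obstacle here: all the heavy lifting is contained in Theorems \ref{thm:thm2}, \ref{thm:thm3}, and \ref{thm:thm4}. Once those approximate expressions are in hand and $\boldsymbol{A}$ is known to exist (and preserve $\|\boldsymbol{\mu}\|$ while being semi-orthonormal), the balanced case reduces to the trivial observation that lowering the dimension $d \to k$ strictly shrinks the positive, dimension-dependent noise terms in the denominator without altering the signal term $\sqrt{\mathcal{S}}$ in the numerator. The only point worth double-checking is that under $\gamma=1$ both summands in \eqref{approx prob x} really do collapse to the same $\mathcal{Q}$-value, but this is immediate from inspecting the expression term by term.
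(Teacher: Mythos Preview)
Your proposal is correct and follows essentially the same approach as the paper's own proof: specialize \eqref{approx prob x} to $\gamma=1$ so the two $\mathcal{Q}$-terms collapse into one, then use that $\mathcal{Q}$ is strictly decreasing and the dimension-dependent denominator is strictly increasing in $m$. The paper presents this slightly more tersely (without naming $R(m)$), but the logic is identical.
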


Theorem \ref{thm:thm5} shows that when the training samples are balanced among the classes, the chosen processing always \emph{strictly decreases} the approximated probability of error. 

\textbf{Discussion.} 
As shown in Theorems \ref{thm:thm2} and \ref{thm:thm4}, in the high-dimensional case the true probabilities of error, ${p}_{\vct{x}}(\mathrm{error})$ and ${p}_{\vct{z}}(\mathrm{error})$, are well approximated by $\hat{p}_{\vct{x}}(\mathrm{error})$ and $\hat{p}_{\vct{z}}(\mathrm{error})$.
This makes the result significant.
Moreover, this result---holding for \textit{any} finite 
$N$---is also quite surprising, since in the limit of $N\to\infty$ we have that ${p}_{\vct{x}}(\mathrm{error})$ and ${p}_{\vct{z}}(\mathrm{error})$ converge to $\mathbb{P}(c_{opt}(\vct{x}) \neq y)$ and $\mathbb{P}(\tilde{c}_{opt}(\vct{z}) \neq y)$, respectively, which satisfy the opposite relation ($\leq$) as shown in Theorem \ref{thm:no_gain_Bayes}.

We now consider the case of an imbalanced training set. 
The presence of under-represented classes or groups is of significant interest in the machine learning community, as it raises concerns about generalization and fairness \citep{chawla2002smote,huang2016learning,li2021autobalance}.
Specifically, while the classes have equal probability ($\pi_1=\pi_2=0.5$), the number of training samples from each of the classes is assumed to be $N_1=N$ and $N_2=\gamma N$ with $0<\gamma<1$. 
The following theorem demonstrates the benefit of the considered data processing in this case as well.

\begin{theorem}[Performance gain under imbalanced training data]\label{thm:thm6} Let
    $0 < \gamma < 1, \ 0 < \mathcal{S} \le 1, \ 1 \le k < d$. %
If $N \geq \tfrac{\gamma^2 - 4\gamma + 1}{2 \gamma(1+\gamma)}$, then
we have
\begin{align}
    \hat{p}_{\vct{x}}(\mathrm{error}) > \hat{p}_{\vct{z}}(\mathrm{error}).
\label{theorem 5}
\end{align}
\end{theorem}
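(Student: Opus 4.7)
The plan is to show that $f(t) := \hat{p}(\mathcal{S},N,\gamma,t)$, extended to a differentiable function of a real dimension $t > 0$, is strictly increasing on $[1,\infty)$. Since Theorems \ref{thm:thm2} and \ref{thm:thm4} give $\hat{p}_{\vct{x}}(\mathrm{error}) = f(d)$ and $\hat{p}_{\vct{z}}(\mathrm{error}) = f(k)$, this monotonicity immediately yields $\hat{p}_{\vct{x}}(\mathrm{error}) > \hat{p}_{\vct{z}}(\mathrm{error})$ for any $1 \leq k < d$, as required.

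For the monotonicity, introduce the abbreviations $A_1(t) := (\sqrt{\mathcal{S}}+\alpha t)/\sqrt{\beta t + c_1}$ and $A_2(t) := (\sqrt{\mathcal{S}}-\alpha t)/\sqrt{\beta t + c_2}$ with the natural coefficients $\alpha := (1-\gamma)/(4N\gamma\sqrt{\mathcal{S}})$, $\beta := (1+\gamma)/(4N\gamma\mathcal{S}) + (1+\gamma^2)/(8N^2\gamma^2\mathcal{S})$, $c_1 := 1/(\gamma N)+1$, $c_2 := 1/N+1$ read off from \eqref{approx prob x}. Letting $\phi := -\mathcal{Q}'$ denote the standard normal density, $f'(t) = -\tfrac{1}{2}\phi(A_1)A_1'(t) - \tfrac{1}{2}\phi(A_2)A_2'(t)$. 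Direct differentiation gives $A_1'(t) = (\alpha\beta t + 2\alpha c_1 - \beta\sqrt{\mathcal{S}})/[2(\beta t+c_1)^{3/2}]$ and $A_2'(t) = -(\alpha\beta t + 2\alpha c_2 + \beta\sqrt{\mathcal{S}})/[2(\beta t+c_2)^{3/2}]$. Because $\gamma < 1$ forces $\alpha > 0$, $A_2'(t) < 0$ unconditionally, so the ``minority-class'' contribution $-\phi(A_2)A_2'(t)$ is always strictly positive. Hence $f'(t) > 0$ is automatic whenever $A_1'(t) \leq 0$, which covers all $t$ below the root $t^\star := (\beta\sqrt{\mathcal{S}} - 2\alpha c_1)/(\alpha\beta)$ (if this root is positive at all).

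The hard case is $t > t^\star$, where $A_1'(t) > 0$. There, $f'(t) > 0$ reduces to
\[
\exp\!\left(\frac{A_1(t)^2 - A_2(t)^2}{2}\right) > \frac{A_1'(t)}{|A_2'(t)|}.
\]
I would compute $A_1^2 - A_2^2$ in closed form as a rational function of $t$---its numerator naturally splits into a non-positive piece $(c_2 - c_1)(\mathcal{S} + \alpha^2 t^2)$ and a positive piece $2\alpha\sqrt{\mathcal{S}}\,t(2\beta t + c_1 + c_2)$---and use $\mathcal{S} \leq 1$ to bound it from above and hence the exponential factor from below by an explicit function of $\alpha,\beta,c_1,c_2,t$. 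Simultaneously, I would bound $A_1'(t)/|A_2'(t)|$ from above using $c_1 > c_2$ and the structure of the numerators. The threshold $N \geq (\gamma^2 - 4\gamma + 1)/(2\gamma(1+\gamma))$ is then precisely what is required to make the two bounds close uniformly in $t$.

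The main obstacle will be this algebraic comparison in the hard case: the asymmetry induced by $\gamma < 1$ (opposite signs on the $\alpha t$ numerator terms and unequal denominator constants $c_1 > c_2$) turns the needed inequality into a rational-function inequality in $(t, N, \gamma, \mathcal{S})$ that must be reduced to the stated $N$-threshold. A reassuring sanity check is that the threshold evaluates to $-1/2$ at $\gamma = 1$ (vacuous, recovering Theorem \ref{thm:thm5}) and diverges as $\gamma \downarrow 0$, consistent with the intuition that extreme imbalance requires more training samples for the processing gain to be provable.
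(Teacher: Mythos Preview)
Your overall strategy---show that $t \mapsto \hat{p}(\mathcal{S},N,\gamma,t)$ is strictly increasing on $[1,\infty)$---is exactly the paper's, and your setup (the definitions of $A_1,A_2$, the derivative formulas, the observation $A_2'<0$) is correct. The gap is in your hard-case plan. You propose to lower-bound $\exp\!\bigl((A_1^2-A_2^2)/2\bigr)$ and upper-bound $A_1'/|A_2'|$ by ``explicit functions'' and then compare, but you never say what those bounds are, and your slip ``bound it from above and hence the exponential factor from below'' reverses the needed direction. More importantly, a direct quantitative comparison of these two sides would require controlling how small $A_1^2-A_2^2$ can get versus how close $A_1'/|A_2'|$ can get to $1$ \emph{simultaneously}, and nothing in your sketch explains why the two hypotheses $\mathcal{S}\le 1$ and $N\ge(\gamma^2-4\gamma+1)/(2\gamma(1+\gamma))$ should decouple so cleanly from such an argument.

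The paper's key simplification, which you have not found, is that the \emph{crudest} bounds already suffice: one shows separately that (i) $|A_1(t)|>|A_2(t)|$ for all $t\ge 1$, and (ii) $A_1'(t)+A_2'(t)\le 0$. Inequality (i) gives $\phi(A_1)<\phi(A_2)$, inequality (ii) gives $A_1'\le |A_2'|$, and multiplying yields $\phi(A_1)A_1' \le \phi(A_1)|A_2'| < \phi(A_2)|A_2'|$ with no exponential estimate whatsoever. The two hypotheses enter in exactly one place each: $\mathcal{S}\le 1$ is what makes (i) go through (it reduces, via $A_1>A_2$ and $A_1+A_2>0$, to $(D_1(1)+D_2(1))^2>4\ge 4\mathcal{S}$), while the $N$-threshold is precisely what makes (ii) go through (after clearing denominators it becomes the linear inequality $2\gamma(1+\gamma)N\ge \gamma^2-4\gamma+1$, independent of $\mathcal{S}$). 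This factorization is the missing idea; once you have it, the proof is straightforward algebra.
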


\textbf{Remark.} 
Unlike Theorem \ref{thm:thm5}, which considers $\gamma=1$ and is smoothly obtained from Theorems \ref{thm:thm2} and \ref{thm:thm4}, in this case the complexity of the formulas of $\hat{p}_{\vct{x}}(\mathrm{error})$ and $\hat{p}_{\vct{z}}(\mathrm{error})$ required us to make technical assumptions on $\mathcal{S}$ and $N$ in order to establish a rigorous statement for $\gamma \in (0,1)$.
Nevertheless, these assumptions are reasonable and still encompass the interesting case of low SNR and a %
\tomtm{reasonable}
number of training samples. {\color{black} Note that for $\gamma \ge 0.162$, the requirement $N \ge \tfrac{\gamma^2 - 4\gamma + 1}{2\gamma\left(1 + \gamma\right)}$ is vacuous (since $N \ge 1$), so it only matters under severe imbalance $(\gamma < 0.162)$.}

{\color{black}
\subsubsection{Factors that affect the performance gain}
\label{seq:theory_factors}

}

So far, we have only considered the relation between $\hat{p}_{\vct{x}}(\mathrm{error})$ and $\hat{p}_{\vct{z}}(\mathrm{error})$.
Let us now discuss the margin between them, which reflects the efficiency of the processing.

\begin{definition}
We define the {\color{black} theoretical efficiency} of the processing as
\begin{align}
\label{eq:theoretical efficiency}
    \eta \coloneqq \left(\frac{\hat{p}_{\vct{x}}(\mathrm{error}) - \hat{p}_{\vct{z}}(\mathrm{error})}{\hat{p}_{\vct{x}}(\mathrm{error})}\right)\cdot 100.
\end{align}
\end{definition}

The following theorem establishes an approximation of $\eta$ for $N \gg 1$, making it easier to gain insights into the different factors that affect the efficiency of the processing in the case of a large number of training samples.

\begin{theorem}[Analysis of the asymptotic efficiency]\label{thm:thm7} Let $\mathcal{S} > 0, \ 1 \le k < d, \ 0 < \gamma \le 1$.
Denote by $N_T = \left(1 + \gamma\right)N$ the total number of training samples. 
With approximation accuracy $\mathcal{O}(1/N_T^2)$, we have
\begin{align}
    \eta \approx \frac{25}{2 \sqrt{2\pi}}\cdot \frac{\exp\left(-\frac{\mathcal{S}}{2}\right)}{\sqrt{\mathcal{S}}\cdot \mathcal{Q}\left(\sqrt{\mathcal{S}}\right)}\cdot \left(3 + 2\gamma + \frac{1}{\gamma}\right)\cdot (d-k)\cdot \frac{1}{N_T}.
\label{efficiency approx}
\end{align}
In particular, for $N_T \gg 1$: The efficiency increases when $d-k$ increases or $\gamma$ decreases within $0 < \gamma \le 1/\sqrt{2}$; The efficiency decreases when $\mathcal{S}$ increases or $N_T$ increases. %
\end{theorem}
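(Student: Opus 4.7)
The plan is to treat $N_T\gg 1$ as a perturbative limit around $N_T=\infty$ and expand the closed-form expression $\hat{p}(\mathcal{S},N,\gamma,d)$ from Theorem~\ref{thm:thm2} (and the analogous one for $z$ from Theorem~\ref{thm:thm4}, obtained by $d\mapsto k$) to leading non-trivial order in $1/N$, where $N=N_T/(1+\gamma)$. Since both $\hat{p}_{\vct{x}}(\mathrm{error})$ and $\hat{p}_{\vct{z}}(\mathrm{error})$ converge to the Bayes error $\mathcal{Q}(\sqrt{\mathcal{S}})$ as $N\to\infty$, the difference will already be $O(1/N_T)$, and the ratio defining $\eta$ will be captured at that order.

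Concretely, I would write each of the four $\mathcal{Q}$-function arguments in \eqref{approx prob x} (the two in $\hat{p}_{\vct{x}}$ and the two in $\hat{p}_{\vct{z}}$, the latter with $d\to k$) as a ratio of the form $\mathrm{num}(T)/\sqrt{1+\alpha_{\pm}/T+\beta/T^{2}}$, where $T=N$ and the coefficients $\alpha_{\pm},\beta$ are explicit functions of $\mathcal{S},\gamma,d$ read off from \eqref{approx prob x}. Expanding the denominator via $(1+u)^{-1/2}=1-u/2+O(u^{2})$ and multiplying through yields
\[
a_{\vct{x}}^{\pm}=\sqrt{\mathcal{S}}+\frac{\delta_{\vct{x}}^{\pm}(\mathcal{S},\gamma,d)}{N}+O(1/N^{2}),
\]
for explicit $\delta_{\vct{x}}^{\pm}$ (and analogously $\delta_{\vct{z}}^{\pm}$ with $d\to k$). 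A first-order Taylor expansion $\mathcal{Q}(\sqrt{\mathcal{S}}+\epsilon)=\mathcal{Q}(\sqrt{\mathcal{S}})-\phi(\sqrt{\mathcal{S}})\epsilon+O(\epsilon^{2})$ with $\phi(u)=\frac{1}{\sqrt{2\pi}}e^{-u^{2}/2}$ then gives
\[
\hat{p}_{\vct{x}}(\mathrm{error})=\mathcal{Q}(\sqrt{\mathcal{S}})-\tfrac{\phi(\sqrt{\mathcal{S}})}{2N}\bigl(\delta_{\vct{x}}^{+}+\delta_{\vct{x}}^{-}\bigr)+O(1/N^{2}),
\]
and similarly for $\hat{p}_{\vct{z}}(\mathrm{error})$, so that subtraction eliminates the $d$-independent pieces and leaves
\[
\hat{p}_{\vct{x}}(\mathrm{error})-\hat{p}_{\vct{z}}(\mathrm{error})=\frac{\phi(\sqrt{\mathcal{S}})}{N}\cdot g(\mathcal{S},\gamma)\cdot(d-k)+O(1/N^{2})
\]
for an explicit $g(\mathcal{S},\gamma)$. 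Dividing by $\hat{p}_{\vct{x}}(\mathrm{error})=\mathcal{Q}(\sqrt{\mathcal{S}})+O(1/N_{T})$, substituting $1/N=(1+\gamma)/N_{T}$, multiplying by $100$, and collecting constants should reproduce \eqref{efficiency approx} to within the stated $O(1/N_{T}^{2})$ error.

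The main obstacle is bookkeeping: the arguments $a_{\vct{x}}^{+}$ and $a_{\vct{x}}^{-}$ are not symmetric (the denominators carry $1/(\gamma N)$ versus $1/N$), so the linear-in-$1/N$ corrections in the numerator and in the denominator must be combined carefully and then summed across both $\mathcal{Q}$-functions. An error in any single coefficient propagates into the final $\gamma$-dependent factor, and reconciling that factor with $\bigl(3+2\gamma+1/\gamma\bigr)$ requires a consistency check (for instance, at $\gamma=1$ it must collapse to the balanced-case prefactor predicted by Theorem~\ref{thm:thm5}, and differentiation with respect to $\gamma$ must vanish at $\gamma=1/\sqrt{2}$, cf.\ the monotonicity claim below).

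For the four monotonicity assertions, the dependence on $(d-k)$ and on $N_{T}$ is immediate from \eqref{efficiency approx}. For $\gamma$, differentiating $3+2\gamma+1/\gamma$ gives $2-1/\gamma^{2}$, which is non-positive precisely for $0<\gamma\le 1/\sqrt{2}$, yielding the stated monotonicity. For $\mathcal{S}$, set $u=\sqrt{\mathcal{S}}$ and study $f(u)=\tfrac{e^{-u^{2}/2}}{u\,\mathcal{Q}(u)}$; its derivative has the same sign as $u\phi(u)-(1+u^{2})\mathcal{Q}(u)$, which is negative for all $u>0$ by the standard Mills'-ratio inequality $\mathcal{Q}(u)/\phi(u)>u/(1+u^{2})$. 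Hence $f$, and therefore $\eta$, decreases in $\mathcal{S}$, completing the theorem.
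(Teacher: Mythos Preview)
Your proposal is correct and follows essentially the same route as the paper: both perform a first-order Taylor expansion of $\hat p(\mathcal{S},N,\gamma,\cdot)$ in $1/N$ around the Bayes limit, combine the linear corrections from the numerator and denominator of each $\mathcal{Q}$-argument, take the difference (isolating the $(d-k)$ factor), and divide by $\mathcal{Q}(\sqrt{\mathcal{S}})+O(1/N)$. The paper packages this via the substitution $\ell(x)=h(1/x)$ and computes $\ell'(0)$ through the quotient rule, whereas you expand each argument directly and compose with the Taylor expansion of $\mathcal{Q}$, but these are the same computation; for the $\mathcal{S}$-monotonicity you invoke the Mills-ratio bound $\mathcal{Q}(u)>\tfrac{u}{1+u^{2}}\phi(u)$, which the paper proves from scratch via the integral identity $\int_x^\infty(1+u^{-2})\phi(u)\,du=\phi(x)/x$, again a presentational rather than substantive difference.
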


\textbf{Remark.}
The proof of Theorem \ref{thm:thm7} is based on first-order analysis, which differs from the proof technique used for Theorem \ref{thm:thm6}. This allows us to reach the conclusion that there exists $N_0 \in \mathbb{N}$ such that for all $N \geq N_0$ we have $\eta>0$ (since the right-hand side of \eqref{efficiency approx} is positive) which implies $\hat{p}_{\vct{x}}(\mathrm{error}) > \hat{p}_{\vct{z}}(\mathrm{error})$ without a technical assumption on $\mathcal{S}$. On the other hand, Theorem \ref{thm:thm6} can hold even for small values of $N$, depending on $\gamma$.

\textbf{Discussion.} 
\tomtm{
Let us discuss the intuition behind the insights provided in Theorem \ref{thm:thm7}.
First, notice that in the considered regime of $N_T \gg 1$ training samples, the processing efficiency $\eta$ monotonically decreases toward zero as $N_T$ increases. This is consistent with the fact that in the limit $N_T\to \infty$ the classifier approaches the optimal Bayes classifier, which cannot be improved by data processing. 
In this regime, higher class separation $\mathcal{S}$ can be interpreted as 
equivalent to having more effective samples (akin to larger $N_T$), and hence less improvement through the pre-classification processing.
Similarly, larger dimensionality reduction $(d-k)$ can be viewed as greater coverage of the input domain, again, analogous to having more samples.
Lastly, lower $\gamma < 1$
indicates that the classifier's training samples are less balanced between the classes and hence differ more from the data distribution. Intuitively, this leaves more room for improvement through pre-classification processing.} 

{\color{black} In addition, Appendix~\ref{app:approximation_delta} provides an approximation of the difference $\Delta \coloneqq \hat{p}_{\vct{x}}(\mathrm{error}) - \hat{p}_{\vct{z}}(\mathrm{error})$ for \(N \gg 1\). 
The insights we obtain are consistent with those reported in Theorem~\ref{thm:thm7}.}

\tomtm{
So far, our theory shows that the processing efficiency $\eta$ is positive for all $N$ for $\gamma=1$, and under a technical assumption it is positive also for $\gamma\in(0,1)$. Our formulas also show that $\eta=0$ at $N=0$ (where $\hat{p}_{\vct{z}}=\hat{p}_{\vct{x}}=0.5$, i.e., probability of guessing) and that $\eta \to 0$ at $N \to \infty$ (where $\hat{p}_{\vct{z}}=\hat{p}_{\vct{x}}=\mathcal{Q}(\sqrt{\mathcal{S}})$, consistent with the classifier converging to the optimal Bayes classifier for which Theorem~\ref{thm:no_gain_Bayes} applies).
Together, these imply that there is a maximum point of $\eta(N)$. 
Our final theorem provides a surprising insight into this maximum efficiency.} 

\begin{theorem}[Analysis of the maximal efficiency]\label{thm:thm8} Fix $ \gamma = 1$, and let $\mathcal{S} > 0, 1 \le k < d$.
\tomtm{Consider the efficiency $\eta=\eta(N)$ as a function of continuous $N \in \mathbb{R}_+$. We have that 
    the maximal efficiency $\eta_{\max}=\max_{N \ge 0} \ \eta(N)$ increases as a function of $\mathcal{S}$.}
\end{theorem}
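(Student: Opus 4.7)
The plan is to reparametrize the optimization so the dependence on $\mathcal{S}$ becomes tractable, then run an envelope-type argument at an interior maximizer. Specifically, I would substitute $\beta = N/(N+1) \in [0,1)$, which bijects $N \in [0,\infty)$ with $[0,1)$. Plugging $N = \beta/(1-\beta)$ into the expressions from Theorems~\ref{thm:thm2} and \ref{thm:thm4} (specialized to $\gamma = 1$) and simplifying yields the compact form
\begin{equation*}
b^2 = \frac{4\mathcal{S}^2 \beta^2}{d(1-\beta^2) + 4\mathcal{S}\beta}, \qquad c^2 = \frac{4\mathcal{S}^2 \beta^2}{k(1-\beta^2) + 4\mathcal{S}\beta},
\end{equation*}
where $b$ and $c$ denote the positive arguments of the two $\mathcal{Q}$'s in $\hat{p}_{\vct{x}}(\mathrm{error})$ and $\hat{p}_{\vct{z}}(\mathrm{error})$, so $\eta(\beta,\mathcal{S}) = 100\,(1 - \mathcal{Q}(c)/\mathcal{Q}(b))$. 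Checking the endpoints shows $\eta(0,\mathcal{S}) = \eta(1^-,\mathcal{S}) = 0$ (both arguments tend to $0$ and to $\sqrt{\mathcal{S}}$, respectively), while Theorem~\ref{thm:thm5} yields $\eta>0$ at, e.g., $\beta = 1/2$. Hence for every $\mathcal{S} > 0$ the maximum is attained at an interior critical point $\beta^*(\mathcal{S}) \in (0,1)$.

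The next step is the envelope calculation. Writing $\phi$ for the standard normal density and subscripts for partial derivatives, the first-order condition $\partial\eta/\partial\beta|_{\beta^*}=0$ gives $\phi(c)\,c_\beta\,\mathcal{Q}(b) = \phi(b)\,b_\beta\,\mathcal{Q}(c)$. Substituting this identity into $\partial\eta/\partial\mathcal{S}$ and simplifying yields
\begin{equation*}
\left.\frac{\partial \eta}{\partial \mathcal{S}}\right|_{\beta^*} = \frac{100\,\phi(c)\,c_\beta}{\mathcal{Q}(b)}\left(\frac{c_\mathcal{S}}{c_\beta} - \frac{b_\mathcal{S}}{b_\beta}\right),
\end{equation*}
so the sign reduces to the sign of the bracketed difference. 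Differentiating the formula for $a^2$ above (with dimension $m \in \{d,k\}$) produces the clean ratio
\begin{equation*}
\frac{a_\mathcal{S}}{a_\beta} = \frac{\beta}{\mathcal{S}}\left(1 - \frac{m\beta^2}{m + 2\mathcal{S}\beta}\right).
\end{equation*}
Since $m \mapsto m/(m + 2\mathcal{S}\beta) = (1 + 2\mathcal{S}\beta/m)^{-1}$ is strictly increasing in $m$, the bracketed factor is strictly larger for $m=k$ than for $m=d$. Hence $c_\mathcal{S}/c_\beta > b_\mathcal{S}/b_\beta$ and $\partial\eta/\partial\mathcal{S}|_{\beta^*} > 0$ for every $\mathcal{S} > 0$.

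To conclude, I would globalize. Smoothness of $\eta$ in $\mathcal{S}$ together with strict positivity of the $\mathcal{S}$-derivative at $(\beta^*(\mathcal{S}),\mathcal{S})$ gives $\eta(\beta^*(\mathcal{S}),\mathcal{S}') > \eta_{\max}(\mathcal{S})$ for $\mathcal{S}' > \mathcal{S}$ sufficiently close; taking the max over $\beta$ on the left yields $\eta_{\max}(\mathcal{S}') > \eta_{\max}(\mathcal{S})$ locally. A standard continuity argument (Berge's maximum theorem ensures $\eta_{\max}$ is continuous, and a continuous function that is strictly right-increasing at every point is strictly increasing) then promotes this to strict monotonicity on $(0,\infty)$. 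I expect the main obstacle to be identifying the substitution $\beta = N/(N+1)$: working directly in $N$ (or in $\tau = \mathcal{S} N$) leaves derivatives whose signs do not compare cleanly, whereas once the $\beta$-form is in hand the whole theorem distills into the elementary inequality $k < d$.
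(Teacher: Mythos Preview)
Your argument is correct and follows essentially the same route as the paper: apply the envelope theorem at an interior maximizer, use the first-order condition in $N$ (your $\beta$) to eliminate the $\phi$-$\mathcal{Q}$ factors from $\partial\eta/\partial\mathcal{S}$, and reduce to showing that the ratio $(\partial_\mathcal{S} f_m)/(\partial_N f_m)$ is strictly decreasing in the dimension parameter $m$, whence $k<d$ finishes the proof. Your substitution $\beta=N/(N+1)$ is a cosmetic reparametrization of the paper's variable $x=N$ (your ratio equals the paper's $R_q(x,\mathcal{S})$ divided by $dx/d\beta=(x+1)^2$, which is $q$-independent), and your explicit endpoint check and Berge-type globalization are a bit more careful than the paper's direct invocation of $V'(\mathcal{S})=\partial_\mathcal{S} h(x^*(\mathcal{S}),\mathcal{S})$, but the substance is identical.
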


\textbf{Discussion.} 
\tomtm{
In the asymptotic regime of $N \rightarrow \infty$, as discussed above, a higher SNR corresponds to lower $\eta$, which aligns with intuition. Interestingly, however, the theorem shows that a higher SNR also leads to a larger $\eta_{\max}$, which is somewhat counterintuitive. One might expect that lower noise would reduce efficiency across all sample sizes, since the raw data is already well-separated. %
This highlights the subtle relationship between $\eta$ and the SNR. 
An extended version of the theorem can be found in Appendix~\ref{app:ext_max_eta_thm}.} 

\subsection{Empirical verification}
\label{sec:empirical_verification}

In this subsection,
we simulate the theoretical setup in order to further support our theoretical results and also gain more insights on the model, e.g., factors that affect the efficiency of the data processing for small to moderate values of $N$.

{\color{black} We consider data dimension $d=2000$, and fix $\sigma = 1$. The SNR values we work with are %
$\mathcal{S} \in \{0.75^2, 1.5^2\}$, 
and for each fixed SNR, we use $\gamma \in \{0.25, 0.5, 1\}$. We also consider a wide range of $N_{\text{train}}$, which denotes the total number of given training samples. For each fixed tuple $(\mathcal{S}, \gamma, N_{\text{train}})$, we randomize $\boldsymbol{\mu} \in \mathbb{R}^d$ with $\norm{\boldsymbol{\mu}} = \sigma\sqrt{\mathcal{S}}$, via $\boldsymbol{\mu} = \sigma\sqrt{\mathcal{S}} \tfrac{\boldsymbol{v}}{\norm{\boldsymbol{v}}}$ where $\boldsymbol{v} \sim \mathcal{N}\left(\boldsymbol{0}, \boldsymbol{I}_d\right)$.
We then construct the data processing matrix $\boldsymbol{A} \in \mathbb{R}^{k \times d}$ that reduces the dimension to $k = 1000$, using the algorithm described in Appendix~\ref{app:data_proc_thm}.
Per trial, we sample $N_1 = \mathrm{int}(\tfrac{N_{\text{train}}}{1 + \gamma})$ training points from $\mathcal{N}(-\boldsymbol{\mu}, \sigma^2 \boldsymbol{I}_d)$, and $N_2 = \mathrm{int}(\tfrac{\gamma N_{\text{train}}}{1 + \gamma})$ training points from $\mathcal{N}(\boldsymbol{\mu}, \sigma^2  \boldsymbol{I}_d)$.
Before and after the data processing, the per-class means are estimated using the training points, and the classifier defined in \eqref{eq:classifier} is used on a large amount of fresh test data, sampled with probability $0.5$ from each of the two Gaussians.
With a slight abuse of notation, we denote the empirical probabilities of error before and after the data processing by $p_x(\mathrm{error})$ and $p_z(\mathrm{error})$, respectively. In order to compute $p_z(\mathrm{error})$, we use the training and test samples after processing them by multiplying with $\boldsymbol{A}$. We then compute the empirical efficiency of the processing, defined by $\chi = \left(\tfrac{p_{\vct{x}}(\mathrm{error}) - p_{\vct{z}}(\mathrm{error})}{p_{\vct{x}}(\mathrm{error})}\right)\cdot 100$.
We repeat the computation of $\chi$ over 100 independent trials and report the average.}

Figure \ref{fig:EfficiencyComparisonAll} presents both the theoretical efficiency $\eta$, defined in \eqref{eq:theoretical efficiency}, and the empirical efficiency $\chi$ versus the number of training samples $N_{\text{train}}$, for various values of $\gamma$ and $\mathcal{S}$. %
Note that the empirical and theoretical efficiencies closely match in all the configurations. %

\begin{figure}[t]
    \centering
    \begin{subfigure}[b]{0.49\linewidth}
        \centering
        \includegraphics[width=\linewidth]{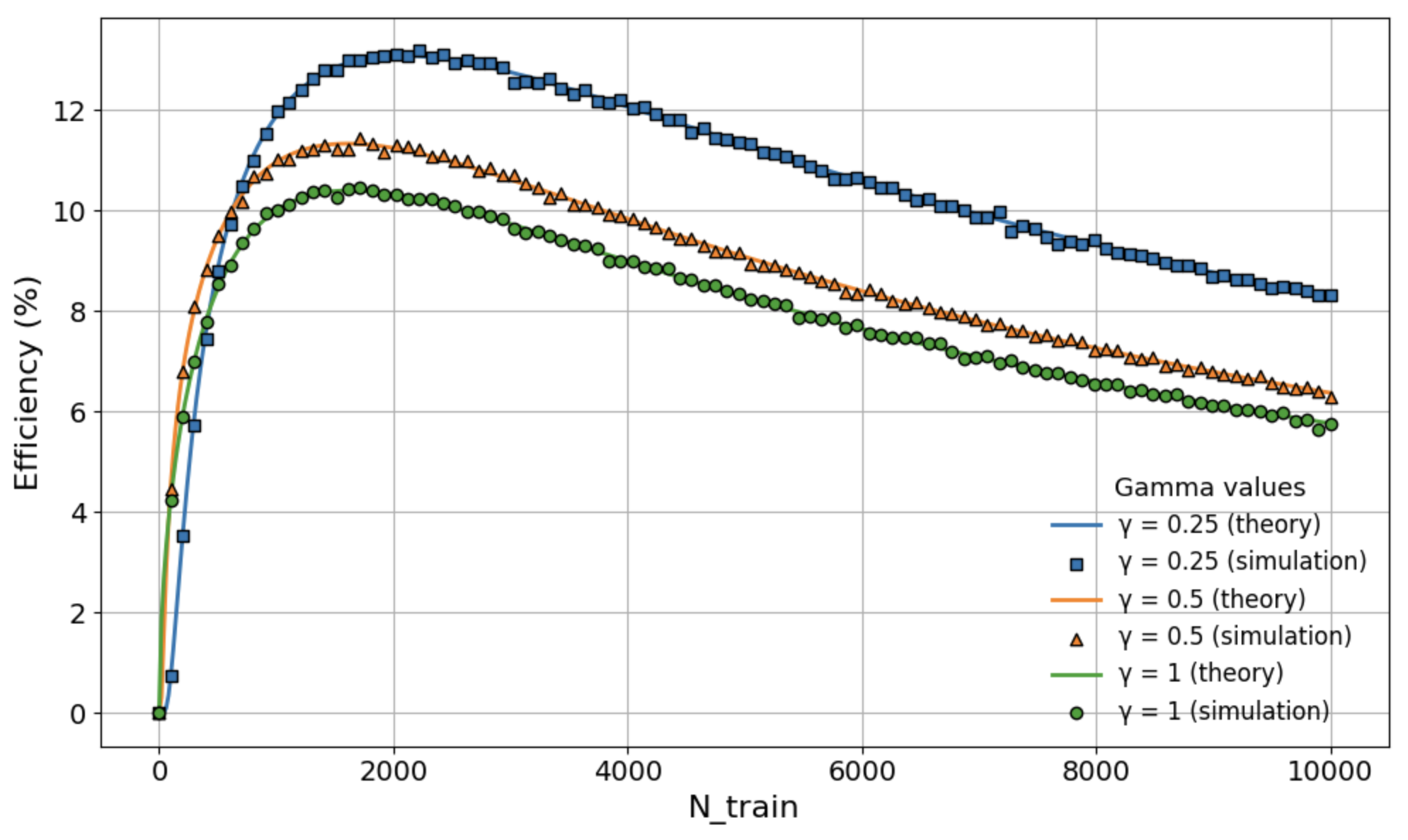}
        \caption{$\mathcal{S}=0.75^2$}
        \label{fig:VerFor_S=0p75_squared}
    \end{subfigure}
    \hfill
    \begin{subfigure}[b]{0.49\linewidth}
        \centering
        \includegraphics[width=\linewidth]{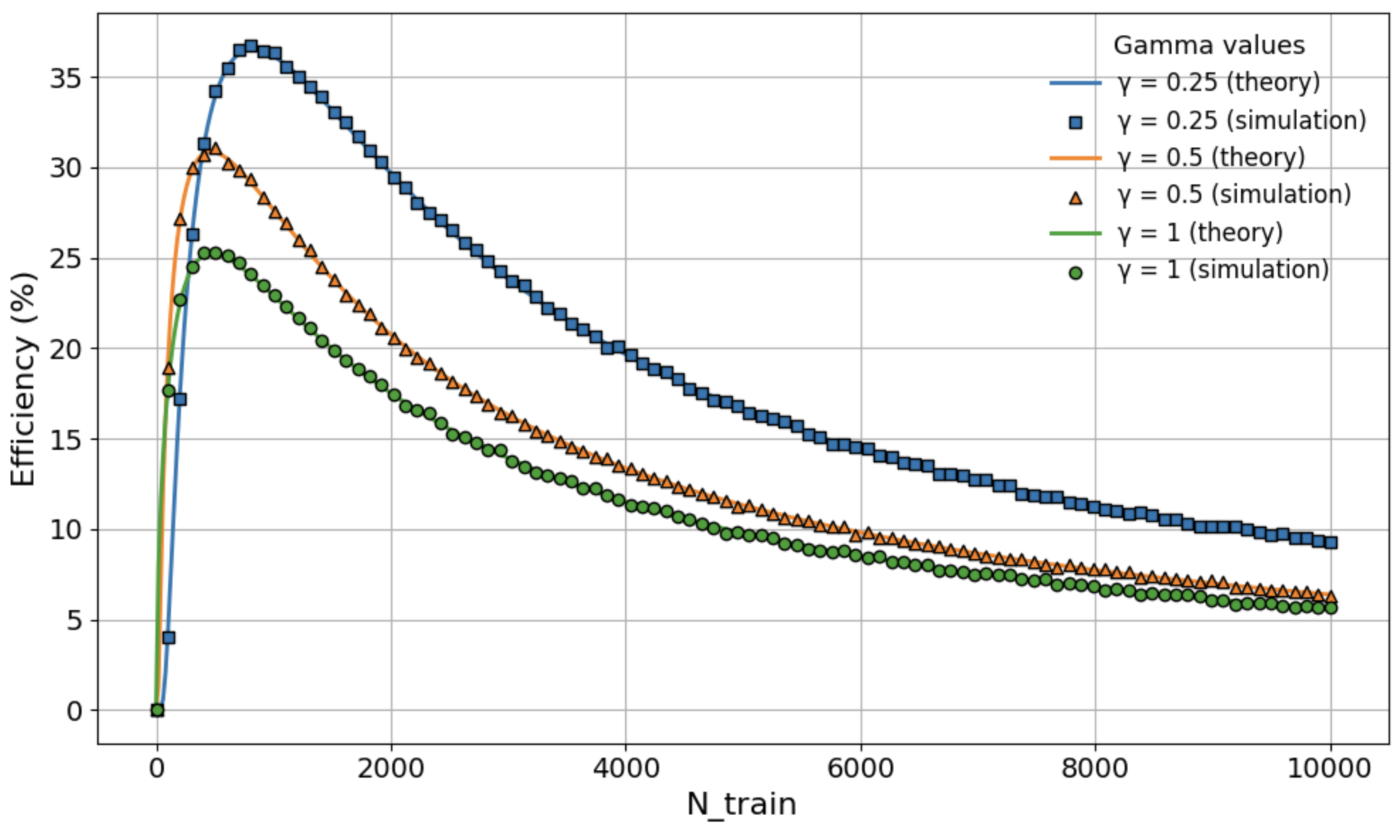}
        \caption{$\mathcal{S}=1.5^2$}
        \label{fig:VerFor_S=1p5_squared}
    \end{subfigure}   

\vspace{-3mm}
    \caption{The theoretical setup. Efficiency of the data processing procedure versus the number of training samples $N_{\text{train}}$, for various values of the training imbalance factor, $\gamma$, and the SNR, $\mathcal{S}$.}
    \label{fig:EfficiencyComparisonAll}
    \vspace{-4mm}
\end{figure}

Let us discuss the trends that are observed in Figure \ref{fig:EfficiencyComparisonAll}.
First, note the non-monotonic curves depicting the efficiency as a function of $N_{\text{train}}$. When $N_{\text{train}}$ approaches zero or grows to infinity the efficiency tends to zero, aligned with our analytical formulas. Indeed, as discussed above, in the absence of training data the classification is based on guess, and thus there is no effect for the data processing.
In the considered setup, as $N_{\text{train}} \to \infty$, the classifiers tend to the optimal Bayes decision rules, which again implies zero efficiency.
A major contribution of our paper is providing rigorous theory for the fact that the efficiency remains positive between these two extreme cases. 

Let us now focus on $N_{\text{train}} \gg 1$ (the right boundary of each sub-figure). We see that increased $\mathcal{S}$ moderately reduces the efficiency. For example, for $(\mathcal{S}, \gamma, N_{\text{train}})=(0.75^2,1,10K)$ the efficiency is around 6, while for $(\mathcal{S}, \gamma, N_{\text{train}})=(1.5^2,1,10K)$ it is around 5.
Moreover, we see that lower values of $\gamma$, corresponding to more imbalanced training data, yield higher efficiency of the data processing.
Note that both are aligned with the insights gained in Theorem \ref{thm:thm7}. %

Next, note that each of the curves depicts a single maximum point, whose value %
is aligned with the non-intuitive prediction of Theorem~\ref{thm:thm8}.
Specifically, the maximal efficiency value increases with $\mathcal{S}$. %

Lastly, note that the empirical investigation of our theoretical setup reveals behaviors at relatively small values of $N_{\text{train}}$, which lie beyond the scope of our theoretical analysis. Specifically, we observe that the relation between decrease in $\gamma$ and increase in efficiency emerges already at quite low $N_{\text{train}}$.  
We also observe dependency between the overall shape of the curves and the value of $\mathcal{S}$. 

Additional verification experiments with $\boldsymbol{A}$ that is learned from unlabeled samples, and different values of $\mathcal{S}, k$ are presented in Appendix~\ref{app:experiments_verification_extended}. All of them are aligned with our theoretical insights.

\vspace{-1mm}
\section{Experiments in Practical Settings}
\label{sec:empirical}
\vspace{-2mm}

While our paper focuses on theoretical contributions,
in this section, we empirically examine the correlation between the behaviors observed in {\color{black} four} practical deep learning settings and the theoretical results. Note that such a study, which examines the effects of sample size, SNR, and class balance, requires exhaustive training efforts of both the data-processing module and the classifier.

\begin{figure}[t]
    \centering
    \begin{subfigure}[b]{0.4\linewidth}
        \centering
        \includegraphics[width=\linewidth]{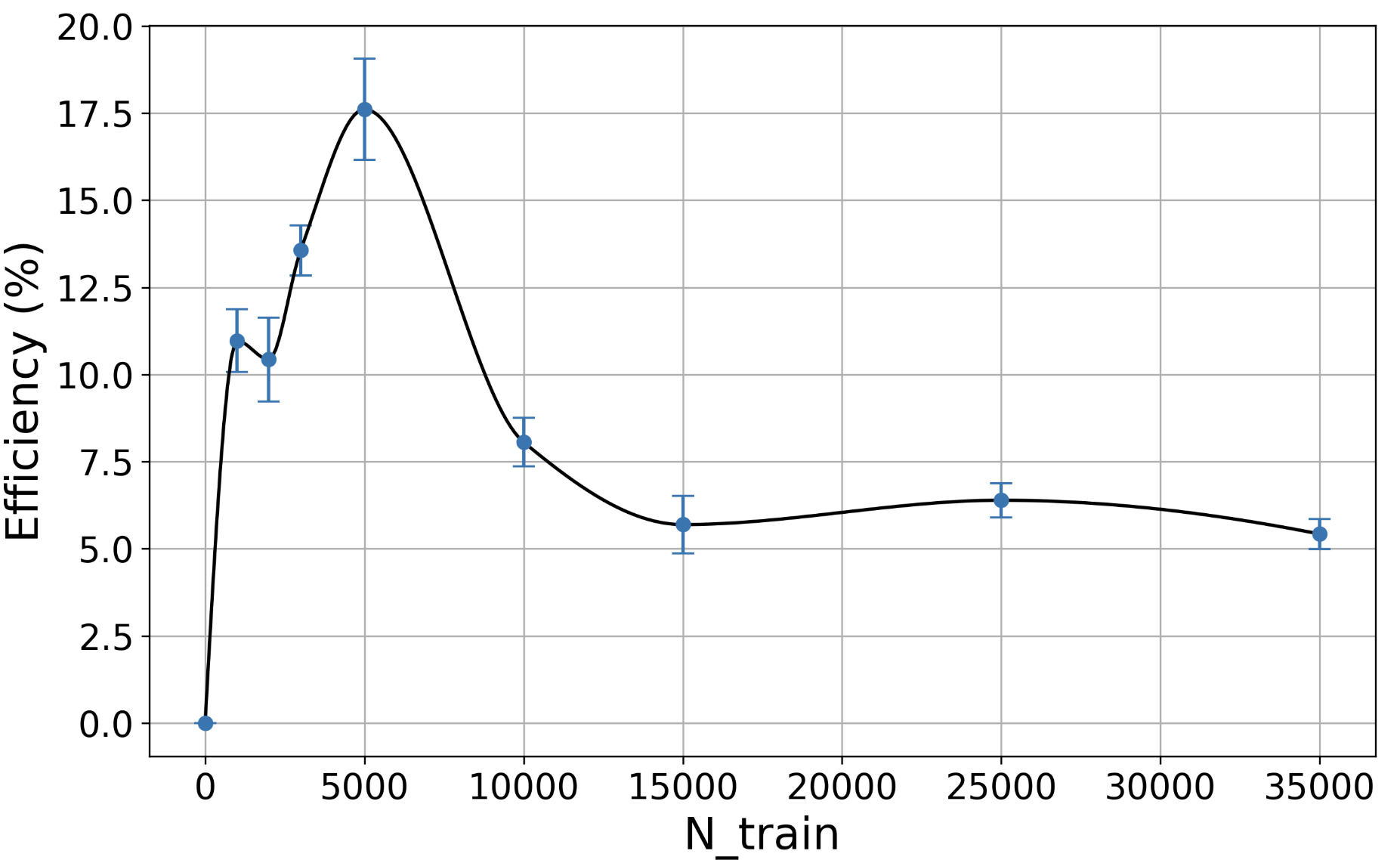}
        \caption{$\sigma = 0.25, \gamma = 1$}
        \label{fig:Sigma_0p25_gamma_1_dncnn}
    \end{subfigure}
    \hspace{3mm}
    \begin{subfigure}[b]{0.4\linewidth}
        \centering
        \includegraphics[width=\linewidth]{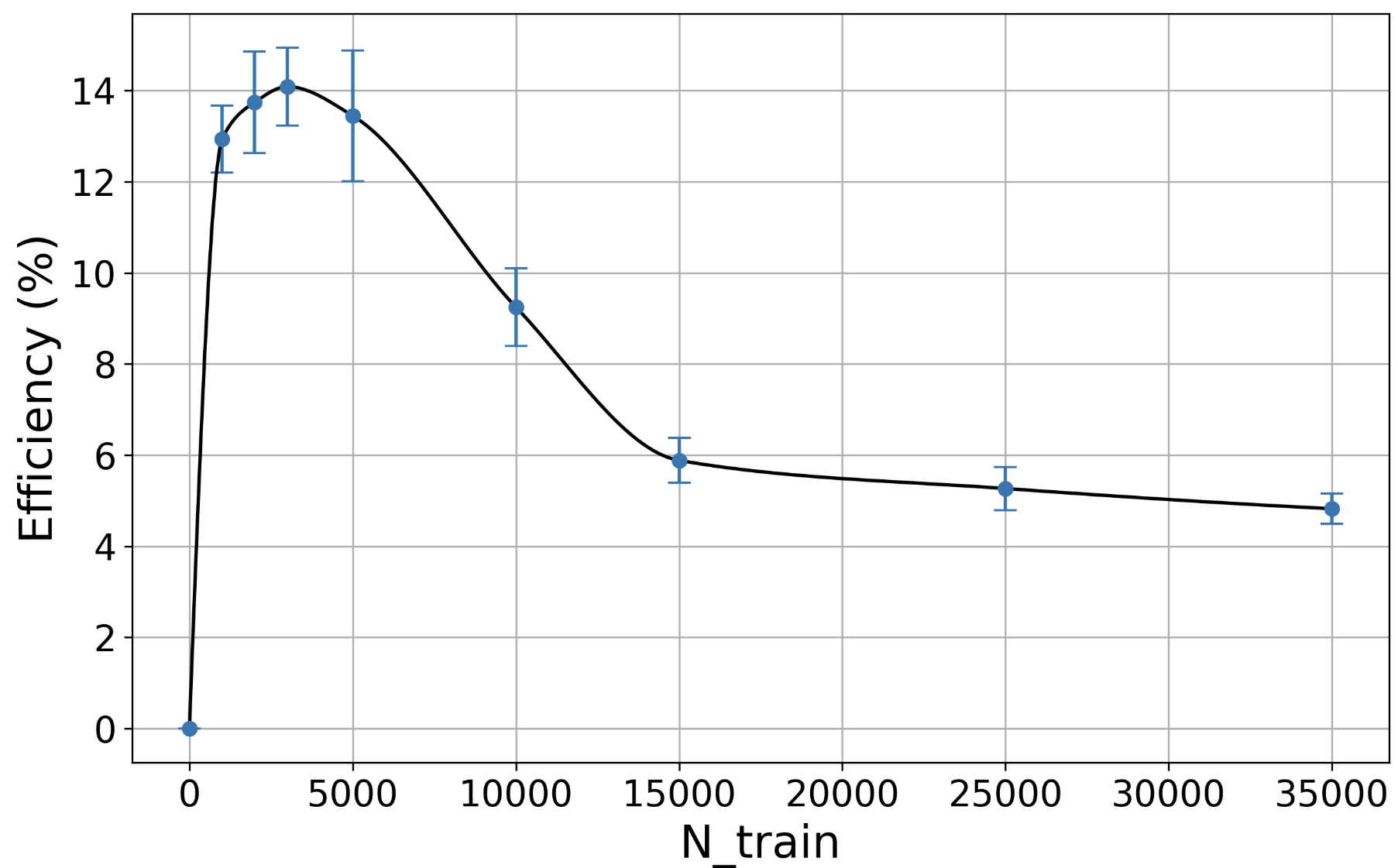}
        \caption{$\sigma = 0.4, \gamma = 1$}
        \label{fig:Sigma_0p4_gamma_1_dncnn}
    \end{subfigure}

\vspace{-2mm}
    \caption{Noisy CIFAR-10 and pre-classification denoising. Efficiency versus $N_{\text{train}}$.}
    \label{fig:Practical_Networks_Efficiency}
    \vspace{-5mm}
\end{figure}

\subsection{Noisy CIFAR-10 and pre-classification denoising}
\label{sec:exp_cifar10_denoising}

We consider the CIFAR-10 dataset \citep{krizhevsky2009learning} and the ResNet18 model \citep{he2016deep}. 
The train and test sets both experience additive Gaussian noise of the same level (i.e., no distribution shift) with standard deviation $\sigma \in \{0.25, 0.4\}$.
A detailed description of the training procedure of the classifier is given in Appendix \ref{app:experiments}.
We also note that we verify that the classifier performs well when trained on clean CIFAR-10 data, achieving 90\% accuracy.

The data processing step examined here is image denoising, applied to the noisy data, using the DnCNN model \citep{zhang2017beyond}. The denoiser is trained with the MSE loss on 15,000 clean unlabeled images, which are not part of the classifier's training set.
More details on the training procedure of the denoiser are given in Appendix \ref{app:experiments}.
Note that, given such a pretrained denoiser, the Markov chain: ``label''---``noisy image''---``denoised image'' still holds.
Thus, the data processing inequality, as well as Theorem \ref{thm:no_gain_Bayes}, suggest that the denoiser will not improve the results.

In Appendix \ref{app:experiments}, 
we also investigate 
{\color{black} another setting}, where we
train the denoiser with SURE loss (i.e., without clean ground truth images) \citep{stein1981estimation,soltanayev2018training}, and observe similar results. 

We consider various values of $N_{\text{train}}$, the total number of given training samples (across all 10 classes), and examine different training imbalance factors, $\gamma=1$ here, and  $\gamma \in \{0.5, 0.75\}$ in Appendix~\ref{app:experiments}. For both the denoised and the noisy case, and for each fixed tuple $(\sigma, \gamma, N_{\text{train}})$, we divide $\tfrac{N_{\text{train}}}{1 + \gamma}$ equally among the first 5 classes, and $\tfrac{\gamma N_{\text{train}}}{1 + \gamma}$ equally among the other 5 classes. 
We train the classifier 6 times, each time with a different seed, and report the average and standard deviation of the probabilities of error, to obtain a more reliable result. After we have the mean and standard deviation of the probability of error before and after the data processing, we compute the empirical efficiency, %
i.e., the relative percentage change in the probability of error induced by the denoising step.

Figure \ref{fig:Practical_Networks_Efficiency} presents the efficiency versus $N_{\text{train}}$. %
We see two main similarities to the theory. First, the non-monotonic behavior (increasing for small $N_{\text{train}}$ and decreasing for large $N_{\text{train}}$) is expected from the same argument in Section \ref{sec:empirical_verification}: the efficiency tends to zero as $N_{\text{train}}$ tends to either 0 or $\infty$, while, importantly, it remains positive between these two extreme cases, aligned with our theory.
Second, we see that the maximal efficiency value decreases with $\sigma$: %
its value for $\sigma = 0.25$ is larger than its value for $\sigma= 0.4$. That is, the maximal efficiency increases with the SNR.

\vspace{-2mm}
\subsection{Noisy Mini-ImageNet and pre-classification encoding}
\vspace{-2mm}

We turn to investigate a more complex data processing pipeline using the Mini-ImageNet dataset \citep{vinyals2016matching} and the ResNet50 model. Both the training and test sets are subjected to additive Gaussian noise with standard deviations $\sigma \in \{50/255, 100/255\}$. The data processing step examined here is an encoding step, which maps the images from $224 \times 224$ pixels to 256-dimensional embeddings. This encoder model follows \citep{lu2025ditch} and is trained from scratch with self-supervision on all noisy unlabeled images for each noise level. Then, for each combination of $(\sigma, \gamma, N_{\text{train}})$, we divide $\tfrac{N_{\text{train}}}{1 + \gamma}$ equally among the first 50 classes, and $\tfrac{\gamma N_{\text{train}}}{1 + \gamma}$ equally among the other 50 classes. Then, across three seeds, we train a ResNet50 model on the noisy images and, in parallel, a small MLP on the corresponding embeddings. After we have the mean and standard deviation of the probability of error before and after the data processing, we compute the empirical efficiency, i.e., the relative percentage change in the probability of error induced by the encoding step. Details of the training procedures for both the ResNet50 and the MLP are provided in Appendix~\ref{app:experiments_encoding}. %
 
Figure \ref{fig:Practical_Networks_Efficiency_encoding} presents the efficiency versus $N_{\text{train}}$, for $\gamma = 1$.
Experiments for $\gamma \in \{0.5, 0.75\}$ appear in Appendix~\ref{app:experiments_encoding}.
We see the same trends that are aligned with our theory as before: 1) similar non-monotonicity of the curve while remaining positive, and 2) the maximal efficiency increases with the SNR.
A message to practitioners is that when labeled samples are scarce, data processing can be especially advantageous for `high quality' data.

\begin{figure}[t]
    \centering
    \begin{subfigure}[b]{0.4\linewidth}
        \centering
        \includegraphics[width=\linewidth]{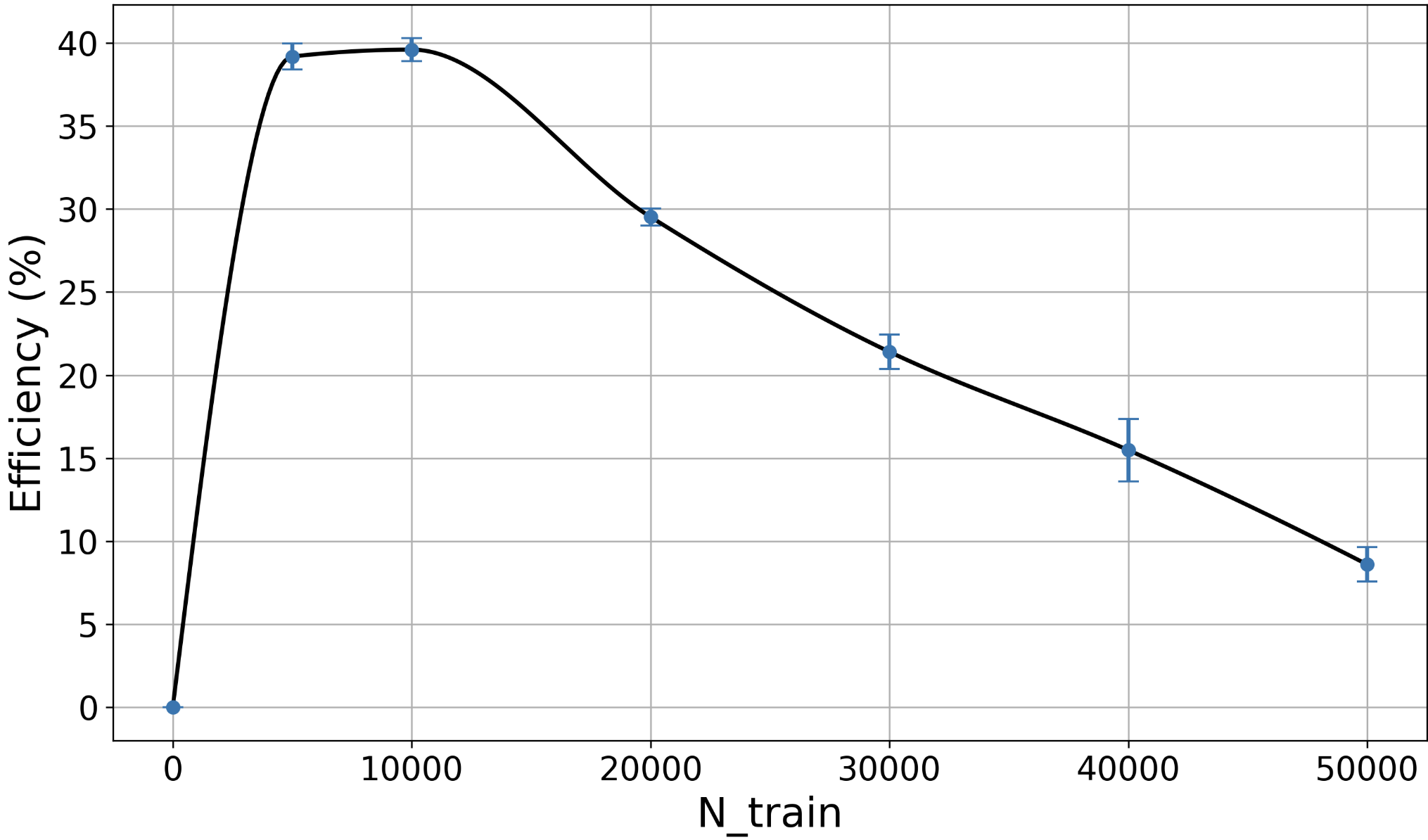}
        \caption{$\sigma = \tfrac{50}{255}, \gamma = 1$}
        \label{fig:Sigma_50_gamma_1_embedding}
    \end{subfigure}
    \hspace{3mm}
    \begin{subfigure}[b]{0.4\linewidth}
        \centering
        \includegraphics[width=\linewidth]{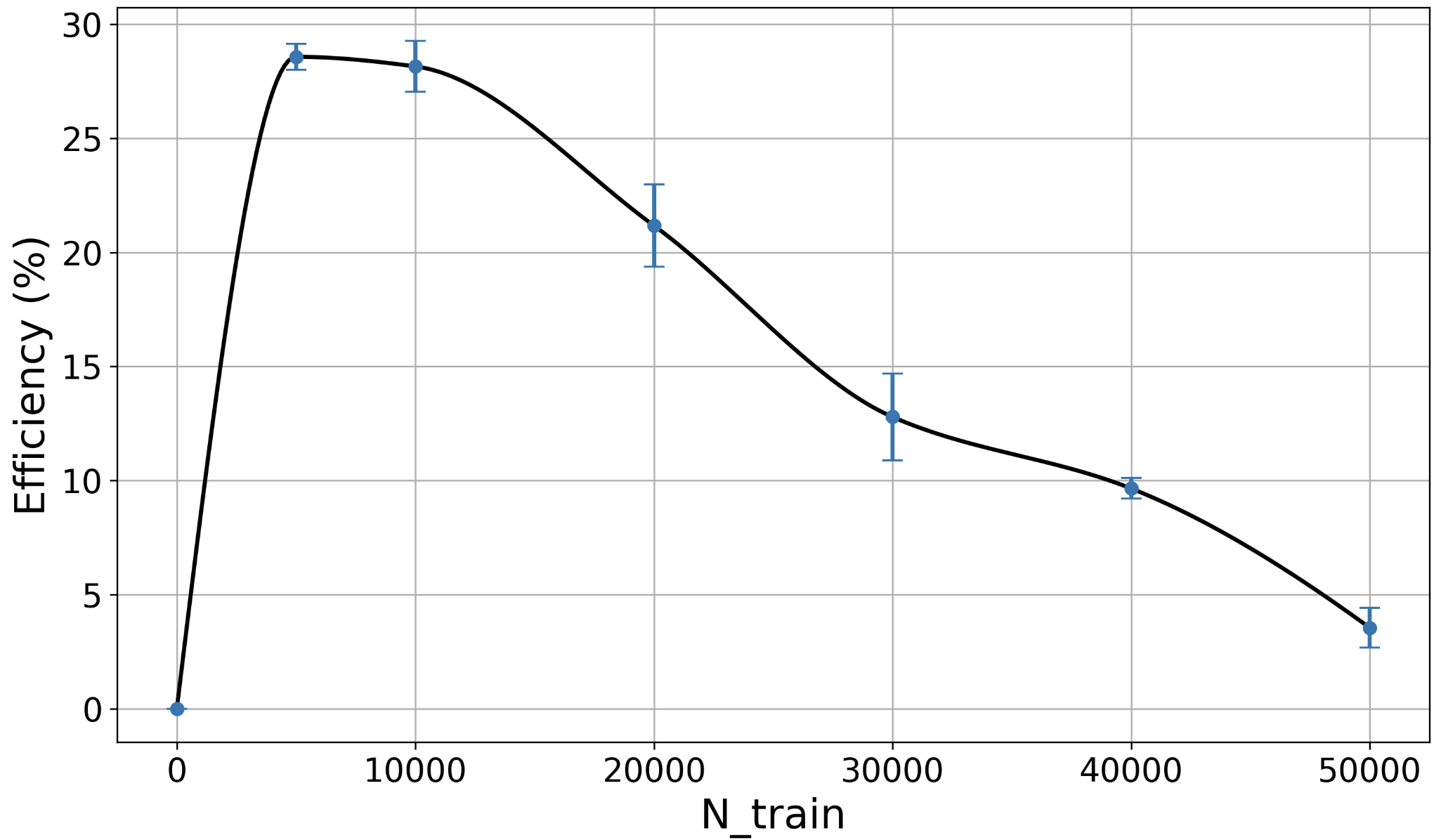}
        \caption{$\sigma = \tfrac{100}{255}, \gamma = 1$}
        \label{fig:Sigma_100_gamma_1_embedding}
    \end{subfigure}
\vspace{-2mm}
    \caption{Noisy Mini-ImageNet and pre-classification encoding. Efficiency versus $N_{\text{train}}$.}
    \label{fig:Practical_Networks_Efficiency_encoding}
    \vspace{-4mm}
\end{figure}

{\color{black}
\vspace{-2mm}
\subsection{Noisy CIFAR-10 and pre-classification encoding}
\vspace{-2mm}

For the noisy CIFAR-10 setup considered in Section \ref{sec:exp_cifar10_denoising}, we also examine the performance of data processing based on encoding instead of denoising. Due to space limitations, the details are deferred to Appendix \ref{app:experiments_encoding_cifar10}, and the results are presented there in Figure~\ref{fig:Practical_Networks_Efficiency_encoding_cifar10}. The trends stated above are observed there as well.

These results further demonstrate higher efficiency values compared to those obtained for the denoising procedure in Section \ref{sec:exp_cifar10_denoising}, indicating that, for the classification task, encoding may be a more effective low-level processing method than denoising. However, we believe that this may not be the case for other high-level tasks, which may require preserving spatial information in the image (e.g., object detection). %

}

\vspace{-1mm}
\section{Conclusion}
\label{sec:conclusion}
\vspace{-1mm}

In this paper, we addressed the question: How can we explain the common practice of performing a ``low-level'' task before a ``high-level'' downstream task, such as classification, despite theoretical principles like the data processing inequality and the overwhelming capabilities of modern deep neural networks?
We presented a theoretical study of a binary classification setup, where we considered a ``strong'' classifier that is tightly connected to the optimal Bayes classifier (and converges to it), and yet, we constructed a pre-classification processing step that for any finite number of training samples provably improves the classification accuracy. We also provided both theoretical and empirical insights into various factors that affect the gains from such low-level processing.
Finally, we demonstrated that the trends observed in {\color{black} four} practical deep learning settings, where image denoising or encoding is applied before image classification, are consistent with those established by our theoretical study.

Our work motivates ongoing research on signal and image restoration and enhancement \citep{garber2024image,garber2025zero,zhang2025improving,hen2025robust} and other low-level tasks. Since it 
shows the benefit of low-level tasks even when the classifier's training and test data share the same distribution, it naturally suggests an \emph{even greater advantage} in out-of-distribution scenarios.
{\color{black} %
As directions for future research, it would be valuable to extend the theoretical analysis to high-level tasks beyond classification or to investigate non-linear low-level processing. Another interesting direction is to study the optimal low-level processing corresponding to a given high-level task.}


%
%
%
%
%

%
%
%

%
%
%
\section*{Acknowledgments}
The work was supported by the Israel Science Foundation (No. 1940/23) and MOST (No. 0007091) grants.
LLMs were used only to polish the writing.

\bibliography{refs}
\bibliographystyle{iclr2026_conference}

\newpage
\appendix

\section{Proofs}
\label{app:proofs}

\subsection{Existing results}

Let us present a proof for Theorem \ref{thm:no_gain_Bayes}, which is similar to a proof that can be found in an arXiv version of \citep{liu2019classification}, but better clarifies how the Markovianity is used.

\textbf{Theorem} $\mathbf{1.}$ 
\emph{
    Let $y \to x \to z$ be a Markov chain where $y \in \{1,2\}$ denotes the sample class. We have
    \begin{align*}
    \mathbb{P}(c_{opt}(x) \neq y) \leq  \mathbb{P}(\tilde{c}_{opt}(z) \neq y),
    \end{align*}
    where $c_{opt}$ and $\tilde{c}_{opt}$ denote optimal Bayes classifiers.
}

\begin{proof}
    Let us denote by $\mathcal{X}, \mathcal{Z}$ the supports of $x, z$, respectively, and by
    \begin{equation}
        P_1 \coloneqq \mathbb{P}(y=1), \ P_2 \coloneqq \mathbb{P}(y=2),
    \label{prior of y}
    \end{equation}
    the prior probability for the binary label $y \in \{1,2\}$.
    Let us also define:
    \begin{equation}
        p_{x_1}(\xi) \coloneqq \mathbb{P}\left(x = \xi \ | \ y=1\right), \ p_{x_2}(\xi) \coloneqq \mathbb{P}\left(x = \xi \ | \ y=2\right).
    \label{likelihood of x}
    \end{equation}
    Now, from \eqref{prior of y} and \eqref{likelihood of x}, the probability of error of the optimal Bayes classifier on $x$ reads:
    \begin{equation}
    \begin{aligned}
        \mathbb{P}\left(c_{opt}(x) \ne y\right) &= \sum_{\xi \in \mathcal{X}} \min\left(P_1 p_{x_1}(\xi), P_2 p_{x_2}(\xi)\right) 
        \\
        &= \frac{1}{2} - \frac{1}{2}\sum_{\xi \in \mathcal{X}} \abs{P_1 p_{x_1}(\xi) - P_2 p_{x_2}(\xi)}.
    \end{aligned}
    \label{expression for p(c_opx(x) ne y)}
    \end{equation}
    Similarly to \eqref{likelihood of x}, we define:
    \begin{equation}
        p_{z_1}(\zeta) := \mathbb{P}\left(z = \zeta \ | \ y=1\right), \ p_{z_2}(\zeta) := \mathbb{P}\left(z = \zeta \ | \ y = 2 \right).
    \label{likelihood of z}
    \end{equation}
    From \eqref{prior of y} and \eqref{likelihood of z}, the probability of error of the optimal Bayes classifier on $z$ reads:
    \begin{equation}
    \begin{aligned}
        \mathbb{P}\left(\tilde{c}_{opt}(z) \ne y\right) &= \sum_{\zeta \in \mathcal{Z}} \min\left(P_1 p_{z_1}(\zeta), P_2 p_{z_2}(\zeta)\right) 
        \\
        &= \frac{1}{2} - \frac{1}{2}\sum_{\zeta \in \mathcal{Z}} \abs{P_1 p_{z_1}(\zeta) - P_2 p_{z_2}(\zeta)}.
    \end{aligned}
    \label{expression for p(c tilde opt(z) ne y)}
    \end{equation}
    From \eqref{likelihood of z} and the Markov assumption, we expand:
    \begin{equation}
    \begin{aligned}
        p_{z_i}(\zeta) &= \mathbb{P}\left(z = \zeta \ | \ y= i\right) = \sum_{\xi \in \mathcal{X}} \mathbb{P}\left(z = \zeta, x = \xi \ | \ y = i\right)
        \\
        &= \sum_{\xi \in \mathcal{X}} \mathbb{P}\left(z = \zeta \ | \ x = \xi , \ y=i\right) \mathbb{P}\left(x = \xi \ | \ y=i\right)
        \\
        &= \sum_{\xi \in \mathcal{X}} \mathbb{P}\left(z = \zeta \ | \ x = \xi\right) \mathbb{P}\left(x = \xi \ | \ y=i\right)
        \\
        &= \sum_{\xi \in \mathcal{X}} p_{z|x}(\zeta \ | \ \xi) p_{x_i}(\xi).
    \end{aligned}
    \label{p_(z_i) as a function of p_(x_i)}
    \end{equation}
    The key step is the fourth equality, which eliminates the dependence on $y$ in the first factor of the summand. We also denote
    \begin{equation*}
        p_{z | x}(\zeta \ | \ \xi) \coloneqq \mathbb{P}\left(z = \zeta \ | \ x = \xi\right).
    \end{equation*}
    By substituting \eqref{p_(z_i) as a function of p_(x_i)} into \eqref{expression for p(c tilde opt(z) ne y)}, we get:
    \begin{equation}
    \begin{aligned}
        \mathbb{P}\left(\tilde{c}_{opt}(z) \ne y\right) &= \frac{1}{2} - \frac{1}{2}\sum_{\zeta \in \mathcal{Z}} \abs{P_1 p_{z_1}(\zeta) - P_2 p_{z_2}(\zeta)}
        \\
        &= \frac{1}{2} - \frac{1}{2}\sum_{\zeta \in \mathcal{Z}} \abs{\sum_{\xi \in \mathcal{X}} P_1p_{z|x}(\zeta \ | \ \xi)p_{x_1}(\xi) - P_2p_{z|x}(\zeta \ | \ \xi)p_{x_2}(\xi)}
        \\
        &\ge \frac{1}{2} - \frac{1}{2}\sum_{\zeta \in \mathcal{Z}}\sum_{\xi \in \mathcal{X}} p_{z|x}(\zeta \ | \ \xi)\cdot \abs{P_1p_{x_1}(\xi) - P_2p_{x_2}(\xi)}
        \\
        &= \frac{1}{2} - \frac{1}{2}\sum_{\xi \in \mathcal{X}}\sum_{\zeta \in \mathcal{Z}} p_{z|x}(\zeta \ | \ \xi)\cdot \abs{P_1p_{x_1}(\xi) - P_2p_{x_2}(\xi)}
        \\
        &= \frac{1}{2} - \frac{1}{2}\sum_{\xi \in \mathcal{X}} \left(\abs{P_1p_{x_1}(\xi) - P_2p_{x_2}(\xi)}\cdot \sum_{\zeta \in \mathcal{Z}} p_{z|x}(\zeta \ | \ \xi)\right)
        \\
        &= \frac{1}{2} - \frac{1}{2}\sum_{\xi \in \mathcal{X}} \abs{P_1p_{x_1}(\xi) - P_2p_{x_2}(\xi)}
        \\
        &= \mathbb{P}\left(c_{opt}(x) \ne y\right).
    \end{aligned}
    \end{equation}
\end{proof}

Let us now present a theorem that will be utilized in the proof of Theorem \ref{thm:thm2}. 

\begin{theorem}[Generalized Berry-Esseen Theorem, \citep{feller1991introduction}] \label{thm: generalized_barry_essen} Let $X_1,X_2,\dots,X_d$ be independent random variables with:
\begin{itemize}
    \item Means $\eta_i=\mathbb{E}[X_i]$.
    \item Variances $\xi_i^2=\operatorname{Var}(X_i)$.
    \item Third absolute moments $\rho_i = \mathbb{E}\left[\abs{X_i - \eta_i}^3\right]$.
\end{itemize}
Define the normalized sum:
\begin{equation*}
    S_d = \frac{1}{\sqrt{\sum_{i=1}^d \xi_i^2}} \sum_{i=1}^d (X_i - \eta_i).
\end{equation*}
Then, there exists an absolute constant $C_0 > 0$ independent of $d$ such that:
\begin{equation*}
    \sup_{x \in \mathbb{R}} \abs{\mathbb{P}(S_d > x) - \mathcal{Q}(x)} \le \frac{C_0\sum_{i=1}^d \rho_i}{\left(\sum_{i=1}^d \xi_i^2\right)^{\frac{3}{2}}}.
\end{equation*}
\end{theorem}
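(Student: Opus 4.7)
The plan is to follow the classical route to Berry--Esseen bounds: pair Esseen's smoothing inequality with a characteristic--function estimate for $S_d$ obtained by Taylor-expanding each factor and controlling the remainder with the third absolute moments. Since $\mathcal{Q}(x)=1-\Phi(x)$ and $\mathbb{P}(S_d>x)=1-F_{S_d}(x)$, where $\Phi$ denotes the standard normal CDF and $F_{S_d}$ the CDF of $S_d$, the statement is equivalent to bounding $\sup_x|F_{S_d}(x)-\Phi(x)|$ by $C_0 L_d$, where $L_d \coloneqq B_d^{-3}\sum_{i=1}^d \rho_i$ and $B_d^2 \coloneqq \sum_{i=1}^d \xi_i^2$ is the Lyapunov ratio. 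I would first normalize by setting $\hat Y_i := (X_i-\eta_i)/B_d$, so that $S_d = \sum_{i=1}^d \hat Y_i$, $\sum_i \mathrm{Var}(\hat Y_i)=1$, and $\sum_i \mathbb{E}|\hat Y_i|^3 = L_d$.

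Second, I would invoke Esseen's smoothing lemma: for every $T>0$,
\[
\sup_x|F_{S_d}(x)-\Phi(x)| \,\le\, \frac{1}{\pi}\int_{-T}^{T}\frac{|\phi_{S_d}(t)-e^{-t^2/2}|}{|t|}\,dt \,+\, \frac{c_1}{T},
\]
with $\phi_{S_d}$ the characteristic function of $S_d$ and $c_1$ an absolute constant (this follows because the standard-normal density is bounded). The whole argument reduces to bounding the integrand on an interval of length $T\asymp 1/L_d$ and balancing against the tail term $c_1/T$.

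Third, I would estimate $\phi_{S_d}(t)=\prod_{i=1}^d \phi_{\hat Y_i}(t)$ for $|t|\le c_2/L_d$ with $c_2$ a small absolute constant. Writing $\hat\xi_i := \xi_i/B_d$ with $\sum_i \hat\xi_i^2=1$, the Taylor expansion of each factor gives
\[
\phi_{\hat Y_i}(t)=1-\tfrac{1}{2}\hat\xi_i^2 t^2+r_i(t),\qquad |r_i(t)|\le \tfrac{1}{6}|t|^3\,\mathbb{E}|\hat Y_i|^3.
\]
On the chosen interval each $|\phi_{\hat Y_i}(t)-1|<1/2$, so the principal branch of $\log$ is available and a second-order expansion of $\log(1+w)$ yields
\[
\log\phi_{\hat Y_i}(t)=-\tfrac{1}{2}\hat\xi_i^2 t^2+\tilde r_i(t),\qquad |\tilde r_i(t)|\le c_3 |t|^3\,\mathbb{E}|\hat Y_i|^3.
\]
Summing in $i$ and exponentiating via $|e^a-e^b|\le |a-b|\max(|e^a|,|e^b|)$, after absorbing the small perturbation of the quadratic into a mildly wider Gaussian, gives the pointwise bound $|\phi_{S_d}(t)-e^{-t^2/2}|\le c_4 L_d |t|^3 e^{-t^2/4}$.

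Fourth, I would integrate:
\[
\int_{-T}^{T}\frac{|\phi_{S_d}(t)-e^{-t^2/2}|}{|t|}\,dt \,\le\, c_4 L_d\int_{-\infty}^{\infty}t^2 e^{-t^2/4}\,dt \,=\, c_5 L_d,
\]
and choose $T=c_2/L_d$ so the tail contributes $c_1/T=(c_1/c_2)L_d$; adding the two terms yields $\sup_x|F_{S_d}(x)-\Phi(x)|\le C_0 L_d$, which is the claimed bound. The main obstacle is making the logarithm/Taylor step uniform for $|t|$ as large as $c_2/L_d$: one must guarantee that $\max_i \hat\xi_i^2 t^2$ and $\sum_i \mathbb{E}|\hat Y_i|^3|t|^3$ stay small enough that the expansion of $\log$ is valid with an absolute remainder constant. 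This is handled by the Lyapunov-type inequality $\hat\xi_i^2 = (\mathbb{E}\hat Y_i^2)\le (\mathbb{E}|\hat Y_i|^3)^{2/3}$, which implies $\max_i\hat\xi_i^3 \le L_d$ and hence $\max_i\hat\xi_i^2|t|^2\le (L_d|t|)^{2/3}$; on $|t|\le c_2/L_d$ this is bounded by $c_2^{2/3}$, so choosing $c_2$ universally small controls every constant independently of $d$ and of the individual moments.
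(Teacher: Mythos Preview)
The paper does not prove this theorem: it is stated in the appendix as an existing result, cited from Feller (1991), and then used as a black box in the proof of Theorem~\ref{thm:thm2}. There is therefore no ``paper's own proof'' to compare against. Your sketch follows the classical route (Esseen's smoothing inequality combined with a characteristic-function Taylor expansion controlled by the Lyapunov ratio), which is precisely the argument one finds in the cited reference; in that sense your proposal is consistent with what the paper implicitly relies on.
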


In the following subsections, we present the proofs of Theorems \ref{thm:thm2}, \ref{thm:thm3}, \ref{thm:thm4}, \ref{thm:thm5}, \ref{thm:thm6}, \ref{thm:thm7}, \ref{thm:thm8}. %
\\
\\
\subsection{Proof of Theorem \ref{thm:thm2}}

\begin{proof}
    The probability of error is
    \begin{equation}
    \begin{aligned}
        p_{\vct{x}}(\mathrm{error}) &= \mathbb{P}\left(\widehat{c}(\vct{x}) \ne y\right) = \pi_1\cdot \mathbb{P}\left(\widehat{c}(\vct{x}) = 2 \ | \ y = 1\right) + \pi_2 \cdot \mathbb{P}\left(\widehat{c}(\vct{x}) = 1 \ | \ y=2\right)
        \\
        &= \frac{1}{2}\cdot \mathbb{P}\left(\widehat{c}(\vct{x}) = 2 \ | \ y = 1\right) + \frac{1}{2} \cdot \mathbb{P}\left(\widehat{c}(\vct{x}) = 1 \ | \ y=2\right)
        \\
        &= \frac{1}{2}\cdot q(1,2) + \frac{1}{2}\cdot q(2,1)
    \end{aligned}
    \label{full probability of error expansion}
    \end{equation}

    where we used the assumption of a uniform prior and defined
    \begin{equation}
        q(i,j) = \mathbb{P}\left(\widehat{c}(\vct{x}) = j \ | \ y=i\right).
    \end{equation}
    Following \eqref{eq:classifier}, the first conditional probability of error $q(1,2)$ reads:
    \begin{equation}
    \begin{aligned}
        q(1,2) &= \mathbb{P}\left(\norm{\vct{x} - \widehat{\boldsymbol{\mu}}_2}^2 < \norm{\vct{x} - \widehat{\boldsymbol{\mu}}_1}^2 \ | \ y=1\right)
        \\
        &= \mathbb{P}\left((\vct{x}-\widehat{\boldsymbol{\mu}}_2)^\top(\vct{x}-\widehat{\boldsymbol{\mu}}_2) < (\vct{x}-\widehat{\boldsymbol{\mu}}_1)^\top(\vct{x}-\widehat{\boldsymbol{\mu}}_1) \ | \ y=1 \right)
        \\
        &= \mathbb{P}
        \left(\vct{x}^\top\vct{x} - \vct{x}^\top\widehat{\boldsymbol{\mu}}_2 - \widehat{\boldsymbol{\mu}}_2^\top\vct{x} + \widehat{\boldsymbol{\mu}}_2^\top\widehat{\boldsymbol{\mu}}_2 < \vct{x}^\top\vct{x} - \vct{x}^\top\widehat{\boldsymbol{\mu}}_1 - \widehat{\boldsymbol{\mu}}_1^\top\vct{x} + \widehat{\boldsymbol{\mu}}_1^\top\widehat{\boldsymbol{\mu}}_1 \ | \ y= 1\right)
        \\
        &= \mathbb{P}\left(-2\widehat{\boldsymbol{\mu}}_2^\top\vct{x} + 
        \norm{\widehat{\boldsymbol{\mu}}_2}^2 < -2\widehat{\boldsymbol{\mu}}_1^\top \vct{x} + \norm{\widehat{\boldsymbol{\mu}}_1}^2 \ | \ y=1\right)
        \\
        &=
        \mathbb{P}\left(2 (\widehat{\boldsymbol{\mu}}_2-\widehat{\boldsymbol{\mu}}_1)^\top\vct{x} > \norm{\widehat{\boldsymbol{\mu}}_2}^2 - \norm{\widehat{\boldsymbol{\mu}}_1}^2 \ | \ y=1 \right)
        \\
        &=
        \mathbb{P}\left((\widehat{\boldsymbol{\mu}}_2-\widehat{\boldsymbol{\mu}}_1)^\top \vct{x} > \frac{\norm{\widehat{\boldsymbol{\mu}}_2}^2-\norm{\widehat{\boldsymbol{\mu}}_1}^2}{2} \ | \ y = 1\right)
        \\
        &=
        \mathbb{P}\left((\widehat{\boldsymbol{\mu}}_2-\widehat{\boldsymbol{\mu}}_1)^\top \vct{x} - \frac{\norm{\widehat{\boldsymbol{\mu}}_2}^2-\norm{\widehat{\boldsymbol{\mu}}_1}^2}{2} > 0 \ | \ y = 1\right)
        \\
        &= 
        \mathbb{P}\left(w > 0 \ | \ y = 1 \right)
    \end{aligned}
    \label{eq: probability of error expansion-general case}
    \end{equation}
    where we defined
    \begin{equation}
        w = \left(\widehat{\boldsymbol{\mu}}_2 - \widehat{\boldsymbol{\mu}}_1\right)^\top\vct{x} - \frac{1}{2} \left(\norm{\widehat{\boldsymbol{\mu}}_2}^2 - \norm{\widehat{\boldsymbol{\mu}}_1}^2\right).
    \label{w definition - general case}
    \end{equation}
    Let us define
    
    \begin{equation}
    y_i = \left(\widehat{\mu}_2\right)_i\cdot x_i - \left(\widehat{\mu}_1\right)_i\cdot x_i - \frac{1}{2}\cdot \left(\widehat{\mu}_2\right)_i^2 + \frac{1}{2} \cdot \left(\widehat{\mu}_1\right)_i^2.
    \label{y_i formula - general case}
    \end{equation}
    Thus,
    \begin{equation}
        w = \sum_{i=1}^d y_i.
    \label{w as sum of y_i's - general case}
    \end{equation}
    In total, from \eqref{eq: probability of error expansion-general case}, \eqref{w as sum of y_i's - general case}, it follows that:
    \begin{equation}
        q(1,2) = \mathbb{P}\left(\sum_{i=1}^d y _i > 0 \ | \ y=1\right).
    \label{eq: probability of error as sum - general case}
    \end{equation}
    The setup of our theoretical investigation clearly implies that the random variables $\{y_i\}_{i=1}^d$, defined in \eqref{y_i formula - general case}, are independent, and thus we can apply Theorem \ref{thm: generalized_barry_essen}. Let us now compute the following expressions, that will be crucial when applying Theorem \ref{thm: generalized_barry_essen}:

    \begin{enumerate}
        \item \begin{equation} \eta_i \coloneqq \mathbb{E}[y_i] \label{E(y_i) def general case} \end{equation}
        \item \begin{equation}  \xi_i^2 \coloneqq \operatorname{Var}(y_i) \label{Var(y_i) def general case} \end{equation}
        \item \begin{equation} \rho_i \coloneqq \mathbb{E}\left[\abs{y_i - \eta_i}^3\right] \label{rho_i def general case} \end{equation}
    \end{enumerate}

    Note that:
    \begin{equation*}
        \widehat{\boldsymbol{\mu}}_j \sim \mathcal{N}\left(\boldsymbol{\mu}_j, \frac{\sigma^2}{N_j}\boldsymbol{I}_d\right).
    \end{equation*}

    Thus, given equations \eqref{eq:gmm_setup}, \eqref{eq:model assumptions}, for each $1 \le i \le d$, we have:
    \begin{equation}
        p_{j,i} \coloneqq (\widehat{\mu}_j)_i \sim \mathcal{N}\left(\left(\mu_j\right)_i, \frac{\sigma^2}{N_j}\right), \ x_i \ | \ y=1 \sim \mathcal{N}\left(-\mu_i, \sigma^2\right)
    \label{eq: basic distributions-general case}
    \end{equation}
    and from \eqref{y_i formula - general case}, it follows that:
    \begin{equation}
        y_i = p_{2,i} x_i - p_{1,i} x_i - \frac{1}{2}p_{2,i}^2 + \frac{1}{2}p_{1,i}^2
    \label{eq:y_i final definition - general case}
    \end{equation}
    where from \eqref{eq:model assumptions}, \eqref{eq: basic distributions-general case}, we have:
    \begin{equation}
        p_{2,i} \sim \mathcal{N}\left(\mu_i, \frac{\sigma^2}{N_2}\right), \ p_{1,i} \sim \mathcal{N}\left(-\mu_i, \frac{\sigma^2}{N_1}\right).
    \label{eq:p final distribution - general case}
    \end{equation}

    For every $1 \le i \le d$, let us define the following random variables:
    \begin{equation}
        a_i \coloneqq \left(p_{2,i} - p_{1,i}\right)\cdot x_i, \ b_i \coloneqq p_{2,i}^2 - p_{1,i}^2.
    \label{a_i, b_i formula}
    \end{equation}
    Thus, from \eqref{eq:y_i final definition - general case}, it follows that:
    \begin{equation}
        y_i = a_i - \frac{1}{2}b_i.
    \label{eq:y_i in terms of a_i,b_i - general case}
    \end{equation}

    \underline{We first compute $\eta_i=\mathbb{E}[y_i]:$}
    From \eqref{eq:y_i in terms of a_i,b_i - general case}, it follows that:
    \begin{equation}
        \eta_i = \mathbb{E}\left[y_i\right] = \mathbb{E}\left[a_i\right] - \frac{1}{2}\mathbb{E}\left[b_i\right].
    \label{eq:eta_i in terms of a_i, b_i - general case}
    \end{equation}
    We now compute each expectation separately. From \eqref{eq: basic distributions-general case}, \eqref{eq:p final distribution - general case}, and the assumption of independence, it follows that:
    \begin{equation}
    \begin{aligned}
        \mathbb{E}[a_i] &= \mathbb{E}\left[p_{2,i}x_i\right] -\mathbb{E}\left[p_{2,i}x_i\right]
        \\
        &= \mathbb{E}\left[p_{2,i}\right]\cdot \mathbb{E}\left[x_i\right] -\mathbb{E}\left[p_{1,i}\right]\cdot \mathbb{E}\left[x_i\right]
        \\
        &= -\mu_i^2 - \mu_i^2
        \\
        &= -2\mu_i^2
    \end{aligned}
    \label{E(a_i) formula - general case}
    \end{equation}
    and
    \begin{equation}
    \begin{aligned}
        \mathbb{E}[b_i] &= \mathbb{E}[p_{2,i}^2] - \mathbb{E}\left[p_{1,i}^2\right]
        \\
        &= \left(\frac{\sigma^2}{N_2} + \mu_i^2\right) - \left(\frac{\sigma^2}{N_1} + \mu_i^2\right)
        \\
        &= \sigma^2\left(\frac{1}{\gamma N} - \frac{1}{N}\right)
        \\
        &= \frac{1 - \gamma}{\gamma}\cdot \frac{\sigma^2}{N}.
    \end{aligned}
    \label{E(b_i) formula - general case}
    \end{equation}
    Thus, from \eqref{eq:eta_i in terms of a_i, b_i - general case}, \eqref{E(a_i) formula - general case}, \eqref{E(b_i) formula - general case}, we have:
    \begin{equation}
        \eta_i = -2\mu_i^2 - \frac{1}{2}\cdot \frac{1 - \gamma}{\gamma}\cdot \frac{\sigma^2}{N} = -\left(2\mu_i^2 + \frac{1 - \gamma}{\gamma}\cdot \frac{\sigma^2}{2N}\right).
    \label{eq:eta_i final formula - general case}
    \end{equation}

    \underline{We now compute $\xi_i^2=\operatorname{Var}(y_i):$} From \eqref{eq:y_i in terms of a_i,b_i - general case}, we have:
    \begin{equation}
        \xi_i^2 = \operatorname{Var}(y_i) = \operatorname{Var}(a_i) + \frac{1}{4}\cdot \operatorname{Var}(b_i) - \operatorname{Cov}\left(a_i, b_i\right).
    \label{eq:Var(y_i) first formula - general case}
    \end{equation}
    We now compute each piece separately, starting from $\operatorname{Var}(a_i)$. From equations \eqref{eq: basic distributions-general case}, \eqref{eq:p final distribution - general case}, \eqref{a_i, b_i formula}, \eqref{E(a_i) formula - general case}, it follows that:
    \begin{equation}
    \begin{aligned}
        \operatorname{Var}(a_i) &= \mathbb{E}[a_i^2] - \mathbb{E}[a_i]^2
        \\
        &= \mathbb{E}\left[\left(p_{2,i} - p_{1,i}\right)^2\right]\cdot \mathbb{E}\left[x_i^2\right] - 4\mu_i^4
        \\
        &= \left(\frac{1 + \gamma}{\gamma}\cdot \frac{\sigma^2}{N} + 4\mu_i^2\right)\cdot \left(\sigma^2 + \mu_i^2\right) - 4\mu_i^4
        \\
        &= \frac{1 + \gamma}{\gamma}\cdot \frac{\sigma^4}{N} + \mu_i^2\cdot \left(\frac{1 + \gamma}{\gamma}\cdot \frac{\sigma^2}{N} + 4\sigma^2\right)
    \end{aligned}
    \label{eq: Var(a_i) formula - general case}
    \end{equation}
    where we used the statistical independence between $p_{2,i}, p_{1,i}$ and \eqref{eq:p final distribution - general case}, to conclude that:
    \begin{equation}
        p_{2,i} - p_{1,i} \sim \mathcal{N}\left(2\mu_i, \frac{\sigma^2}{N_1} + \frac{\sigma^2}{N_2}\right)\Rightarrow p_{2,i} - p_{1,i} \sim \mathcal{N}\left(2\mu_i, \frac{1 + \gamma}{\gamma}\cdot \frac{\sigma^2}{N}\right).
    \label{eq : p_2,i - p_1,i distribution - general case}
    \end{equation}
    We now compute $\operatorname{Var}(b_i)$. From equations \eqref{eq:p final distribution - general case}, \eqref{a_i, b_i formula}, and the statistical independence between $p_{2,i}, p_{1,i}$, it follows that:
    \begin{equation}
    \begin{aligned}
        \operatorname{Var}(b_i) &= \operatorname{Var}\left(p_{2,i}^2 - p_{1,i}^2\right)
        \\
        &= \operatorname{Var}\left(p_{2,i}^2\right) + \operatorname{Var}\left(p_{1,i}^2\right)
        \\
        &= \frac{2\sigma^2}{N_2}\left(\frac{\sigma^2}{N_2}+2\mu_i^2\right) + \frac{2\sigma^2}{N_1}\left(\frac{\sigma^2}{N_1}+2\mu_i^2\right)
        \\
        &= \frac{2\sigma^4}{\gamma^2N^2} + \frac{4\sigma^2\mu_i^2}{\gamma N} + \frac{2\sigma^4}{N^2} + \frac{4\sigma^2\mu_i^2}{N}
        \\
        &= \frac{1 + \gamma^2}{\gamma^2}\cdot \frac{2\sigma^4}{N^2} + \frac{1 + \gamma}{\gamma}\cdot \frac{4\sigma^2\mu_i^2}{N}
    \end{aligned}
    \label{eq: Var(b_i) formula - general case}
    \end{equation}
    where we used the fact that if $x\sim \mathcal{N}\left(\mu_x,\sigma_x^2\right)$, then
    \begin{align*}
        \operatorname{Var}(x^2) &= \mathbb{E}[x^4] - \mathbb{E}[x^2]^2
        \\
        &= \left(3\sigma_x^4 + 6\sigma_x^2 \mu_x^2 + \mu_x^4\right) - \left(\sigma_x^2 + \mu_x^2\right)^2
        \\
        &= \left(3\sigma_x^4 + 6\sigma_x^2 \mu_x^2 + \mu_x^4\right) - \left(\sigma_x^4 + 2\sigma_x^2\mu_x^2 + \mu_x^4\right)
        \\
        &= 2\sigma_x^4+4\sigma_x^2\mu_x^2
        \\
        &= 2\sigma_x^2\cdot \left(\sigma_x^2 + 2\mu_x^2\right).
    \end{align*}
    Finally, we compute $\operatorname{Cov}(a_i,b_i)$. From equations \eqref{eq: basic distributions-general case}, \eqref{eq:p final distribution - general case}, \eqref{a_i, b_i formula}, \eqref{E(a_i) formula - general case}, \eqref{E(b_i) formula - general case}, it follows that:
    \begin{equation}
    \begin{aligned}
        \operatorname{Cov}(a_i,b_i) &= \mathbb{E}\left[a_ib_i\right] - \mathbb{E}[a_i]\cdot \mathbb{E}[b_i]
        \\
        &= \mathbb{E}\left[\left(p_{2,i} - p_{1,i}\right)\left(p_{2,i}^2 - p_{1,i}^2\right)\cdot x_i\right] + \frac{1-\gamma}{\gamma}\cdot \frac{2\sigma^2\mu_i^2}{N}
        \\
        &= \left(\mathbb{E}\left[p_{2,i}^3\right] - \mathbb{E}\left[p_{2,i}\right]\cdot \mathbb{E}\left[p_{1,i}^2\right] - \mathbb{E}\left[p_{1,i}\right]\cdot \mathbb{E}\left[p_{2,i}^2\right] + \mathbb{E}\left[p_{1,i}^3\right]\right)\cdot \mathbb{E}\left[x_i\right] + \frac{1-\gamma}{\gamma}\cdot \frac{2\sigma^2\mu_i^2}{N}
        \\
        &= -\mu_i\cdot \left(\left(\mu_i^3 + 3\mu_i\cdot \frac{\sigma^2}{N_2}\right) - \mu_i\cdot \left(\frac{\sigma^2}{N_1} + \mu_i^2\right) + \mu_i\cdot \left(\frac{\sigma^2}{N_2} + \mu_i^2\right) - \left(\mu_i^3 + 3\mu_i\cdot \frac{\sigma^2}{N_1}\right)\right)
        \\
        &+ \frac{1-\gamma}{\gamma}\cdot \frac{2\sigma^2\mu_i^2}{N}
        \\
        &= -\mu_i\cdot \left(4\mu_i\cdot \frac{\sigma^2}{\gamma N} - 4\mu_i\cdot \frac{\sigma^2}{N}\right) + \frac{1-\gamma}{\gamma}\cdot \frac{2\sigma^2\mu_i^2}{N}
        \\
        &= \frac{4\sigma^2\mu_i^2}{N} - \frac{4\sigma^2\mu_i^2}{\gamma N} + \frac{1-\gamma}{\gamma}\cdot \frac{2\sigma^2\mu_i^2}{N}
        \\
        &= -\frac{1 - \gamma}{\gamma}\cdot \frac{4\sigma^2\mu_i^2}{N} + \frac{1-\gamma}{\gamma}\cdot \frac{2\sigma^2\mu_i^2}{N}
        \\
        &= - \frac{1 - \gamma}{\gamma}\cdot \frac{2\sigma^2\mu_i^2}{N}
    \end{aligned}
    \label{eq: Cov(a_i, b_i) final formula - general case}
    \end{equation}
    where we used the fact that if $x \sim \mathcal{N}\left(\mu_x, \sigma_x^2\right)$, then
    \begin{equation}
        \mathbb{E}[x^3] = \mu_x^3 + 3\mu_x\sigma_x^2.
    \label{eq : e(x^3) formula for gaussian x}
    \end{equation}
    Thus, from equations \eqref{eq:Var(y_i) first formula - general case}, \eqref{eq: Var(a_i) formula - general case}, \eqref{eq: Var(b_i) formula - general case}, \eqref{eq: Cov(a_i, b_i) final formula - general case}, it follows that:
    \begin{equation}
    \begin{aligned}
        \xi_i^2 &= \frac{1 + \gamma}{\gamma}\cdot \frac{\sigma^4}{N} + \mu_i^2\cdot \left(\frac{1 + \gamma}{\gamma}\cdot \frac{\sigma^2}{N} + 4\sigma^2\right)
        \\
        &+ \frac{1}{4}\cdot \left(\frac{1 + \gamma^2}{\gamma^2}\cdot \frac{2\sigma^4}{N^2} + \frac{1 + \gamma}{\gamma}\cdot \frac{4\sigma^2\mu_i^2}{N}\right) + \frac{1 - \gamma}{\gamma}\cdot \frac{2\sigma^2\mu_i^2}{N}
        \\
        &= \frac{1 + \gamma}{\gamma}\cdot \frac{\sigma^4}{N} + \frac{1 + \gamma^2}{\gamma^2}\cdot \frac{\sigma^4}{2N^2} + \mu_i^2\left(\frac{1 + \gamma}{\gamma}\cdot \frac{2\sigma^2}{N} + \frac{1 - \gamma}{\gamma}\cdot \frac{2\sigma^2}{N} + 4\sigma^2\right)
        \\
        &= \sigma^2\left(\frac{1 + \gamma}{\gamma}\cdot \frac{\sigma^2}{N} + \frac{1 + \gamma^2}{2\gamma^2}\cdot  \frac{\sigma^2}{N^2} + 4\mu_i^2\left(1 + \frac{1 + \gamma}{2\gamma}\cdot \frac{1}{N} + \frac{1 - \gamma}{2\gamma}\cdot \frac{1}{N}\right)\right)
        \\
        &= \sigma^2\left(\frac{1 + \gamma}{\gamma}\cdot \frac{\sigma^2}{N} + \frac{1 + \gamma^2}{2\gamma^2}\cdot  \frac{\sigma^2}{N^2} + 4\mu_i^2\left(1 + \frac{1}{\gamma N}\right)\right).
    \end{aligned}
    \label{eq: Var(y_i) final formula - general case}
    \end{equation}
    We get the following lower bound:
    \begin{equation}
        \xi_i^2 \ge D \coloneqq \frac{\sigma^4}{N}\cdot \left(\frac{1 + \gamma}{\gamma} + \frac{1 + \gamma^2}{2\gamma^2}\cdot\frac{1}{N}\right).
    \label{eq:xi_i^2 lower bound - general case}
    \end{equation}

    \underline{Finally, we compute $\rho_i = \mathbb{E}\left[\abs{y_i - \eta_i}^3\right]$:} We will show that $\rho_i$ is globally bounded. We first note the following inequality, which holds for any real-valued random variable $x$ with $\mathbb{E}[x^4] < \infty$:
    \begin{equation*}
        \mathbb{E}\left[\abs{x}^3\right] \le \left(\mathbb{E}\left[x^4\right]\right)^{\frac{3}{4}}.
    \end{equation*}
    This is a consequence of Lyapunov's inequality. Setting $x = y_i - \mu_i$ yields the following upper bound:
    \begin{equation}
        \rho_i = \mathbb{E}\left[\abs{y_i - \eta_i}^3\right] \le \left(\mathbb{E}\left[\left(y_i - \eta_i\right)^4\right]\right)^{\frac{3}{4}}.
    \label{eq: first bound on rho_i - general case}
    \end{equation}
    We now expand:
    \begin{equation*}
        (y_i - \eta_i)^4 = y_i^4 - 4y_i^3\eta_i +6y_i^2\eta_i^2 - 4y_i\eta_i^3 + \eta_i^4
    \end{equation*}
    which, from \eqref{eq: first bound on rho_i - general case}, implies that:
    \begin{equation}
    \begin{aligned}
        \rho_i &\le \left(\mathbb{E}[y_i^4] - 4\eta_i\cdot \mathbb{E}[y_i^3] + 6\eta_i^2\cdot \mathbb{E}[y_i^2] - 4\eta_i^3\cdot \mathbb{E}[y_i] + \eta_i^4 \right)^{\frac{3}{4}}
        \\
        &= \left(\mathbb{E}[y_i^4] - 4\eta_i\cdot \mathbb{E}[y_i^3] + 6\eta_i^2\cdot \left(\operatorname{Var}(y_i) + \mathbb{E}[y_i]^2\right) - 3\eta_i^4 \right)^{\frac{3}{4}}
        \\
        &= \left(\mathbb{E}[y_i^4] - 4\eta_i\cdot \mathbb{E}[y_i^3] + 6\eta_i^2\cdot \left(\xi_i^2 + \eta_i^2\right) - 3\eta_i^4 \right)^{\frac{3}{4}}
        \\
        &= \left(\mathbb{E}[y_i^4] - 4\eta_i\cdot \mathbb{E}[y_i^3] + 6\eta_i^2\xi_i^2 + 3\eta_i^4 \right)^{\frac{3}{4}}
    \end{aligned}
    \label{eq: rho_i second inequality - general case}
    \end{equation}
    where we used the definitions $\xi_i^2 = \operatorname{Var}(y_i), \ \eta_i = \mathbb{E}[y_i]$. It is now left to compute
    \begin{equation}
        \chi_i \coloneqq \mathbb{E}[y_i^4], \ \delta_i \coloneqq \mathbb{E}[y_i^3]
    \label{eq: chi, delta definitions - general case}
    \end{equation}
    which implies, from \eqref{eq: rho_i second inequality - general case}, that
    \begin{equation}
        \rho_i \le \left(\chi_i - 4\delta_i\eta_i + 6\xi_i^2\eta_i^2 + 3\eta_i^4\right)^{\frac{3}{4}}.
    \label{eq: rho_i third inequality - general case}
    \end{equation}
    Let $f \in \{\eta, \xi^2, \delta, \chi\}$. We argue that for all $1 \le i \le d$:
    \begin{equation}
        f_i = \sum_{k=0}^{q(i,f)} c_{k}(i,f)\cdot \mu_i^k
    \label{eq: moments form f_i - general case}
    \end{equation}
    where $q(i,f) \in \mathbb{N}$ and the constants $\{c_k(i,f)\}_{k=0}^{q(i,f)} $ don't depend on $d$. We already saw that $\eta_i, \xi_i$ follows that structure in equations \eqref{eq:eta_i final formula - general case}, \eqref{eq:Var(y_i) first formula - general case}.
    
    We now compute $\delta_i$. From \eqref{eq:y_i in terms of a_i,b_i - general case}, it follows that:
    \begin{equation}
    \begin{aligned}
        \delta_i &= \mathbb{E}\left[\left(a_i - \frac{1}{2}b_i\right)^3\right]
        \\
        &= \mathbb{E}\left[a_i^3\right] - \frac{3}{2}\mathbb{E}\left[a_i^2b_i\right] + \frac{3}{4}\mathbb{E}\left[a_ib_i^2\right] - \frac{1}{8}\mathbb{E}\left[b_i^3\right]
    \end{aligned}
    \label{eq : delta_i first formula - general case}
    \end{equation}
    we now compute each part separately, starting with $\mathbb{E}[a_i^3]$. From equations \eqref{eq: basic distributions-general case}, \eqref{a_i, b_i formula}, and the assumption of independence, it follows that:
    \begin{equation}
    \begin{aligned}
        \mathbb{E}\left[a_i^3\right] &= \mathbb{E}\left[\left(p_{2,i} - p_{1,i}\right)^3\cdot x_i^3\right]
        \\
        &= \mathbb{E}\left[\left(p_{2,i} - p_{1,i}\right)^3\right]\cdot \mathbb{E}\left[x_i^3\right]
        \\
        &= \left(8\mu_i^3 + 6\mu_i\cdot \frac{1 + \gamma}{\gamma}\cdot \frac{\sigma^2}{N}\right)\left(-\mu_i^3 - 3\mu_i\sigma^2\right)
        \\
        &= -\mu_i^2\left(8\mu_i^2 + \frac{6(1+\gamma)}{\gamma}\cdot \frac{\sigma^2}{N}\right)\left(\mu_i^2 + 3\sigma^2\right)
    \end{aligned}
    \end{equation}
    which is a polynomial in $\mu_i$ with real coefficients. The other expressions are computed similarly, and they all have the form as in \eqref{eq: moments form f_i - general case}. We now turn to the assumption that
    \begin{equation}
        \exists_{M \ge 0}  \ \forall_{d \in \mathbb{N}} \ \forall_{1 \le i \le d} \ \abs{\mu_i} \le M
    \end{equation}
    and thus for each $f \in \mathcal{F} \coloneqq \{\eta,\xi^2, \delta, \chi\}$ and for all $1 \le i \le d$, from the triangle inequality, it follows that:
    \begin{equation}
    \begin{aligned}
        \abs{f_i} &\le \sum_{k=0}^{q(i,f)} \abs{c_k(i,f)}\cdot \abs{\mu_i}^k \le \sum_{k=0}^{q(i,f)} \abs{c_k(i,f)}\cdot M^k
        \\
        &\le \sum_{k=0}^{\max_{f \in \mathcal{F}} q(i,f)} \max_{f \in \mathcal{F}} \abs{c_k(i,f)}\cdot M^k
        \\
        &\le \max_{1 \le i \le d} \sum_{k=0}^{\max_{f \in \mathcal{F}} q(i,f)} \max_{f \in \mathcal{F}} \abs{c_k(i,f)}\cdot M^k
    \end{aligned}
    \label{eq: upper bounded on |f_i| - general case}
    \end{equation}
    where we define $c_k(i,f) = 0$ for all $ k > q(i,f)$.
    Let us denote
    \begin{equation}
        L \coloneqq \max_{1 \le i \le d} \sum_{k=0}^{\max_{f \in \mathcal{F}} q(i,f)} \max_{f \in \mathcal{F}} \abs{c_k(i,f)}\cdot M^k.
    \label{eq: L def - general case}
    \end{equation}
    Thus, from \eqref{eq: upper bounded on |f_i| - general case}, we have:
    \begin{equation}
        \forall_{f\in \mathcal{F}} \forall_{1 \le i \le d} \ \abs{f_i} \le L.
    \label{eq: upper bounded globally L on |f_i| - general case}
    \end{equation}
    Now, $L$ is independent of $i$ (because we took the maximum over all possible $1 \le i \le d$) and $d$ (because the degree $q$ and the coefficients $c$ will never depend directly on $d$, because $\sigma$ doesn't depend on $d$). We thus showed that the absolute value of each relevant moment is upper bounded by a global value $L \ge 0$ that is independent of $i$ and $d$. Thus, from \eqref{eq: rho_i third inequality - general case}, \eqref{eq: upper bounded globally L on |f_i| - general case}, it follows that:
    \begin{align*}
        \rho_i &\le \left(\abs{\chi_i - 4\delta_i\eta_i + 6\xi_i^2\eta_i^2 + 3\eta_i^4}\right)^{\frac{3}{4}}
        \\
        &\le \left(\abs{\chi_i} + 4\abs{\delta_i}\abs{\eta_i} + 6\xi_i^2\eta_i^2 + 3\eta_i^4\right)^{\frac{3}{4}}
        \\
        &\le \left(L + 4L^2 + 6L^4 + 3L^4\right)^{\frac{3}{4}}
        \\
        &= \left(9L^4 + 4L^2 + L\right)^{\frac{3}{4}}.
    \end{align*}
    Let us now denote $C = \left(9L^4 + 4L^2 + L\right)^{\frac{3}{4}}$, where $L \ge 0$ is defined in \eqref{eq: L def - general case}. Thus, $C \ge 0$ is independent of both $i$ and $d$, and
    \begin{equation*}
        \forall_{1 \le i \le d} \ \rho_i \le C.
    \end{equation*}
    When combining this result with \eqref{eq:xi_i^2 lower bound - general case}, we get that there exists some $C\ge 0$, $D > 0$ that doesn't depend on $i$ or $d$ such that
    \begin{equation*}
        \rho_i \le C, \ \xi_i^2 \ge D.
    \end{equation*}
    Thus,
    \begin{equation}
    \begin{aligned}
        \frac{\sum_{i=1}^d \rho_i}{\left(\sum_{i=1}^d \xi_i^2\right)^{\frac{3}{2}}} \le \frac{\sum_{i=1}^d C}{\left(\sum_{i=1}^d D\right)^{\frac{3}{2}}} \le \frac{C}{D^{\frac{3}{2}}\sqrt{d}}.
    \end{aligned}
    \label{eq: upper bound in barry essen g(d) - general case}
    \end{equation}
    We have verified that the conditions of Theorem \ref{thm: generalized_barry_essen} are satisfied, and thus, there exists some $C_0 > 0$ independent of $d$ such that for all $x \in \mathbb{R}$
    \begin{equation*}
        \abs{\mathbb{P}\left(\frac{1}{\sqrt{\sum_{i=1}^d \xi_i^2}}\sum_{i=1}^d (y_i-\eta_i) > x \ | \ y = 1\right) - \mathcal{Q}(x)} \le \frac{C_0\sum_{i=1}^d \rho_i}{\left(\sum_{i=1}^d \xi_i^2\right)^{\frac{3}{2}}} \le \frac{A}{\sqrt{d}}
    \end{equation*}
    where we used \eqref{eq: upper bound in barry essen g(d) - general case}, and denoted $A = \frac{C_0C}{D^{\frac{3}{2}}} \ge 0$. Now, $q(1,2)$, which is defined in \eqref{eq: probability of error as sum - general case}, reads:
    \begin{equation}
    \begin{aligned}
        q(1,2) &= \mathbb{P}\left(\sum_{i=1}^d y_i > 0 \ | \ y=1\right)
        \\
        &= \mathbb{P}\left(\sum_{i=1}^d (y_i - \eta_i) > - \sum_{i=1}^d \eta_i \ | \ y = 1 \right)
        \\
        &= \mathbb{P}\left(\frac{1}{\sqrt{\sum_{i=1}^d \xi_i^2}}\sum_{i=1}^d (y_i - \eta_i) > - \frac{\sum_{i=1}^d \eta_i}{\sqrt{\sum_{i=1}^d \xi_i^2}} \ | \ y = 1 \right)
        \\
        &= \mathcal{Q}\left(- \frac{\sum_{i=1}^d \eta_i}{\sqrt{\sum_{i=1}^d \xi_i^2}}\right) + \mathcal{O}\left(\frac{1}{\sqrt{d}}\right)
        \\
        &= \mathcal{Q}\left(\frac{\sum_{i=1}^d \left(2\mu_i^2 + \frac{1 - \gamma}{\gamma}\cdot \frac{\sigma^2}{2N}\right)}{\sqrt{\sum_{i=1}^d \sigma^2\left(\frac{1 + \gamma}{\gamma}\cdot \frac{\sigma^2}{N} + \frac{1 + \gamma^2}{2\gamma^2}\cdot  \frac{\sigma^2}{N^2} + 4\mu_i^2\left(1 + \frac{1}{\gamma N}\right)\right)}}\right) + \mathcal{O}\left(\frac{1}{\sqrt{d}}\right)
        \\
        &= \mathcal{Q}\left(\frac{2\norm{\boldsymbol{\mu}}^2 + \frac{d}{2N}\cdot \frac{1 - \gamma}{\gamma} \cdot \sigma^2}{\sqrt{\sigma^2\cdot \left(\left(\frac{1 + \gamma}{\gamma}\cdot \frac{\sigma^2}{N} + \frac{1 + \gamma^2}{2\gamma^2}\cdot \frac{\sigma^2}{N^2}\right)d + 4\left(1 + \frac{1}{\gamma N}\right)\norm{\boldsymbol{\mu}}^2\right)}}\right) + \mathcal{O}\left(\frac{1}{\sqrt{d}}\right)
        \\
        &= \mathcal{Q}\left(\frac{\norm{\boldsymbol{\mu}} + \frac{d}{4N} \cdot \frac{1 - \gamma}{\gamma}\cdot \frac{\sigma^2}{\norm{\boldsymbol{\mu}}}}{\sigma\cdot \sqrt{\left(\frac{1}{4N}\cdot \frac{1 + \gamma}{\gamma}\cdot \left(\frac{\sigma}{\norm{\boldsymbol{\mu}}}\right)^2 + \frac{1}{8N^2}\cdot \frac{1 + \gamma^2}{\gamma^2}\cdot  \left(\frac{\sigma}{\norm{\boldsymbol{\mu}}}\right)^2\right)\cdot d + \left(1 + \frac{1}{\gamma N}\right)}}\right) + \mathcal{O}\left(\frac{1}{\sqrt{d}}\right)
        \\
        &= \mathcal{Q}\left(\frac{\frac{\norm{\boldsymbol{\mu}}}{\sigma} + \frac{d}{4N} \cdot \frac{1 - \gamma}{\gamma}\cdot \frac{\sigma}{\norm{\boldsymbol{\mu}}}}{\sqrt{\left(\frac{1}{4N}\cdot \frac{1 + \gamma}{\gamma}\cdot \left(\frac{\sigma}{\norm{\boldsymbol{\mu}}}\right)^2 + \frac{1}{8N^2}\cdot \frac{1 + \gamma^2}{\gamma^2}\cdot  \left(\frac{\sigma}{\norm{\boldsymbol{\mu}}}\right)^2\right)\cdot d + \left(1 + \frac{1}{\gamma N}\right)}}\right) + \mathcal{O}\left(\frac{1}{\sqrt{d}}\right).
    \end{aligned}
    \label{first full prob of error - general case}
    \end{equation}
    We now revisit \eqref{eq:quality factor}:
    \begin{equation*}
        \mathcal{S} = \left(\frac{\norm{\boldsymbol{\mu}}}{\sigma}\right)^2.
    \end{equation*}
    Thus, from \eqref{first full prob of error - general case}, $q(1,2)$ reads:
    \begin{equation}
        q(1,2) = \mathcal{Q}\left(\frac{\sqrt{\mathcal{S}} + \frac{1}{4N}\cdot \frac{1 -\gamma}{\gamma}\cdot \frac{d}{\sqrt{\mathcal{S}}}}{\sqrt{\frac{1}{4N}\cdot \frac{1 + \gamma}{\gamma}\cdot \frac{d}{\mathcal{S}}+ \frac{1}{8N^2}\cdot \frac{1 + \gamma^2}{\gamma^2}\cdot \frac{d}{\mathcal{S}}+ \frac{1}{\gamma N} + 1}}\right) + \mathcal{O}\left(\frac{1}{\sqrt{d}}\right).
    \label{eq: final q(1,2) formula - general case}
    \end{equation}  

    We now compute $q(2,1)$. Similarly to the computation of $q(1,2)$, we have:
    \begin{equation}
    \begin{aligned}
       q(2,1) &= \mathbb{P}(\widehat{c}(\vct{x}) = 1 \ | \ y=2)
       \\
       &= \mathbb{P}\left((\widehat{\boldsymbol{\mu}}_2-\widehat{\boldsymbol{\mu}}_1)^\top \vct{x} < \frac{\norm{\widehat{\boldsymbol{\mu}}_2}^2-\norm{\widehat{\boldsymbol{\mu}}_1}^2}{2} \ | \ y = 2\right)
       \\
       &= \mathbb{P}(w<0 \ | \ y=2)
       \\
       &= \mathbb{P}\left(\sum_{i=1}^d y_i < 0 \ | \ y=2\right)
    \end{aligned}
    \label{q(2,1) first formula}
    \end{equation}

    where the random variables $\{y_i\}_{i=1}^d$ are defined in \eqref{eq:y_i final definition - general case}. The new conditional distribution of $x_i$ is:
    \begin{equation}
        \forall_{1 \le i \le d} \ x_i \ | \ y=2 \sim \mathcal{N}(\mu_i, \sigma^2).
    \label{eq:dist of x given y = 2 - general case}
    \end{equation}
    
    We first compute $\eta_i = \mathbb{E}[y_i]$. It is easy to verify from \eqref{eq:eta_i in terms of a_i, b_i - general case}, \eqref{E(a_i) formula - general case}, \eqref{E(b_i) formula - general case}, \eqref{eq:dist of x given y = 2 - general case}, that $\eta_i$ is given by:
    \begin{equation}
        \eta_i = 2\mu_i^2 - \frac{1 - \gamma}{\gamma}\cdot \frac{\sigma^2}{2N}.
    \label{eq:eta_i final formula q(2,1) - general case}
    \end{equation}
    We now compute $\xi_i^2 = \operatorname{Var}\left(y_i\right)$. It still has three components, as in \eqref{eq:Var(y_i) first formula - general case}. It is easy to see from \eqref{eq: Var(b_i) formula - general case} that $\operatorname{Var}(b_i)$ remains unchanged because it doesn't depend on the conditional distribution of $\vct{x}_i$. Thus,
    \begin{equation}
        \operatorname{Var}(b_i) = \frac{1 + \gamma^2}{\gamma^2}\cdot \frac{2\sigma^4}{N^2} + \frac{1 + \gamma}{\gamma}\cdot \frac{4\sigma^2\mu_i^2}{N}
    \label{eq: Var(b_i) formula q(2,1) - general case}.
    \end{equation}
    
    We observe from \eqref{eq: Var(a_i) formula - general case}, \eqref{eq:dist of x given y = 2 - general case} that $\operatorname{Var}(a_i)$ remains unchanged since it depends on $\mathbb{E}[\vct{x}_i^2] = \sigma^2 + \mu_i^2$, which is unaffected.
    \begin{equation}
        \operatorname{Var}(a_i) = \frac{1 + \gamma}{\gamma}\cdot \frac{\sigma^4}{N} + \mu_i^2\cdot \left(\frac{1 + \gamma}{\gamma}\cdot \frac{\sigma^2}{N} + 4\sigma^2\right)
    \label{eq: Var(a_i) formula q(2,1) - general case}.
    \end{equation}
    
    It remains to compute $\operatorname{Cov}(a_i, b_i) = \mathbb{E}[a_i b_i] - \mathbb{E}[a_i]\mathbb{E}[b_i]$. From \eqref{E(a_i) formula - general case} and \eqref{eq:dist of x given y = 2 - general case}, we have $\mathbb{E}[a_i] = 2\mu_i^2$. According to \eqref{E(b_i) formula - general case}, $\mathbb{E}[b_i]$ is unchanged, as it does not depend on the conditional distribution of $\vct{x}_i$. Similarly, from \eqref{eq: Cov(a_i, b_i) final formula - general case}, $\mathbb{E}[a_i b_i]$ picks up a minus sign, so overall, $\operatorname{Cov}(a_i, b_i)$ changes sign. Therefore, \eqref{eq: Cov(a_i, b_i) final formula - general case} implies:
    \begin{equation}
        \operatorname{Cov}(a_i,b_i) = \frac{1 - \gamma}{\gamma}\cdot \frac{2\sigma^2\mu_i^2}{N}.
    \label{eq: Cov(a_i, b_i) final formula q(2,1) - general case}
    \end{equation}

    Thus, from equations \eqref{eq:Var(y_i) first formula - general case}, \eqref{eq: Var(b_i) formula q(2,1) - general case}, \eqref{eq: Var(a_i) formula q(2,1) - general case}, \eqref{eq: Cov(a_i, b_i) final formula q(2,1) - general case}, it follows that:
    \begin{equation}
    \begin{aligned}
        \xi_i^2 &= \frac{1 + \gamma}{\gamma}\cdot \frac{\sigma^4}{N} + \mu_i^2\cdot \left(\frac{1 + \gamma}{\gamma}\cdot \frac{\sigma^2}{N} + 4\sigma^2\right)
        \\
        &+ \frac{1}{4}\cdot \left(\frac{1 + \gamma^2}{\gamma^2}\cdot \frac{2\sigma^4}{N^2} + \frac{1 + \gamma}{\gamma}\cdot \frac{4\sigma^2\mu_i^2}{N}\right) - \frac{1 - \gamma}{\gamma}\cdot \frac{2\sigma^2\mu_i^2}{N}
        \\
        &= \frac{1 + \gamma}{\gamma}\cdot \frac{\sigma^4}{N} + \frac{1 + \gamma^2}{\gamma^2}\cdot \frac{\sigma^4}{2N^2} + \mu_i^2\left(\frac{1 + \gamma}{\gamma}\cdot \frac{2\sigma^2}{N} - \frac{1 - \gamma}{\gamma}\cdot \frac{2\sigma^2}{N} + 4\sigma^2\right)
        \\
        &= \sigma^2\left(\frac{1 + \gamma}{\gamma}\cdot \frac{\sigma^2}{N} + \frac{1 + \gamma^2}{2\gamma^2}\cdot  \frac{\sigma^2}{N^2} + 4\mu_i^2\left(1 + \frac{1 + \gamma}{2\gamma}\cdot \frac{1}{N} - \frac{1 - \gamma}{2\gamma}\cdot \frac{1}{N}\right)\right)
        \\
        &= \sigma^2\left(\frac{1 + \gamma}{\gamma}\cdot \frac{\sigma^2}{N} + \frac{1 + \gamma^2}{2\gamma^2}\cdot  \frac{\sigma^2}{N^2} + 4\mu_i^2\left(1 + \frac{1}{N}\right)\right)
        \\
        &\ge \frac{\sigma^4}{N}\cdot \left(\frac{1 + \gamma}{\gamma} + \frac{1 + \gamma^2}{2\gamma^2}\cdot\frac{1}{N}\right) = D.
    \end{aligned}
    \label{eq: Var(y_i) final formula q(2,1) - general case}
    \end{equation}

    Thus, $\xi_i^2 \ge D$ where $D > 0$ is the same constant defined in \eqref{eq:xi_i^2 lower bound - general case}. A similar argument for the case $y = 1$ shows that $\rho_i = \mathbb{E}[\lvert y_i - \eta_i \rvert^3] \le C$, where $C \ge 0$ and $D > 0$ are constants independent of $i$ and $d$. Since the variables $\{y_i\}_{i=1}^d$ are independent, we may apply Theorem \ref{thm: generalized_barry_essen}, which guarantees the existence of a constant $C_0 > 0$ independent of $d$ such that for all $x \in \mathbb{R}$:

    \begin{equation*}
        \abs{\mathbb{P}\left(\frac{1}{\sqrt{\sum_{i=1}^d \xi_i^2}}\sum_{i=1}^d (y_i-\eta_i) > x \ | \ y=2 \right) - \mathcal{Q}(x)} \le \frac{C_0\sum_{i=1}^d \rho_i}{\left(\sum_{i=1}^d \xi_i^2\right)^{\frac{3}{2}}} \le \frac{A}{\sqrt{d}}.
    \end{equation*}
    Where we denoted $A = \frac{C_0C}{D^{\frac{3}{2}}} \ge 0$. Now, $q(2,1)$, which is defined in \eqref{q(2,1) first formula}, reads:
    \begin{equation}
    \begin{aligned}
        q(2,1) &= \mathbb{P}\left(\sum_{i=1}^d y_i < 0 \ | \ y=2\right)
        \\
        &= \mathbb{P}\left(\sum_{i=1}^d (y_i - \eta_i) < - \sum_{i=1}^d \eta_i \ | \ y=2\right)
        \\
        &= \mathbb{P}\left(\frac{1}{\sqrt{\sum_{i=1}^d \xi_i^2}}\sum_{i=1}^d (y_i - \eta_i) < - \frac{\sum_{i=1}^d \eta_i}{\sqrt{\sum_{i=1}^d \xi_i^2}} \ | \ y=2\right)
        \\
        &= 1 - \mathbb{P}\left(\frac{1}{\sqrt{\sum_{i=1}^d \xi_i^2}}\sum_{i=1}^d (y_i - \eta_i) \ge - \frac{\sum_{i=1}^d \eta_i}{\sqrt{\sum_{i=1}^d \xi_i^2}} \ | \ y=2 \right)
        \\
        &= 1 - \left(\mathcal{Q}\left(- \frac{\sum_{i=1}^d \eta_i}{\sqrt{\sum_{i=1}^d \xi_i^2}}\right) + \mathcal{O}\left(\frac{1}{\sqrt{d}}\right)\right)
        \\
        &= \mathcal{Q}\left( \frac{\sum_{i=1}^d \eta_i}{\sqrt{\sum_{i=1}^d \xi_i^2}}\right) + \mathcal{O}\left(\frac{1}{\sqrt{d}}\right)
        \\
        &= \mathcal{Q}\left(\frac{\sum_{i=1}^d \left(2\mu_i^2 + \frac{1 - \gamma}{\gamma}\cdot \frac{\sigma^2}{2N}\right)}{\sqrt{\sum_{i=1}^d \sigma^2\left(\frac{1 + \gamma}{\gamma}\cdot \frac{\sigma^2}{N} + \frac{1 + \gamma^2}{2\gamma^2}\cdot  \frac{\sigma^2}{N^2} + 4\mu_i^2\left(1 + \frac{1}{N}\right)\right)}}\right) + \mathcal{O}\left(\frac{1}{\sqrt{d}}\right)
        \\
        &= \mathcal{Q}\left(\frac{2\norm{\boldsymbol{\mu}}^2 + \frac{d}{2N}\cdot \frac{1 - \gamma}{\gamma} \cdot \sigma^2}{\sqrt{\sigma^2\cdot \left(\left(\frac{1 + \gamma}{\gamma}\cdot \frac{\sigma^2}{N} + \frac{1 + \gamma^2}{2\gamma^2}\cdot \frac{\sigma^2}{N^2}\right)d + 4\left(1 + \frac{1}{N}\right)\norm{\boldsymbol{\mu}}^2\right)}}\right) + \mathcal{O}\left(\frac{1}{\sqrt{d}}\right)
        \\
        &= \mathcal{Q}\left(\frac{\norm{\boldsymbol{\mu}} + \frac{d}{4N} \cdot \frac{1 - \gamma}{\gamma}\cdot \frac{\sigma^2}{\norm{\boldsymbol{\mu}}}}{\sigma\cdot \sqrt{\left(\frac{1}{4N}\cdot \frac{1 + \gamma}{\gamma}\cdot \left(\frac{\sigma}{\norm{\boldsymbol{\mu}}}\right)^2 + \frac{1}{8N^2}\cdot \frac{1 + \gamma^2}{\gamma^2}\cdot  \left(\frac{\sigma}{\norm{\boldsymbol{\mu}}}\right)^2\right)\cdot d + \left(1 + \frac{1}{N}\right)}}\right) + \mathcal{O}\left(\frac{1}{\sqrt{d}}\right)
        \\
        &= \mathcal{Q}\left(\frac{\frac{\norm{\boldsymbol{\mu}}}{\sigma} + \frac{d}{4N} \cdot \frac{1 - \gamma}{\gamma}\cdot \frac{\sigma}{\norm{\boldsymbol{\mu}}}}{\sqrt{\left(\frac{1}{4N}\cdot \frac{1 + \gamma}{\gamma}\cdot \left(\frac{\sigma}{\norm{\boldsymbol{\mu}}}\right)^2 + \frac{1}{8N^2}\cdot \frac{1 + \gamma^2}{\gamma^2}\cdot  \left(\frac{\sigma}{\norm{\boldsymbol{\mu}}}\right)^2\right)\cdot d + \left(1 + \frac{1}{N}\right)}}\right) + \mathcal{O}\left(\frac{1}{\sqrt{d}}\right)
    \end{aligned}
    \label{q(2,1) first formula - general case}
    \end{equation}
    where we used the identity $\mathcal{Q}(-x) = 1 - \mathcal{Q}(x)$. We now revisit \eqref{eq:quality factor}:
    \begin{equation*}
        \mathcal{S} = \left(\frac{\norm{\boldsymbol{\mu}}}{\sigma}\right)^2.
    \end{equation*}
    Thus, from \eqref{q(2,1) first formula - general case}, $q(2,1)$ reads:
    \begin{equation}
        q(2,1) = \mathcal{Q}\left(\frac{\sqrt{\mathcal{S}} + \frac{1}{4N}\cdot \frac{1 -\gamma}{\gamma}\cdot \frac{d}{\sqrt{\mathcal{S}}}}{\sqrt{\frac{1}{4N}\cdot \frac{1 + \gamma}{\gamma}\cdot \frac{d}{\mathcal{S}}+ \frac{1}{8N^2}\cdot \frac{1 + \gamma^2}{\gamma^2}\cdot \frac{d}{\mathcal{S}}+ \frac{1}{N} + 1}}\right) + \mathcal{O}\left(\frac{1}{\sqrt{d}}\right).
    \label{eq: final q(2,1) formula - general case}
    \end{equation}  
    
    To finish the proof, from 
    \eqref{full probability of error expansion}, \eqref{eq: final q(1,2) formula - general case}, \eqref{eq: final q(2,1) formula - general case}, the probability of error reads:
    \begin{align*}
        p_{\vct{x}}(\mathrm{error}) &= \frac{1}{2}\cdot q(1,2) + \frac{1}{2}\cdot q(2,1) 
        \\
        &= \frac{1}{2}\cdot \left(\mathcal{Q}\left(\frac{\sqrt{\mathcal{S}} + \frac{1}{4N}\cdot \frac{1 -\gamma}{\gamma}\cdot \frac{d}{\sqrt{\mathcal{S}}}}{\sqrt{\frac{1}{4N}\cdot \frac{1 + \gamma}{\gamma}\cdot \frac{d}{\mathcal{S}}+ \frac{1}{8N^2}\cdot \frac{1 + \gamma^2}{\gamma^2}\cdot \frac{d}{\mathcal{S}}+ \frac{1}{\gamma N} + 1}}\right) + \mathcal{O}\left(\frac{1}{\sqrt{d}}\right)\right)
        \\
        & + \frac{1}{2}\cdot \left(\mathcal{Q}\left(\frac{\sqrt{\mathcal{S}} - \frac{1}{4N}\cdot \frac{1-\gamma}{\gamma}\cdot \frac{d}{\sqrt{\mathcal{S}}}}{\sqrt{\frac{1}{4N}\cdot \frac{1+\gamma}{\gamma}\cdot \frac{d}{\mathcal{S}} + \frac{1}{8N^2} \cdot \frac{1+\gamma^2}{\gamma^2} \cdot \frac{d}{\mathcal{S}} + \frac{1}{N} + 1}}\right) + \mathcal{O}\left(\frac{1}{\sqrt{d}}\right)\right)
        \\
        &= \frac{1}{2}\cdot \mathcal{Q}\left(\frac{\sqrt{\mathcal{S}} + \frac{1}{4N}\cdot \frac{1 -\gamma}{\gamma}\cdot \frac{d}{\sqrt{\mathcal{S}}}}{\sqrt{\frac{1}{4N}\cdot \frac{1 + \gamma}{\gamma}\cdot \frac{d}{\mathcal{S}}+ \frac{1}{8N^2}\cdot \frac{1 + \gamma^2}{\gamma^2}\cdot \frac{d}{\mathcal{S}}+ \frac{1}{\gamma N} + 1}}\right) 
        \\
        &+ \frac{1}{2}\cdot \mathcal{Q}\left(\frac{\sqrt{\mathcal{S}} - \frac{1}{4N}\cdot \frac{1-\gamma}{\gamma}\cdot \frac{d}{\sqrt{\mathcal{S}}}}{\sqrt{\frac{1}{4N}\cdot \frac{1+\gamma}{\gamma}\cdot \frac{d}{\mathcal{S}} + \frac{1}{8N^2} \cdot \frac{1+\gamma^2}{\gamma^2} \cdot \frac{d}{\mathcal{S}} + \frac{1}{N} + 1}}\right) + \mathcal{O}\left(\frac{1}{\sqrt{d}}\right)
        \\
        &= \hat{p}(\mathcal{S},N,\gamma,d)  + \mathcal{O}\left(\frac{1}{\sqrt{d}}\right)
    \end{align*}
    where $\hat{p}$ is given in \eqref{approx prob x}.
\end{proof}

\subsection{Proof of Theorem \ref{thm:thm3}}
\label{app:data_proc_thm}

\begin{proof}
    We provide an algorithm to construct $\boldsymbol{A} \in \mathbb{R}^{k\times d}$ given $\frac{\boldsymbol{\mu}}{\norm{\boldsymbol{\mu}}}$ and prove that it satisfies \eqref{eq:A_properties}. Later, we will show how to estimate it from unlabeled data.
\begin{enumerate}
    \item Define $\boldsymbol{u} \coloneqq \boldsymbol{a}_1 = \frac{\boldsymbol{\mu}}{\norm{\boldsymbol{\mu}}}$. If $k = 1$, define $\boldsymbol{A} = \boldsymbol{u}^\top$. Else, continue.
    
    \item Find $\boldsymbol{a}_2, \dots, \boldsymbol{a}_k \in \mathbb{R}^d$ such that $\langle \boldsymbol{a}_i, \boldsymbol{u}\rangle = 0$ and $\langle \boldsymbol{a}_i, \boldsymbol{a}_j\rangle = \delta_{ij}$.
        
    \item Define the matrix $\boldsymbol{A} \in \mathbb{R}^{k \times d}$
    where the $i$-th row is given by $\boldsymbol{a}_i^\top$.
\end{enumerate}

The proof that the algorithm works is given below.
\begin{itemize}
    \item \underline{Step 1:} If $k=1$, we define $\boldsymbol{A} = \frac{\boldsymbol{\mu}^\top}{\norm{\boldsymbol{\mu}}}$. It is easy to ensure that it satisfies \eqref{eq:A_properties}.
    \item \underline{Step 2:} If $\boldsymbol{\mu} = \boldsymbol{0}$, then the result is trivial, because we can construct on orthonormal set
    \begin{equation*}
        \{\boldsymbol{a}_2, \dots,\boldsymbol{a_k}\}\subset \mathbb{R}^d.
    \end{equation*}
    Otherwise, $\boldsymbol{\mu}\ne \boldsymbol{0}$ and let us define the following subset of $\mathbb{R}^d$:
    \begin{align*}
        V \coloneqq \{\vct{x} \in \mathbb{R}^d : \langle \vct{x}, \boldsymbol{\mu}\rangle = 0 \} \subset \mathbb{R}^d.
    \end{align*}
    We see that $V = \left(\operatorname{span}\{\boldsymbol{\mu}\}\right)^\perp$ is a linear subspace of $\mathbb{R}^d$ of dimension $d-1$. 
    Thus, there exists a basis $\{\boldsymbol{v}_1,\dots, \boldsymbol{v}_{d-1}\}\subseteq V$. We know that $k - 1 \le d  - 1$ and thus $\{\boldsymbol{v}_1 , \dots, \boldsymbol{v}_{k-1}\} \subseteq V$ is a linearly independent set. That is, we can apply the Gram-Schmidt procedure on this set, to get an orthonormal set $\{\vct{a}_2, \dots, \vct{a}_{k}\} \subseteq V$. This is a subset of $V$ because Gram–Schmidt outputs vectors that are linear combinations of the input, which lie in $V$.

    \item \underline{Step 3:} The rows of $\boldsymbol{A}$ are orthonormal, so $\boldsymbol{A} \boldsymbol{A}^\top = \boldsymbol{I}_k$. From step 2, it follows easily that
    \begin{align*}
            \boldsymbol{A}\boldsymbol{\mu} = \begin{bmatrix} \norm{\boldsymbol{\mu}} \\ 0 \\ \vdots \\ 0 \end{bmatrix} \Rightarrow \norm{\boldsymbol{A}\boldsymbol{\mu}} = \norm{\boldsymbol{\mu}}
    \end{align*}
\end{itemize}
Thus, $\boldsymbol{A}$ meets the needed requirements, and thus we have proved the existence of such a matrix $\boldsymbol{A}$. Assuming $\boldsymbol{\mu} \ne \boldsymbol{0}$, it is now left to prove that one can learn $\boldsymbol{A}$ from infinite unlabeled data $\{\vct{x}_i\}_{i=1}^{\infty}$. This data is taken from the distribution 
\begin{equation}
    \vct{x} \sim \frac{1}{2}\mathcal{N}\left(-\boldsymbol{\mu}, \sigma^2\boldsymbol{I}_d\right) + \frac{1}{2}\mathcal{N}\left(\boldsymbol{\mu}, \sigma^2\boldsymbol{I}_d\right)
\label{proof thm 3 - dist x}
\end{equation}
where the label is called $y \in \{1,2\}$. Let us assume that there is $m$ unlabeled data. We first compute
\begin{equation*}
    \boldsymbol{\Sigma}_m = \frac{1}{m}\sum_{i=1}^m \vct{x}_i\vct{x}_i^\top.
\end{equation*}
As $m \rightarrow \infty$, we have $\boldsymbol{\Sigma}_m \xrightarrow{\text{a.s.}} \boldsymbol{\Sigma}$, where
\begin{align*}
    \boldsymbol{\Sigma} = \mathbb{E}\left[\vct{x}\vct{x}^\top\right] &= \mathbb{E}\left[\vct{x}\vct{x}^\top \ | \ y = 1\right]\cdot \mathbb{P}\left(y = 1\right) + \mathbb{E}\left[\vct{x}\vct{x}^\top \ | \ y = 2\right]\cdot \mathbb{P}\left(y = 2\right)
    \\
    &= \frac{1}{2}\left(\sigma^2\boldsymbol{I}_d + \boldsymbol{\mu}\boldsymbol{\mu}^\top\right) + \frac{1}{2}\left(\sigma^2\boldsymbol{I}_d + \boldsymbol{\mu}\boldsymbol{\mu}^\top\right)
    \\
    &= \sigma^2\boldsymbol{I}_d + \boldsymbol{\mu}\boldsymbol{\mu}^\top
\end{align*}
where we used \eqref{proof thm 3 - dist x}. That is, we can learn the matrix
\begin{equation}
    \boldsymbol{\Sigma} = \sigma^2\boldsymbol{I}_d + \boldsymbol{\mu}\boldsymbol{\mu}^\top.
\label{proof thm 3 - Sigma matrix}
\end{equation}
We now argue that the maximal eigenvalue of $\boldsymbol{\Sigma}$ is $\lambda_{\max} = \sigma^2 + \norm{\boldsymbol{\mu}}^2$, with eigen-space $V_{\lambda_{\max}} = \text{span}\{\boldsymbol{\mu}\}$. Indeed, from \eqref{proof thm 3 - Sigma matrix}, it follows that:
\begin{equation*}
    \boldsymbol{\Sigma}\boldsymbol{\mu} = \left(\sigma^2 + \norm{\boldsymbol{\mu}}^2\right)\boldsymbol{\mu}
\end{equation*}
and for all $\boldsymbol{v} \ \bot \  \boldsymbol{\mu}$ we have
\begin{equation*}
    \boldsymbol{\Sigma}\boldsymbol{v} = \sigma^2\boldsymbol{v}.
\end{equation*}
Thus, the eigenvalues of $\boldsymbol{\Sigma}$ are
\begin{equation*}
    \sigma^2 = \lambda_{\text{min}} < \lambda_{\text{max}} = \sigma^2 + \norm{\boldsymbol{\mu}}^2.
\end{equation*}
The eigen-space of $\lambda_{\text{min}}$ satisfies:
\begin{equation*}
    V_{\lambda_{\text{min}}} = \left(\text{span}\{\boldsymbol{\mu}\}\right)^\bot \Rightarrow \text{dim}\left(V_{\lambda_{\text{min}}}\right) = d-1
\end{equation*}
Thus, $\text{dim}\left(V_{\lambda_{\text{max}}}\right) = 1$, which implies that
\begin{equation}
    V_{\lambda_{\max}} = \text{span}\{\boldsymbol{\mu}\}.
\label{eigan space of lambda_max - proof 3}
\end{equation}
We now apply the power iteration method on the matrix $\boldsymbol{\Sigma}_m$. 
For large enough number of iterations and sufficiently large
$m \gg 1$, it returns a vector that is \textit{arbitrarily close} to the eigenvector of $\boldsymbol{\Sigma}$ that corresponds to the maximal eigenvalue $\lambda_{\text{max}}$ (ensured by the spectral gap of $\|\boldsymbol{\mu}\|^2>0$ between the two largest eigenvalues of $\boldsymbol{\Sigma}$), which from \eqref{eigan space of lambda_max - proof 3}, is characterized as $\alpha\boldsymbol{\mu}$ where $\alpha \ne 0$ is a constant. Normalizing this vector leads to 
    $\pm \frac{\boldsymbol{\mu}}{\norm{\boldsymbol{\mu}}}$.
Now, we apply the algorithm we presented above to compute $\boldsymbol{A}$. As a side note, using the vector $\boldsymbol{a}_1 = - \frac{\boldsymbol{\mu}^\top}{\norm{\boldsymbol{\mu}}}$ as the first row of $\boldsymbol{A}$ has no effect on the resulting properties of $\boldsymbol{A}$.
\end{proof}

\subsection{Proof of Theorem \ref{thm:thm4}}
\begin{proof}
    We know that
    \begin{equation*}
        \vct{z} = \boldsymbol{A} \vct{x},
    \end{equation*}
    where $\boldsymbol{A} \in \mathbb{R}^{k \times d}$ is a deterministic matrix satisfying:
    \begin{itemize}
        \item $\boldsymbol{A} \boldsymbol{A}^\top = \boldsymbol{I}_k$.
        \item $\norm{\boldsymbol{A}\boldsymbol{\mu}} = \norm{\boldsymbol{\mu}}$.
    \end{itemize}
It is a standard result that a linear transformation of a Gaussian vector is also a Gaussian vector, thus:
\begin{equation*}
    \forall_{j \in \{1,2\}} \ \vct{z} \ | \ y=j \sim \mathcal{N}\left(\boldsymbol{A} \boldsymbol{\mu}_j, \boldsymbol{A} \sigma^2\boldsymbol{I}_d \boldsymbol{A}^\top\right)
\end{equation*}
that is, for all $j \in \{1,2\}$ we have:
\begin{equation*}
    \vct{z} \ | \ y = j \sim \mathcal{N}\left(\boldsymbol{\eta}_j, \sigma^2\boldsymbol{I}_k\right)
\end{equation*}
where 
\begin{equation*}
    \boldsymbol{\eta}_j = \boldsymbol{A} \boldsymbol{\mu}_j.
\end{equation*}
We know that $\boldsymbol{\mu}_2 = -\boldsymbol{\mu}_1 = \boldsymbol{\mu}$, and thus $\boldsymbol{\eta}_2 = -\boldsymbol{\eta}_1 = \boldsymbol{\eta} = \boldsymbol{A}\boldsymbol{\mu}$. That is, our model assumptions still hold, with the following modifications:
\begin{itemize}
    \item $d \mapsto k$.
    \item $\boldsymbol{\mu} \mapsto \boldsymbol{\eta} = \boldsymbol{A}\boldsymbol{\mu}$.
\end{itemize}
The new separation quality factor $\mathcal{S}_{\vct{z}}$ of the new GMM (computed similarly to \eqref{eq:quality factor}) is given by:
\begin{equation*}
    \mathcal{S}_{\vct{z}} = \left(\frac{\norm{\boldsymbol{\eta}_2 - \boldsymbol{\eta}_1}}{2\sigma}\right)^2=\left(\frac{\norm{\boldsymbol{\eta}}}{\sigma}\right)^2 = \left(\frac{\norm{\boldsymbol{A}\boldsymbol{\mu}}}{\sigma}\right)^2 = \left(\frac{\norm{\boldsymbol{\mu}}}{\sigma}\right)^2 = \mathcal{S}.
\end{equation*}
That is, the separation quality factor remains the same after the processing. The result is now immediate from Theorem \ref{thm:thm2} and changing $d \mapsto k$. %
\end{proof}

\subsection{Proof of Theorem \ref{thm:thm5}}

\begin{proof}
Let us fix $\gamma = 1$ and take some
\begin{equation*}
    \mathcal{S} > 0, \ 1 \le k < d, \ N \in \mathbb{N}.
\end{equation*}
From Theorems \ref{thm:thm2} and \ref{thm:thm4}, it follows that we need to show the following:
\begin{equation}
    \hat{p}(\mathcal{S}, N, 1, k) < \hat{p}(\mathcal{S},N,1,d)
\label{need to show theorem 4}
\end{equation}
where $\hat{p}$ is given in \eqref{approx prob x}. It is easy to prove that:
\begin{equation*}
    \forall_{q \in \mathbb{N}} \ \hat{p}\left(\mathcal{S}, N, 1, q\right) = \mathcal{Q}\left(\frac{\sqrt{\mathcal{S}}}{\sqrt{\left(\frac{q}{2\mathcal{S}} + 1\right)\cdot \frac{1}{N} + \frac{q}{4\mathcal{S}}\cdot \frac{1}{N^2} + 1}}\right).
\end{equation*}
Following \eqref{need to show theorem 4}, we need to show that:
\begin{equation*}
    \mathcal{Q}\left(\frac{\sqrt{\mathcal{S}}}{\sqrt{\left(\frac{k}{2\mathcal{S}} + 1\right)\cdot \frac{1}{N} + \frac{k}{4\mathcal{S}}\cdot \frac{1}{N^2} + 1}}\right) < \mathcal{Q}\left(\frac{\sqrt{\mathcal{S}}}{\sqrt{\left(\frac{d}{2\mathcal{S}} + 1\right)\cdot \frac{1}{N} + \frac{d}{4\mathcal{S}}\cdot \frac{1}{N^2} + 1}}\right)
\end{equation*}
which is immediate because the argument in the $\mathcal{Q}$ is strictly higher in the LHS, and the $\mathcal{Q}$ function is strictly decreasing.
\end{proof}

\subsection{Proof of Theorem \ref{thm:thm6}}

\begin{proof}
Let us take some
\begin{equation*}
    0 < \gamma < 1, \ 0 < \mathcal{S} \le 1, 1 \le k < d, N \ge \frac{\gamma^2 - 4\gamma + 1}{2\gamma\cdot (1+\gamma)}
\end{equation*}
we need to show that 
\begin{equation*}
    \hat{p}(\mathcal{S},N,\gamma,k) < \hat{p}(\mathcal{S},N,\gamma,d).
\end{equation*}
That is, it is sufficient to show that the function
\begin{equation*}
    f(x) = 2\hat{p}(\mathcal{S},N,\gamma,x)
\end{equation*}
is strictly increasing for all $x \ge 1$, where $\hat{p}$ is defined in \eqref{approx prob x}. It is easy to verify that:
\begin{equation}
    f(x) = \mathcal{Q}\left(\frac{\sqrt{\mathcal{S}}+ \frac{1 - \gamma}{4\gamma N \sqrt{\mathcal{S}}}\cdot x}{\sqrt{\left(\frac{1 + \gamma}{4\gamma N \mathcal{S}} + \frac{1 + \gamma^2}{8\gamma^2 N^2 \mathcal{S}}\right)\cdot x + \frac{1}{\gamma N} + 1}}\right)
    +
    \mathcal{Q}\left(\frac{\sqrt{\mathcal{S}}- \frac{1 - \gamma}{4\gamma N \sqrt{\mathcal{S}}}\cdot x}{\sqrt{\left(\frac{1 + \gamma}{4\gamma N \mathcal{S}} + \frac{1 + \gamma^2}{8\gamma^2 N^2 \mathcal{S}}\right)\cdot x + \frac{1}{N} + 1}}\right).
\label{f(x) def}
\end{equation}
Let us define the following functions:
\begin{equation}
    g_1(x) = \frac{\sqrt{\mathcal{S}} + \frac{1 - \gamma}{4\gamma N \sqrt{\mathcal{S}}}\cdot x}{\sqrt{\left(\frac{1 + \gamma}{4\gamma N \mathcal{S}} + \frac{1 + \gamma^2}{8\gamma^2 N^2 \mathcal{S}}\right)\cdot x + \frac{1}{\gamma N} + 1}}
\label{g_1(x) def}
\end{equation}
and
\begin{equation}
    g_2(x) = \frac{\sqrt{\mathcal{S}} - \frac{1 - \gamma}{4\gamma N \sqrt{\mathcal{S}}}\cdot x}{\sqrt{\left(\frac{1 + \gamma}{4\gamma N \mathcal{S}} + \frac{1 + \gamma^2}{8\gamma^2 N^2 \mathcal{S}}\right)\cdot x + \frac{1}{N} + 1}}.
\label{g_2(x) def}
\end{equation}

Thus, \eqref{f(x) def} reads:
\begin{equation}
    f(x) = \mathcal{Q}\left(g_1(x)\right) + \mathcal{Q}\left(g_2(x)\right).
\label{simplified f(x)}
\end{equation}
From the chain rule, the derivative reads:
\begin{equation}
\begin{aligned}
    f'(x) &=\mathcal{Q}'\left(g_1(x)\right)\cdot g_1'(x) + \mathcal{Q}'\left(g_2(x)\right)\cdot g_2'(x)
    \\
    &= -\frac{1}{\sqrt{2\pi}}\cdot \left(\exp\left(-\frac{1}{2}\cdot g_1^2(x)\right)\cdot g_1'(x) + \exp\left(-\frac{1}{2}\cdot g_2^2(x)\right)\cdot g_2'(x)\right)
    \\
    &= -\frac{1}{\sqrt{2\pi}}\cdot \left(w_1(x)\cdot g_1'(x) + w_2(x)\cdot g_2'(x)\right).
\end{aligned}
\label{f'(x) formula}
\end{equation}
We used the following property of the $\mathcal{Q}$ function:
\begin{equation*}
    \frac{d}{dx} \mathcal{Q}(x) = -\frac{1}{\sqrt{2\pi}}\cdot \exp\left(-\frac{x^2}{2}\right)
\end{equation*}
and the following notation:
\begin{equation}
    w_i(x) = \exp\left(-\frac{1}{2}\cdot g_i^2(x)\right).
\label{w_i def}
\end{equation}
Thus, showing that $f$ is strictly increasing for all $x \ge 1$ is equivalent to proving that for all $x \ge 1$
\begin{equation}
    f'(x) > 0 \Leftrightarrow w_1(x)\cdot g_1'(x) + w_2(x)\cdot g_2'(x) < 0.
\label{w_1*g_1'+w_2*g_2'<0}
\end{equation}
We argue now that for all $x \ge 1$:
\begin{enumerate}
    \item \begin{equation} w_1(x) < w_2(x) \label{w_1<w_2} \end{equation}
    \item \begin{equation} g_2'(x) < 0 \label{g_2'<0} \end{equation}
    \item \begin{equation} g_1'(x) + g_2'(x) \le 0 \label{g_1'+g_2'<=0} \end{equation}
\end{enumerate}
\underline{Let us first prove \eqref{w_1<w_2}:} From \eqref{w_i def} it follows that it is sufficient to prove:
\begin{equation}
    \forall_{x \ge 1 } \ \abs{g_1(x)} > \abs{g_2(x)}.
\label{|g_1(x)| > |g_2(x)|}
\end{equation}
Let us take some $x \ge 1$. It is easy to see from \eqref{g_1(x) def} that $g_1(x) \ge 0$ and thus $\abs{g_1(x)} = g_1(x)$. That is, it is sufficient to prove that
\begin{equation}
    g_1(x) > g_2(x)
\label{g_1(x) > g_2(x)}
\end{equation}
and
\begin{equation}
    g_2(x) > -g_1(x) \Leftrightarrow g_1(x) + g_2(x) > 0.
\label{g_1(x) + g_2(x) > 0}
\end{equation}
Let us now define the following parameters:
\begin{equation}
\begin{cases}
    B = \frac{1-\gamma}{4\gamma N \sqrt{\mathcal{S}}}
    \\
    C = \frac{1+\gamma}{4\gamma N \mathcal{S}} + \frac{1+\gamma^2}{8\gamma^2 N^2 \mathcal{S}}
    \\
    c_1 = \frac{1}{\gamma N} + 1
    \\
    c_2 = \frac{1}{N} + 1 < c_1
\end{cases}
\label{B, C, c_1, c_2 def}
\end{equation}
We also define the following functions:
\begin{equation}
\begin{cases}
    D_1(x) = \sqrt{Cx + c_1}
    \\
    D_2(x) = \sqrt{Cx + c_2} < D_1(x)
\end{cases}
\label{D_1(x), D_2(x) def}
\end{equation}
From \eqref{g_1(x) def}, \eqref{g_2(x) def}, \eqref{B, C, c_1, c_2 def}, \eqref{D_1(x), D_2(x) def}, it follows that:
\begin{equation}
\begin{cases}
    g_1(x) = \frac{\sqrt{\mathcal{S}} + B x}{D_1(x)}
    \\
    g_2(x) = \frac{\sqrt{\mathcal{S}}-B x}{D_2(x)}
\end{cases}
\label{g_1(x), g_2(x) definition - thm6}
\end{equation}
We first prove \eqref{g_1(x) > g_2(x)}. Their difference $g_1(x) - g_2(x)$ reads:
\begin{equation}
\begin{aligned}
    g_1(x) - g_2(x) &= \frac{\sqrt{\mathcal{S}} + B x}{D_1(x)} - \frac{\sqrt{\mathcal{S}} - B x}{D_2(x)} = \frac{(\sqrt{\mathcal{S}} + B x)\cdot D_2(x) - (\sqrt{\mathcal{S}}-B x)\cdot D_1(x)}{D_1(x)\cdot D_2(x)} 
    \\
    &= \frac{\sqrt{\mathcal{S}}\cdot \left(D_2(x)-D_1(x)\right) + Bx\cdot \left(D_2(x)+D_1(x)\right)}{D_1(x)\cdot D_2(x)}.
\end{aligned}
\label{g_1 - g_2 formula}
\end{equation}
Now, from \eqref{B, C, c_1, c_2 def}, \eqref{D_1(x), D_2(x) def}, we have:
\begin{equation}
\begin{aligned}
    D_2(x) - D_1(x) &= \frac{D_2^2(x) - D_1^2(x)}{D_2(x) +D_1(x)} = \frac{c_2 - c_1}{D_2(x)+D_1(x)} 
    \\
    &=\frac{\frac{1}{N} - \frac{1}{\gamma\cdot N}}{D_1(x)+D_2(x)}
    \\
    &= -\frac{1}{N}\cdot \frac{1-\gamma}{\gamma}\cdot \frac{1}{D_2(x)+D_1(x)}.
\end{aligned}
\label{D_2-D_1 formula}
\end{equation}
In order to show \eqref{g_1(x) > g_2(x)}, it is sufficient to show that the expression in \eqref{g_1 - g_2 formula} is strictly positive. Substituting \eqref{D_2-D_1 formula}, we get:
\begin{gather*}
    \underbrace{
    -\frac{\sqrt{\mathcal{S}}}{N} \cdot \frac{1 - \gamma}{\gamma} \cdot \frac{1}{D_2(x) + D_1(x)}
    }_{\text{\( \sqrt{\mathcal{S}}\cdot \left(D_2(x) - D_1(x)\right) \)}} + \frac{1-\gamma}{4\gamma\cdot N\cdot \sqrt{\mathcal{S}}}\cdot x\cdot \left(D_2(x)+D_1(x)\right) > 0
    \\
    \frac{1}{4\sqrt{\mathcal{S}}}\cdot x\cdot \left(D_2(x)+D_1(x)\right) > \frac{\sqrt{\mathcal{S}}}{D_2(x)+D_1(x)}
    \\
    x\cdot \left(D_2(x) + D_1(x)\right)^2 > 4\mathcal{S}
\end{gather*}
That is, in order to show \eqref{w_1<w_2}, it is sufficient to show that:
\begin{equation}
    h(x) \coloneqq x\cdot \left(D_2(x) + D_1(x)\right)^2 > 4\mathcal{S}.
\label{h(x) def}
\end{equation}
We now show that $h(x)$ is strictly increasing:
\begin{equation*}
    h'(x) =\left(D_2(x) + D_1(x)\right)^2 + 2x\cdot \left(D_2(x)+D_1(x)\right) > 0.
\end{equation*}
Thus, it follows that:
\begin{equation}
    x \ge 1 \Rightarrow h(x) > h(1).
\label{h(x) > h(1)}
\end{equation}
Now, from \eqref{h(x) def}, it follows that:
\begin{align*}
    h(1) = \left(D_2(1) + D_1(1)\right)^2 > 4\mathcal{S} \Leftrightarrow D_2(1) + D_1(1) > 2\sqrt{\mathcal{S}}.
\end{align*}
Indeed, from \eqref{B, C, c_1, c_2 def}, \eqref{D_1(x), D_2(x) def}, we have:
\begin{align*}
    D_2(1) + D_1(1) > 2\cdot D_2(1) &= 2\cdot \sqrt{C + c_2}
    \\
    &= 2\cdot \sqrt{C+ \frac{1}{N} + 1}
    \\
    &> 2
    \\
    & \ge 2\sqrt{\mathcal{S}}
\end{align*}
where we used the assumption that $\mathcal{S} \le 1$, and $C > 0$. That is, we proved \eqref{g_1(x) > g_2(x)}. We will now prove \eqref{g_1(x) + g_2(x) > 0}. From \eqref{g_1(x), g_2(x) definition - thm6}, the sum $g_1(x) + g_2(x)$ reads:
\begin{equation}
\begin{aligned}
    g_1(x) + g_2(x) &= \frac{\sqrt{\mathcal{S}} + B\cdot x}{D_1(x)} + \frac{\sqrt{\mathcal{S}} - B x}{D_2(x)} = \frac{\left(\sqrt{\mathcal{S}} + B x\right)\cdot D_2(x) + \left(\sqrt{\mathcal{S}} - B x\right)\cdot D_1(x)}{D_1(x) \cdot D_2(x)}
    \\
    &= \frac{\sqrt{\mathcal{S}}\cdot \left(D_2(x) + D_1(x)\right) + B x\cdot \left(D_2(x) - D_1(x)\right)}{D_1(x)\cdot D_2(x)}.
\end{aligned}
\label{g_1 + g_2 formula}
\end{equation}
In order to show \eqref{g_1(x) + g_2(x) > 0}, it is sufficient to show that the expression in \eqref{g_1 + g_2 formula} is strictly positive. Substituting \eqref{D_2-D_1 formula}, we get:
\begin{gather*}
    \sqrt{\mathcal{S}}\cdot \left(D_2(x) + D_1(x)\right) + \underbrace{
    \left(-\frac{1 - \gamma}{4\gamma\cdot N\cdot \sqrt{\mathcal{S}}}\cdot \frac{1}{N}\cdot \frac{1 -\gamma}{\gamma}\cdot \frac{x}{D_2(x) + D_1(x)}\right)
    }_{\text{\( B x\cdot  \left(D_2(x) - D_1(x)\right) \)}} > 0
    \\
    \sqrt{\mathcal{S}}\cdot \left(D_2(x) + D_1(x)\right) > \frac{(1 - \gamma)^2}{4\gamma^2N^2\sqrt{\mathcal{S}}}\cdot \frac{x}{D_2(x) + D_1(x)}
    \\
    \frac{\left(D_2(x) + D_1(x)\right)^2}{x} > \frac{(1 - \gamma)^2}{4\gamma^2N^2\mathcal{S}}
\end{gather*}
That is, in order to show \eqref{g_1(x) + g_2(x) > 0}, it is sufficient to show that:
\begin{equation}
    p(x) \coloneqq \frac{\left(D_2(x) + D_1(x)\right)^2}{x} > \frac{(1 - \gamma)^2}{4\gamma^2N^2\mathcal{S}}.
\label{p(x) def}
\end{equation}
Indeed,
\begin{equation}
\begin{aligned}
    p(x) = \frac{D_2^2(x) + 2D_2(x)D_1(x) + D_1^2(x)}{x} &\ge \frac{D_2^2(x) + D_1^2(x)}{x} 
    \\
    &= \frac{Cx + c_1 + Cx + c_2}{x}
    \\
    &= 2C + \frac{c_1 + c_2}{x}
    \\
    &> 2C
    \\
    &= \frac{1 + \gamma}{2\gamma N \mathcal{S}} + \frac{1 + \gamma^2}{4\gamma^2 N^2 \mathcal{S}}
    \\
    &> \frac{1 + \gamma^2}{4\gamma^2 N^2 \mathcal{S}}
    \\
    &> \frac{(1 - \gamma)^2}{4\gamma^2N^2\mathcal{S}}
\end{aligned}
\end{equation}
where we used $c_1, c_2 > 0$ and $(1 - \gamma)^2 < 1 + \gamma^2$ for all $\gamma > 0$. That is, we proved \eqref{g_1(x) + g_2(x) > 0} and thus we showed that \eqref{w_1<w_2} is satisfied. \underline{We will now prove \eqref{g_2'<0}:} Let us first define the following parametric function
\begin{equation}
    T_{B,C,D}(x)= \frac{\sqrt{\mathcal{S}} + Bx}{\sqrt{C x + D}}.
\label{T_B,C,D def}
\end{equation}
Its derivative reads:
\begin{equation}
\begin{aligned}
    T'_{B,C,D}(x) &= \frac{B\cdot \sqrt{C x + D} - \frac{C}{2\cdot \sqrt{C x+D}}\cdot \left(\sqrt{\mathcal{S}} + B x\right)}{C x+D} 
    \\
    &= \frac{2B\cdot (C x+D) - C\cdot (\sqrt{\mathcal{S}}+B x)}{2\cdot (C x+D)^{1.5}}
    \\
    &= \frac{BC\cdot x + 2\cdot BD - \sqrt{\mathcal{S}}C}{2\cdot (C x+D)^{1.5}}.
\end{aligned}
\label{T' formula}
\end{equation}

Now, from \eqref{g_2(x) def}, \eqref{T_B,C,D def}, it follows that:
\begin{align*}
    g_2(x) = T_{-B,C, c_2}(x).
\end{align*}
That is, from \eqref{T' formula}, we have:
\begin{equation}
    g_2'(x) = \frac{-BC\cdot x -2B\cdot c_2 - \sqrt{\mathcal{S}}\cdot C}{2\cdot \left(C x + c_2\right)^{1.5}} = - \frac{BC\cdot x + 2B\cdot c_2 + \sqrt{\mathcal{S}}\cdot C}{2\cdot (C x+c_2)^{1.5}} < 0
\label{g_2' formula}
\end{equation}
where we used $B, C, c_2 > 0$, which follows from \eqref{B, C, c_1, c_2 def}, and $\mathcal{S} > 0$.
\underline{Finally, we will prove \eqref{g_1'+g_2'<=0}:} From \eqref{g_1(x) def}, \eqref{T_B,C,D def}, \eqref{T' formula}, it follows that:
\begin{equation}
    g_1(x) = T_{B,C,c_1}(x) \Rightarrow g_1'(x) = \frac{BC\cdot x + 2B\cdot c_1 - \sqrt{\mathcal{S}}\cdot C}{2\cdot \left(C x + c_1\right)^{1.5}}.
\label{g_1' formula}
\end{equation}
Thus, from \eqref{g_2' formula},\eqref{g_1' formula}, proving that $g_1'(x) + g_2'(x) \le 0$ is equivalent to proving that:
\begin{gather*}
    \frac{BC\cdot x + 2B\cdot c_1 - \sqrt{\mathcal{S}}\cdot C}{2\cdot \left(C x + c_1\right)^{1.5}} \le \frac{BC\cdot x + 2B\cdot c_2 + \sqrt{\mathcal{S}}\cdot C}{2\cdot (C x+c_2)^{1.5}}.
\end{gather*}
From \eqref{B, C, c_1, c_2 def}, and the assumption of $0 < \gamma < 1$, we know that $c_1 > c_2$. Thus, if the numerator in the LHS is negative, then the inequality holds trivially. Otherwise, it is sufficient to prove that:
\begin{gather*}
    2B\cdot c_1-\sqrt{\mathcal{S}}\cdot C \le 2B\cdot c_2 + \sqrt{\mathcal{S}}\cdot C
    \\
    2B\cdot (c_1-c_2) \le 2\sqrt{\mathcal{S}}\cdot C
    \\
    B\cdot \left(\frac{1}{\gamma\cdot N} - \frac{1}{N}\right) \le \sqrt{\mathcal{S}}\cdot C
\end{gather*}
Thus, we need to prove that:
\begin{equation}
    \frac{1}{N}\cdot \frac{1-\gamma}{\gamma} \le \frac{\sqrt{\mathcal{S}}\cdot C}{B}.
\label{final stuff to show theorem 5}
\end{equation}
From \eqref{B, C, c_1, c_2 def}, the RHS in \eqref{final stuff to show theorem 5} reads:
\begin{align*}
    \frac{\sqrt{\mathcal{S}}\cdot C}{B} &= \frac{\left(\frac{1 + \gamma}{4\cdot \gamma\cdot N\cdot \sqrt{\mathcal{S}}} + \frac{1 + \gamma^2}{8\cdot \gamma^2\cdot N^2\cdot \sqrt{\mathcal{S}}}\right)}{\left(\frac{1 - \gamma}{4\cdot \gamma\cdot N\cdot \sqrt{\mathcal{S}}}\right)}
    = \frac{\left(\frac{1 + \gamma}{4} + \frac{1 + \gamma^2}{8\cdot \gamma\cdot N}\right)}{\left(\frac{1-\gamma}{4}\right)}
    \\
    &= \left(\frac{1+\gamma}{4} + \frac{1+\gamma^2}{8\cdot \gamma\cdot N}\right)\cdot \frac{4}{1-\gamma}
    \\
    &= \frac{1 + \gamma}{1 - \gamma} + \frac{1 + \gamma^2}{\gamma\cdot (1-\gamma)}\cdot \frac{1}{2\cdot N}
    \\
    &= \frac{2N\cdot \gamma(1+\gamma) + 1 + \gamma^2}{2N\cdot \gamma(1-\gamma)}
    \\
    &= \frac{(2N+1)\cdot \gamma^2 + 2N\cdot \gamma + 1}{2N\cdot \gamma(1-\gamma)}.
\end{align*}
Thus, \eqref{final stuff to show theorem 5} reads:
\begin{gather*}
    \frac{1 - \gamma}{\gamma\cdot N} \le \frac{(2N+1)\cdot \gamma^2 + 2N\cdot \gamma + 1}{2N\cdot \gamma(1-\gamma)}
    \\
    2\cdot (1-\gamma)^2 \le (2N+1)\cdot \gamma^2+2N\cdot \gamma+1
    \\
    2\cdot (\gamma^2-2\gamma+1) \le(2N+1)\cdot\gamma^2+2N\cdot\gamma+1
    \\
    (2N-1)\cdot \gamma^2+2\cdot (N+2)\cdot \gamma-1 \ge 0
    \\
    2\cdot \gamma\cdot (\gamma + 1)\cdot N - (\gamma^2 - 4\gamma + 1) \ge 0
    \\
    N \ge \frac{\gamma^2 - 4\gamma + 1}{2 \gamma(1+\gamma)}
\end{gather*}
Which holds from the theorem assumptions. Finally, let us take some $x \ge 1$. We need to prove that:
\begin{align*}
    w_1(x)\cdot g_1'(x) + w_2(x)\cdot g_2'(x) < 0 &\Leftrightarrow w_1(x)\cdot g_1'(x) < -w_2(x)\cdot g_2'(x)
    \\
    &\Leftrightarrow g_1'(x) < -\frac{w_2(x)}{w_1(x)}\cdot g_2'(x).
\end{align*}
Indeed, from \eqref{w_1<w_2}, \eqref{g_2' formula}, \eqref{g_1'+g_2'<=0}, it follows that:
\begin{align*}
    g_1'(x) \le -g_2'(x) < \frac{w_2(x)}{w_1(x)}\cdot \left(-g_2'(x)\right) = -\frac{w_2(x)}{w_1(x)}\cdot g_2'(x).
\end{align*}
Which finishes the proof. {\color{black} Note that for $\gamma \ge 0.162$, the requirement $N \ge \tfrac{\gamma^2 - 4\gamma + 1}{2\gamma\left(1 + \gamma\right)}$ is vacuous (since $N \ge 1$), so it only matters under severe imbalance $(\gamma < 0.162)$.}
\end{proof}

\subsection{Proof of Theorem \ref{thm:thm7}}

In order to have a fair comparison between cases with different values of $\gamma$, we fix the total number of samples to be $N_{T}$. Thus, we take $N_1=xN_{T}$ samples from the first class and $N_2=\gamma\cdot xN_{T}$ from the second class such that:
\begin{align*}
    N_1+N_2 = xN_{T} + \gamma\cdot xN_{T} = N_{T}\Rightarrow x= \frac{1}{1+ \gamma}
\end{align*}
Meaning, the number of samples in the first class is
\begin{equation}
    N = \frac{N_{T}}{1+\gamma}.
\label{N tilde formula}
\end{equation}

\begin{proof}
Let us take some $N_T \in \mathbb{N}$ and
\begin{equation*}
    \mathcal{S} > 0, 1 \le k < d, 0 < \gamma \le 1.
\end{equation*}

Let us define the following parametric function:
\begin{equation}
    f_{s,a,q}(x) \coloneqq \frac{\sqrt{\mathcal{S}} +s\cdot  \frac{(1 - \gamma)\cdot q}{4\gamma\cdot \sqrt{\mathcal{S}}}\cdot \frac{1}{x}}{\sqrt{\left(\frac{(1+\gamma)\cdot q}{4\gamma\cdot \mathcal{S}} + \frac{1}{a}\right)\cdot \frac{1}{x} + \frac{(1 + \gamma^2)\cdot q}{8\gamma^2\cdot \mathcal{S}}\cdot \frac{1}{x^2} + 1}} = \frac{\sqrt{\mathcal{S}} + \frac{B}{x}}{\sqrt{\frac{C}{x} + \frac{D}{x^2} + 1}}
\label{f_s,a,q}
\end{equation}
where the parameters $B,C,D$ are:
\begin{equation}
\begin{cases}
    B = s\cdot \frac{(1-\gamma)\cdot q}{4\gamma\cdot \sqrt{\mathcal{S}}}
    \\
    C = \frac{(1+\gamma)\cdot q}{4\gamma\cdot \mathcal{S}} + \frac{1}{a}
    \\
    D = \frac{(1+\gamma^2)\cdot q}{8\gamma^2\cdot \mathcal{S}}
\end{cases}
\label{B,C,D parameters}
\end{equation}
From Definition \ref{eq:theoretical efficiency} and \eqref{approx prob x}, it follows that:
\begin{equation}
\begin{aligned}
    \eta &= 100\cdot \left(1 - \frac{\hat{p}_{\vct{z}}(\mathrm{error})}{\hat{p}_{\vct{x}}(\mathrm{error})}\right) = 100\cdot \left(1 - \frac{\hat{p}(\mathcal{S},{N},\gamma,k)}{\hat{p}(\mathcal{S},{N},\gamma,d)}\right)
    \\
    &= 100\cdot \left(1 - \frac{\mathcal{Q}\left(f_{1, \gamma, k}({N})\right) + \mathcal{Q}\left(f_{-1, 1, k}({N})\right)}{\mathcal{Q}\left(f_{1,\gamma,d}({N})\right) + \mathcal{Q}\left(f_{-1,1,d}({N})\right)}\right)
    \\
    &= 100\cdot h({N})
\end{aligned}
\label{eta first expansion as 100*h(N)}
\end{equation}
where we defined the following function:
\begin{equation}
\label{eq:h(x)_def_b}
    h(x) \coloneqq 1 - \frac{\mathcal{Q}\left(f_{1, \gamma, k}(x)\right) + \mathcal{Q}\left(f_{-1, 1, k}(x)\right)}{\mathcal{Q}\left(f_{1,\gamma,d}(x)\right) + \mathcal{Q}\left(f_{-1,1,d}(x)\right)}.
\end{equation}
Now, let us define the following parametric function: 
\begin{equation}
    g_{s,a,q}(x) \coloneqq f_{s,a,q}\left(\frac{1}{x}\right) = \frac{\sqrt{\mathcal{S}} + B\cdot x}{\sqrt{D\cdot x^2+ C\cdot x+1}}
\label{g_s,a,q formula}
\end{equation}
where we used \eqref{f_s,a,q}, \eqref{B,C,D parameters}. Let us also define:
\begin{equation}
    \ell(x) \coloneqq h\left(\frac{1}{x}\right) = 1 - \frac{\mathcal{Q}\left(g_{1, \gamma, k}(x)\right) + \mathcal{Q}\left(g_{-1, 1, k}(x)\right)}{\mathcal{Q}\left(g_{1,\gamma,d}(x)\right) + \mathcal{Q}\left(g_{-1,1,d}(x)\right)}
\label{l(x) def}
\end{equation}
where we used \eqref{g_s,a,q formula}, \eqref{eq:h(x)_def_b}. Thus, Taylor expansion to first order of $\ell$ yields:
\begin{equation*}
    \ell(x) = \ell(0) + \ell'(0)\cdot x + \mathcal{O}\left(x^2\right)
\end{equation*}
Where the approximation is exact for $x \ll 1$. Thus, the following is exact for $x \gg 1$:
\begin{equation}
    x \gg 1 \Rightarrow h(x) = \ell\left(\frac{1}{x}\right) =  \ell(0) + \frac{\ell'(0)}{x} + \mathcal{O}\left(\frac{1}{x^2}\right).
\label{h(x) approximation}
\end{equation}
Assuming:
\begin{equation}
    {N} = \frac{N_{T}}{1+\gamma} \gg 1\Leftrightarrow N_{T} \gg 1 + \gamma
\label{condition on N}
\end{equation}
means that following first-order approximation is exact:
\begin{equation}
    h\left({N}\right) = \ell(0) + \frac{\ell'(0)}{{N}} + \mathcal{O}\left(\frac{1}{N^2}\right).
\label{h(N tilde) approximation}
\end{equation}
\underline{Let us first compute $\ell(0)$:}
\begin{equation}
    \ell(0) = 1 - \frac{\mathcal{Q}\left(g_{1,\gamma,k}(0)\right) +\mathcal{Q}\left(g_{-1,1,k}(0)\right)}{\mathcal{Q}\left(g_{1,\gamma,d}(0)\right)+ \mathcal{Q}\left(g_{-1,1,d}(0)\right)} = 1- \frac{2\cdot \mathcal{Q}(\sqrt{\mathcal{S}})}{2\cdot \mathcal{Q}(\sqrt{\mathcal{S}})} = 0.
\label{ell(0)}
\end{equation}
\underline{Finally, we will compute $\ell'(0)$:} We first compute $g_{s,a,q}'(0)$. From \eqref{g_s,a,q formula} it follows that:
\begin{align*}
    g'_{s,a,q}(x) &= \frac{B\cdot \sqrt{D\cdot x^2 + C\cdot x + 1} - \frac{2D\cdot x + C}{2\cdot \sqrt{D\cdot x^2+C\cdot x + 1}}\cdot \left(\sqrt{\mathcal{S}} + B\cdot x\right)}{D\cdot x^2+C\cdot x+1}
    \\
    &= \frac{2B\cdot \left(D\cdot x^2+C\cdot x+ 1\right) - \left(2D\cdot x + C\right)\cdot \left(\sqrt{\mathcal{S}} + B\cdot x\right)}{2\cdot \left(D\cdot x^2+C\cdot x + 1\right)^{1.5}}
    \\
    &= \frac{\left(BC - 2SD\right)\cdot x + (2B - C\sqrt{\mathcal{S}})}{2\cdot \left(D\cdot x^2+C\cdot x + 1\right)^{1.5}}.
\end{align*}
Thus, the derivative at 0 is:
\begin{equation}
\begin{aligned}
    g_{s,a,q}'(0) &= \frac{2B - C\sqrt{\mathcal{S}}}{2} = B - \frac{1}{2}\cdot C\sqrt{\mathcal{S}}
    \\
    &= \frac{s\cdot (1 - \gamma)\cdot q}{4\gamma\cdot S} - \frac{1}{2}\cdot \left(\frac{(1+ \gamma)\cdot q}{4\gamma\cdot \mathcal{S}} + \frac{1}{a}\right)\cdot \sqrt{\mathcal{S}}
    \\
    &= \frac{s\cdot (1-\gamma)\cdot q}{4\gamma\cdot S} - \frac{(1+\gamma)\cdot q}{8\gamma\cdot \sqrt{\mathcal{S}}} - \frac{\sqrt{\mathcal{S}}}{2a} 
    \\
    &= \frac{2s\cdot (1 - \gamma)\cdot q -  (1+\gamma)\cdot q}{8\gamma\cdot S} - \frac{\sqrt{\mathcal{S}}}{2a}
    \\
    &= \frac{\left(\left(2s-1\right) -  \left(2s + 1\right)\cdot \gamma\right)\cdot q}{8\gamma\cdot \sqrt{\mathcal{S}}} - \frac{\sqrt{\mathcal{S}}}{2a}.
\end{aligned}
\label{g_s,a,q'(0)}
\end{equation}

Now, from \eqref{l(x) def}, the derivative $\ell'(x)$ reads
\begin{equation}
\begin{aligned}
    \ell'(x) &= -\frac{d}{dx}\left(\frac{\mathcal{Q}\left(g_{1,\gamma,k}(x)\right) + \mathcal{Q}\left(g_{-1,1,k}(x)\right)}{\mathcal{Q}\left(g_{1,\gamma,d}(x)\right) + \mathcal{Q}\left(g_{-1,1,d}(x)\right)}\right)
    \\
    &= - \frac{d}{dx}\left(\frac{u(x)}{v(x)}\right)
    \\
    &= - \frac{u'(x)\cdot v(x) - v'(x)\cdot u(x)}{v^2(x)}
    \\
    &= \frac{v'(x)\cdot u(x) - u'(x)\cdot v(x)}{v^2(x)}
\end{aligned}
\label{ell'(x) first formula}
\end{equation}
where we defined the following auxiliary functions:
\begin{equation}
\begin{cases}
    u(x) = \mathcal{Q}\left(g_{1,\gamma,k}(x)\right) + \mathcal{Q}\left(g_{-1,1,k}(x)\right) 
    \\
    v(x) = \mathcal{Q}\left(g_{1,\gamma,d}(x)\right) + \mathcal{Q}\left(g_{-1,1,d}(x)\right) 
\end{cases}
\label{u(x),v(x) def to l}
\end{equation}
From the chain rule, their derivatives are:
\begin{equation}
\begin{cases}
    u'(x)=g'_{1,\gamma,k}(x)\cdot \mathcal{Q}'\left(g_{1,\gamma,k}(x)\right) + g'_{-1,1,k}(x)\cdot \mathcal{Q}'\left(g_{-1,1,k}(x)\right)
    \\
    v'(x)=g'_{1,\gamma,d}(x)\cdot \mathcal{Q}'\left(g_{1,\gamma,d}(x)\right) + g'_{-1,1,d}(x)\cdot \mathcal{Q}'\left(g_{-1,1,d}(x)\right)
\end{cases}
\label{u'(x), v'(x) formulas for ell}
\end{equation}
Now, from \eqref{ell'(x) first formula}, \eqref{u'(x), v'(x) formulas for ell} it follows that:
\begin{equation}
    \ell'(0) = \frac{v'(0)\cdot u(0) - u'(0)\cdot v(0)}{v(0)^2}.
\label{l'(0) formula}
\end{equation}
It is easy to verify from \eqref{g_s,a,q formula} that $g_{s,a,q}(0) = \sqrt{\mathcal{S}}$. Thus, from \eqref{u(x),v(x) def to l}, \eqref{u'(x), v'(x) formulas for ell}, \eqref{g_s,a,q'(0)}, we have the following formulas:
\begin{equation}
\begin{cases}
    u(0) = 2\cdot \mathcal{Q}(\sqrt{\mathcal{S}})
    \\
    v(0) = 2\cdot \mathcal{Q}(\sqrt{\mathcal{S}})
    \\
    u'(0) = \left(\frac{(1 - 3\gamma)\cdot k}{8\gamma\cdot \sqrt{\mathcal{S}}} - \frac{\sqrt{\mathcal{S}}}{2\gamma}\right)\cdot \mathcal{Q}'(\sqrt{\mathcal{S}}) + \left(\frac{-(3 + \gamma)\cdot k}{8\gamma\cdot \sqrt{\mathcal{S}}} - \frac{\sqrt{\mathcal{S}}}{2}\right)\cdot \mathcal{Q}'(\sqrt{\mathcal{S}})
    \\
    v'(0) = \left(\frac{(1 - 3\gamma)\cdot d}{8\gamma\cdot \sqrt{\mathcal{S}}} - \frac{\sqrt{\mathcal{S}}}{2\gamma}\right)\cdot \mathcal{Q}'(\sqrt{\mathcal{S}}) + \left(\frac{-(3 + \gamma)\cdot d}{8\gamma\cdot \sqrt{\mathcal{S}}} - \frac{\sqrt{\mathcal{S}}}{2}\right)\cdot \mathcal{Q}'(\sqrt{\mathcal{S}})
\end{cases}
\label{u(0),v(0),u'(0),v'(0) formulas}
\end{equation}
Now, we substitute \eqref{u(0),v(0),u'(0),v'(0) formulas} in \eqref{l'(0) formula}, to get the following formula for $\ell'(0)$:
\begin{equation}
\begin{aligned}
    \ell'(0) &= \frac{2\cdot \mathcal{Q}(\sqrt{\mathcal{S}})\cdot \left(\left(\frac{(1 - 3\gamma)\cdot d}{8\gamma\cdot \sqrt{\mathcal{S}}} - \frac{\sqrt{\mathcal{S}}}{2\gamma}\right)\cdot \mathcal{Q}'(\sqrt{\mathcal{S}}) + \left(\frac{-(3 + \gamma)\cdot d}{8\gamma\cdot \sqrt{\mathcal{S}}} - \frac{\sqrt{\mathcal{S}}}{2}\right)\cdot \mathcal{Q}'(\sqrt{\mathcal{S}})\right)}{4\cdot \mathcal{Q}^2(\sqrt{\mathcal{S}})}
    \\
    &- \frac{2 \mathcal{Q}(\sqrt{\mathcal{S}})\cdot \left( \left(\frac{(1 - 3\gamma)\cdot k}{8\gamma\cdot \sqrt{\mathcal{S}}} - \frac{\sqrt{\mathcal{S}}}{2\gamma}\right)\cdot \mathcal{Q}'(\sqrt{\mathcal{S}}) + \left(\frac{-(3 + \gamma)\cdot k}{8\gamma\cdot \sqrt{\mathcal{S}}} - \frac{\sqrt{\mathcal{S}}}{2}\right)\cdot \mathcal{Q}'(\sqrt{\mathcal{S}}) \right )}{4\cdot \mathcal{Q}^2(\sqrt{\mathcal{S}})}
    \\
    &= \frac{\mathcal{Q}'(\sqrt{\mathcal{S}})}{2\cdot \mathcal{Q}(\sqrt{\mathcal{S}})}\cdot \left(\frac{(1 - 3\gamma)\cdot (d-k)}{8\gamma\cdot \sqrt{\mathcal{S}}} - \frac{(3 + \gamma)\cdot (d-k)}{8\gamma\cdot \sqrt{\mathcal{S}}}\right)
    \\
    &= \frac{\mathcal{Q}'(\sqrt{\mathcal{S}})}{2\cdot \mathcal{Q}(\sqrt{\mathcal{S}})}\cdot (d-k)\cdot \left(\frac{-2 -4 \gamma}{8\gamma\cdot \sqrt{\mathcal{S}}}\right)
    \\
    &= -\frac{\mathcal{Q}'(\sqrt{\mathcal{S}})}{\mathcal{Q}(S)}\cdot (d-k)\cdot \left(\frac{1 + 2 \gamma}{8\gamma\cdot \sqrt{\mathcal{S}}}\right)
    \\
    &= -\frac{\mathcal{Q}'(\sqrt{\mathcal{S}})}{\sqrt{\mathcal{S}}\cdot \mathcal{Q}(\sqrt{\mathcal{S}})}\cdot (d-k)\cdot \left(\frac{1}{8\gamma} + \frac{1}{4}\right).
\end{aligned}
\label{ell'(0) final formula-theorem 6}
\end{equation}
Finally, from \eqref{eta first expansion as 100*h(N)}, \eqref{condition on N}, \eqref{h(N tilde) approximation}, \eqref{ell(0)}, \eqref{ell'(0) final formula-theorem 6}, it follows that:
\begin{equation}
\begin{aligned}
    \eta&=100\cdot h({N})
    \\
    &= 100\cdot \left(\ell(0) + \frac{\ell'(0)}{{N}} + \mathcal{O}\left(\frac{1}{N^2}\right)\right) 
    \\
    &= 100\cdot \frac{\ell'(0)}{{N}}  + \mathcal{O}\left(\frac{1}{N^2}\right)
    \\
    &= -\frac{100\cdot \mathcal{Q}'(\sqrt{\mathcal{S}})}{\sqrt{\mathcal{S}}\cdot \mathcal{Q}(\sqrt{\mathcal{S}})}\cdot (d-k)\cdot \left(\frac{1}{8\gamma} + \frac{1}{4}\right)\cdot \frac{1}{{N}} + \mathcal{O}\left(\frac{1}{N^2}\right)
\end{aligned}
\label{last - first approx for eta}
\end{equation}
We proceed with \eqref{last - first approx for eta} and substitute \eqref{condition on N}:
\begin{equation*}
    {N} = \frac{N_{T}}{1+\gamma}
\end{equation*}
to get the following approximation for $\eta$:
\begin{equation}
\begin{aligned}
    \eta &= -100\cdot \frac{\mathcal{Q}'(\sqrt{\mathcal{S}})}{\sqrt{\mathcal{S}}\cdot \mathcal{Q}(\sqrt{\mathcal{S}})}\cdot (d-k)\cdot \left(\frac{1}{8\gamma} + \frac{1}{4}\right)\cdot \left(1 + \gamma\right)\cdot \frac{1}{N_{T}} + \mathcal{O}\left(\frac{1}{N_T^2}\right)
    \\
    &= -50\cdot \frac{\mathcal{Q}'(\sqrt{\mathcal{S}})}{\sqrt{\mathcal{S}}\cdot \mathcal{Q}(\sqrt{\mathcal{S}})}\cdot (d-k)\cdot \left(\frac{1}{4\gamma} + \frac{1}{2}\right)\cdot (1+\gamma)\cdot \frac{1}{N_{T}} + \mathcal{O}\left(\frac{1}{N_T^2}\right)
    \\
    &= -50\cdot \frac{\mathcal{Q}'(\sqrt{\mathcal{S}})}{\sqrt{\mathcal{S}}\cdot \mathcal{Q}(\sqrt{\mathcal{S}})}\cdot (d-k)\cdot \left(\frac{1 + 2\gamma}{4\gamma}\right)\cdot (1+\gamma)\cdot \frac{1}{N_{T}} + \mathcal{O}\left(\frac{1}{N_T^2}\right)
    \\
    &= - \frac{50\cdot \mathcal{Q}'(\sqrt{\mathcal{S}})}{4\cdot \sqrt{\mathcal{S}} \mathcal{Q}(\sqrt{\mathcal{S}})}\cdot (d-k)\cdot (1 + 2\gamma)\cdot \left(1 + \frac{1}{\gamma}\right)\cdot \frac{1}{N_{T}} + \mathcal{O}\left(\frac{1}{N_T^2}\right)
    \\
    &= - \frac{25\cdot \mathcal{Q}'(\sqrt{\mathcal{S}})}{2\cdot \sqrt{\mathcal{S}} \mathcal{Q}(\sqrt{\mathcal{S}})}\cdot (d-k)\cdot \left(3 + 2\gamma + \frac{1}{\gamma}\right)\cdot \frac{1}{N_{T}} + \mathcal{O}\left(\frac{1}{N_T^2}\right)
    \\
    &= \frac{25}{2 \sqrt{2\pi}}\cdot \frac{\exp\left(-\frac{\mathcal{S}}{2}\right)}{\sqrt{\mathcal{S}}\cdot \mathcal{Q}\left(\sqrt{\mathcal{S}}\right)}\cdot \left(3 + 2\gamma + \frac{1}{\gamma}\right)\cdot (d-k)\cdot \frac{1}{N_{T}} + \mathcal{O}\left(\frac{1}{N_T^2}\right)
\end{aligned}
\label{final approximation for eta}
\end{equation}
where we used the following property of the $\mathcal{Q}$ function:
\begin{equation*}
    \mathcal{Q}'(x) = -\frac{1}{\sqrt{2\pi}}\cdot \exp\left(-\frac{x^2}{2}\right).
\end{equation*}
It is now left to show the conclusions. For $N_T \gg 1$, we have from \eqref{final approximation for eta} that
\begin{equation}
    \eta = C\cdot f(\mathcal{S})\cdot g(\gamma)\cdot (d-k)\cdot \frac{1}{N_T}
\label{eta approx cleaned form}
\end{equation}
where $C = \frac{25}{2\sqrt{2\pi}} > 0$, and
\begin{equation}
\begin{cases}
    f(\mathcal{S}) = \frac{\exp\left(-\frac{\mathcal{S}}{2}\right)}{\sqrt{\mathcal{S}}\cdot \mathcal{Q}(\sqrt{\mathcal{S}})}
    \\
    g(\gamma) = 3+ 2\gamma + \frac{1}{\gamma}
\end{cases}
\label{f,g definition}
\end{equation}
It is now clear from \eqref{eta approx cleaned form} that as $d - k$ increases, the efficiency increases (linearly), and as $N_T$ increases, the efficiency decreases. It is easy to see that in $(0,1]$, the function $g$ defined in \eqref{f,g definition} achieves a minimum at $\gamma = \frac{1}{\sqrt{2}}$:
\begin{equation*}
    g'(\gamma) = 2 - \frac{1}{\gamma^2} = 0\Rightarrow \gamma = \pm \frac{1}{\sqrt{2}}.
\end{equation*}
It is easy to check that
\begin{equation*}
    g\left(\frac{1}{\sqrt{2}}\right) < \lim_{x \rightarrow 0^+} g(x) = \infty , \ g\left(\frac{1}{\sqrt{2}}\right) < g(1).
\end{equation*}
Thus, from \eqref{eta approx cleaned form}, in the range $\left(0, \frac{1}{\sqrt{2}}\right]$, as $\gamma$ decreases, the efficiency increases. Finally, we will prove that the function $f$, defined in \eqref{f,g definition}, decreases as $\mathcal{S}$ increases, and thus from \eqref{eta approx cleaned form}, the efficiency decreases as $\mathcal{S}$ increases: It is now sufficient to prove the following:
\begin{gather*}
    f'(\mathcal{S}) = \frac{-\frac{1}{2}\exp\left(-\frac{\mathcal{S}}{2}\right)\cdot \sqrt{\mathcal{S}}\mathcal{Q}\left(\sqrt{\mathcal{S}}\right) - \frac{d}{d\mathcal{S}}\left(\sqrt{\mathcal{S}}\mathcal{Q}\left(\sqrt{\mathcal{S}}\right)\right)\cdot \exp\left(-\frac{\mathcal{S}}{2}\right)}{\mathcal{S}\mathcal{Q}^2\left(\sqrt{\mathcal{S}}\right)} < 0
    \\
    -\frac{1}{2}\sqrt{\mathcal{S}}\cdot\mathcal{Q}\left(\sqrt{\mathcal{S}}\right)-\frac{d}{d\mathcal{S}}\left(\sqrt{\mathcal{S}}\mathcal{Q}\left(\sqrt{\mathcal{S}}\right)\right) < 0
    \\
    -\frac{1}{2}\sqrt{\mathcal{S}}\mathcal{Q}\left(\sqrt{\mathcal{S}}\right) < \frac{1}{2\sqrt{\mathcal{S}}}\mathcal{Q}\left(\sqrt{\mathcal{S}}\right) + \sqrt{\mathcal{S}}\cdot \mathcal{Q}'\left(\sqrt{\mathcal{S}}\right)\frac{1}{2\sqrt{\mathcal{S}}}
    \\
    -\sqrt{\mathcal{S}}\mathcal{Q}\left(\sqrt{\mathcal{S}}\right) < \frac{1}{\sqrt{\mathcal{S}}}\mathcal{Q}\left(\sqrt{\mathcal{S}}\right) - \frac{1}{\sqrt{2\pi}}\exp\left(-\frac{\mathcal{S}}{2}\right)
    \\
    \left(\sqrt{\mathcal{S}} + \frac{1}{\sqrt{\mathcal{S}}}\right)\mathcal{Q}\left(\sqrt{\mathcal{S}}\right) > \frac{1}{\sqrt{2\pi}}\exp\left(-\frac{\mathcal{S}}{2}\right)
    \\
    \mathcal{Q}\left(\sqrt{\mathcal{S}}\right) > \frac{1}{\sqrt{2\pi}}\cdot \frac{\sqrt{\mathcal{S}}}{\mathcal{S} + 1}\cdot \exp\left(-\frac{\mathcal{S}}{2}\right)
\end{gather*}
Where we used the identity $\mathcal{Q}'(\mathcal{S}) = - \frac{1}{\sqrt{2\pi}}\cdot \exp\left(-\frac{\mathcal{S}^2}{2}\right)$. Let us now prove the final inequality. It is equivalent to the following inequality $\left(x = \sqrt{\mathcal{S}}\right)$:
\begin{equation}
    \forall_{x \ge 0} \ \mathcal{Q}(x) > \frac{1}{\sqrt{2\pi}}\cdot \frac{x}{x^2 + 1}\cdot\exp\left(-\frac{x^2}{2}\right) = \frac{x}{x^2+1}\cdot \phi(x)
\label{inequality for Q}
\end{equation}
where we defined the following function:
\begin{equation}
    \phi(x) = \frac{1}{\sqrt{2\pi}}\cdot \exp\left(-\frac{x^2}{2}\right).
\label{phi definition}
\end{equation}
Indeed, for all $x \ge 0$:
\begin{equation}
\begin{aligned}
    \left(1 + \frac{1}{x^2}\right)\cdot \mathcal{Q}(x) &= \int_x^\infty \left(1 + \frac{1}{x^2}\right)\cdot \phi(u) \ du 
    \\
    &> \int_x^\infty \left(1 + \frac{1}{u^2}\right)\cdot \phi(u) \ du 
    \\
    &= -\left[\frac{\phi(u)}{u}\right]_x^\infty
    \\
    &= \frac{\phi(x)}{x}
\label{inequality of Q - first step}
\end{aligned}
\end{equation}
where we used the following identity:
\begin{equation}
    \frac{d}{du}\left(-\frac{\phi(u)}{u}\right) = - \frac{\phi'(u)\cdot u - \phi(u)}{u^2} = \frac{\phi(u) + u^2\cdot \phi(u)}{u^2} = \left(1 + \frac{1}{u^2}\right)\cdot \phi(u).
\label{identity of phi}
\end{equation}
And \eqref{identity of phi} follows from the identity $\phi'(u) = - u\cdot \phi(u)$ which is straightforward from the definition of $\phi$ in \eqref{phi definition}. Now, from \eqref{inequality of Q - first step}, it follows that:
\begin{equation*}
    \mathcal{Q}(x) > \frac{\phi(x)}{x}\cdot \frac{x^2}{x^2 + 1} = \frac{x}{x^2 + 1}\cdot \phi(x)
\end{equation*}
which proves exactly \eqref{inequality for Q}. It is left to show that: 
\begin{equation}
    \exists_{N_0 \in \mathbb{N}}\forall_{N \ge N_0} \ p_{\vct{z}}(\mathrm{error}) < p_{\vct{x}}(\mathrm{error}).
\label{conclusion thm 7 to prove}
\end{equation}
That is, $\eta > 0$. We proved in \eqref{l'(0) formula} that:
\begin{equation}
    \ell'(0) = -\frac{\mathcal{Q}'(\sqrt{\mathcal{S}})}{\sqrt{\mathcal{S}}\cdot \mathcal{Q}(\sqrt{\mathcal{S}})}\cdot (d-k)\cdot \left(\frac{1}{8\gamma} + \frac{1}{4}\right) > 0.
\label{ell'(0) > 0 proof}
\end{equation}
We defined $\ell(x)$ in \eqref{h(x) approximation} as:
\begin{equation}
    \ell(x) = h\left(\frac{1}{x}\right) \Rightarrow \ell'(x) = -\frac{1}{x^2}\cdot h'\left(\frac{1}{x}\right)
\label{ell in terms of h}
\end{equation}
where we used the chain rule. Finally, from \eqref{ell'(0) > 0 proof}, \eqref{ell in terms of h}, it follows that:
\begin{equation*}
    \ell'(0) = - \lim_{x \rightarrow 0^+} \frac{1}{x^2}\cdot h'\left(\frac{1}{x}\right) = -\lim_{t \rightarrow \infty} t^2\cdot h'\left(t\right) > 0
\end{equation*}
where we used the fact that if the two-sided limit exists, then each one-sided limit exists and they are equal to the limit. In total,
\begin{equation*}
    \lim_{t \rightarrow \infty} t^2\cdot h'(t) < 0.
\end{equation*}
This means that for large enough $t$, we have 
\begin{equation*}
    h'(t) < 0
\end{equation*}
which implies that $h$ is strictly decreasing. Thus, from \eqref{eta first expansion as 100*h(N)}, we have that for $N \gg 1$, $\eta$ is decreasing. In addition, from \eqref{h(x) approximation}, \eqref{ell(0)}, it follows that:
\begin{equation*}
    \lim_{N \rightarrow \infty} \eta = 100\cdot \lim_{N \rightarrow \infty} h(N) = 100\cdot \lim_{N \rightarrow \infty} \ell\left(\frac{1}{N}\right) = 100\cdot \ell(0) = 0
\end{equation*}
Finally, $\eta$ is decreasing for large enough $N$ and approaches $0$. It is now easy to see that:
\begin{equation*}
    \exists_{N_0 \in \mathbb{N}}\forall_{N \ge N_0} \ \eta > 0
\end{equation*}
which exactly proves \eqref{conclusion thm 7 to prove}.
\end{proof}

\subsection{Proof of Theorem \ref{thm:thm8}}
\label{app:ext_max_eta_thm}

Let us state an extended and more detailed version of the Theorem~\ref{thm:thm8}.

\begin{theorem*}[Analysis of the maximal efficiency]

Fix $ \gamma = 1$, and let $\mathcal{S} > 0, 1 \le k < d$.
\tomtm{Consider the efficiency $\eta=\eta(N)$ as a function of continuous $N \in \mathbb{R}_+$.
The following hold.}

\begin{itemize}
    \item The maximal efficiency $\eta_{\max} = \underset{N \ge 0}{\max} \ \eta(N)$ increases as a function of $\mathcal{S} > 0$.

    \item For fixed $r \coloneqq \frac{d}{k}$ and $k \gg \max\{1,\mathcal{S}\}$, the maximizer $N_{\max} = \underset{N \ge 0}{\operatorname{arg\,max}} \ \eta(N)$ decreases with $\mathcal{S} > 0$ in both regimes $\mathcal{S} \ll 1, \mathcal{S} \gg 1$. In addition, in the regime $\mathcal{S} \ll 1$ the following approximation holds:
    \begin{equation}
        N_{\max} \approx \frac{k}{2\mathcal{S}}\cdot \frac{r^{\frac{2}{3}}\left(r^{\frac{1}{3}} - 1\right)}{r^{\frac{2}{3}} - 1}.
    \end{equation}
    Finally, in the regime $\mathcal{S} \gg 1$, the following approximation holds:
    \begin{equation}
        N_{\max} \approx \frac{k}{2\mathcal{S}}\cdot \sqrt{r}.
    \end{equation}
\end{itemize}

\end{theorem*}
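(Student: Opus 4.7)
The plan is to work directly with the closed-form $\eta(N;\mathcal{S}) = 100(1-\mathcal{Q}(a)/\mathcal{Q}(b))$ that one obtains from Theorems~\ref{thm:thm2}--\ref{thm:thm4} at $\gamma=1$, where $a = \sqrt{\mathcal{S}/B}$, $b = \sqrt{\mathcal{S}/C}$, $B = 1 + \tfrac{1}{N}(1+\tfrac{k}{2\mathcal{S}}) + \tfrac{k}{4\mathcal{S}N^2}$, and $C$ is the same with $k\mapsto d$. A key reparametrization I would use throughout is $t := 2\mathcal{S}N/k$, which is a bijection $(0,\infty) \leftrightarrow (0,\infty)$ at fixed $(\mathcal{S},k)$ and preserves the optimization, so $\max_N\eta = \max_t\tilde\eta$. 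In these variables $B = 1 + \tfrac{1}{t} + \rho(\tfrac{2}{t}+\tfrac{1}{t^2})$ and $C = 1 + \tfrac{r}{t} + \rho(\tfrac{2}{t}+\tfrac{r}{t^2})$ with $\rho := \mathcal{S}/k$, and direct differentiation produces the clean derivatives
\begin{equation*}
a'_\mathcal{S} = \tfrac{1+1/t}{2\sqrt{\mathcal{S}}\,B^{3/2}},\;\; b'_\mathcal{S} = \tfrac{1+r/t}{2\sqrt{\mathcal{S}}\,C^{3/2}},\;\; a'_t = \tfrac{\sqrt{\mathcal{S}}\,D_B}{2B^{3/2}},\;\; b'_t = \tfrac{\sqrt{\mathcal{S}}\,D_C}{2C^{3/2}},
\end{equation*}
with $D_B = -\partial_t B = [(1+2\rho)t+2\rho]/t^3$ and $D_C = -\partial_t C = [(r+2\rho)t+2\rho r]/t^3$; notably, the $\rho$ contributions cancel in $a'_\mathcal{S}$ and $b'_\mathcal{S}$.

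For Part~1, the plan is to invoke the envelope theorem. Letting $t^*(\mathcal{S})$ denote the maximizer, we have $d\eta_{\max}/d\mathcal{S} = \partial_\mathcal{S}\tilde\eta|_{t^*}$. Differentiating with $\mathcal{Q}' = -\phi$, the sign condition $\partial_\mathcal{S}\tilde\eta > 0$ reduces to the Mills-ratio inequality $h(a)/h(b) > \tfrac{t+r}{t+1}(B/C)^{3/2}$, where $h := \phi/\mathcal{Q}$. The crucial step is that the first-order condition $\partial_t\tilde\eta|_{t^*} = 0$ gives exactly $h(a)/h(b) = (D_C/D_B)(B/C)^{3/2}$, so at $t^*$ the sign condition collapses to the algebraic inequality $D_C/D_B > (t+r)/(t+1)$. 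Cross-multiplying and expanding both sides yields $[(r+2\rho)t+2\rho r](t+1) - [(1+2\rho)t+2\rho](t+r) = (r-1)t^2$, which is strictly positive for $r>1$, $t>0$. This proves $\partial_\mathcal{S}\tilde\eta|_{t^*(\mathcal{S})} > 0$ for every $\mathcal{S}>0$, hence $\eta_{\max}$ is strictly increasing.

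For Part~2, the plan is to analyze the FOC $\partial_t\tilde\eta = 0$ asymptotically in each regime. In the small-$\mathcal{S}$ case with $k \gg \max\{1,\mathcal{S}\}$, the linearization $\mathcal{Q}(x) \approx 1/2 - x/\sqrt{2\pi}$ combined with dropping the $O(\rho)$ perturbations in $B,C$ yields $\tilde\eta(t) \approx (200/\sqrt{2\pi})\sqrt{\mathcal{S}}\,[\sqrt{t/(1+t)} - \sqrt{t/(r+t)}]$, whose stationarity condition is $(r+t)^{3/2} = r(1+t)^{3/2}$, i.e.\ $r+t = r^{2/3}(1+t)$, with unique positive solution $t^* = r^{2/3}(r^{1/3}-1)/(r^{2/3}-1)$, yielding $N_{\max}\approx kt^*/(2\mathcal{S})$. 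In the large-$\mathcal{S}$ case, Mills' asymptotic $\mathcal{Q}(x)\sim\phi(x)/x$ gives $\log[\mathcal{Q}(a)/\mathcal{Q}(b)]\approx \mathcal{S}(1/(2C) - 1/(2B)) + \tfrac12\log(B/C)$ with the exponential term dominating; under $k\gg 1$, $B\approx \rho(2t+1)/t^2$ and $C\approx \rho(2t+r)/t^2$, so the leading-order FOC becomes $\alpha_k/(1+\alpha_k/N)^2 = \alpha_d/(1+\alpha_d/N)^2$ with $\alpha_q = q/(2\mathcal{S})$. Writing $x = \alpha_k/N$ gives $(1+rx)^2 = r(1+x)^2$, whose positive root is $x = 1/\sqrt{r}$, hence $N_{\max}\approx k\sqrt{r}/(2\mathcal{S})$. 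Both formulas scale as $1/\mathcal{S}$, establishing the claimed monotone decrease.

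The main obstacle is finding the envelope-theorem cancellation in Part~1: a naive attempt to establish the stronger pointwise statement that $\tilde\eta(t;\mathcal{S})$ is $\mathcal{S}$-increasing for every $t$ can be shown to fail for large $t$, so using the FOC at $t^*$ is essential. The clean algebraic collapse---where substituting the FOC into the Mills-ratio inequality reduces everything to the elementary $(r-1)t^2 > 0$---is the single non-obvious step and the heart of the proof. For Part~2, the remaining technicality is to justify that the subleading corrections (quadratic Taylor remainders for small $\mathcal{S}$, higher-order Mills terms for large $\mathcal{S}$) shift the location of $t^*$ only by $o(1)$, which follows from standard uniform error bounds in a bounded neighborhood of the leading-order solution.
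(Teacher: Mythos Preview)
Your approach is essentially identical to the paper's. Part~1 matches exactly: the paper also applies the envelope theorem, uses the first-order condition $\partial_x h|_{x^*}=0$ to eliminate the Mills-ratio factors, and reduces the sign of $V'(\mathcal{S})$ to the purely algebraic statement that $R_q:=(\partial_{\mathcal{S}} f_q)/(\partial_x f_q)$ is strictly decreasing in $q$, obtaining $\partial_q R_q = -2x^4/(2\mathcal{S}x+(x+1)q)^2<0$; your $(r-1)t^2>0$ is the same computation in different coordinates. Part~2 likewise follows the paper's route: rescale $N=\tfrac{k}{\mathcal{S}}t'$ (your $t=2t'$), approximate $f_q\approx\sqrt{2\mathcal{S}t'/(q/k+2t')}$ for $k\gg\max\{1,\mathcal{S}\}$, and then use the linear/Mills expansions of $\mathcal{Q}$ in each regime.

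One concrete slip to fix: in the $\mathcal{S}\gg 1$ paragraph you write $B\approx\rho(2t+1)/t^2$, but under the theorem's hypothesis $k\gg\mathcal{S}$ you have $\rho=\mathcal{S}/k\ll 1$, so at $t=O(1)$ this term is \emph{sub}leading and the correct approximation is $B\approx 1+1/t$ (equivalently $B\approx 1+\alpha_k/N$ with $\alpha_k=k/(2\mathcal{S})\gg 1$). Your subsequent FOC $\alpha_k/(1+\alpha_k/N)^2=\alpha_d/(1+\alpha_d/N)^2$ is exactly what the correct approximation yields, so the final answer $N_{\max}\approx k\sqrt{r}/(2\mathcal{S})$ stands; just replace the erroneous intermediate expression.
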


\begin{proof}
Fix $ \gamma = 1$, and take
\begin{equation*}
    \mathcal{S} > 0, 1 \le k < d.
\end{equation*}
Let us now define the following parametric function:
\begin{equation}
\begin{aligned}
    f_{q}(x, \mathcal{S}) &\coloneqq \frac{\sqrt{\mathcal{S}}}{\sqrt{\left(\frac{q}{2\mathcal{S}} + 1\right)\cdot \frac{1}{x} + \frac{q}{4\mathcal{S}}\cdot \frac{1}{x^2} + 1}}
    \\
    &= \frac{\sqrt{\mathcal{S}}}{\sqrt{\frac{q + 2\mathcal{S}}{2\mathcal{S}x} + \frac{q}{4\mathcal{S}x^2} + 1}}
    \\
    &= \frac{\sqrt{\mathcal{S}}}{\sqrt{\frac{2x(q+2\mathcal{S}) + q + 4\mathcal{S}^2x}{4\mathcal{S}x^2}}}
    \\
    &= \frac{2\mathcal{S}x}{\sqrt{2qx + q + 4\mathcal{S}x + 4\mathcal{S}x^2}} 
    \\
    &= \frac{2\mathcal{S}x}{\sqrt{(2x + 1)q + 4x(x+1)\mathcal{S}}}
    \\
    &= \frac{2\mathcal{S}x}{\sqrt{D_q(x,\mathcal{S})}}
\end{aligned}
\label{f_q definition - max eta}
\end{equation}
where we denoted
\begin{equation}
\begin{aligned}
    D_q(x,\mathcal{S}) &= (2x + 1)q + 4x(x+1)\mathcal{S}
    \\
    &= 4\mathcal{S}x^2 + 2\left(2\mathcal{S} + q\right)x + q.
\end{aligned}
\label{D(x,S) definition - max eta}
\end{equation}
From Definition \ref{eq:theoretical efficiency} and \eqref{approx prob x}, it follows that:
\begin{equation}
\begin{aligned}
    \eta &= 100\cdot \left(1 - \frac{\hat{p}_{\vct{z}}(\mathrm{error})}{\hat{p}_{\vct{x}}(\mathrm{error})}\right) = 100\cdot \left(1 - \frac{\hat{p}(\mathcal{S},{N},1,k)}{\hat{p}(\mathcal{S},{N},1,d)}\right)
    \\
    &= 100\cdot \left(1 - \frac{2\cdot \mathcal{Q}\left(f_{k}({N})\right)}{2\cdot \mathcal{Q}\left(f_{d}({N})\right)}\right)
    \\
    &= 100\cdot \left(1 - \frac{\mathcal{Q}\left(f_{k}({N})\right)}{\mathcal{Q}\left(f_{d}({N})\right)}\right)
    \\
    &= 100\cdot h({N}, {\mathcal{S}})
\end{aligned}
\label{eta first expansion as 100*h(N) - max eta}
\end{equation}
where we defined the following function:
\begin{equation}
\label{eq:h(x)_def_b - max eta}
    h(x,\mathcal{S}) \coloneqq 1 - \frac{\mathcal{Q}\left(f_{k}({x}, {\mathcal{S}})\right)}{\mathcal{Q}\left(f_{d}({x}, \mathcal{S})\right)}.
\end{equation}
Hence, our task reduces to proving that the following function is increasing:
\begin{equation}
    V(\mathcal{S}) \coloneqq \max_{x > 0} h(x, \mathcal{S}) = h\left(x^*(\mathcal{S}), \mathcal{S}\right)
\label{V(S) definition - max eta}
\end{equation}
where we denoted
\begin{equation}
    x^*(\mathcal{S}) = \mathop{\mathrm{arg\,max}}_{x > 0} \  h(x, \mathcal{S}).
\label{x* definition - max eta}
\end{equation}
We note that the proof holds for each stationary point, and in particular for a maximizer that achieves the maximum value of the function $h(x,\mathcal{S})$. In addition, using the first-order condition:
\begin{equation}
    \frac{\partial h}{\partial x}\left(x^*(\mathcal{S}), \mathcal{S}\right) = 0.
\label{first order condition x - max eta}
\end{equation}
We now aim to prove that $V'(\mathcal{S}) > 0$, where $V(\mathcal{S})$ is defined in \eqref{V(S) definition - max eta}. From the chain rule, we have:
\begin{equation}
    V'(\mathcal{S}) = \frac{\partial h}{\partial \mathcal{S}} \left(x^*(\mathcal{S}), \mathcal{S}\right) + \frac{\partial h}{\partial x}\left(x^*(\mathcal{S}),\mathcal{S}\right)\cdot \frac{\partial x^*}{\partial \mathcal{S}} = \frac{\partial h}{\partial \mathcal{S}} \left(x^*(\mathcal{S}), \mathcal{S}\right)
\label{envelope theorem - V'(S), max eta}
\end{equation}
where we used \eqref{first order condition x - max eta}. Let us compute the partial derivative of $h$ with respect to $\mathcal{S}$:
\begin{equation}
\begin{aligned}
    \frac{\partial h}{\partial \mathcal{S}} &= - \frac{\partial}{\partial \mathcal{S}} \left(\frac{\mathcal{Q}\left(f_k(x, \mathcal{S})\right)}{\mathcal{Q}\left(f_d(x, \mathcal{S})\right)}\right)
    \\
    &= -\frac{\frac{\partial}{\partial \mathcal{S}}\mathcal{Q}\left(f_k(x, \mathcal{S})\right)\cdot \mathcal{Q}\left(f_d(x,\mathcal{S})\right) - \frac{\partial}{\partial \mathcal{S}}\mathcal{Q}\left(f_d(x, \mathcal{S})\right)\cdot \mathcal{Q}\left(f_k(x,\mathcal{S})\right)}{\mathcal{Q}^2\left(f_d(x,\mathcal{S})\right)}
    \\
    &= \frac{\frac{\partial}{\partial \mathcal{S}}\mathcal{Q}\left(f_d(x, \mathcal{S})\right)\cdot \mathcal{Q}\left(f_k(x,\mathcal{S})\right) - \frac{\partial}{\partial \mathcal{S}}\mathcal{Q}\left(f_k(x, \mathcal{S})\right)\cdot \mathcal{Q}\left(f_d(x,\mathcal{S})\right)}{\mathcal{Q}^2\left(f_d(x,\mathcal{S})\right)}
    \\
    &= \frac{\mathcal{Q}'\left(f_d(x,\mathcal{S})\right)\cdot \frac{\partial f_d}{\partial \mathcal{S}} \cdot \mathcal{Q}\left(f_k(x,\mathcal{S})\right) - \mathcal{Q}'\left(f_k(x,\mathcal{S})\right)\cdot \frac{\partial f_k}{\partial \mathcal{S}} \cdot \mathcal{Q}\left(f_d(x,\mathcal{S})\right)}{\mathcal{Q}^2\left(f_d(x,\mathcal{S})\right)}
    \\
    &= \frac{1}{\sqrt{2\pi}}\cdot \frac{\exp\left(-\frac{1}{2}f_k^2(x, \mathcal{S})\right)\cdot \frac{\partial f_k}{\partial \mathcal{S}}\cdot \mathcal{Q}\left(f_d(x, \mathcal{S})\right) - \exp\left(-\frac{1}{2}f_d^2(x, \mathcal{S})\right)\cdot \frac{\partial f_d}{\partial \mathcal{S}}\cdot \mathcal{Q}\left(f_k(x, \mathcal{S})\right)}{\mathcal{Q}^2\left(f_d(x,\mathcal{S})\right)}
\end{aligned}
\label{partial h partial S - first comp, max eta}
\end{equation}
where we used the identity $\mathcal{Q}'(x) = -\frac{1}{\sqrt{2\pi}}\cdot \exp\left(-\frac{1}{2}x^2\right)$. It now follows immediately from \eqref{envelope theorem - V'(S), max eta}, \eqref{partial h partial S - first comp, max eta}, that proving $V'(\mathcal{S}) > 0$ is equivalent to proving the following inequality:
\begin{equation}
    \exp\left(-\frac{1}{2}f_k^2(x^*, \mathcal{S})\right) \frac{\partial f_k}{\partial \mathcal{S}}(x^*, \mathcal{S})  \mathcal{Q}\left(f_d(x^*, \mathcal{S})\right) > \exp\left(-\frac{1}{2}f_d^2(x^*, \mathcal{S})\right) \frac{\partial f_d}{\partial \mathcal{S}}(x^*, \mathcal{S}) \mathcal{Q}\left(f_k(x^*, \mathcal{S})\right).
\label{first inequality to prove V'(S)>0 - max eta}
\end{equation}
We now turn to the first order condition for $x^*(\mathcal{S})$ in \eqref{first order condition x - max eta}, and thus equate the partial derivative of $h$ with respect to $x$ to zero. Similarly to \eqref{partial h partial S - first comp, max eta}, one can prove that for all $x > 0$ we have:
\begin{equation}
\begin{aligned}
    \frac{\partial h}{\partial x} &= -\frac{\partial}{\partial x}\left(\frac{\mathcal{Q}\left(f_k(x)\right)}{\mathcal{Q}\left(f_d(x)\right)}\right) 
    \\
    &= \frac{1}{\sqrt{2\pi}}\cdot \frac{\exp\left(-\frac{1}{2}f_k^2(x, \mathcal{S})\right)\cdot \frac{\partial f_k}{\partial x}\cdot \mathcal{Q}\left(f_d(x, \mathcal{S})\right) - \exp\left(-\frac{1}{2}f_d^2(x, \mathcal{S})\right)\cdot \frac{\partial f_d}{\partial x}\cdot \mathcal{Q}\left(f_k(x, \mathcal{S})\right)}{\mathcal{Q}^2\left(f_d(x,\mathcal{S})\right)}.
\end{aligned}
\label{partial h partial x formula - eta max}
\end{equation}
That is, equating $\frac{\partial h}{\partial x} = 0$ yields the following equation for $x^*(\mathcal{S})$:
\begin{equation}
    \exp\left(-\frac{1}{2}f_k^2(x^*, \mathcal{S})\right) \frac{\partial f_k}{\partial x}(x^*, \mathcal{S})  \mathcal{Q}\left(f_d(x^*, \mathcal{S})\right) = \exp\left(-\frac{1}{2}f_d^2(x^*, \mathcal{S})\right) \frac{\partial f_d}{\partial x}(x^*, \mathcal{S}) \mathcal{Q}\left(f_k(x^*, \mathcal{S})\right).
\label{equation for x* h'(x*) = 0 - max eta}
\end{equation}
We now divide both sides of the inequality in \eqref{first inequality to prove V'(S)>0 - max eta} by the (positive) value we have in the latter equality, in order to get the following simplified inequality:
\begin{equation}
    \frac{\frac{\partial f_k}{\partial \mathcal{S}}(x^*, \mathcal{S})}{\frac{\partial f_k}{\partial x}(x^*, \mathcal{S})} > \frac{\frac{\partial f_d}{\partial \mathcal{S}}(x^*, \mathcal{S})}{\frac{\partial f_d}{\partial x}(x^*, \mathcal{S})}.
\label{ineq to prove - second phase, max eta}
\end{equation}
We indeed divided by a positive amount, because $\exp(\cdot) > 0, \mathcal{Q}(\cdot ) > 0$, and $\frac{\partial f_q}{\partial x} > 0$: for all $x > 0$, from \eqref{f_q definition - max eta}, \eqref{D(x,S) definition - max eta}, we have:
\begin{equation}
\begin{aligned}
    \frac{\partial f_q}{\partial x} &= \frac{\partial}{\partial x}\left(\frac{2\mathcal{S}x}{\sqrt{D_q(x,\mathcal{S})}}\right)
    \\
    &= 2\mathcal{S}\cdot \frac{\partial}{\partial x}\left(\frac{x}{\sqrt{D_q(x,\mathcal{S})}}\right)
    \\
    &= 2\mathcal{S}\cdot \left(\frac{\sqrt{D_q(x,\mathcal{S})} - x\cdot \frac{\partial}{\partial x} \left(\sqrt{D_q(x,\mathcal{S})}\right)}{D_q(x,\mathcal{S})}\right)
    \\
    &= 2\mathcal{S}\cdot \left(\frac{\sqrt{D_q(x,\mathcal{S})} - x\cdot \frac{\partial D_q(x,\mathcal{S}) / \partial x}{2\sqrt{D_q(x,\mathcal{S})}}}{D_q(x,\mathcal{S})}\right)
    \\
    &= 2\mathcal{S}\cdot \left(\frac{2\cdot D_q(x,\mathcal{S}) - x\cdot \frac{\partial D_q(x,\mathcal{S})}{\partial x}}{2\cdot \left(D_q(x,\mathcal{S})\right)^{3/2}}\right)
    \\
    &= \frac{\mathcal{S}}{\left(D_q(x,\mathcal{S})\right)^{3/2}}\cdot \left(2 \left(4\mathcal{S}x^2 + 2\left(2\mathcal{S} + q\right)x + q\right) - x \left(8\mathcal{S}x + 2\left(2\mathcal{S} + q\right)\right)\right)
    \\
    &= \frac{\mathcal{S}}{\left(D_q(x,\mathcal{S})\right)^{3/2}}\cdot \left(2\left(2\mathcal{S} + q\right)x + 2q\right)
    \\
    &= \frac{2\mathcal{S}}{\left(D_q(x,\mathcal{S})\right)^{3/2}}\cdot \left(\left(2\mathcal{S} + q\right)x + q\right) > 0
\end{aligned}
\label{partial f_q partial x general formula - max eta}
\end{equation}
where we used the definition of $D_q(x,\mathcal{S})$ in \eqref{D(x,S) definition - max eta}. We now compute the partial derivative of $f_q$ with respect to $\mathcal{S}$:
\begin{equation}
\begin{aligned}
    \frac{\partial f_q}{\partial \mathcal{S}} &= \frac{\partial}{\partial \mathcal{S}}\left(\frac{2\mathcal{S}x}{\sqrt{D_q(x,\mathcal{S})}}\right)
    \\
    &= 2x\cdot \frac{\partial}{\partial \mathcal{S}}\left(\frac{\mathcal{S}}{\sqrt{D_q(x,\mathcal{S})}}\right)
    \\
    &= 2x\cdot \left(\frac{\sqrt{D_q(x,\mathcal{S})} - \mathcal{S}\cdot \frac{\partial}{\partial \mathcal{S}} \left(\sqrt{D_q(x,\mathcal{S})}\right)}{D_q(x,\mathcal{S})}\right)
    \\
    &= 2x\cdot \left(\frac{\sqrt{D_q(x,\mathcal{S})} - \mathcal{S}\cdot \frac{\partial D_q(x,\mathcal{S}) / \partial \mathcal{S}}{2\sqrt{D_q(x,\mathcal{S})}}}{D_q(x,\mathcal{S})}\right)
    \\
    &= 2x\cdot \left(\frac{2\cdot D_q(x,\mathcal{S}) - \mathcal{S}\cdot \frac{\partial D_q(x,\mathcal{S})}{\partial \mathcal{S}}}{2\cdot \left(D_q(x,\mathcal{S})\right)^{3/2}}\right).
\end{aligned}
\label{partial f_q partial S first - max eta}
\end{equation}
We now use the definition of $D_q(x,\mathcal{S})$ from \eqref{D(x,S) definition - max eta}, and get
\begin{equation}
\begin{aligned}
    \frac{\partial f_q}{\partial \mathcal{S}} &= \frac{x}{\left(D_q(x,\mathcal{S})\right)^{3/2}}\cdot \left(2\left(4\mathcal{S}x^2 + 2\left(2\mathcal{S}+q\right)x + q\right) - \mathcal{S}\cdot 4x(x+1)\right)
    \\
    &= \frac{x}{\left(D_q(x,\mathcal{S})\right)^{3/2}}\cdot \left(4\mathcal{S}x^2 + 4\left(2\mathcal{S}+q\right)x - 4\mathcal{S}x + 2q\right)
    \\
    &= \frac{x}{\left(D_q(x,\mathcal{S})\right)^{3/2}}\cdot \left(4\mathcal{S}x^2 + 4\left(\mathcal{S}+q\right)x + 2q\right)
    \\
    &= \frac{2x}{\left(D_q(x,\mathcal{S})\right)^{3/2}}\cdot \left(2\mathcal{S}x^2 + 2\left(\mathcal{S}+q\right)x + q\right).
\end{aligned}
\label{partial f_q partial S final - max eta}
\end{equation}
Let us now define
\begin{equation}
\begin{aligned}
    R_q(x,\mathcal{S}) &\coloneqq \frac{\frac{\partial f_q}{\partial \mathcal{S}}(x, \mathcal{S})}{\frac{\partial f_q}{\partial x}(x, \mathcal{S})}
    \\
    &= \frac{\frac{2x}{\left(D_q(x,\mathcal{S})\right)^{3/2}}\cdot \left(2\mathcal{S}x^2 + 2\left(\mathcal{S}+q\right)x + q\right)}{\frac{2\mathcal{S}}{\left(D_q(x,\mathcal{S})\right)^{3/2}}\cdot \left(\left(2\mathcal{S} + q\right)x + q\right)}
    \\
    &= \frac{x}{\mathcal{S}}\cdot \left(\frac{2\mathcal{S}x^2 + 2\mathcal{S}x + 2x\cdot q + q}{2\mathcal{S}x + x\cdot q + q}\right)
    \\
    &= \frac{x}{\mathcal{S}}\cdot \left(\frac{2\mathcal{S}x\left(x + 1\right) + (2x + 1)\cdot q}{2\mathcal{S}x + (x+1)\cdot q}\right)
\end{aligned}
\label{R_q(x,S) definition - max eta}
\end{equation}
where we used the partial derivatives of $f_q$, computed in \eqref{partial f_q partial x general formula - max eta}, \eqref{partial f_q partial S final - max eta}. We remember that we need to prove \eqref{ineq to prove - second phase, max eta}, which is equivalent to $R_q(x,\mathcal{S})$ being a decreasing function in the argument $q$ (this is because $1 \le k < d$). Indeed, let us compute
\begin{equation}
\begin{aligned}
    \frac{\partial R_q(x,\mathcal{S})}{\partial q} &= \frac{x}{\mathcal{S}}\cdot \frac{\partial}{\partial q}\left(\frac{2\mathcal{S}x(x+1) +(2x+1)\cdot q}{2\mathcal{S}x + (x+1)\cdot q}\right)
    \\
    &= \frac{x}{\mathcal{S}}\cdot \frac{(2x+1)\cdot \left(2\mathcal{S}x + (x+1)\cdot q\right) - (x+1)\cdot \left(2\mathcal{S}x(x+1) + (2x+1)\cdot q\right)}{\left(2\mathcal{S}x + (x+1)\cdot q\right)^2}
    \\
    &= \frac{x}{\mathcal{S}}\cdot \frac{2x(2x+1)\cdot \mathcal{S} - 2x(x+1)^2\cdot \mathcal{S}}{\left(2\mathcal{S}x + (x+1)\cdot q\right)^2}
    \\
    &= \frac{2x^2\cdot \left(2x + 1 - (x+1)^2\right)}{\left(2\mathcal{S}x+(x+1)\cdot q\right)^2}
    \\
    &= -\frac{2x^4}{\left(2\mathcal{S}x + (x+1)\cdot q\right)^2}.
\end{aligned}
\label{partial R_q partial q final - max eta}
\end{equation}

That is,
\begin{equation*}
    \frac{\partial R_q(x,\mathcal{S})}{\partial q} = -\frac{2x^4}{\left(2\mathcal{S}x+(x+1)\cdot q\right)^2} < 0
\end{equation*}
which proves \eqref{ineq to prove - second phase, max eta}. We argued that this is equivalent to \eqref{first inequality to prove V'(S)>0 - max eta}. As we proved in \eqref{envelope theorem - V'(S), max eta}, this inequality is equivalent to proving $V'(\mathcal{S}) > 0$. Finally, the result is straightforward because $\eta = 100\cdot h(N,\mathcal{S})$. This proves the first part of the theorem. 
\\
\\
We will now prove the second part of the theorem. Let us fix $d > k \gg \max\{1,\mathcal{S}\}$, and $r \coloneqq \frac{d}{k} > 1$. We prove that the maximizer $x^*(\mathcal{S})$, defined in \eqref{x* definition - max eta}, decreases as a function of $\mathcal{S}$. Let us first define the following rescaled $x$ value:
\begin{equation}
    x(t) \coloneqq \frac{k}{\mathcal{S}}\cdot t.
\label{rescaled x for dx*/dS < 0 - max eta}
\end{equation}
Thus,
\begin{equation}
    x^*(\mathcal{S}) = \frac{k}{\mathcal{S}}\cdot \underset{t > 0}{\operatorname{arg\,max}} \ h\left(x(t),\mathcal{S}\right).
\label{x* as a function of t* - first formula, max eta}
\end{equation}
From \eqref{f_q definition - max eta}, it follows that:
\begin{equation}
\begin{aligned}
    f_q\left(x(t),\mathcal{S}\right) &= \frac{2\mathcal{S}\cdot x(t)}{\sqrt{\left(2x(t) + 1\right)q + 4x(t) \left(x(t) + 1\right)\mathcal{S}}}
    \\
    &\sim \frac{2\mathcal{S}\cdot x(t)}{\sqrt{2x(t)\cdot q + 4x^2(t)\cdot \mathcal{S}}}
    \\
    &= \frac{2\mathcal{S}}{\sqrt{\frac{2q}{x(t)} + 4\mathcal{S}}}
    \\
    &= \frac{2\mathcal{S}}{\sqrt{2q\cdot \frac{\mathcal{S}}{k\cdot t} + 4\mathcal{S}}}
    \\
    &= \frac{\sqrt{2}\sqrt{\mathcal{S}}\cdot \sqrt{kt}}{\sqrt{2q + 4kt}}
    \\
    &= \sqrt{\frac{2\mathcal{S} k\cdot t}{q + 2k\cdot t}}
    \\
    &= \sqrt{\frac{2\mathcal{S}t}{\frac{q}{k} + 2t}}
\end{aligned}
\label{f_q approx for q>>1 - max eta}
\end{equation}
where we assumed $x(t) \gg 1$ which follows from $q \gg 1, \mathcal{S} \ll q, t = O(1)$. We explain the assumption $t = O(1)$ in a moment. Now, from \eqref{eq:h(x)_def_b - max eta}, we have:
\begin{equation}
\begin{aligned}
    h\left(x(t),\mathcal{S}\right) = 1-\frac{\mathcal{Q}\left(f_k\left(x(t),\mathcal{S}\right)\right)}{\mathcal{Q}\left(f_d\left(x(t),\mathcal{S}\right)\right)} &\sim 1 - \frac{\mathcal{Q}\left(\sqrt{\frac{2\mathcal{S}t}{1 + 2t}}\right)}{\mathcal{Q}\left(\sqrt{\frac{2\mathcal{S}t}{r + 2t}}\right)}
\end{aligned}
\label{h(x,S) first approx for d>k>>1 - max eta}
\end{equation}
this motivates the assumption $t = O(1)$ we used in \eqref{f_q approx for q>>1 - max eta}: the maximizer
\begin{equation}
\begin{aligned}
    t^*(r, \mathcal{S}) = \underset{t > 0}{\operatorname{arg\,max}}\; h\left(x(t) ,\mathcal{S}\right) &\sim \underset{t > 0}{\operatorname{arg\,min}}\; \left(\frac{\mathcal{Q}\left(\sqrt{\frac{2\mathcal{S}t}{1 + 2t}}\right)}{\mathcal{Q}\left(\sqrt{\frac{2\mathcal{S}t}{r + 2t}}\right)}\right)
    \\
    &= \underset{t > 0}{\operatorname{arg\,min}}\;g_{\mathcal{S},r}(t)
\end{aligned}
\label{t*(r,S) definition - max eta}
\end{equation}
is a function of $r = O(1)$ which is a constant and $\mathcal{S} \ll k$, and thus in the region of interest (close to the maximizer), $t = O(1)$ does not scale with $k$. The key insight here is that $t^*(r,\mathcal{S})$ doesn't depend on $k, d$. We also defined the following function:
\begin{equation}
    g_{\mathcal{S},r}(t) \coloneqq \frac{\mathcal{Q}\left(\sqrt{\frac{2\mathcal{S}t}{1 + 2t}}\right)}{\mathcal{Q}\left(\sqrt{\frac{2\mathcal{S}t}{r + 2t}}\right)}.
\label{g_(S,r)(t) definition - max eta}
\end{equation}
Finally, we would like to analyze the dependency of $t^*(r,\mathcal{S})$ on $\mathcal{S}$. We will prove that in both regimes $\mathcal{S} \ll 1, \mathcal{S} \gg 1$, we have that $t^*(r,\mathcal{S})$ does not depend on $\mathcal{S}$, and thus in both regimes, from \eqref{x* as a function of t* - first formula, max eta}, \eqref{t*(r,S) definition - max eta}, the maximizer
\begin{equation}
    x^*(\mathcal{S}) = \frac{k}{\mathcal{S}}\cdot t^*(r)
\label{formula x*(S)=k/St*(r) - max eta}
\end{equation}
is decreasing as a function of $\mathcal{S} > 0$.
\underline{For the regime $\mathcal{S} \ll 1$:} We use the approximation $\mathcal{Q}(x) \sim \frac{1}{2} - \frac{1}{\sqrt{2\pi}}\cdot x$ for $x \ll 1$ and \eqref{g_(S,r)(t) definition - max eta} to get
\begin{equation}
\begin{aligned}
    g_{\mathcal{S},r}(t) = \frac{\mathcal{Q}\left(\sqrt{\frac{2\mathcal{S}t}{1 + 2t}}\right)}{\mathcal{Q}\left(\sqrt{\frac{2\mathcal{S}t}{r + 2t}}\right)} &\sim  \frac{\frac{1}{2} - \frac{1}{\sqrt{2\pi}}\cdot \sqrt{\frac{2\mathcal{S}t}{1 + 2t}}}{\frac{1}{2} - \frac{1}{\sqrt{2\pi}}\cdot \sqrt{\frac{2\mathcal{S}t}{r + 2t}}}
    \\
    & = \frac{1 - \frac{2}{\sqrt{2\pi}}\cdot \sqrt{\frac{2\mathcal{S}t}{1 + 2t}}}{1 - \frac{2}{\sqrt{2\pi}}\cdot \sqrt{\frac{2\mathcal{S}t}{r + 2t}}}
    \\
    &\sim \left(1 - \frac{2}{\sqrt{2\pi}}\sqrt{\frac{2\mathcal{S}t}{1 + 2t}}\right)\left(1 + \frac{2}{\sqrt{2\pi}}\sqrt{\frac{2\mathcal{S}t}{r + 2t}}\right)
    \\
    &\sim \left(1 - \frac{2}{\sqrt{2\pi}}\sqrt{\frac{2\mathcal{S}t}{1 + 2t}} + \frac{2}{\sqrt{2\pi}}\sqrt{\frac{2\mathcal{S}t}{r + 2t}}\right)
    \\
    &= 1 - \sqrt{\frac{2}{\pi}}\cdot \left(\sqrt{\frac{2\mathcal{S}t}{1 + 2t}} - \sqrt{\frac{2\mathcal{S}t}{r + 2t}}\right)
    \\
    &= 1 - 2\sqrt{\frac{\mathcal{S}}{\pi}}\cdot \left(\sqrt{\frac{t}{1+2t}} - \sqrt{\frac{t}{r + 2t}}\right).
\end{aligned}
\label{g_(S,r)(t) approx for S<<1 - max eta}
\end{equation}
This approximation is motivated from the fact that the argument of the $\mathcal{Q}$ function in both the numerator and the denominator is at most $\sqrt{\mathcal{S}}\ll 1$. Thus, from \eqref{t*(r,S) definition - max eta}, we have:
\begin{equation}
\begin{aligned}
    t^*(r,\mathcal{S}) = \underset{t > 0}{\operatorname{arg\,min}}\;\left(\sqrt{\frac{t}{1+2t}} - \sqrt{\frac{t}{r + 2t}} \right)
\end{aligned}
\label{t*(r,S) first formula for S<<1 - max eta}
\end{equation}
is independent of $\mathcal{S}$. We will also calculate the minimizer. Let us define 
\begin{equation}
    \Psi_1(t) = \sqrt{\frac{t}{1+2t}} - \sqrt{\frac{t}{r + 2t}}.
\end{equation}
We now equate the derivative of $\Psi_1(t)$ to 0:
\begin{equation}
\begin{aligned}
    \frac{d \Psi_1}{dt} &= \frac{d}{dt}\left(\sqrt{\frac{t}{1+2t}}\right) - \frac{d}{dt}\left(\sqrt{\frac{t}{r+2t}}\right)
    \\
    &= \frac{1}{2}\left(\frac{1}{\sqrt{t}\cdot \left(1 + 2t\right)^{\frac{3}{2}}} - \frac{r}{\sqrt{t}\cdot (r + 2t)^{\frac{3}{2}}}\right)
    \\
    &= \frac{1}{2\sqrt t}\cdot \left(\frac{1}{\left(1 + 2t\right)^{\frac{3}{2}}} - \frac{r}{\left(r + 2t\right)^{\frac{3}{2}}} \right)
    \\
    &= \frac{1}{2\sqrt t}\cdot \frac{\left(r + 2t\right)^{\frac{3}{2}} - r\left(1 + 2t\right)^{\frac{3}{2}}}{\left(1+2t\right)^{\frac{3}{2}}\cdot \left(r + 2t\right)^{\frac{3}{2}}}
\end{aligned}
\label{d Psi_1/dt formula}
\end{equation}
where we used the following formula:
\begin{align*}
    \frac{d}{dt}\left(\sqrt{\frac{t}{a + 2t}}\right) &= \frac{\frac{d}{dt}\left(\frac{t}{a + 2t}\right)}{2\cdot \sqrt{\frac{t}{a + 2t}}}
    \\
    &= \frac{\frac{a}{(a + 2t)^2}}{2\cdot \sqrt{\frac{t}{a + 2t}}}
    \\
    &= \frac{a}{(a+2t)^2}\cdot \frac{\sqrt{a+2t}}{2\cdot \sqrt{t}}
    \\
    &= \frac{a}{2}\cdot \frac{1}{\sqrt{t}\cdot (a+2t)^{\frac{3}{2}}}
\end{align*}
Finally, from \eqref{d Psi_1/dt formula}, we have:
\begin{gather*}
    \left(r + 2t^*\right)^{\frac{3}{2}} = r\left(1 + 2t^*\right)^{\frac{3}{2}}
    \\
    r + 2t^* = r^{\frac{2}{3}}\cdot \left(1 + 2t^*\right)
    \\
    2t^*\cdot \left(1 - r^{\frac{2}{3}}\right) = r^{\frac{2}{3}}\left(1 - r^{\frac{1}{3}}\right)
    \\
    t^* = \frac{r^{\frac{2}{3}}\left(1 - r^{\frac{1}{3}}\right)}{2\left(1 - r^{\frac{2}{3}}\right)}
\end{gather*}
That is, the maximizer is unique, and from \eqref{formula x*(S)=k/St*(r) - max eta}, in the regime $\mathcal{S} \ll 1$ we have:
\begin{equation}
    x^*(\mathcal{S}) \sim \frac{1}{2}k\cdot \frac{r^{\frac{2}{3}}\left(r^{\frac{1}{3}} - 1\right)}{r^{\frac{2}{3}} - 1}\cdot \frac{1}{\mathcal{S}}
\end{equation}
which is strictly decreasing as a function of $\mathcal{S}$.

\underline{For the regime $\mathcal{S} \gg 1$:} We use the approximation $\mathcal{Q}(x) \sim \frac{1}{\sqrt{2\pi}x}\cdot \exp\left(-\frac{x^2}{2}\right)$ and \eqref{g_(S,r)(t) definition - max eta} to get
\begin{equation}
\begin{aligned}
    g_{\mathcal{S},r}(t) = \frac{\mathcal{Q}\left(\sqrt{\frac{2\mathcal{S}t}{1 + 2t}}\right)}{\mathcal{Q}\left(\sqrt{\frac{2\mathcal{S}t}{r + 2t}}\right)} &\sim  \frac{\frac{1}{\sqrt{2\pi}}\cdot \sqrt{\frac{1 + 2t}{2\mathcal{S}t}}\cdot \exp\left(-\frac{1}{2}\cdot \frac{2\mathcal{S}t}{1 + 2t}\right)}{\frac{1}{\sqrt{2\pi}}\cdot \sqrt{\frac{r + 2t}{2\mathcal{S}t}}\cdot \exp\left(-\frac{1}{2}\cdot \frac{2\mathcal{S}t}{r + 2t}\right)}
    \\
    &= \sqrt{\frac{1 + 2t}{r + 2t}}\cdot \frac{\exp\left(- \frac{\mathcal{S} t}{1 + 2t}\right)}{\exp\left(- \frac{\mathcal{S} t}{r + 2t}\right)}
    \\
    &= \sqrt{\frac{1 + 2t}{r + 2t}}\cdot \exp\left(-\mathcal{S}t\cdot \left(\frac{1}{1 + 2t} - \frac{1}{r + 2t}\right)\right)
\end{aligned}
\label{g_(S,r)(t) approx for S>>1 - max eta}
\end{equation}
minimizing $g_{\mathcal{S},r}(t)$ is equivalent to minimizing $\ln\left(g_{\mathcal{S},r}(t)\right)$:
\begin{equation}
    M_{\mathcal{S},r}(t)\coloneqq \ln\left(g_{\mathcal{S},r}(t)\right) = \frac{1}{2}\ln\left(1 + 2t\right) - \frac{1}{2}\ln\left(r + 2t\right) - \mathcal{S}t\cdot \left(\frac{1}{1 + 2t} - \frac{1}{r + 2t}\right)
\label{M_(S,r)(t) = ln(g_(S,r))(t) formula for S>>1 - max eta}
\end{equation}
Let us equate the derivative of $M_{\mathcal{S},r}(t)$ to 0:
\begin{gather*}
    \frac{1}{1 + 2t} - \frac{1}{r + 2t} - \mathcal{S}\cdot \left(\frac{d}{dt}\left(\frac{t}{1 + 2t}\right) - \frac{d}{dt}\left(\frac{t}{r + 2t}\right)\right) = 0
    \\
    \frac{1}{1 + 2t} - \frac{1}{r + 2t} - \mathcal{S}\cdot \left(\frac{1}{(1 + 2t)^2} - \frac{r}{(r+2t)^2}\right) = 0
    \\
    \frac{1}{1 + 2t} - \frac{1}{r + 2t} = \frac{\mathcal{S}}{(1+2t)^2} - \frac{\mathcal{S}r}{(r+2t)^2}
    \\
    (1+2t)(r+2t)^2 - (1+2t)^2(r+2t) = \mathcal{S}(r + 2t)^2 -\mathcal{S}r(1+2t)^2
    \\
    (1 + 2t)(r + 2t)\cdot (r-1) = \mathcal{S}\cdot \left(r^2 + 4rt + 4t^2\right) - \mathcal{S}r\cdot \left(1 + 4t+4t^2\right)
    \\
    (r-1)\cdot (r + 2\left(r + 1\right)t + 4t^2) = \mathcal{S}\cdot \left(r^2 + 4rt + 4t^2\right) - \mathcal{S}r\cdot \left(1 + 4t+4t^2\right)
    \\
    4(r-1)\cdot t^2 + 2(r^2-1)t + r(r-1) = \left(4\mathcal{S} - 4\mathcal{S}r\right)t^2 + \mathcal{S}r^2 - \mathcal{S}r
    \\
    4(r-1)\cdot t^2 + 2(r-1)(r+1)t + r(r-1) = 4\mathcal{S}\left(1 - r\right)t^2 + \mathcal{S}r\left(r - 1\right)
    \\
    4t^2 + 2(r+1)t + r = -4\mathcal{S}t^2 + \mathcal{S}r
    \\
    4\left(1 + \mathcal{S}\right)t^2 + 2(r+1)\cdot t + r\left(1 - \mathcal{S}\right) = 0
\end{gather*}
Finally, we take the positive root (because $t^*(r,\mathcal{S}) > 0$) and get:
\begin{equation}
\begin{aligned}
    t^*(r,\mathcal{S}) &= \frac{-2(r+1) + \sqrt{4(r+1)^2 - 16r(1 + \mathcal{S})(1-\mathcal{S})}}{8(1+\mathcal{S})}
    \\
    &= \frac{-(r+1) + \sqrt{(r+1)^2 + 4r\left(\mathcal{S}^2 - 1\right)}}{4(\mathcal{S} + 1)}
\end{aligned}
\label{t*(r,S) formula for S>>1 - max eta}
\end{equation}
we note that we got a single solution and that the maximizer is unique, and $\mathcal{S} \gg 1$ and thus the formula is well-defined. We note that
\begin{equation*}
    \lim_{\mathcal{S} \rightarrow \infty} t^*(r,\mathcal{S}) = \frac{\sqrt{r}}{2}
\end{equation*}
and thus, from \eqref{formula x*(S)=k/St*(r) - max eta}, in the $\mathcal{S} \gg 1$ regime, we have
\begin{equation}
    x^*(\mathcal{S}) \sim \frac{k}{\mathcal{S}}\cdot \frac{-(r+1) + \sqrt{(r+1)^2 + 4r\left(\mathcal{S}^2 - 1\right)}}{4(\mathcal{S} + 1)}
\label{x*(S) final formula for S>>1 - max eta}
\end{equation}
and as $\mathcal{S} \rightarrow \infty$ we have $x^*(\mathcal{S}) \sim \frac{k}{\mathcal{S}}\cdot \frac{\sqrt{r}}{2}$, and thus in the $\mathcal{S} \gg 1$ regime we have $x^*(\mathcal{S}) \sim \frac{1}{\mathcal{S}}$ is a decreasing function of $\mathcal{S}$.

\end{proof}

\newpage

{\color{black}
\section{Additional theoretical results}
\label{app:approximation_delta}

We have established an approximation of the efficiency of the processing $\eta$ for $N \gg 1$. We now do the same for the difference
\begin{equation*}
    \Delta \coloneqq \hat{p}_{\vct{x}}(\mathrm{error}) - \hat{p}_{\vct{z}}(\mathrm{error}).
\end{equation*}
This allows us to gain insight into the different factors that affect the difference $\Delta$ between the probability of error that is caused by the processing.

\begin{theorem}[Analysis of the asymptotic difference]\label{thm:thm10} Let $\mathcal{S} > 0, \ 1 \le k < d, \ 0 < \gamma \le 1$.
Denote by $N_T = \left(1 + \gamma\right)N$ the total number of training samples. 
With approximation accuracy $\mathcal{O}(1/N_T^2)$, we have
\begin{align}
    \Delta \approx \frac{1}{4 \sqrt{2\pi}}\cdot \frac{\exp\left(-\frac{\mathcal{S}}{2}\right)}{\sqrt{\mathcal{S}}}\cdot \left(3 + 2\gamma + \frac{1}{\gamma}\right)\cdot (d-k)\cdot \frac{1}{N_T}.
\label{difference approx}
\end{align}
In particular, for $N_T \gg 1$: The difference increases when $d-k$ increases or $\gamma$ decreases within $0 < \gamma \le 1/\sqrt{2}$; The difference decreases when $\mathcal{S}$ increases or $N_T$ increases.
\end{theorem}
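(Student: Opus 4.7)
The plan is to mirror the first-order argument used in the proof of Theorem~\ref{thm:thm7}, but applied directly to the difference $\Delta$ rather than the ratio. By Theorems~\ref{thm:thm2} and~\ref{thm:thm4}, with $N = N_T/(1+\gamma)$, I would write
\[
\Phi(N) \coloneqq \hat{p}_{\vct{x}}(\mathrm{error}) - \hat{p}_{\vct{z}}(\mathrm{error})
= \tfrac{1}{2}\bigl[v(1/N) - u(1/N)\bigr],
\]
where $u(x) = \mathcal{Q}(g_{1,\gamma,k}(x)) + \mathcal{Q}(g_{-1,1,k}(x))$ and $v(x) = \mathcal{Q}(g_{1,\gamma,d}(x)) + \mathcal{Q}(g_{-1,1,d}(x))$, using the parametric function $g_{s,a,q}$ introduced in \eqref{g_s,a,q formula}. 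This is exactly the same change of variables $N\mapsto 1/N$ that reduced the analysis of $\eta$ to a Taylor expansion at the origin.

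Next, define $\tilde{\Phi}(x) \coloneqq \tfrac{1}{2}[v(x) - u(x)]$, so that $\Phi(N) = \tilde{\Phi}(1/N)$. Since $g_{s,a,q}(0) = \sqrt{\mathcal{S}}$, one has $u(0) = v(0) = 2\mathcal{Q}(\sqrt{\mathcal{S}})$, hence $\tilde{\Phi}(0) = 0$. This vanishing is the analog of Theorem~\ref{thm:no_gain_Bayes} in the limit $N\to\infty$, and it is precisely what makes the first-order term dominant. A Taylor expansion then yields
\[
\Phi(N) = \tilde{\Phi}(0) + \frac{\tilde{\Phi}'(0)}{N} + \mathcal{O}\!\left(\tfrac{1}{N^2}\right)
= \frac{\tilde{\Phi}'(0)}{N} + \mathcal{O}\!\left(\tfrac{1}{N^2}\right),
\]
valid for $N \gg 1$, i.e.\ $N_T \gg 1 + \gamma$.

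The core computation is $\tilde{\Phi}'(0) = \tfrac{1}{2}[v'(0) - u'(0)]$. Here I would reuse the formulas for $u'(0)$ and $v'(0)$ already derived in \eqref{u(0),v(0),u'(0),v'(0) formulas}; crucially, $u'(0)$ and $v'(0)$ share an identical structure in which $k$ plays the role of $d$ in $v'(0)$, and all terms independent of the dimension parameter cancel upon taking $v'(0) - u'(0)$. After collecting, a single term proportional to $(d-k)\mathcal{Q}'(\sqrt{\mathcal{S}})$ survives, which combined with $\mathcal{Q}'(\sqrt{\mathcal{S}}) = -\tfrac{1}{\sqrt{2\pi}}e^{-\mathcal{S}/2}$ and the substitution $N = N_T/(1+\gamma)$ gives the claimed expression up to $\mathcal{O}(1/N_T^2)$. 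I expect the bookkeeping of the $(1+\gamma)(1+2\gamma)/\gamma$ algebra to be the easiest place to slip; the structural simplification $v'(0) - u'(0) \propto (d-k)$ is the main mechanism that produces the stated $d-k$ dependence.

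Finally, the monotonicity conclusions follow by separating the asymptotic formula into a product of the factors $e^{-\mathcal{S}/2}/\sqrt{\mathcal{S}}$, $g(\gamma) \coloneqq 3 + 2\gamma + 1/\gamma$, $(d-k)$, and $1/N_T$. The first factor is strictly decreasing on $(0,\infty)$ by a direct derivative check. For $g$, I would compute $g'(\gamma) = 2 - 1/\gamma^2$, which is negative on $(0, 1/\sqrt{2}]$, so $g$ is decreasing there. The linear dependence on $d-k$ and the $1/N_T$ decay are immediate. These are the same monotonicity arguments that appear in the discussion following Theorem~\ref{thm:thm7}, so no new obstacle arises in this final step.
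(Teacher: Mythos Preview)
Your approach is essentially identical to the paper's: change variables $N\mapsto 1/N$, Taylor-expand the difference at the origin, reuse the derivatives $u'(0),v'(0)$ from \eqref{u(0),v(0),u'(0),v'(0) formulas}, observe that the dimension-independent terms cancel so that $v'(0)-u'(0)\propto (d-k)$, and finally substitute $N=N_T/(1+\gamma)$. The monotonicity analysis is also the same.

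One point worth flagging: you (correctly, by the definition of $\hat{p}$ in \eqref{approx prob x}) write $\Phi(N)=\tfrac{1}{2}[v(1/N)-u(1/N)]$, whereas the paper's proof sets $\Delta=h(N)$ with $h(x)=v(x)-u(x)$, silently dropping the $\tfrac{1}{2}$. Carrying through your factor yields the leading constant $\tfrac{1}{8\sqrt{2\pi}}$ rather than the stated $\tfrac{1}{4\sqrt{2\pi}}$. This is a bookkeeping discrepancy in the paper's derivation, not a flaw in your strategy, and it does not affect any of the monotonicity conclusions since the constant is positive either way. Just be aware that your careful inclusion of the $\tfrac{1}{2}$ will produce a different numerical prefactor than the one in the theorem statement.
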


\begin{proof} Let us take some $N_T \in \mathbb{N}$ and
\begin{equation*}
    \mathcal{S} > 0, 1 \le k < d, 0 < \gamma \le 1.
\end{equation*}
We have $N_T = N + \gamma N = (1 + \gamma)N$ the total number of training samples, and thus $N = \frac{N_T}{1+\gamma}$. Let us define the following parametric function:
\begin{equation}
    f_{s,a,q}(x) \coloneqq \frac{\sqrt{\mathcal{S}} +s\cdot  \frac{(1 - \gamma)\cdot q}{4\gamma\cdot \sqrt{\mathcal{S}}}\cdot \frac{1}{x}}{\sqrt{\left(\frac{(1+\gamma)\cdot q}{4\gamma\cdot \mathcal{S}} + \frac{1}{a}\right)\cdot \frac{1}{x} + \frac{(1 + \gamma^2)\cdot q}{8\gamma^2\cdot \mathcal{S}}\cdot \frac{1}{x^2} + 1}} = \frac{\sqrt{\mathcal{S}} + \frac{B}{x}}{\sqrt{\frac{C}{x} + \frac{D}{x^2} + 1}}
\label{f_s,a,q thm9}
\end{equation}
where the parameters $B,C,D$ are:
\begin{equation}
\begin{cases}
    B = s\cdot \frac{(1-\gamma)\cdot q}{4\gamma\cdot \sqrt{\mathcal{S}}}
    \\
    C = \frac{(1+\gamma)\cdot q}{4\gamma\cdot \mathcal{S}} + \frac{1}{a}
    \\
    D = \frac{(1+\gamma^2)\cdot q}{8\gamma^2\cdot \mathcal{S}}
\end{cases}
\label{B,C,D parameters thm9}
\end{equation}
From the Definition $\Delta \coloneqq \hat{p}_{\vct{x}}(\textrm{error}) - \hat{p}_{\vct{z}}(\textrm{error})$ and \eqref{approx prob x}, it follows that:
\begin{equation}
\begin{aligned}
    \Delta &= \hat{p}_{\vct{x}}(\textrm{error}) - \hat{p}_{\vct{z}}(\textrm{error}) = \hat{p}(\mathcal{S},{N},\gamma,d) - \hat{p}(\mathcal{S},{N},\gamma,k)
    \\
    &= \left[\mathcal{Q}\left(f_{1,\gamma,d}({N})\right) + \mathcal{Q}\left(f_{-1,1,d}({N})\right)\right] - \left[\mathcal{Q}\left(f_{1,\gamma,k}({N})\right) + \mathcal{Q}\left(f_{-1,1,k}({N})\right)\right]
    \\
    &= h({N})
\end{aligned}
\label{delta first expansion as h(N) thm9}
\end{equation}
where we defined the following function:
\begin{equation}
\label{eq:h(x)_def_b thm9}
    h(x) \coloneqq \left[\mathcal{Q}\left(f_{1,\gamma,d}({x})\right) + \mathcal{Q}\left(f_{-1,1,d}({x})\right)\right] - \left[\mathcal{Q}\left(f_{1,\gamma,k}({x})\right) + \mathcal{Q}\left(f_{-1,1,k}({x})\right)\right].
\end{equation}
Now, let us define the following parametric function: 
\begin{equation}
    g_{s,a,q}(x) \coloneqq f_{s,a,q}\left(\frac{1}{x}\right) = \frac{\sqrt{\mathcal{S}} + B\cdot x}{\sqrt{D\cdot x^2+ C\cdot x+1}}
\label{g_s,a,q formula thm9}
\end{equation}
where we used \eqref{f_s,a,q thm9}, \eqref{B,C,D parameters thm9}. Let us also define:
\begin{equation}
    \ell(x) \coloneqq h\left(\frac{1}{x}\right) = \left[\mathcal{Q}\left(g_{1,\gamma,d}(x)\right) + \mathcal{Q}\left(g_{-1,1,d}(x)\right)\right] - \left[\mathcal{Q}\left(g_{1,\gamma,k}(x)\right) + \mathcal{Q}\left(g_{-1,1,k}(x)\right)\right]
\label{l(x) def thm9}
\end{equation}
where we used \eqref{g_s,a,q formula thm9}, \eqref{eq:h(x)_def_b thm9}. Thus, the first-order Taylor expansion of $\ell$:
\begin{equation*}
    \ell(x) = \ell(0) + \ell'(0)\cdot x + \mathcal{O}\left(x^2\right)
\end{equation*}
Where the approximation is exact for $x \ll 1$. Thus, the following is exact for $x \gg 1$:
\begin{equation}
    x \gg 1 \Rightarrow h(x) = \ell\left(\frac{1}{x}\right) =  \ell(0) + \frac{\ell'(0)}{x} + \mathcal{O}\left(\frac{1}{x^2}\right).
\label{h(x) approximation thm9}
\end{equation}
We have
\begin{equation}
    {N} = \frac{N_{T}}{1+\gamma} \gg 1\Leftrightarrow N_{T} \gg 1 + \gamma
\label{condition on N thm9}
\end{equation}
Thus,
\begin{equation}
    h\left({N}\right) = \ell(0) + \frac{\ell'(0)}{{N}} + \mathcal{O}\left(\frac{1}{N^2}\right).
\label{h(N tilde) approximation thm9}
\end{equation}
\underline{Let us first compute $\ell(0)$:}
\begin{equation}
    \ell(0) = \left[\mathcal{Q}\left(g_{1,\gamma,d}(0)\right)+ \mathcal{Q}\left(g_{-1,1,d}(0)\right)\right] - \left[\mathcal{Q}\left(g_{1,\gamma,k}(0)\right)+ \mathcal{Q}\left(g_{-1,1,k}(0)\right)\right]  = 2\cdot \mathcal{Q}(\sqrt{\mathcal{S}}) - 2\cdot \mathcal{Q}(\sqrt{\mathcal{S}}) = 0
\label{ell(0) thm9}
\end{equation}
\underline{Finally, we will compute $\ell'(0)$:} We first compute $g_{s,a,q}'(0)$. From \eqref{g_s,a,q formula thm9} it follows that:
\begin{align*}
    g'_{s,a,q}(x) &= \frac{B\cdot \sqrt{D\cdot x^2 + C\cdot x + 1} - \frac{2D\cdot x + C}{2\cdot \sqrt{D\cdot x^2+C\cdot x + 1}}\cdot \left(\sqrt{\mathcal{S}} + B\cdot x\right)}{D\cdot x^2+C\cdot x+1}
    \\
    &= \frac{2B\cdot \left(D\cdot x^2+C\cdot x+ 1\right) - \left(2D\cdot x + C\right)\cdot \left(\sqrt{\mathcal{S}} + B\cdot x\right)}{2\cdot \left(D\cdot x^2+C\cdot x + 1\right)^{1.5}}
    \\
    &= \frac{\left(BC - 2SD\right)\cdot x + (2B - C\sqrt{\mathcal{S}})}{2\cdot \left(D\cdot x^2+C\cdot x + 1\right)^{1.5}}.
\end{align*}
Thus, the derivative at 0 is:
\begin{equation}
\begin{aligned}
    g_{s,a,q}'(0) &= \frac{2B - C\sqrt{\mathcal{S}}}{2} = B - \frac{1}{2}\cdot C\sqrt{\mathcal{S}}
    \\
    &= \frac{s\cdot (1 - \gamma)\cdot q}{4\gamma\cdot S} - \frac{1}{2}\cdot \left(\frac{(1+ \gamma)\cdot q}{4\gamma\cdot \mathcal{S}} + \frac{1}{a}\right)\cdot \sqrt{\mathcal{S}}
    \\
    &= \frac{s\cdot (1-\gamma)\cdot q}{4\gamma\cdot S} - \frac{(1+\gamma)\cdot q}{8\gamma\cdot \sqrt{\mathcal{S}}} - \frac{\sqrt{\mathcal{S}}}{2a} 
    \\
    &= \frac{2s\cdot (1 - \gamma)\cdot q -  (1+\gamma)\cdot q}{8\gamma\cdot S} - \frac{\sqrt{\mathcal{S}}}{2a}
    \\
    &= \frac{\left(\left(2s-1\right) -  \left(2s + 1\right)\cdot \gamma\right)\cdot q}{8\gamma\cdot \sqrt{\mathcal{S}}} - \frac{\sqrt{\mathcal{S}}}{2a}.
\end{aligned}
\label{g_s,a,q'(0) thm9}
\end{equation}

Now, from \eqref{l(x) def thm9}, the derivative $\ell'(x)$ reads
\begin{equation}
\begin{aligned}
    \ell'(x) &= \frac{d}{dx}\left(v(x) - u(x)\right)
    \\
    &= v'(x) - u'(x)
\end{aligned}
\label{ell'(x) first formula thm9}
\end{equation}
where we defined the following auxiliary functions:
\begin{equation}
\begin{cases}
    u(x) = \mathcal{Q}\left(g_{1,\gamma,k}(x)\right) + \mathcal{Q}\left(g_{-1,1,k}(x)\right) 
    \\
    v(x) = \mathcal{Q}\left(g_{1,\gamma,d}(x)\right) + \mathcal{Q}\left(g_{-1,1,d}(x)\right) 
\end{cases}
\label{u(x),v(x) def to l thm9}
\end{equation}
From the chain rule, their derivatives are:
\begin{equation}
\begin{cases}
    u'(x)=g'_{1,\gamma,k}(x)\cdot \mathcal{Q}'\left(g_{1,\gamma,k}(x)\right) + g'_{-1,1,k}(x)\cdot \mathcal{Q}'\left(g_{-1,1,k}(x)\right)
    \\
    v'(x)=g'_{1,\gamma,d}(x)\cdot \mathcal{Q}'\left(g_{1,\gamma,d}(x)\right) + g'_{-1,1,d}(x)\cdot \mathcal{Q}'\left(g_{-1,1,d}(x)\right)
\end{cases}
\label{u'(x), v'(x) formulas for ell thm9}
\end{equation}
Now, from \eqref{ell'(x) first formula thm9}, \eqref{u'(x), v'(x) formulas for ell thm9} it follows that:
\begin{equation}
    \ell'(0) = v'(0) - u'(0).
\label{l'(0) formula thm9}
\end{equation}
It is easy to verify from \eqref{g_s,a,q formula thm9} that $g_{s,a,q}(0) = \sqrt{\mathcal{S}}$. Thus, from \eqref{u(x),v(x) def to l thm9}, \eqref{u'(x), v'(x) formulas for ell thm9} and \eqref{g_s,a,q'(0) thm9}, we have the following formulas:
\begin{equation}
\begin{cases}
    u'(0) = \left(\frac{(1 - 3\gamma)\cdot k}{8\gamma\cdot \sqrt{\mathcal{S}}} - \frac{\sqrt{\mathcal{S}}}{2\gamma}\right)\cdot \mathcal{Q}'(\sqrt{\mathcal{S}}) + \left(\frac{-(3 + \gamma)\cdot k}{8\gamma\cdot \sqrt{\mathcal{S}}} - \frac{\sqrt{\mathcal{S}}}{2}\right)\cdot \mathcal{Q}'(\sqrt{\mathcal{S}})
    \\
    v'(0) = \left(\frac{(1 - 3\gamma)\cdot d}{8\gamma\cdot \sqrt{\mathcal{S}}} - \frac{\sqrt{\mathcal{S}}}{2\gamma}\right)\cdot \mathcal{Q}'(\sqrt{\mathcal{S}}) + \left(\frac{-(3 + \gamma)\cdot d}{8\gamma\cdot \sqrt{\mathcal{S}}} - \frac{\sqrt{\mathcal{S}}}{2}\right)\cdot \mathcal{Q}'(\sqrt{\mathcal{S}})
\end{cases}
\label{u(0),v(0),u'(0),v'(0) formulas thm9}
\end{equation}
Now, we substitute \eqref{u(0),v(0),u'(0),v'(0) formulas thm9} in \eqref{l'(0) formula thm9} to get the following formula for $\ell'(0)$:
\begin{equation}
\begin{aligned}
    \ell'(0) &= \frac{1 - 3\gamma}{8\gamma\sqrt{\mathcal{S}}}\mathcal{Q}'\left(\sqrt{\mathcal{S}}\right)\cdot (d-k) - \frac{3 + \gamma}{8\gamma\sqrt{\mathcal{S}}}\mathcal{Q}'\left(\sqrt{\mathcal{S}}\right)\cdot (d-k)
    \\
    &= \frac{-2 - 4\gamma}{8\gamma\sqrt{\mathcal{S}}}\mathcal{Q}'\left(\sqrt{\mathcal{S}}\right)\cdot (d-k)
    \\
    &= \frac{-2\left(1 + 2\gamma\right)}{8\gamma\sqrt{\mathcal{S}}}\cdot \left(-\frac{1}{\sqrt{2\pi}}\exp\left(-\frac{\mathcal{S}}{2}\right)\right)\cdot (d-k)
    \\
    &= \frac{1}{4\sqrt{2\pi}}\cdot \frac{\exp\left(-\frac{\mathcal{S}}{2}\right)}{\sqrt{\mathcal{S}}}\cdot \frac{1 + 2\gamma}{\gamma}\cdot (d-k)
    \\
    &= \frac{1}{4\sqrt{2\pi}}\cdot \frac{\exp\left(-\frac{\mathcal{S}}{2}\right)}{\sqrt{\mathcal{S}}}\cdot \left(2 + \frac{1}{\gamma}\right)\cdot (d-k)
\end{aligned}
\label{ell'(0) final formula-theorem 6 thm9}
\end{equation}
where we used the following property of the $\mathcal{Q}$ function:
\begin{equation*}
    \mathcal{Q}'(x) = -\frac{1}{\sqrt{2\pi}}\cdot \exp\left(-\frac{x^2}{2}\right).
\end{equation*}
Finally, from \eqref{delta first expansion as h(N) thm9}, \eqref{condition on N thm9}, \eqref{h(N tilde) approximation thm9}, \eqref{ell(0) thm9}, \eqref{ell'(0) final formula-theorem 6 thm9}, it follows that:
\begin{equation}
\begin{aligned}
    \Delta&=h({N})
    \\
    &= \ell(0) + \frac{\ell'(0)}{{N}} + \mathcal{O}\left(\frac{1}{N^2}\right)
    \\
    &= \frac{\ell'(0)}{{N}} + \mathcal{O}\left(\frac{1}{N^2}\right)
    \\
    &= \frac{1}{4\sqrt{2\pi}}\cdot \frac{\exp\left(-\frac{\mathcal{S}}{2}\right)}{\sqrt{\mathcal{S}}}\cdot \left(2 + \frac{1}{\gamma}\right)\cdot (d-k)\cdot \frac{1}{N} + \mathcal{O}\left(\frac{1}{N^2}\right).
\end{aligned}
\label{last - first approx for delta thm9}
\end{equation}
We proceed with \eqref{last - first approx for delta thm9} and substitute \eqref{condition on N thm9}:
\begin{equation*}
    {N} = \frac{N_{T}}{1+\gamma}
\end{equation*}
which leads to the following approximation of $\Delta$:
\begin{equation}
\begin{aligned}
    \Delta &= \frac{1}{4\sqrt{2\pi}}\cdot \frac{\exp\left(-\frac{\mathcal{S}}{2}\right)}{\sqrt{\mathcal{S}}}\cdot \left(2 + \frac{1}{\gamma}\right)\cdot (d-k)\cdot \frac{1 + \gamma}{N_T} + \mathcal{O}\left(\frac{1}{N_T^2}\right)
    \\
    &= \frac{1}{4\sqrt{2\pi}}\cdot \frac{\exp\left(-\frac{\mathcal{S}}{2}\right)}{\sqrt{\mathcal{S}}}\cdot \left(3 + 2\gamma + \frac{1}{\gamma}\right)\cdot (d-k)\cdot \frac{1}{N_T} + \mathcal{O}\left(\frac{1}{N_T^2}\right).
\end{aligned}
\label{final approximation for delta thm9}
\end{equation}
We now analyze the dependence of $N_T, d-k,\gamma, \mathcal{S}$ on $\Delta$ for $N_T \gg 1$. The results will be identical to those derived in \ref{thm:thm7}. From \eqref{final approximation for delta thm9}, the following hold for $N_T \gg 1$:
\begin{itemize}
    \item $\Delta$ decreases with $N_T$.
    \item $\Delta$ increases with $d - k$.
    \item $\Delta$ increases when $\gamma$ decreases within $\gamma \in \left(0, \frac{1}{\sqrt{2}}\right]$: this is because the function
    \begin{equation*}
        f(\gamma) = 3 + 2\gamma + \frac{1}{\gamma}
    \end{equation*}
    has a minimum at $\gamma = \frac{1}{\sqrt{2}}$ within $(0,1)$.

    \item $\Delta$ decreases with $\mathcal{S}$: this is because
    \begin{equation*}
        g(\mathcal{S}) \coloneqq \frac{\exp\left(-\frac{\mathcal{S}}{2}\right)}{\sqrt{\mathcal{S}}}
    \end{equation*}
    decreases with $\mathcal{S}$: indeed,
    \begin{align*}
        g'(\mathcal{S}) &= \frac{-\frac{1}{2}\exp\left(-\frac{\mathcal{S}}{2}\right)\sqrt{\mathcal{S}} - \frac{1}{2\sqrt{\mathcal{S}}}\exp\left(-\frac{\mathcal{S}}{2}\right)}{\mathcal{S}}
        \\
        &= -\frac{\exp\left(-\frac{\mathcal{S}}{2}\right)}{2\mathcal{S}}\cdot \left(\sqrt{\mathcal{S}} + \frac{1}{\sqrt{\mathcal{S}}}\right) < 0.
    \end{align*}
\end{itemize}
\end{proof}}

\section{Additional empirical details and results (CIFAR-10, denoising)}
\label{app:experiments}

\subsection{Experiments compute resources}
\label{Experiments Compute Resources}
We conducted our experiments using a few NVIDIA RTX 6000 Ada Generation GPUs with 48GB memory.
The training time for each data point in Figures \ref{fig:Practical_Networks_Efficiency} , \ref{fig:Practical_Networks_Efficiency_SURE} and \ref{fig:Practical_Networks_Efficiency_MSE_gamma_<_1} ranged from one hour to twelve hours, depending on the number of training samples.

\subsection{Training the classifier}

We consider the CIFAR-10 dataset \citep{krizhevsky2009learning} and the ResNet18 model \citep{he2016deep}. 
To train the model,
we use: batch size 128 and 350 epochs; cross-entropy loss; SGD optimizer; learning rate: 0.0679; learning rate decay: 0.1 at epochs 116 and 233; momentum: 0.9; weight decay: 0.0005.
This setting yields 90\% accuracy for clean data.

Per noise level $\sigma \in \{0.25,0.4\}$ of the additive Gaussian noise that has been added to the data, we use this setting to train two classifiers: one that operates directly on the noisy data and one that operates on the denoised data.

\subsection{Training the denoiser}

For the denoiser, we use the DnCNN model \citep{zhang2017beyond} and 15,000 training images while ignoring their labels. Per image, the clean version, $\x_{gt}$, is the target and its noisy version, $\x$, is the input to the model.
To train the model,
we use: batch size 64 and 1000 epochs; MSE loss; Adam optimizer; learning rate: 0.0001; learning rate decay: 0.5 at iterations 20k, 40k, 60k, 80k, 100k, and 200k. {\color{black} The results with the MSE-based denoiser with $\gamma < 1$ are presented in Figure \ref{fig:Practical_Networks_Efficiency_MSE_gamma_<_1}. Note that, in order for the division of samples among the first five classes to be valid, we require $\frac{N_{\text{train}}}{1 + \gamma} \le 17500 \Rightarrow N_{\text{train}} \le 17500\left(1 + \gamma\right)$. This shows that the point $N_{\text{train}} = 35000$ is invalid for all $\gamma < 1$. Thus, we add a sufficient amount of samples from the synthetic set CIFAR-5m to the classifier train set, both in the noisy and denoised case (where the noisy CIFAR-5m passes through the denoiser).}

\subsection{Training the denoiser without clean images}

Replacing the MSE loss with Stein's Unbiased Risk Estimate (SURE) \citep{stein1981estimation,soltanayev2018training} allows to train the denoiser using only noisy images.
Specifically, instead of $\mathrm{MSE}({\z}_{\theta}(\x),\x_{gt})=\|{\z}_{\theta}(\x) - \x_{gt}\|^2$, we use:
\begin{align*}
    \mathrm{SURE}({\z}_{\theta}(\x)) = \|\z_{\theta}(\x) - \x\|^2 -d\sigma^2 +2\sigma^2 \sum_{i=1}^d \frac{\partial}{\partial x_i} \z_{\theta}(\x),
\end{align*}
which obeys $\mathbb{E}[\mathrm{SURE}({\z}_{\theta}(\x))] = \mathbb{E}[\mathrm{MSE}({\z}_{\theta}(\x),\x_{gt})]$ for $\x|\x_{gt} \sim \mathcal{N}(\x_{gt},\sigma^2\I)$.
We use the common practice of approximating the divergence term with $\g^\top(\z_{\theta}(\x+\epsilon\g) - \z_{\theta}(\x))/\epsilon$, where $\epsilon$ is small and $\g \sim \mathcal{N}(\0,\I)$ is drawn per optimizer iteration. 
Additionally, 
we use:
batch size 64 and 1000 epochs; Adam optimizer; learning rate: 0.0001; learning rate decay: 0.5 at iterations 20k, 40k, 60k, 80k, 100k, and 200k.

The results for the setup with the SURE-based denoiser are presented in Figure \ref{fig:Practical_Networks_Efficiency_SURE}.
It can be seen that they resemble the results for the MSE-based denoiser, which are presented in Section \ref{sec:empirical}.

\subsection{Numerical accuracy results}

In the following Tables \ref{tab:noisy_denoised_errors} and \ref{tab:noisy_denoised_errors_sigma_0p4} we report accuracy results related to Figure \ref{fig:Practical_Networks_Efficiency}.

\begin{table}[htbp]
    \centering
    \begin{tabular}{|c|c|c|}
        \hline
        \textbf{$N_{\text{train}}$} & \textbf{Error without denoising (\%)} & \textbf{Error with denoising (\%)} \\
        \hline
        1000  &  71.12 $\pm 1.39$  &  63.32 $\pm 0.96$  \\
        2000  &  64.91 $\pm 1.44$  &  58.14 $\pm 1.41$ \\
        3000  &  63.19 $\pm 1.21$  &  54.62 $\pm 0.37$  \\
        5000  &  60.43 $\pm 1.42$  &  49.79 $\pm 1.80$ \\
        10000 &  46.15 $\pm 0.81$  &  42.43 $\pm 0.25$  \\
        15000 &  42.14 $\pm 0.8$  &  39.74 $\pm 0.40$  \\
        25000 &  38.48 $\pm 0.29$  &  36.02 $\pm 0.38$  \\
        35000 &  35.77 $\pm 0.30$  &  33.83 $\pm 0.25$ \\
        \hline
    \end{tabular}
    \caption{Classification error rates (\%) on noisy and denoised CIFAR-10 images for varying training set sizes $N_{\text{train}}$. The noise level is $\sigma=0.25$, $\gamma=1$, and the denoiser is trained with MSE loss.}
    \label{tab:noisy_denoised_errors}
\end{table}

\begin{table}[htbp]
    \centering
    \begin{tabular}{|c|c|c|}
        \hline
        \textbf{$N_{\text{train}}$} & \textbf{Error without denoising (\%)} & \textbf{Error with denoising (\%)} \\
        \hline
        1000  & 74.45 $\pm$ 0.81  & 64.82 $\pm$ 1.14 \\
        2000  & 71.01 $\pm$ 2.00  & 61.25 $\pm$ 0.90 \\
        3000  & 67.86 $\pm$ 1.46  & 58.30 $\pm$ 0.66 \\
        5000  & 64.98 $\pm$ 1.21  & 56.24 $\pm$ 2.03 \\
        10000 & 54.70 $\pm$ 1.14  & 49.64 $\pm$ 0.50 \\
        15000 & 50.30 $\pm$ 0.47  & 47.34 $\pm$ 0.41 \\
        25000 & 47.65 $\pm$ 0.37  & 45.14 $\pm$ 0.43 \\
        35000 & 45.40 $\pm$ 0.33  & 43.21 $\pm$ 0.20 \\
        \hline
    \end{tabular}
    \caption{Classification error rates (\%) on noisy and denoised CIFAR-10 images for varying training set sizes $N_{\text{train}}$.  The noise level is $\sigma=0.4$, $\gamma=1$, and the denoiser is trained using MSE loss.}
    \label{tab:noisy_denoised_errors_sigma_0p4}
\end{table}

Figure \ref{fig:errors_comparison} shows the classification error vs.~the training epoch in a single trial for noise level 0.25, $\gamma=1$ and 35,000 training images. It demonstrates that the classifier does not suffer from overfitting.

\begin{figure}[htbp]
    \centering
    \begin{subfigure}[b]{0.48\linewidth}
        \centering
        \includegraphics[width=\linewidth]{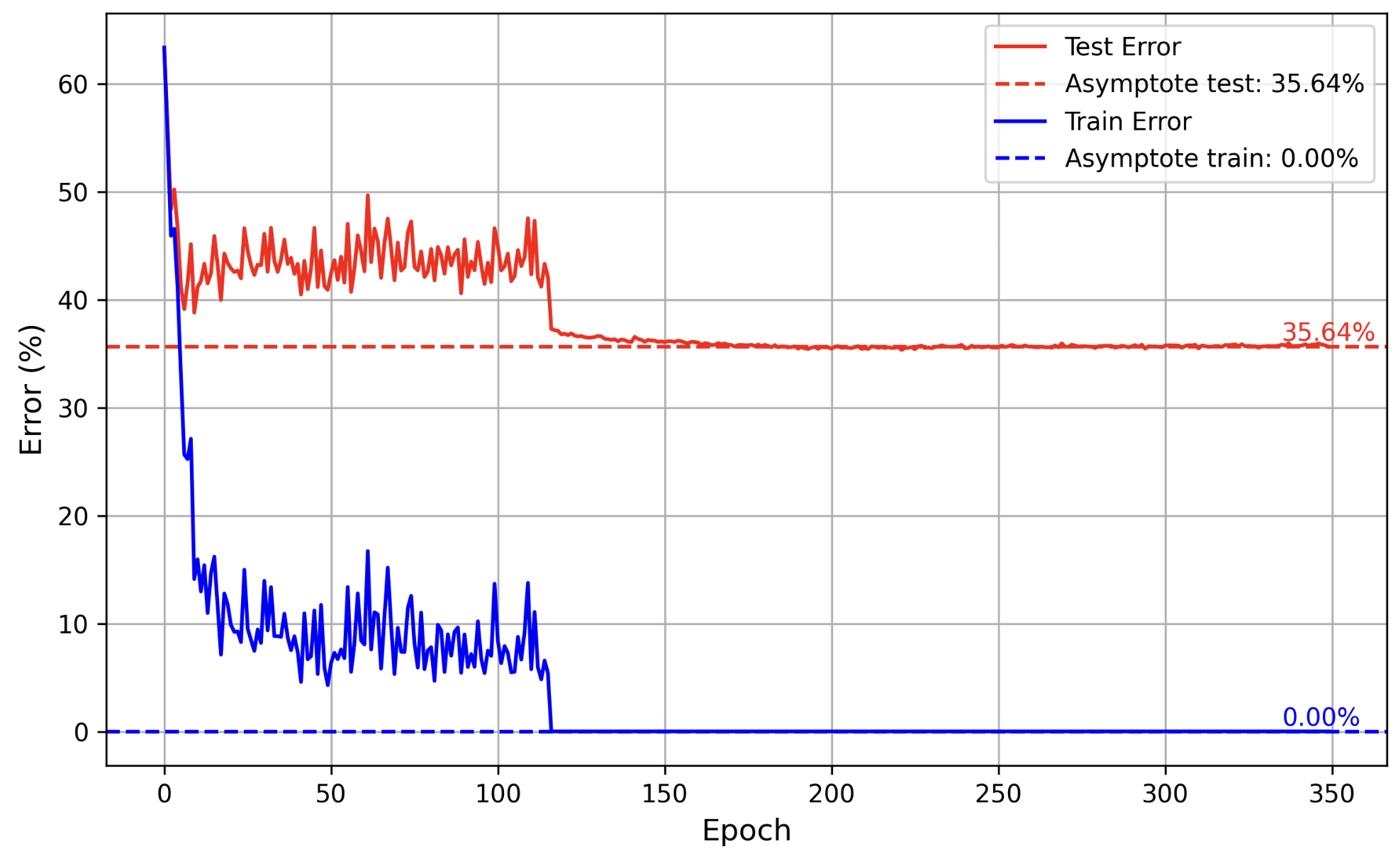}
        \caption{Noisy data}
        \label{fig:noisy}
    \end{subfigure}
    \hfill
    \begin{subfigure}[b]{0.48\linewidth}
        \centering
        \includegraphics[width=\linewidth]{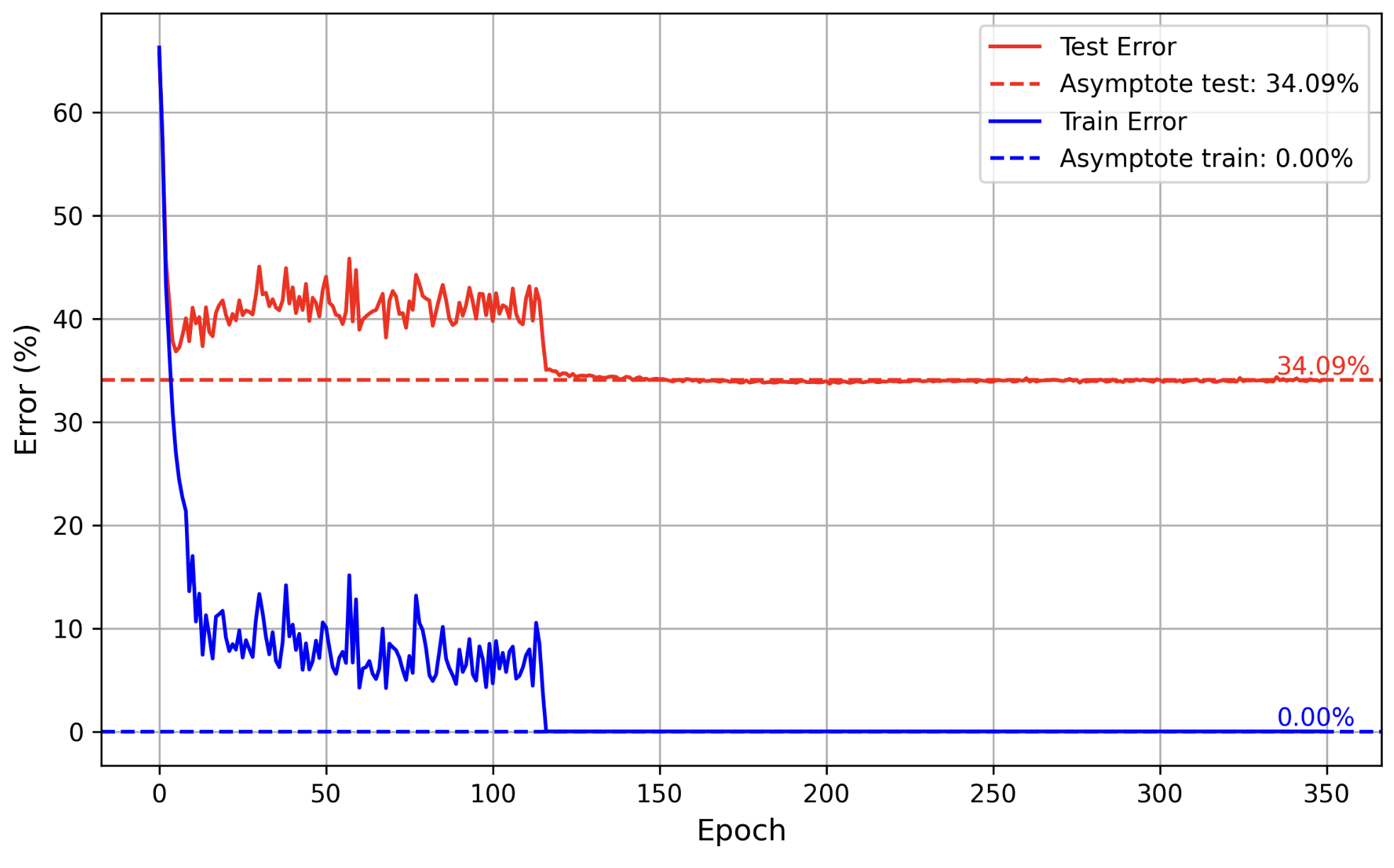}
        \caption{Denoised data}
        \label{fig:denoised}
    \end{subfigure}
    \caption{Training and testing error as a function of epochs for (a) noisy data and (b) denoised data. The noise level is $\sigma = 0.25$, $\gamma=1$, and $N_{\text{train}} = 35{,}000$. The denoiser is trained using MSE loss.}
    \label{fig:errors_comparison}
\end{figure}

\begin{figure}[t]
    \centering
    \begin{subfigure}[b]{0.48\linewidth}
        \centering
        \includegraphics[width=\linewidth]{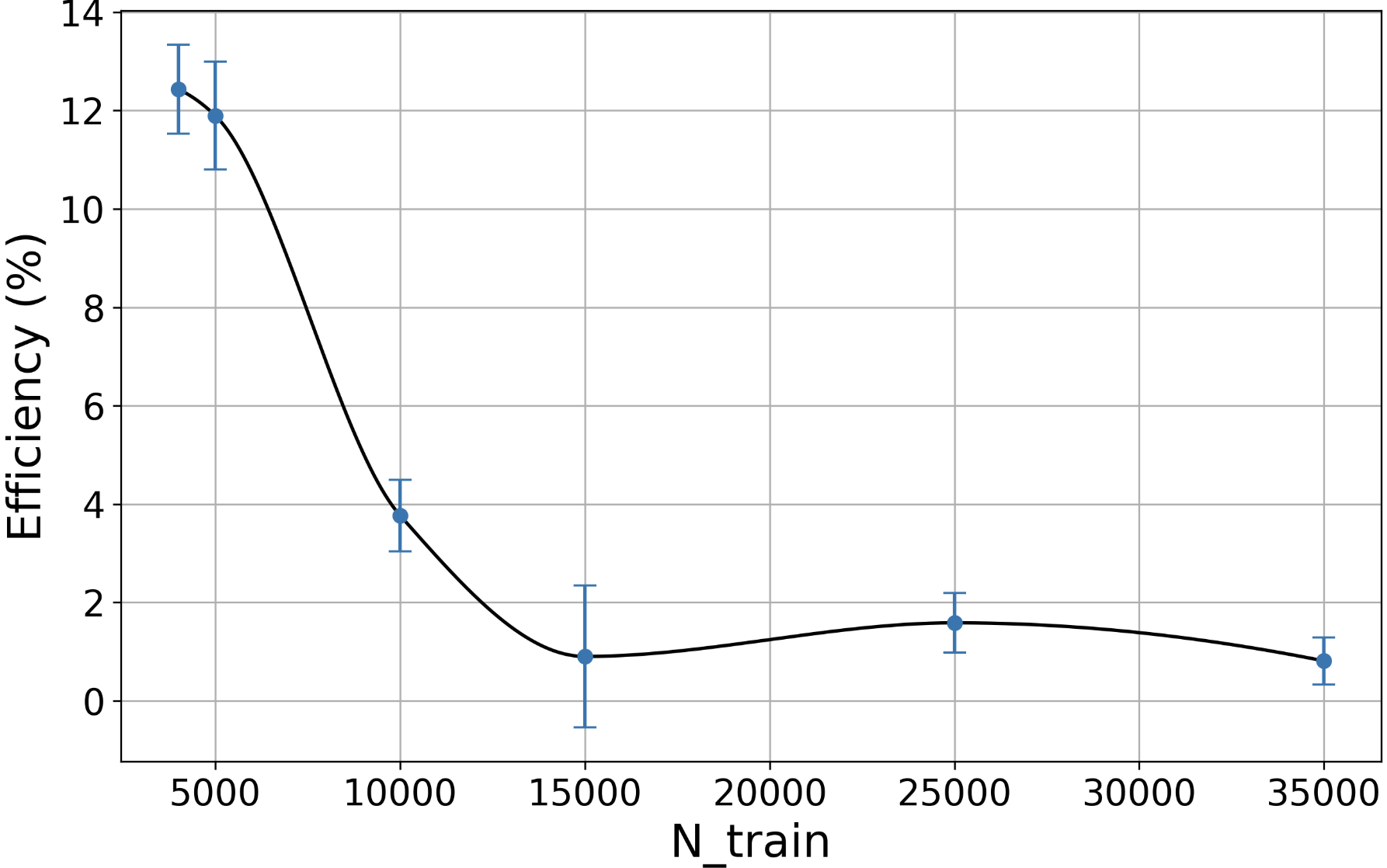}
        \caption{$\sigma = 0.25, \gamma = 1$}
        \label{fig:SURE_Sigma_0p25_gamma_1}
    \end{subfigure}
    \hfill
    \begin{subfigure}[b]{0.48\linewidth}
        \centering
        \includegraphics[width=\linewidth]{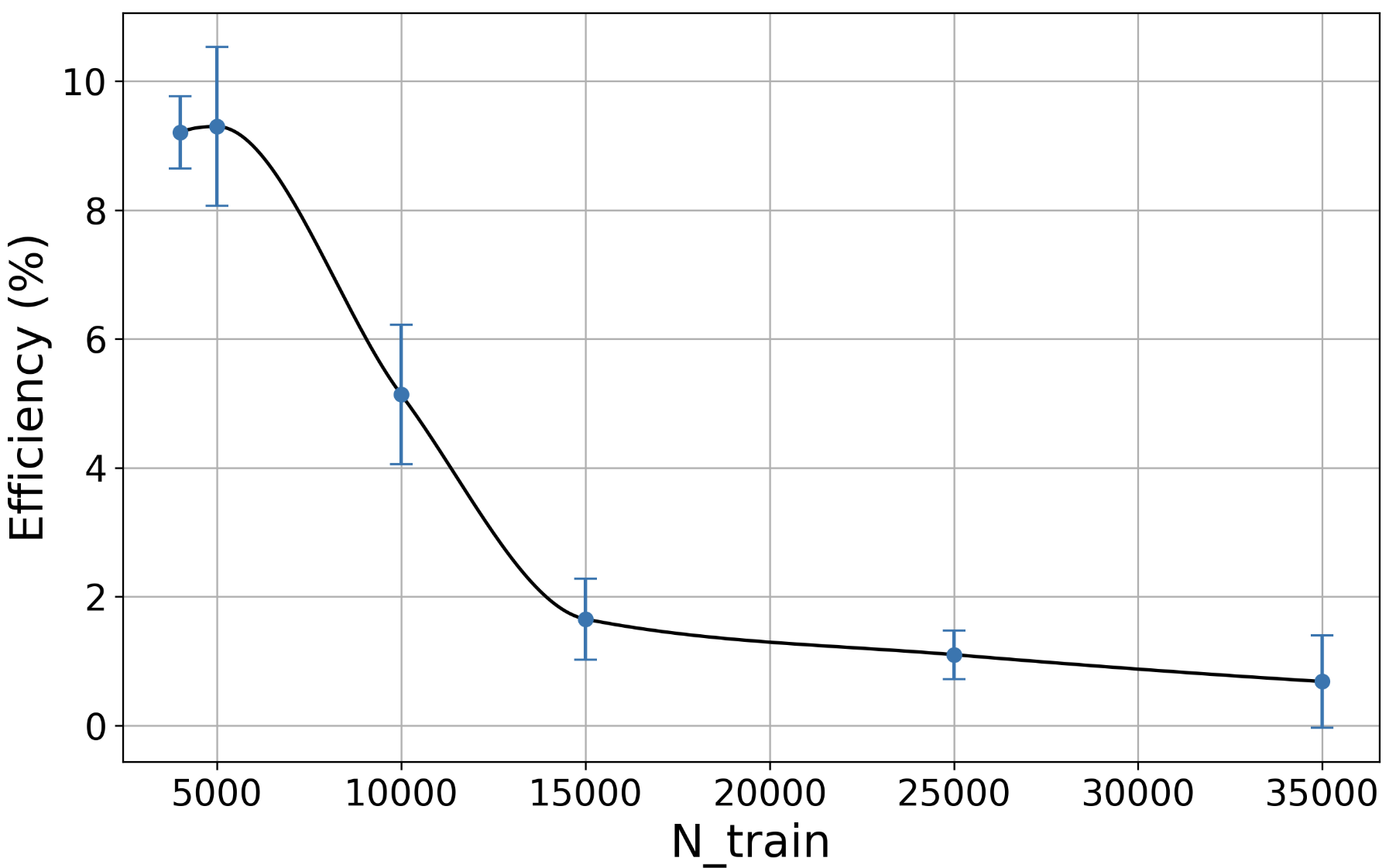}
        \caption{$\sigma = 0.4, \gamma = 1$}
        \label{fig:SURE_Sigma_0p4_gamma_1}
    \end{subfigure}

    \vspace{0.5cm}

    \begin{subfigure}[b]{0.48\linewidth}
        \centering
        \includegraphics[width=\linewidth]{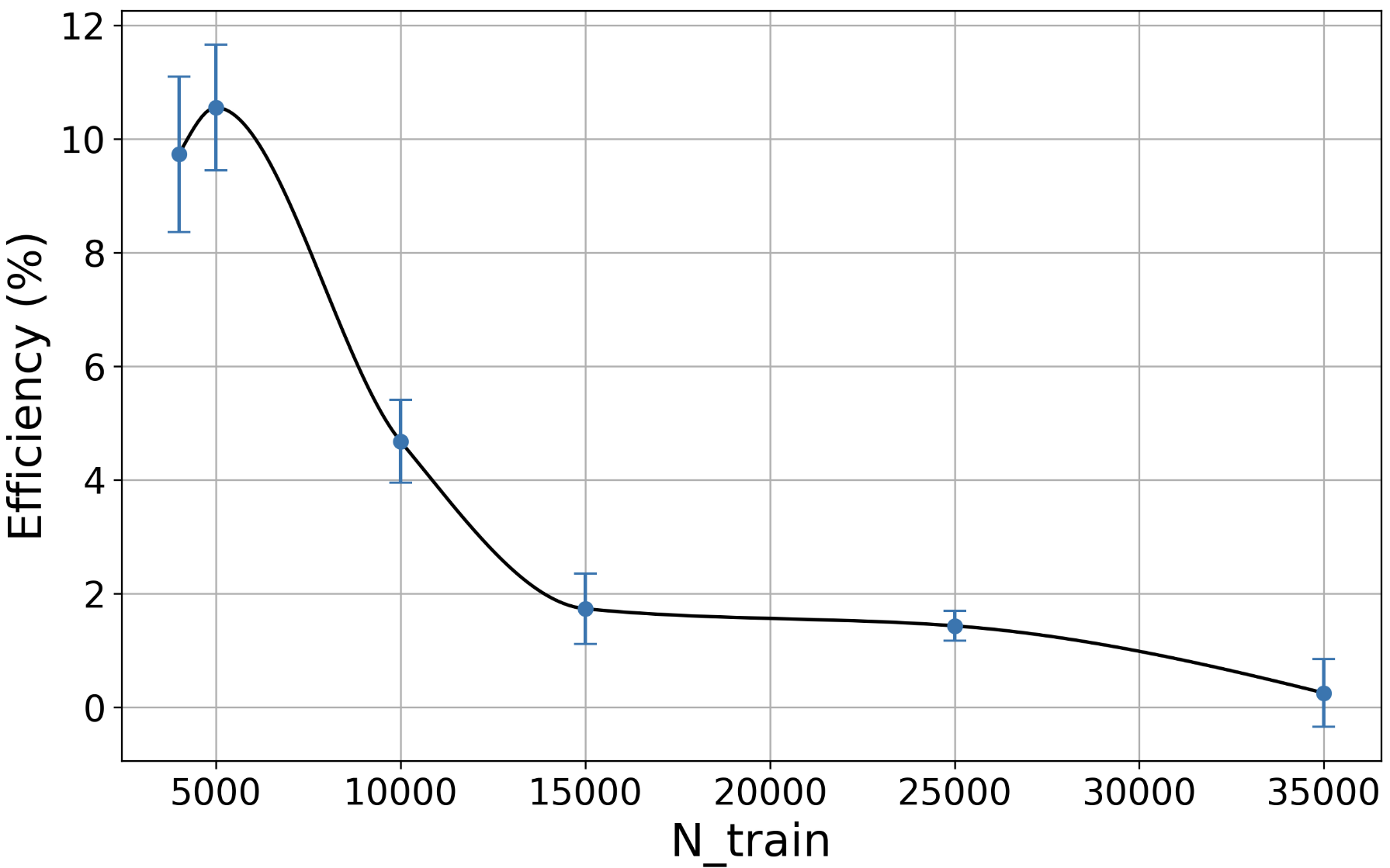}
        \caption{$\sigma = 0.25, \gamma = 0.75$}
        \label{fig:SURE_0p25_gamma_0p75}
    \end{subfigure}
    \hfill
    \begin{subfigure}[b]{0.48\linewidth}
        \centering
        \includegraphics[width=\linewidth]{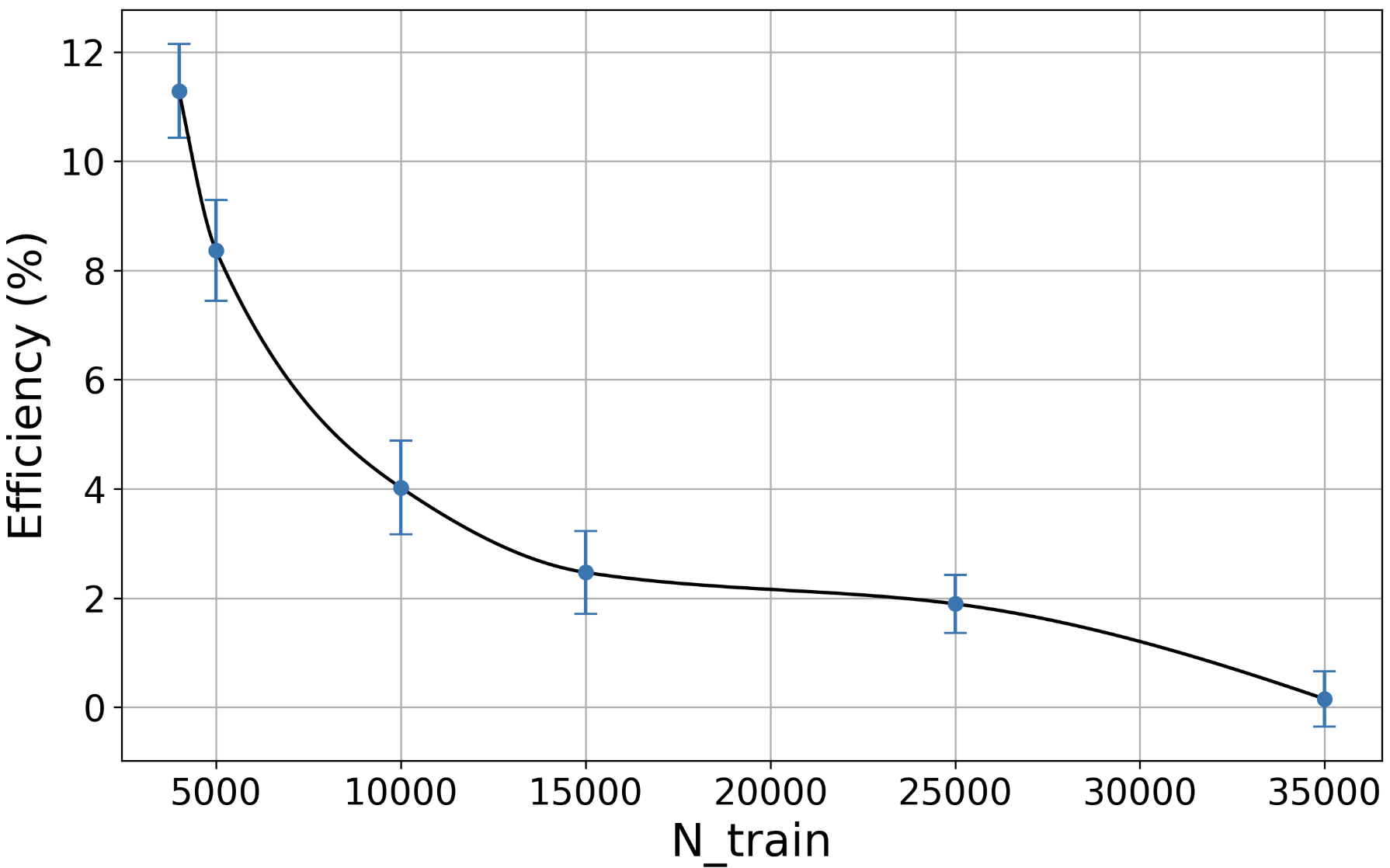}
        \caption{$\sigma = 0.4, \gamma = 0.75$}
        \label{fig:SURE_Sigma_0p4_gamma_0p75}
    \end{subfigure}

    \vspace{0.5cm}

    \begin{subfigure}[b]{0.48\linewidth}
        \centering
        \includegraphics[width=\linewidth]{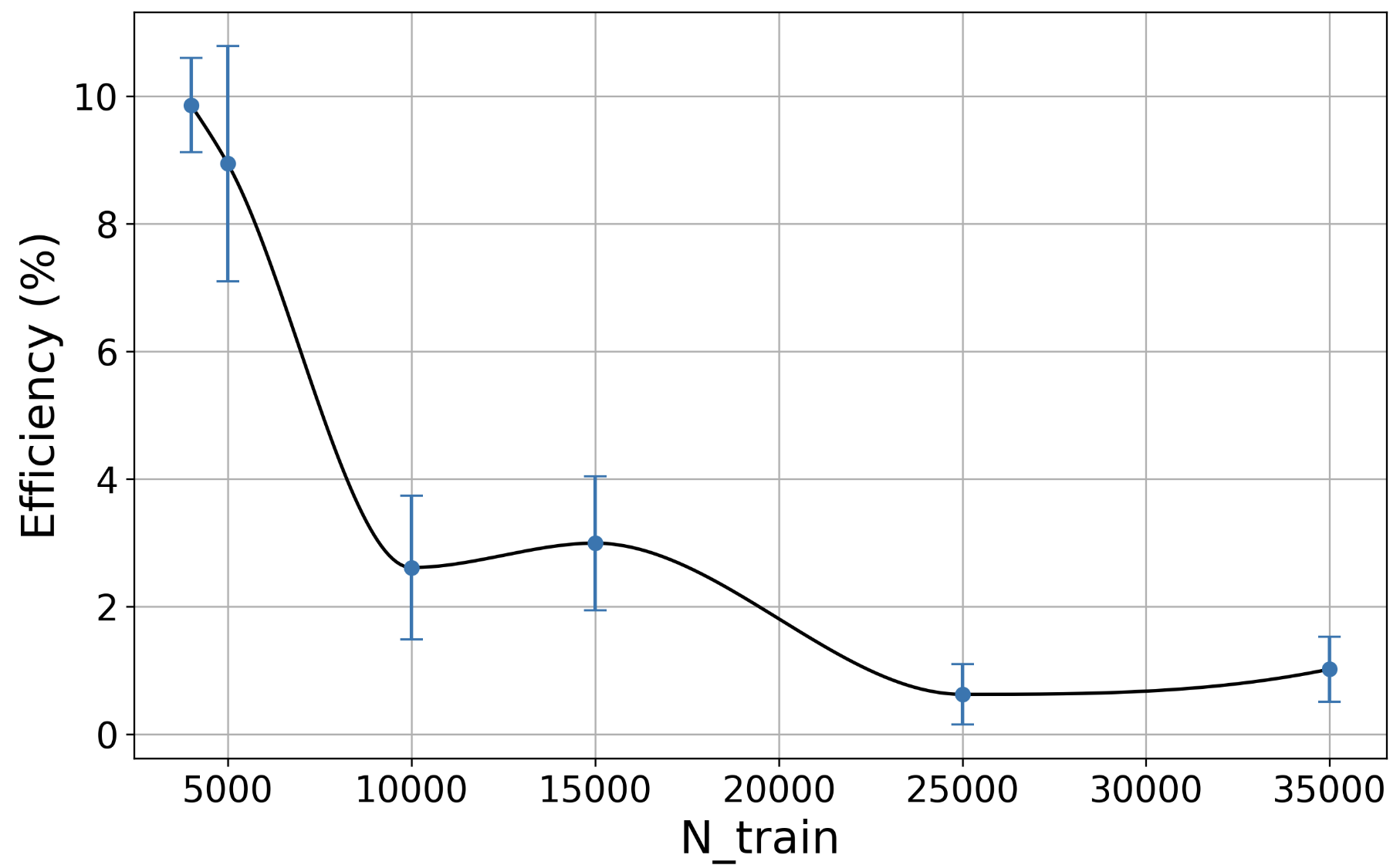}
        \caption{$\sigma = 0.25, \gamma = 0.5$}
        \label{fig:SURE_Sigma_0p25_gamma_0p5}
    \end{subfigure}
    \hfill
    \begin{subfigure}[b]{0.48\linewidth}
        \centering
        \includegraphics[width=\linewidth]{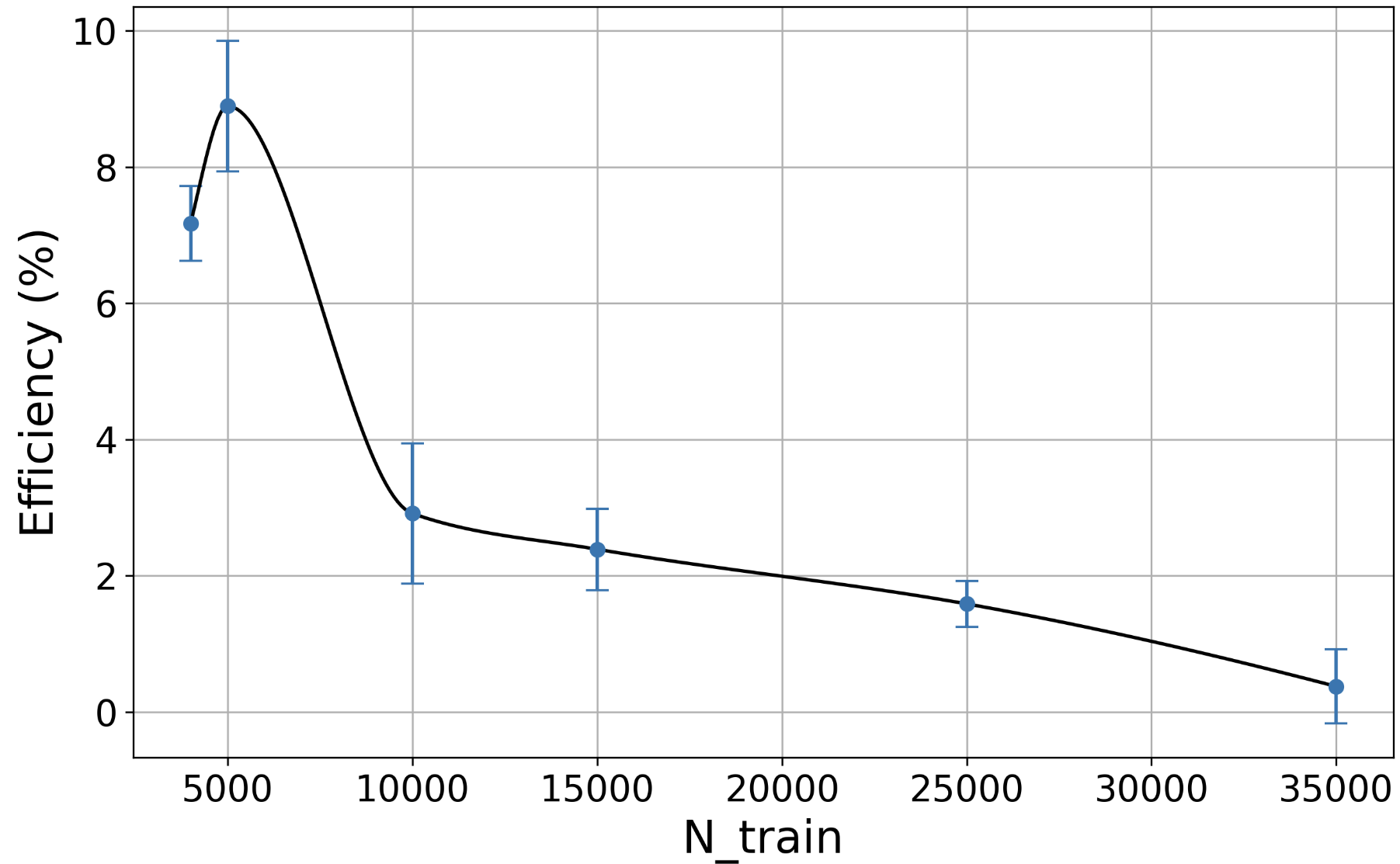}
        \caption{$\sigma = 0.4, \gamma = 0.5$}
        \label{fig:SURE_Sigma_0p4_gamma_0p5}
    \end{subfigure}

    \caption{Practical deep learning setup with noisy CIFAR-10 and SURE-based denoiser. Efficiency of the data processing procedure versus the number of training samples for various values of the training imbalance factor, $\gamma$, and the standard deviation of the noise, $\sigma$.}
    \label{fig:Practical_Networks_Efficiency_SURE}
\end{figure}

\begin{figure}[t]
    \centering
    \begin{subfigure}[b]{0.48\linewidth}
        \centering
        \includegraphics[width=\linewidth]{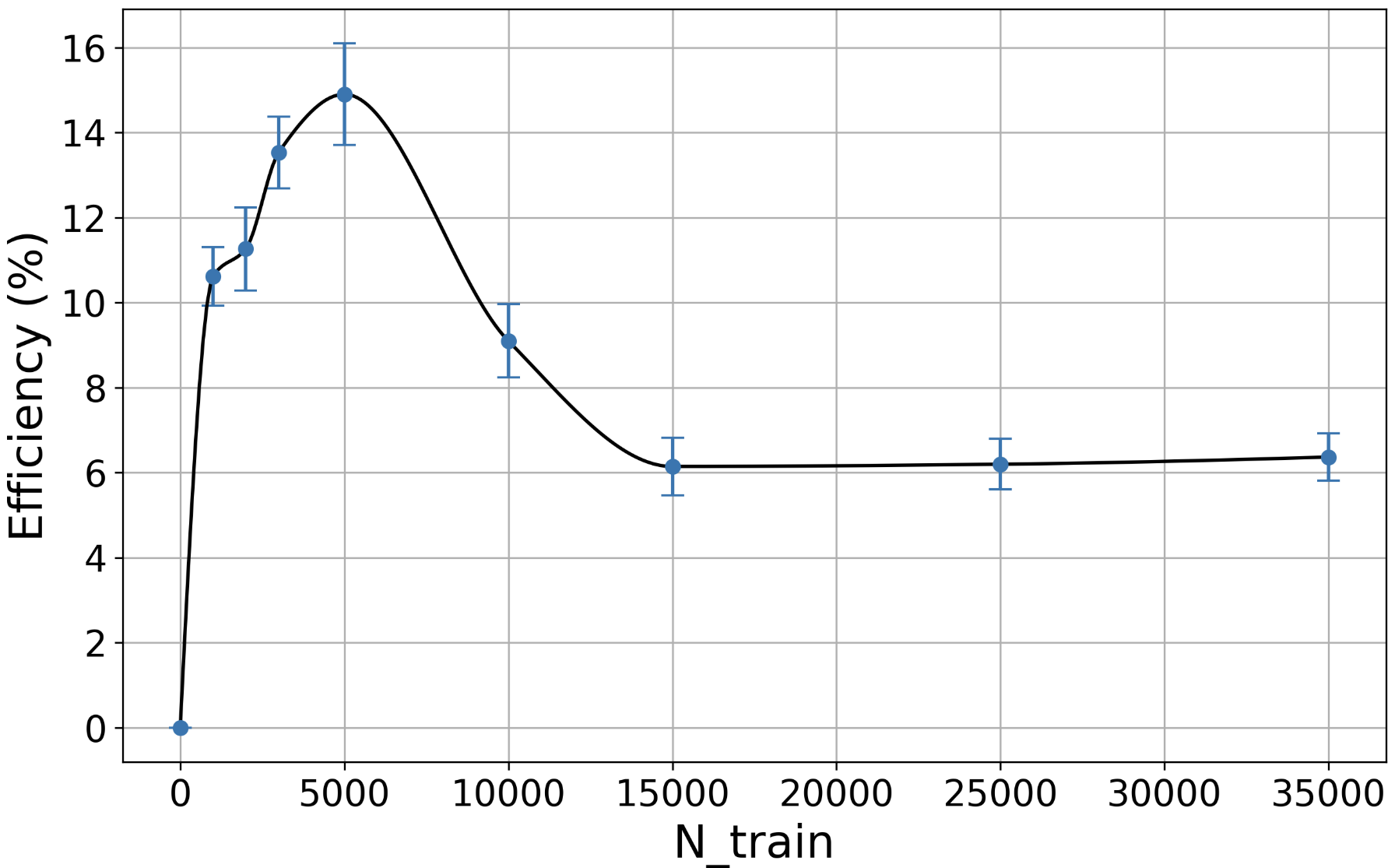}
        \caption{$\sigma = 0.25, \gamma = 0.75$}
        \label{fig:MSE_Sigma_0p25_gamma_1}
    \end{subfigure}
    \hfill
    \begin{subfigure}[b]{0.48\linewidth}
        \centering
        \includegraphics[width=\linewidth]{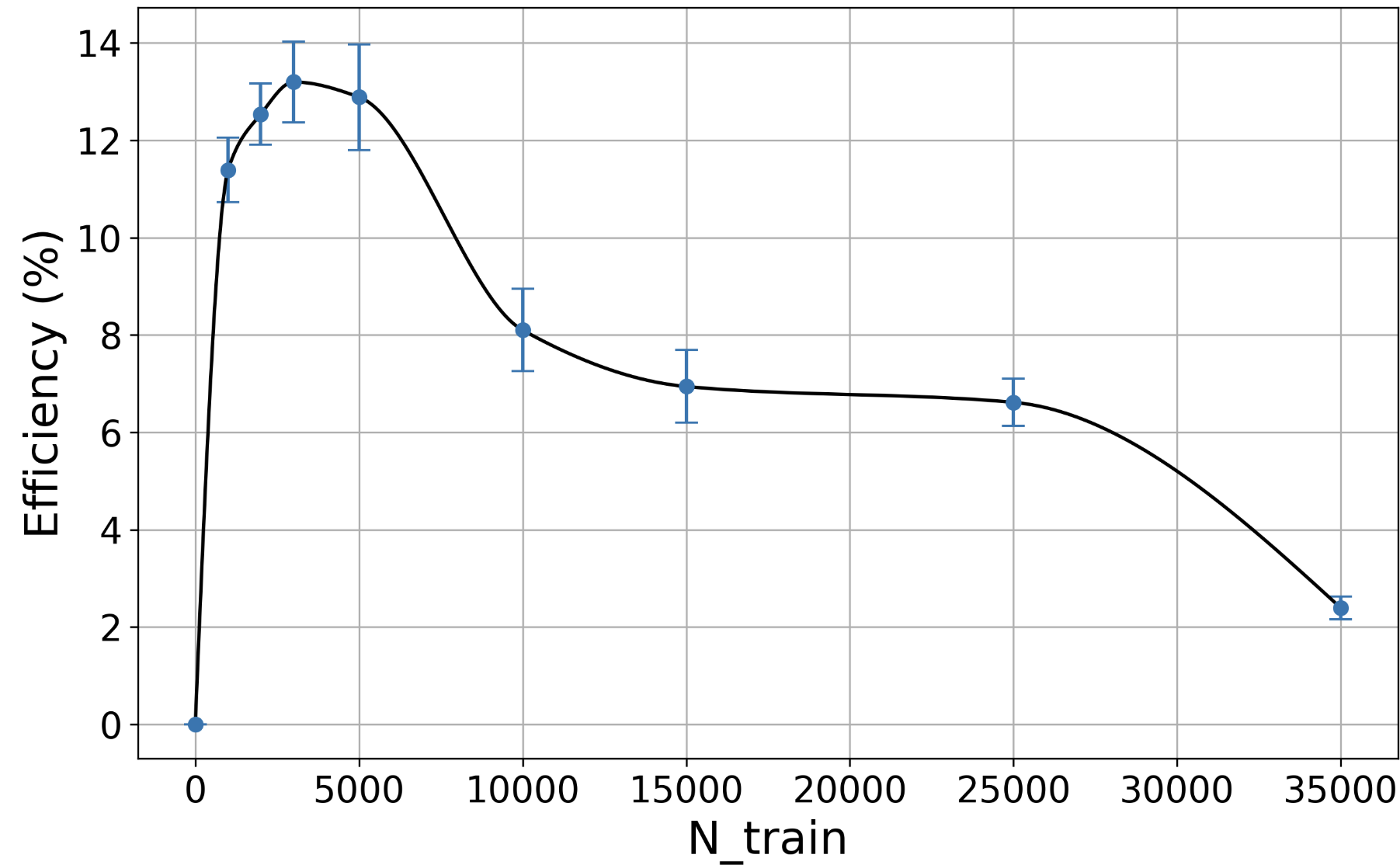}
        \caption{$\sigma = 0.4, \gamma = 0.75$}
        \label{fig:MSE_Sigma_0p4_gamma_1}
    \end{subfigure}

    \vspace{0.5cm}

    \begin{subfigure}[b]{0.48\linewidth}
        \centering
        \includegraphics[width=\linewidth]{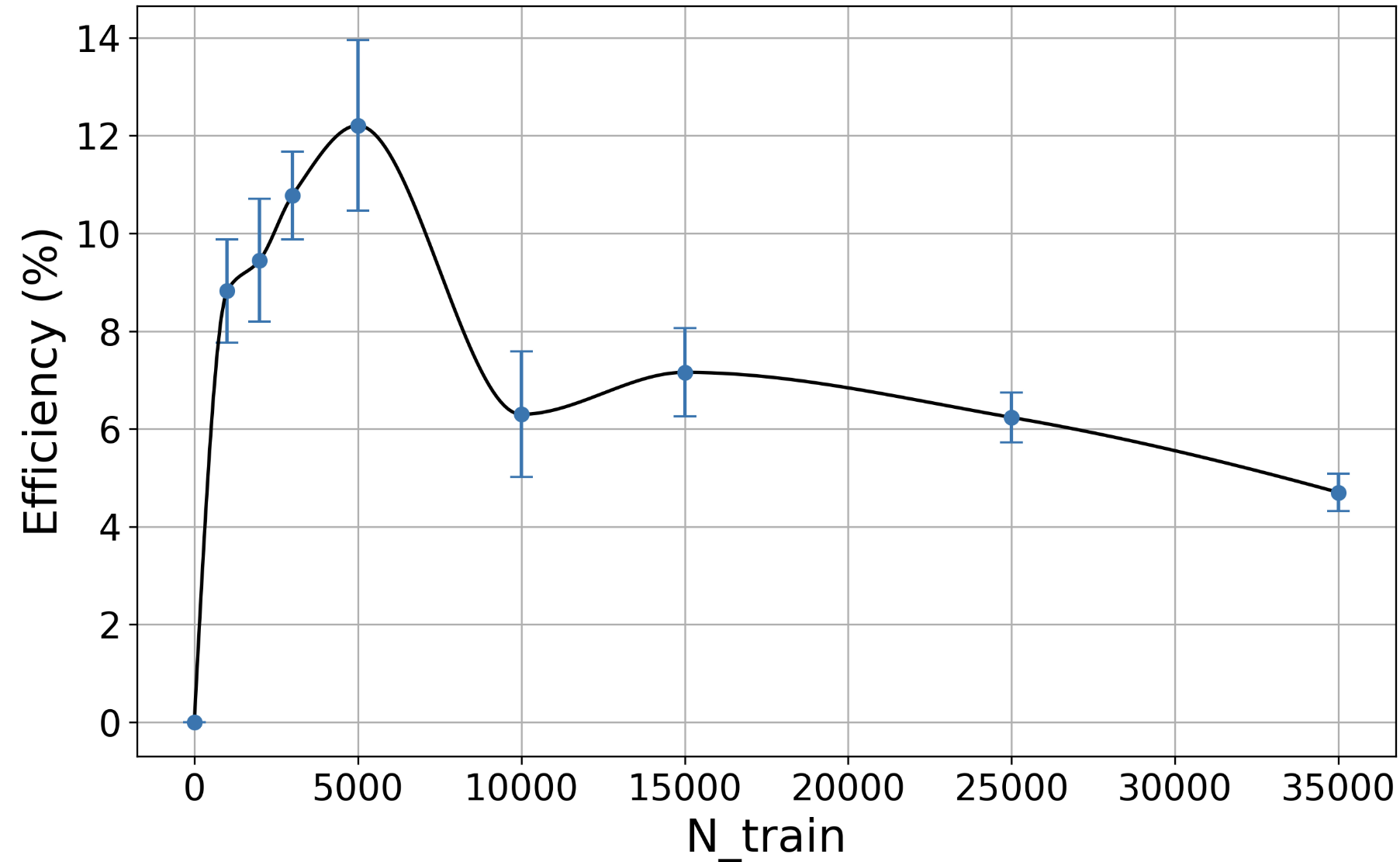}
        \caption{$\sigma = 0.25, \gamma = 0.5$}
        \label{fig:MSE_0p25_gamma_0p5}
    \end{subfigure}
    \hfill
    \begin{subfigure}[b]{0.48\linewidth}
        \centering
        \includegraphics[width=\linewidth]{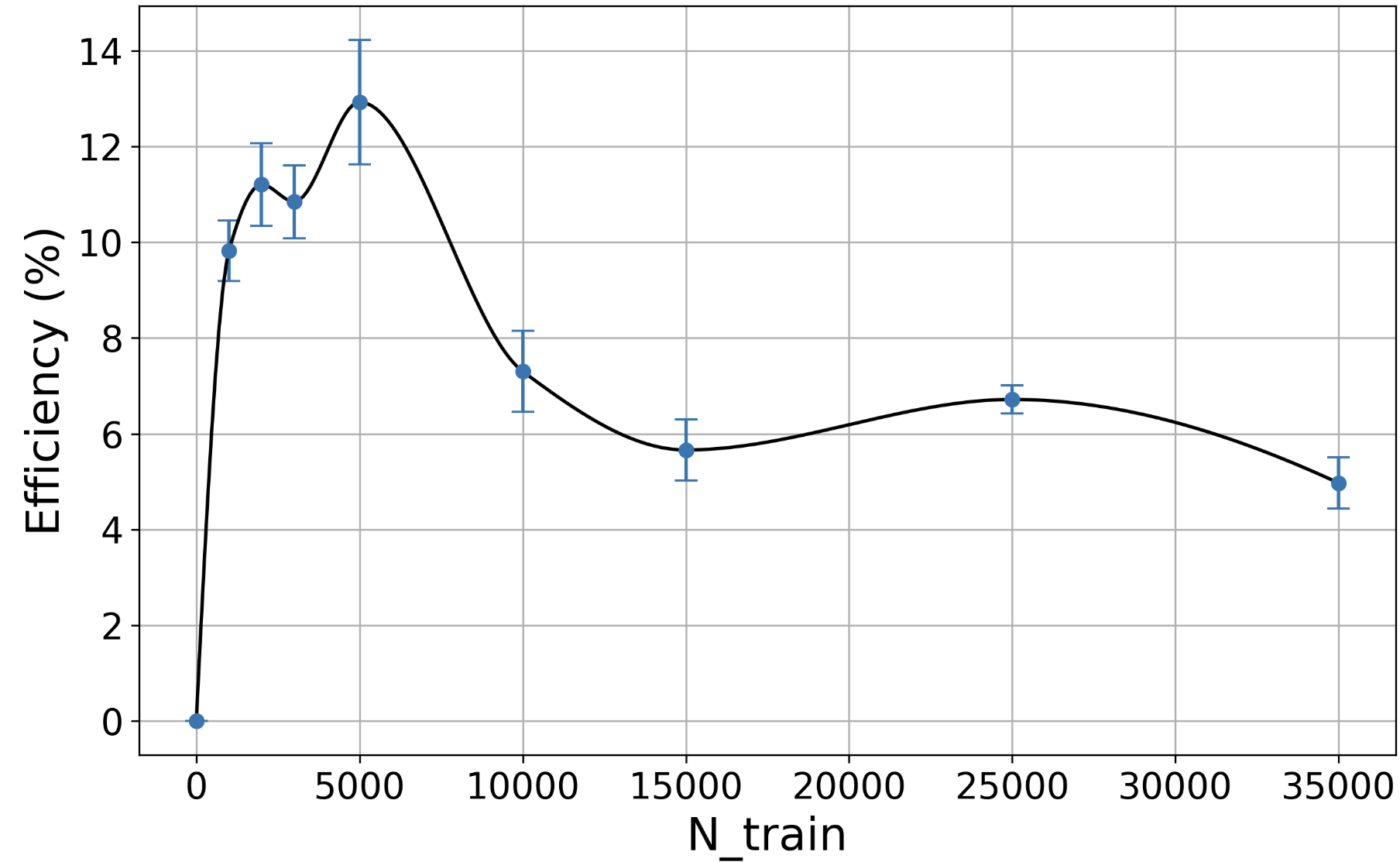}
        \caption{$\sigma = 0.4, \gamma = 0.5$}
        \label{fig:MSE_Sigma_0p4_gamma_0p75}
    \end{subfigure}

    \vspace{0.5cm}

    \caption{Practical deep learning setup with noisy CIFAR-10 and MSE-based denoiser. Efficiency of the data processing procedure versus the number of training samples for various values of the training imbalance factor, $\gamma \in \{0.5, 0.75\}$, and the standard deviation of the noise, $\sigma$.}
    \label{fig:Practical_Networks_Efficiency_MSE_gamma_<_1}
\end{figure}

\newpage

\section{Additional empirical details and results (Mini-ImageNet, encoding)}
\label{app:experiments_encoding}

\subsection{Experiments compute resources}
\label{Experiments Compute Resources - encoding}
We conducted our experiments using 16 NVIDIA Tesla V100-SXM2 GPUs with 32GB memory, 12 NVIDIA RTX 6000 Ada Generation GPUs with 48GB memory, and 2 NVIDIA A100 PCIe GPUs with 80GB memory. The training time for each data point in Figures \ref{fig:Practical_Networks_Efficiency_encoding} and \ref{fig:Practical_Networks_Efficiency_encoding_gamma<1} ranged from 10 hours to 30 hours, depending on the number of training samples.

\subsection{Training the classifier}
\label{Training the classifier - encoding}
We consider the Mini-ImageNet dataset and the ResNet50 model. To train the model, we use: batch size 128 and 225 epochs; cross-entropy loss; SGD optimizer; learning rate: 0.0679; learning rate decay: 0.1 at epochs 75 and 150; momentum: 0.9; weight decay: 0.0005. This setting yields 73\% accuracy for clean data.

Per noise level $\sigma \in \{\frac{50}{255}, \frac{100}{255}\}$ of the additive Gaussian noise that has been added to the data, we use this setting to train one classifier that operates directly on the noisy data.

\subsection{Training the encoder}
\label{training the encoder - encoding}

For self-supervised learning, we adopt the DINOv2 framework \citep{lu2025ditch}. The student encoder is a Vision Transformer (ViT-S/16), which splits each input image of size 224×224 into 16×16 patches and produces a 384-dimensional [CLS] token representation. This is passed through a 3-layer MLP projection head to produce the final 256-dimensional embedding ($\mathbf{z} \in \mathbb{R}^{256}$), which is used for self-supervised training. The teacher network has the same architecture and is updated as an exponential moving average of the student, providing stable target embeddings. Training is performed on the Mini-ImageNet dataset for 200 epochs with a per-GPU batch size of 40. We apply the AdamW optimizer with a base learning rate of 0.004 (scaled with the square root of the effective batch size), $\beta = (0.9, 0.999)$, weight decay scheduled from 0.04 to 0.4, and gradient clipping at 3.0. The teacher momentum is linearly increased from 0.992 to 1.0 over training. Multi-crop augmentation is employed with 2 global crops of size 224×224 and 8 local crops of size 96×96. Model evaluation is conducted every 6,250 iterations.

\subsection{Training an MLP on top of the embeddings}
\label{training an MLP on top of the embeddings - encoder}

Per noise level $\sigma \in \{\frac{50}{255}, \frac{100}{255}\}$, after training the DINOv2 encoder, we pass the noisy Mini-ImageNet images through the encoder to obtain 256-dimensional embeddings. On top of these embeddings, we train a multi-layer perceptron (MLP) classifier to perform image classification. The MLP consists of three hidden layers with dimensions 4096, 2048, and 1024, each followed by LayerNorm and GELU activation, and a final linear layer mapping to the number of classes (i.e. 100). Hidden layers are initialized with Xavier uniform, and the final layer with a small normal distribution. 

To train the model, we use: per-GPU batch size 128 and 20 epochs, with 1250 iterations per epoch; cross-entropy loss; SGD optimizer with a cosine annealing learning rate schedule; momentum: 0.9; no weight decay. Linear evaluation is performed with periodic check-pointing and evaluation on the validation set. After training, the classifier is evaluated on the test set to report final accuracy.

The results for the setup with $\gamma < 1$ are presented in Figure \ref{fig:Practical_Networks_Efficiency_encoding_gamma<1}. It can be seen that they resemble the results in \ref{fig:Practical_Networks_Efficiency_encoding}: 1) similar non-monotonicity of the curve while remaining positive, and 2) the maximal efficiency increases with the SNR, for fixed $\gamma$.

\begin{figure}[t]
    \centering
    \begin{subfigure}[b]{0.48\linewidth}
        \centering
        \includegraphics[width=\linewidth]{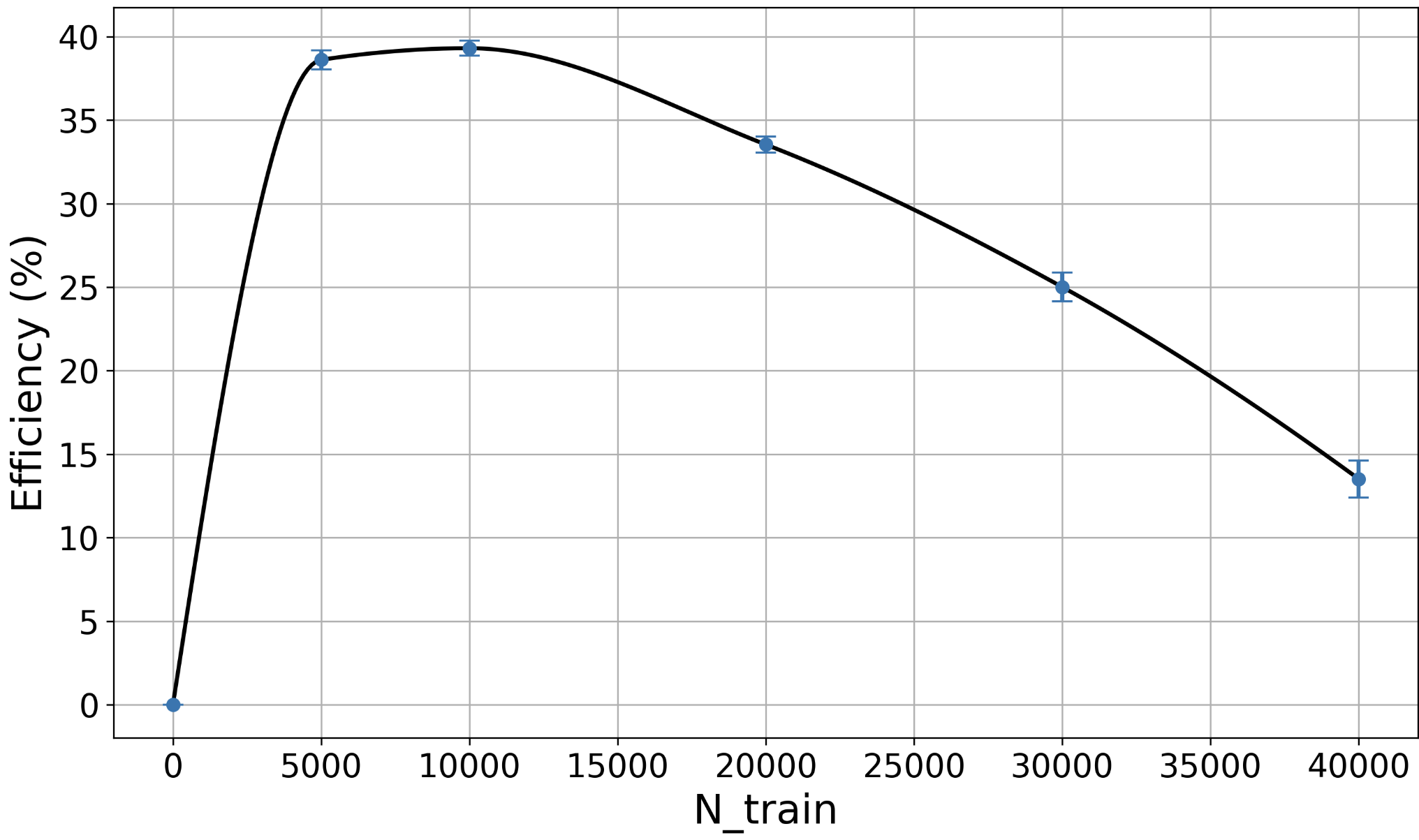}
        \caption{$\sigma = \tfrac{50}{255}, \gamma = 0.75$}
        \label{fig:Embedding_Sigma50_gamma0p75}
    \end{subfigure}
    \hfill
    \begin{subfigure}[b]{0.48\linewidth}
        \centering
        \includegraphics[width=\linewidth]{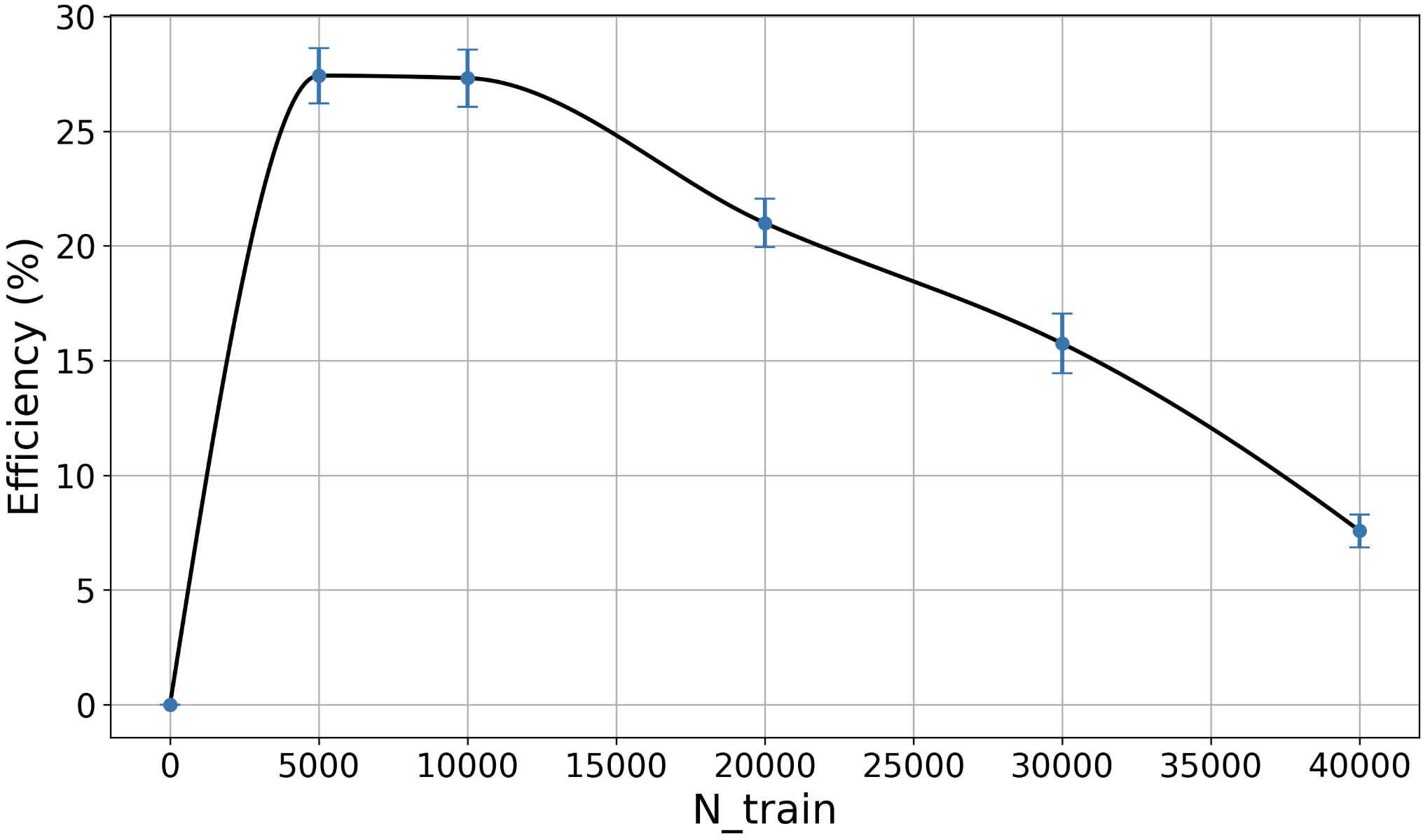}
        \caption{$\sigma = \tfrac{100}{255}, \gamma = 0.75$}
        \label{fig:Embedding_Sigma100_gamma0p75}
    \end{subfigure}

    \vspace{0.5cm}

        \begin{subfigure}[b]{0.48\linewidth}
        \centering
        \includegraphics[width=\linewidth]{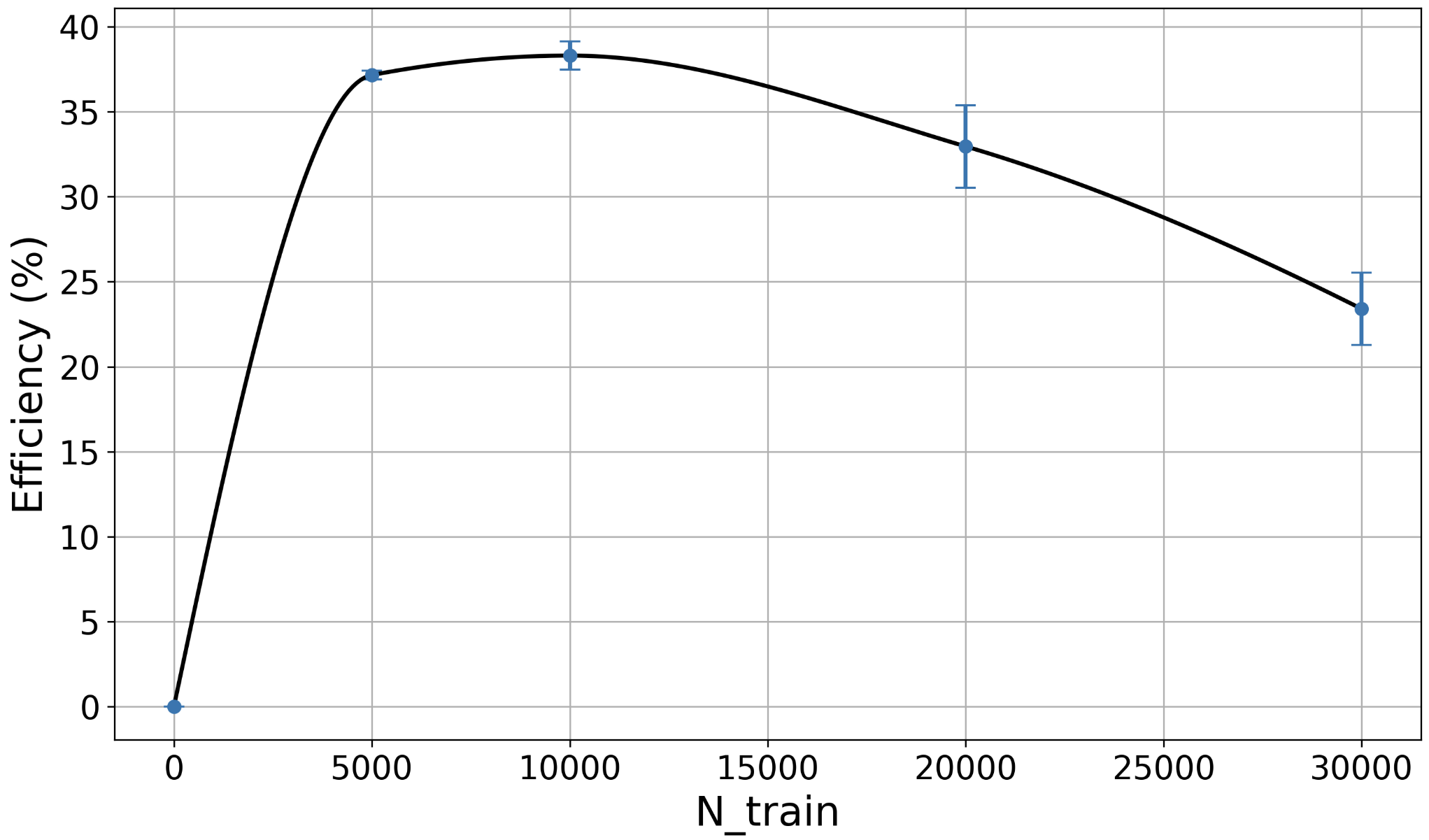}
        \caption{$\sigma = \tfrac{50}{255}, \gamma = 0.5$}
        \label{fig:Embedding_Sigma50_gamma0p5}
    \end{subfigure}
    \hfill
    \begin{subfigure}[b]{0.48\linewidth}
        \centering
        \includegraphics[width=\linewidth]{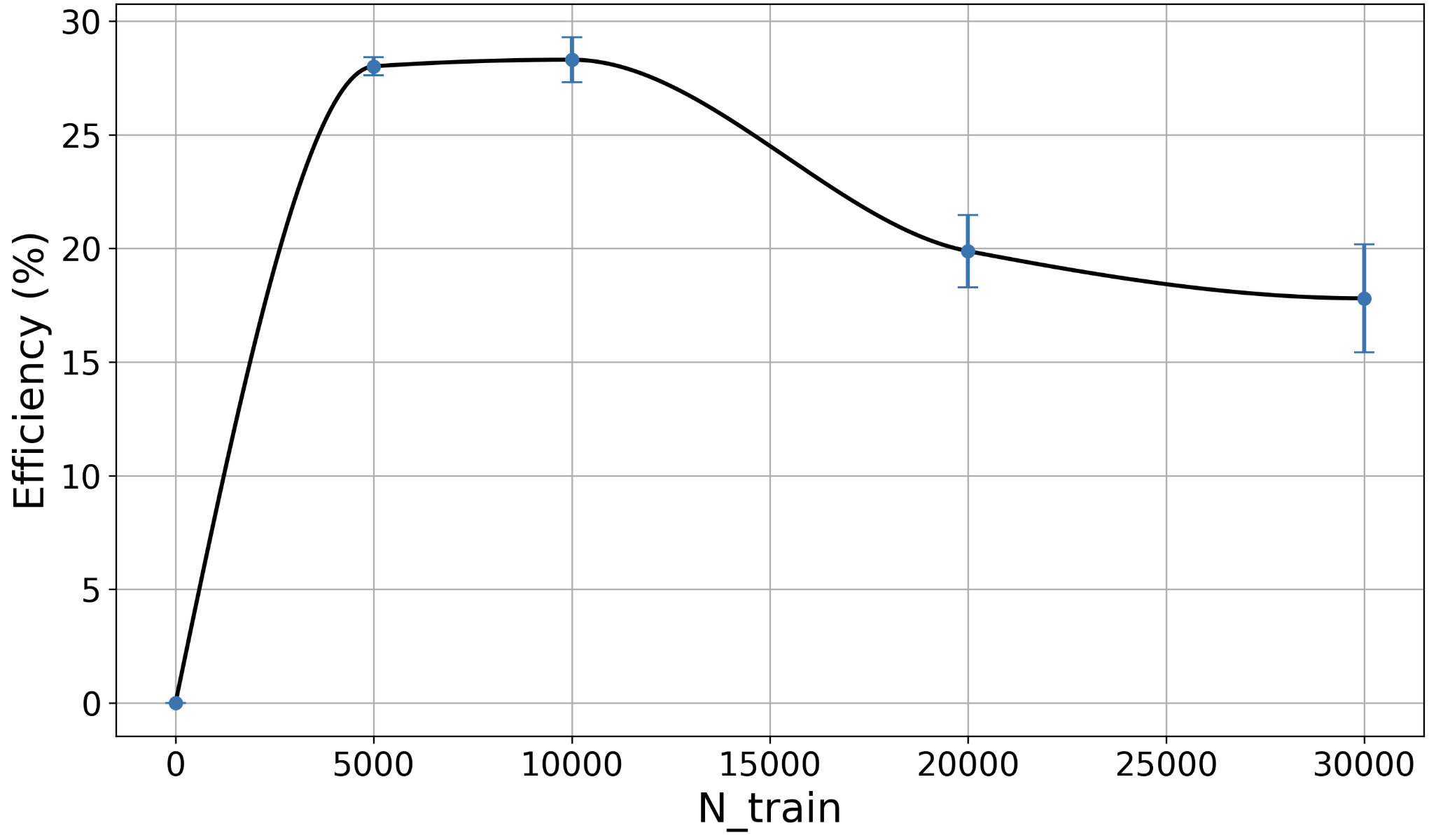}
        \caption{$\sigma = \tfrac{100}{255}, \gamma = 0.5$}
        \label{fig:Embedding_Sigma100_gamma0p5}
    \end{subfigure}

    \vspace{0.5cm}

    \caption{Practical deep learning setup with noisy Mini-ImageNet and pre-classification encoding. Efficiency of the data processing procedure versus the number of training samples for various values of the training imbalance factor, $\gamma \in \{0.5, 0.75\}$, and the standard deviation of the noise, $\sigma$.}

    \label{fig:Practical_Networks_Efficiency_encoding_gamma<1}
    
\end{figure}

\subsection{Numerical accuracy results}
In the following Tables \ref{tab:noisy_denoised_errors_sigma50_encoding} and \ref{tab:noisy_denoised_errors_sigma100_encoding} we report accuracy results related to Figure \ref{fig:Practical_Networks_Efficiency_encoding}. Lastly, in Figure \ref{fig:NoisyData_ImageNet} we present an image of noisy data for $\sigma \in \{50/255, 100/255\}$, to show that the noise is not too-severe.

\begin{table}[htbp]
    \centering
    \begin{tabular}{|c|c|c|}
        \hline
        \textbf{$N_{\text{train}}$} & \textbf{Error without encoding (\%)} & \textbf{Error with encoding (\%)} \\
        \hline
        5000  &  80.03 $\pm 1.25$  &  48.67 $\pm 0.48$  \\
        10000  &  74.56 $\pm 1.12$  &  45.03 $\pm 0.27$ \\
        20000  &  58.81 $\pm 0.45$  & 41.45 $\pm 0.29$  \\
        30000  &  49.92 $\pm 0.8$  &  39.23 $\pm 0.39$ \\
        40000 &  44.95 $\pm 1.37$  &  37.99 $\pm 0.31$  \\
        50000 &  40.07 $\pm 0.58$  &  36.62 $\pm 0.25$  \\
        \hline
    \end{tabular}
    \caption{Classification error rates (\%) on noisy and encoded Mini-ImageNet images for varying training set sizes $N_{\text{train}}$. The noise level is $\sigma=\tfrac{50}{255}$, $\gamma=1$.}
    \label{tab:noisy_denoised_errors_sigma50_encoding}
\end{table}

\begin{table}[htbp]
    \centering
    \begin{tabular}{|c|c|c|}
        \hline
        \textbf{$N_{\text{train}}$} & \textbf{Error without encoding (\%)} & \textbf{Error with encoding (\%)} \\
        \hline
        5000  &  85.5 $\pm 0.95$  &  60.94 $\pm 0.25$  \\
        10000  &  79.71 $\pm 1.73$  &  57.28 $\pm 0.23$ \\
        20000  &  68.47 $\pm 2.15$  & 53 $\pm 0.27$  \\
        30000  &  58.39 $\pm 1.77$  &  50.92 $\pm 0.31$ \\
        40000 &  54.62 $\pm 0.08$  &  49.35 $\pm 0.34$  \\
        50000 &  50.01 $\pm 0.56$  &  48.23 $\pm 0.27$  \\
        \hline
    \end{tabular}
    \caption{Classification error rates (\%) on noisy and encoded Mini-ImageNet images for varying training set sizes $N_{\text{train}}$. The noise level is $\sigma=\tfrac{100}{255}$, $\gamma=1$.}
    \label{tab:noisy_denoised_errors_sigma100_encoding}
\end{table}

\begin{figure}
    \centering

    \begin{subfigure}[b]{0.3\linewidth}
        \centering
        \includegraphics[width=\linewidth]{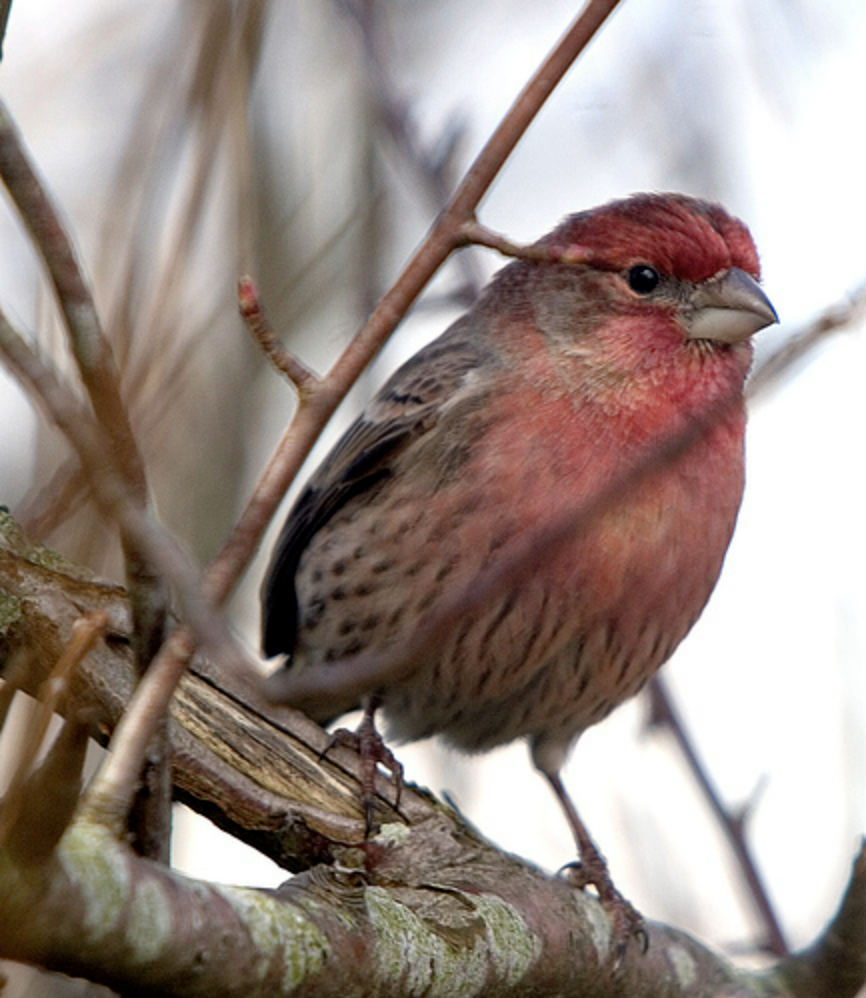}
        \caption{clean}
        \label{fig:Clean_imagenet}
    \end{subfigure}
    \vspace{3mm}

    \begin{subfigure}[b]{0.3\linewidth}
        \centering
        \includegraphics[width=\linewidth]{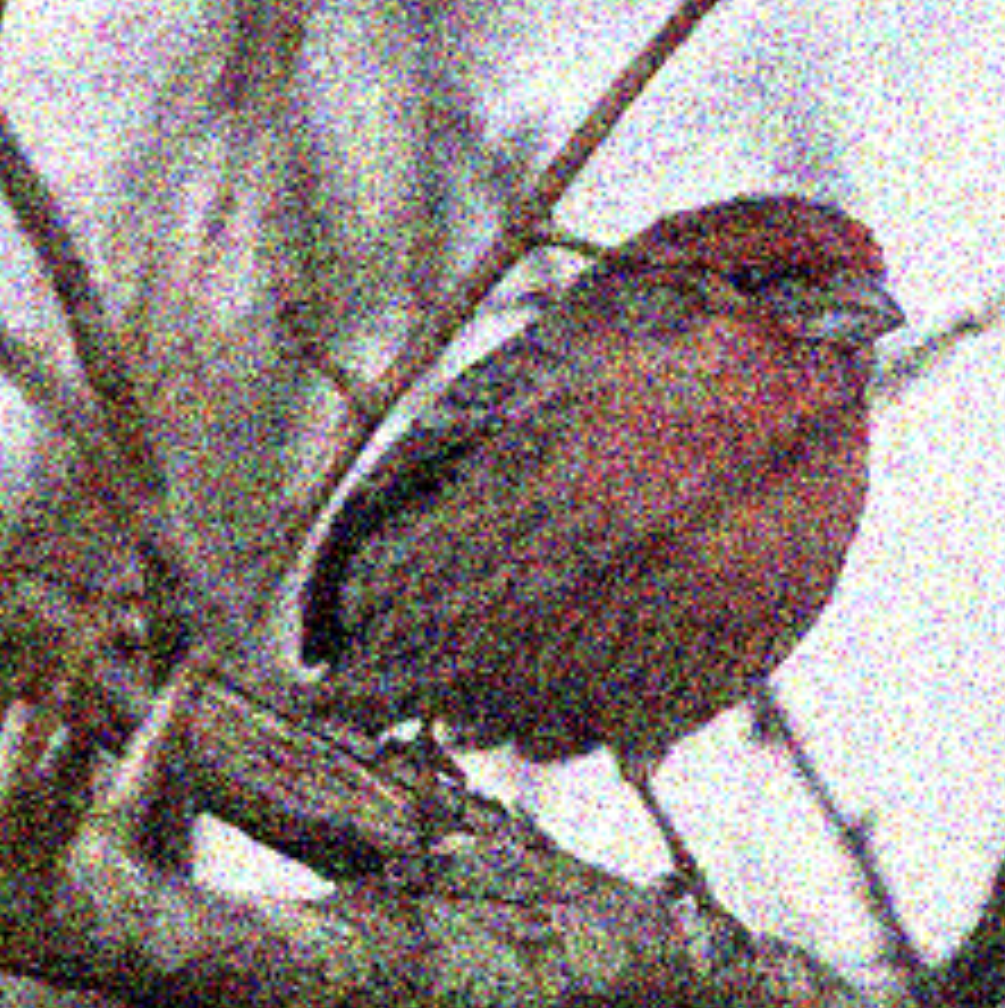}
        \caption{$\sigma = \tfrac{50}{255}$}
        \label{fig:Sigma_50_imagenet}
    \end{subfigure}
    \hspace{3mm}
    \begin{subfigure}[b]{0.3\linewidth}
        \centering
        \includegraphics[width=\linewidth]{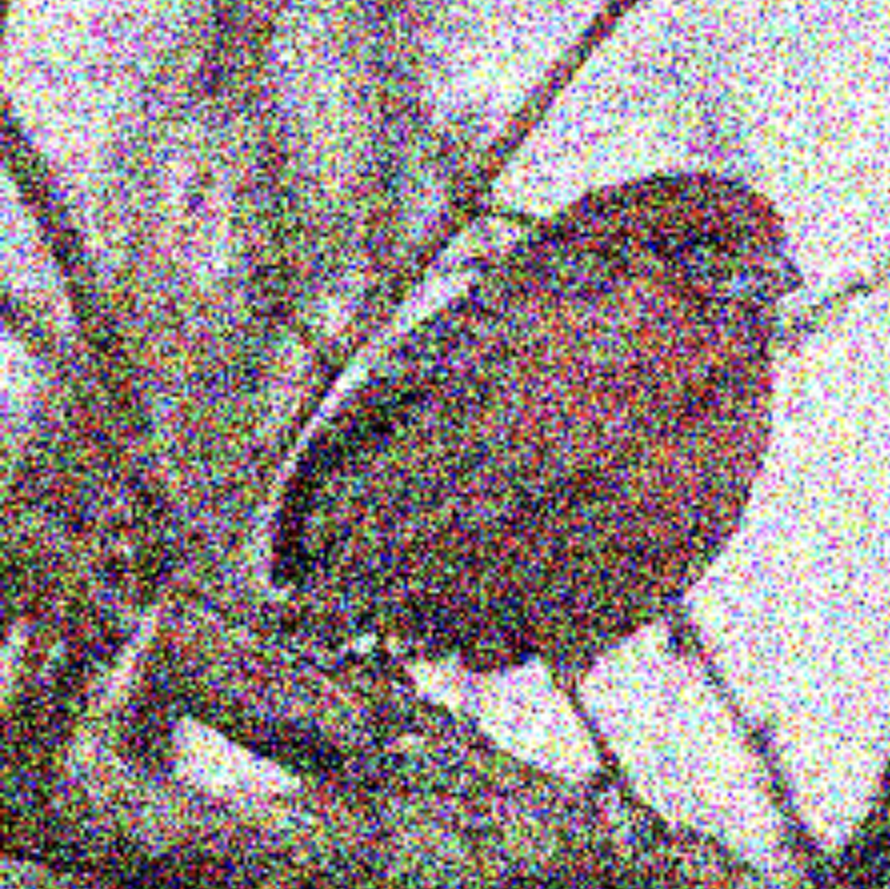}
        \caption{$\sigma = \tfrac{100}{255}$}
        \label{fig:Sigma_100_imagenet}
    \end{subfigure}
    \caption{Clean and noisy Mini-ImageNet images. (a) Clean image. (b) Image with Gaussian noise $\sigma = \tfrac{50}{255}$. (c) Image with Gaussian noise $\sigma = \tfrac{100}{255}$.}
    \label{fig:NoisyData_ImageNet}
\end{figure}

{\color{black}

\section{Additional empirical details and results (CIFAR-10, encoding)}
\label{app:experiments_encoding_cifar10}

We investigate the CIFAR-10 dataset and the ResNet18 model. Both the training and test sets are subjected to additive Gaussian noise with standard deviations $\sigma \in \{0.25, 0.4\}$. 
This time, as the data processing procedure we use an
encoding step that maps each image (rescaled from its original CIFAR-10 resolution to $224 \times 224$) into a 256-dimensional embedding. This encoder model follows \citep{lu2025ditch} and is trained from scratch with self-supervision on $45000$ noisy unlabeled images for each noise level. Then, for each combination of $(\sigma, N_{\text{train}})$, considering the balanced case of $\gamma=1$, we divide $N_{\text{train}}$ equally among all 10 classes. Then, we train a ResNet18 model on the noisy images across 6 seeds and, in parallel, a small MLP on the corresponding embeddings across 3 seeds. After we have the mean of the probability of error before and after the data processing, we compute the empirical efficiency, i.e., the relative percentage change in the probability of error induced by the encoding step. Details of the training procedures for the ResNet18 and the MLP are provided in Appendix \ref{app:experiments} and \ref{app:experiments_encoding}, respectively. %

Figure \ref{fig:Practical_Networks_Efficiency_encoding_cifar10} presents the efficiency versus $N_{\text{train}}$, for $\gamma = 1$. We see the same trends that are aligned with our theory as before: 1) similar non-monotonicity of the curve (increase to a maximal efficiency value and then decrease) while remaining positive, and 2) the maximal efficiency increases with the SNR.}

\begin{figure}[t]
    \centering
    \begin{subfigure}[b]{0.48\linewidth}
        \centering
        \includegraphics[width=\linewidth]{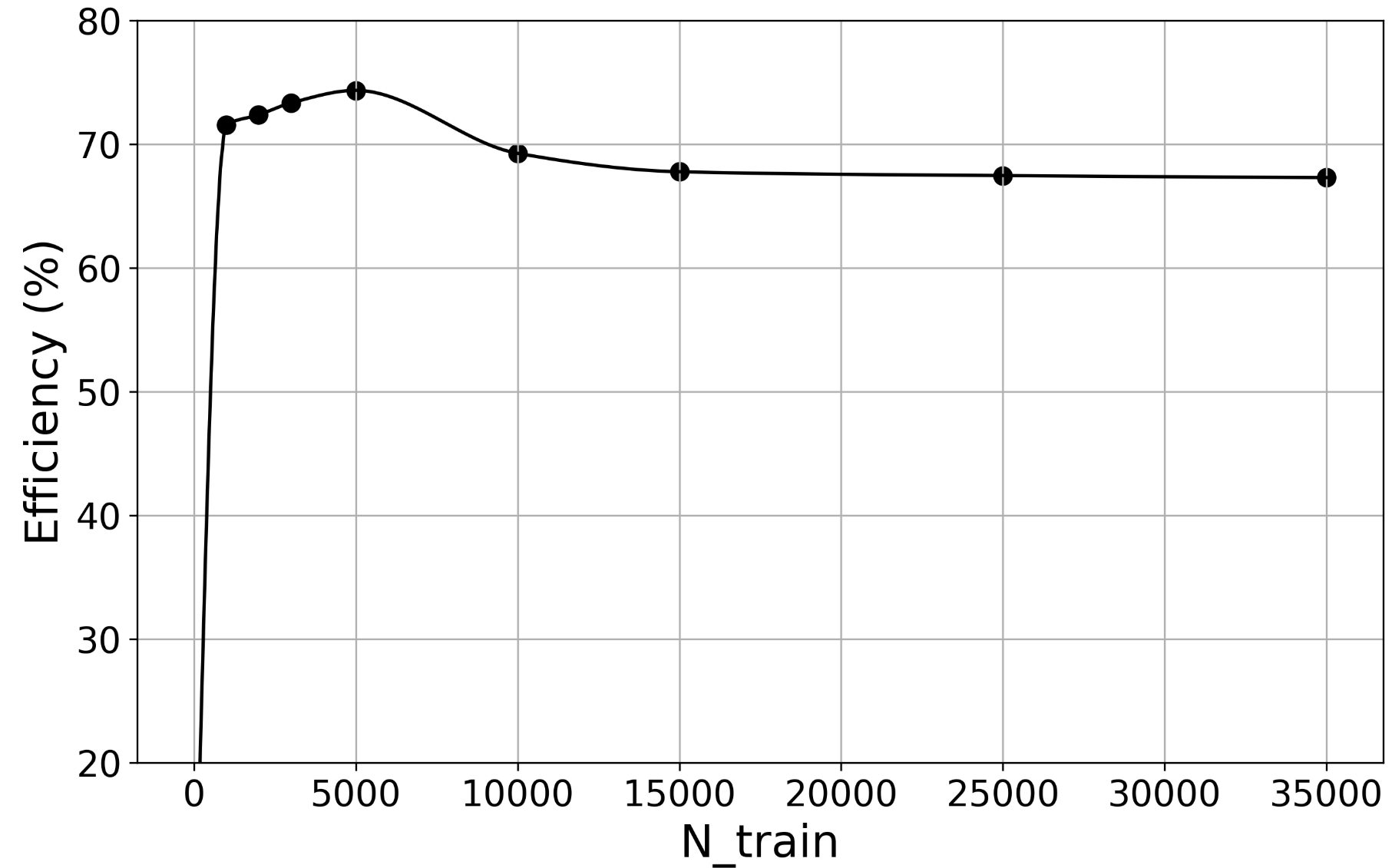}
        \caption{$\sigma = 0.25, \gamma = 1$}
        \label{fig:Sigma_0p25_gamma_1_embedding_cifar10}
    \end{subfigure}
    \hspace{3mm}
    \begin{subfigure}[b]{0.48\linewidth}
        \centering
        \includegraphics[width=\linewidth]{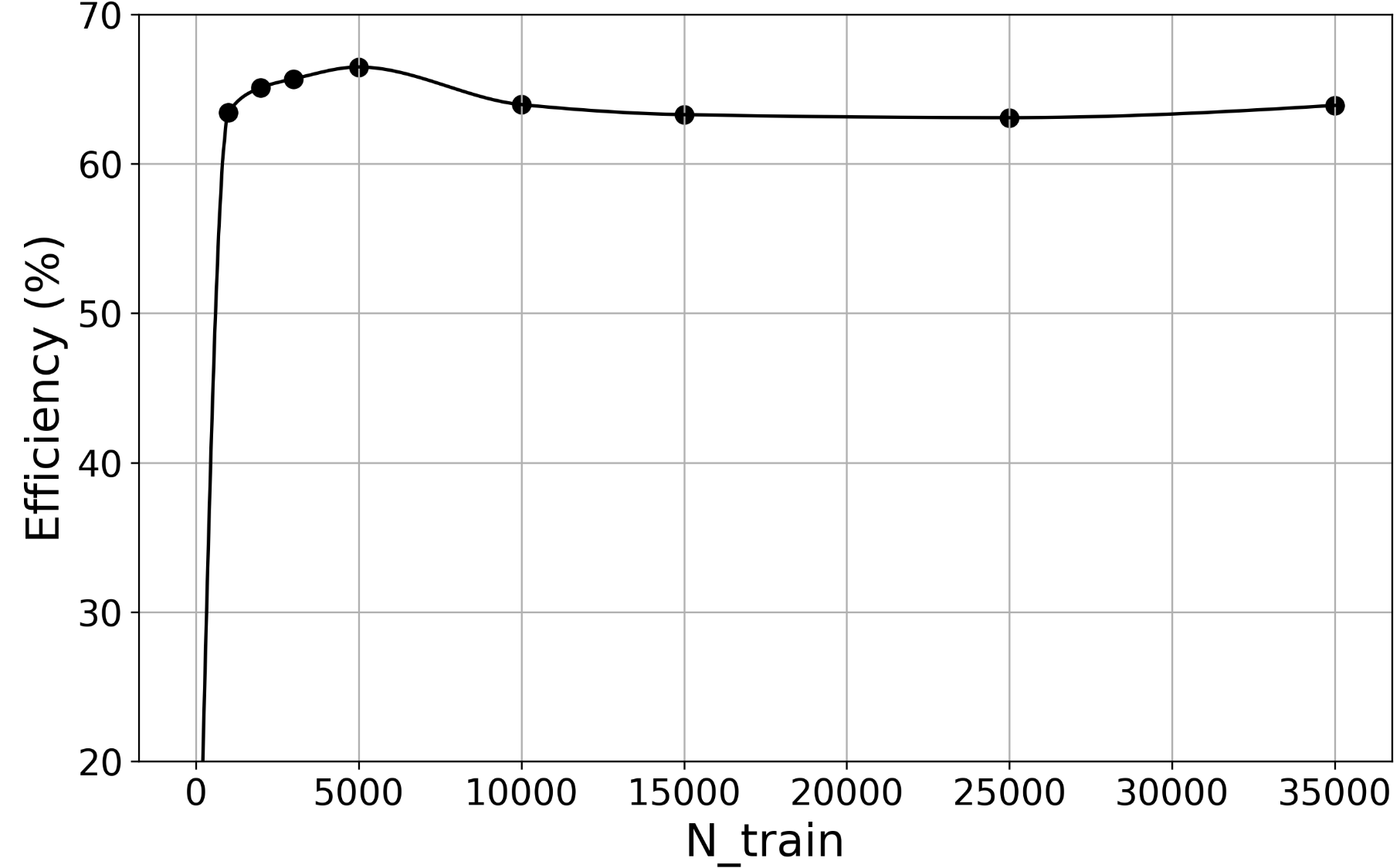}
        \caption{$\sigma = 0.4, \gamma = 1$}
        \label{fig:Sigma_0p4_gamma_1_embedding_cifar10}
    \end{subfigure}
    \caption{ 
    {\color{black} Noisy CIFAR-10 and pre-classification encoding. Efficiency versus $N_{\text{train}}$.}
    }
    \label{fig:Practical_Networks_Efficiency_encoding_cifar10}
\end{figure}

\section{Extended empirical verification}
\label{app:experiments_verification_extended}

In this section, we extend our empirical verification. In Figure \ref{fig:VerFor_S=1}, we simulate the theoretical setup, as in \ref{sec:empirical_verification} (we use $d = 2000, k=1000$ and $\sigma = 1$), but with $\mathcal{S}=1$. We see that the empirical efficiency coincides with the theoretical efficiency. 

\begin{figure}[t]
    \centering
    \includegraphics[width=0.7\linewidth]{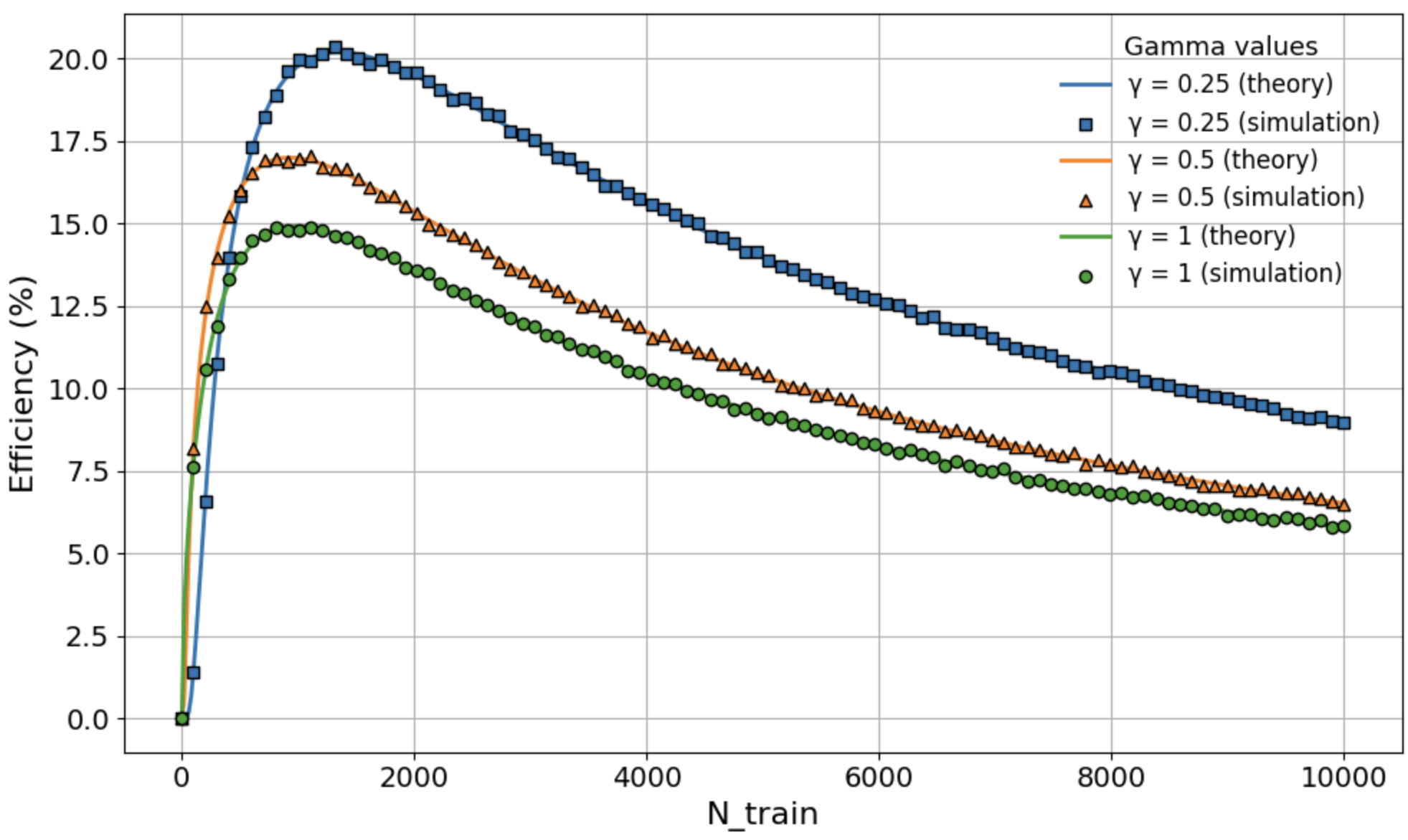}
    \caption{The theoretical setup. Efficiency of the data processing 
    procedure versus the number of training samples $N_{\text{train}}$, 
    for various values of the training imbalance factor $\gamma$, 
    and SNR of $\mathcal{S} = 1$.}
    \label{fig:VerFor_S=1}
\end{figure}

We now examine the effect of $\mathcal{S}$ on efficiency. We fix $\gamma = 1, d = 2000, k =1000$ and vary $\mathcal{S} \in \{0.5^2, 1, 1.5^2\}$. The results are presented in Figure \ref{fig:VerSweepS}. Let us discuss the results. We see that for $N_{\text{train}} \gg 1$ (in the right Figure), for larger SNR (lower noise level), the efficiency decreases. However, as \ref{thm:thm8} suggests, when the number of samples is limited, and for larger SNR (lower noise level), the efficiency increases. We notice this phenomenon in the left Figure, presenting low $N_{\text{train}}$, compared to the right Figure. First, the efficiency increases with the SNRs, and then as $N_{\text{train}}$ gets larger, the dependency flips. We also see a different behavior that sheds more light on this conclusion: the difference between different SNRs is larger in the low-samples region than the inverse relation in the high-samples region. This concludes the non-monotonic and non-intuitive dependency of the efficiency on the SNR.

\begin{figure}[t]
    \centering
    \begin{subfigure}[b]{0.4\linewidth}
        \centering
        \includegraphics[width=\linewidth]{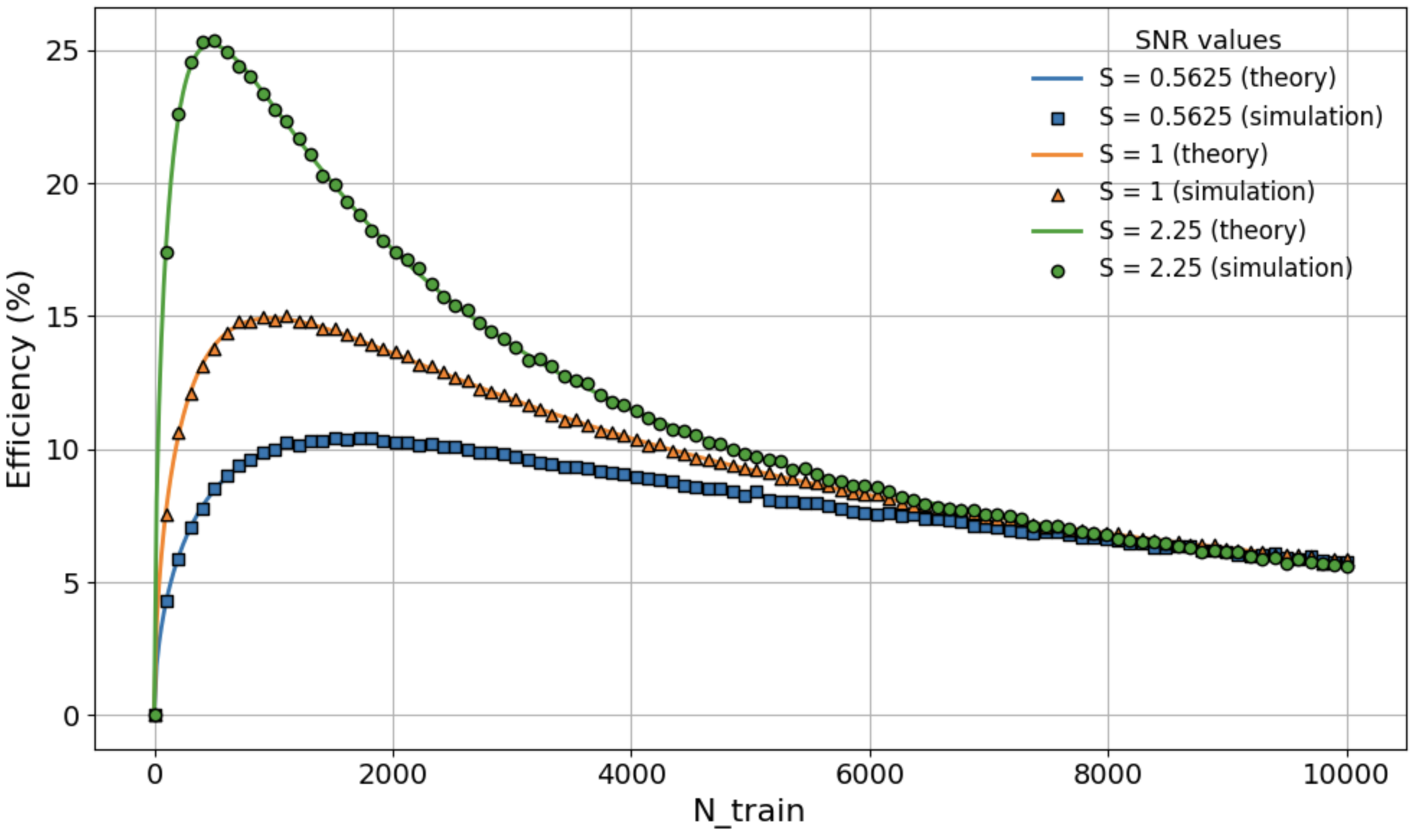}
        \caption{$N_{\text{train}} \le 10,000$}
        \label{fig:SweepS_lowN}
    \end{subfigure}
    \hspace{3mm}
    \begin{subfigure}[b]{0.4\linewidth}
        \centering
        \includegraphics[width=\linewidth]{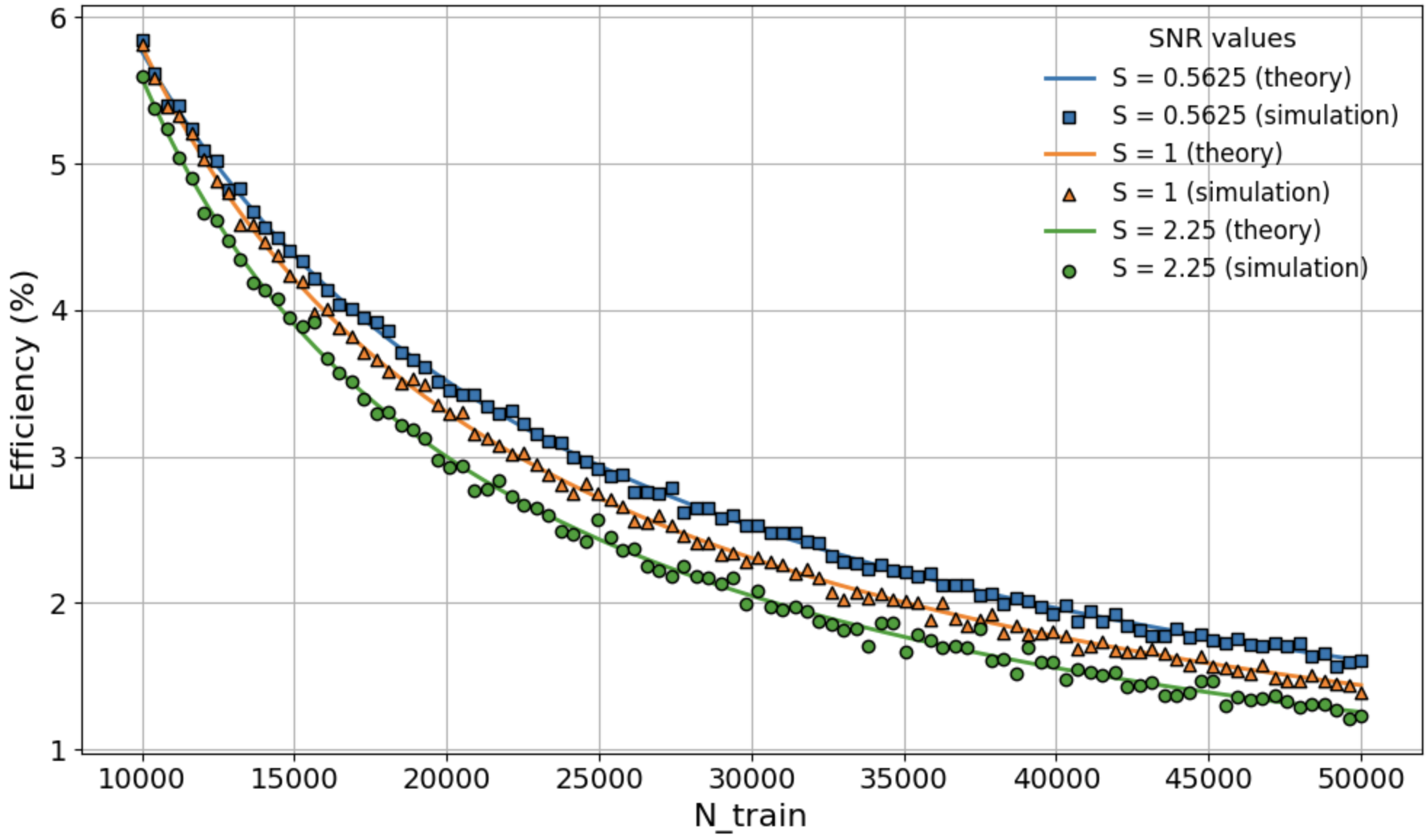}
        \caption{$10,000\le N_{\text{train}} \le 50,000$}
        \label{fig:SweepS_highN}
    \end{subfigure}
\vspace{-2mm}
    \caption{Extended simulation of the theoretical setup. Efficiency of the data processing versus the number of training samples $N_{\text{train}}$, for $\gamma = 1$, and various values of the SNR, $\mathcal{S}$, for (a) low samples regime, and (b) high samples regime.}
    \label{fig:VerSweepS}
\end{figure}

In addition, we examine the effect of $d - k$ on efficiency. We fix $\gamma = 1, d = 2000, \mathcal{S} = 1$ and vary $k \in \{500, 1000, 1500\}$. The results are presented in Figure \ref{fig:VerSweepK}. We see that larger $d - k$ corresponds to greater accuracy. Theorem \ref{thm:thm7} proves this for $N_{\text{train}} \gg 1$, but we see that this is true even for small $N_{\text{train}}$. Indeed, intuitively, reducing more dimensions is advantageous in terms of efficiency. This suggests that there is a direct monotonic relationship between the efficiency and $d - k$.

\begin{figure}
    \centering
    \includegraphics[width=0.5\linewidth]{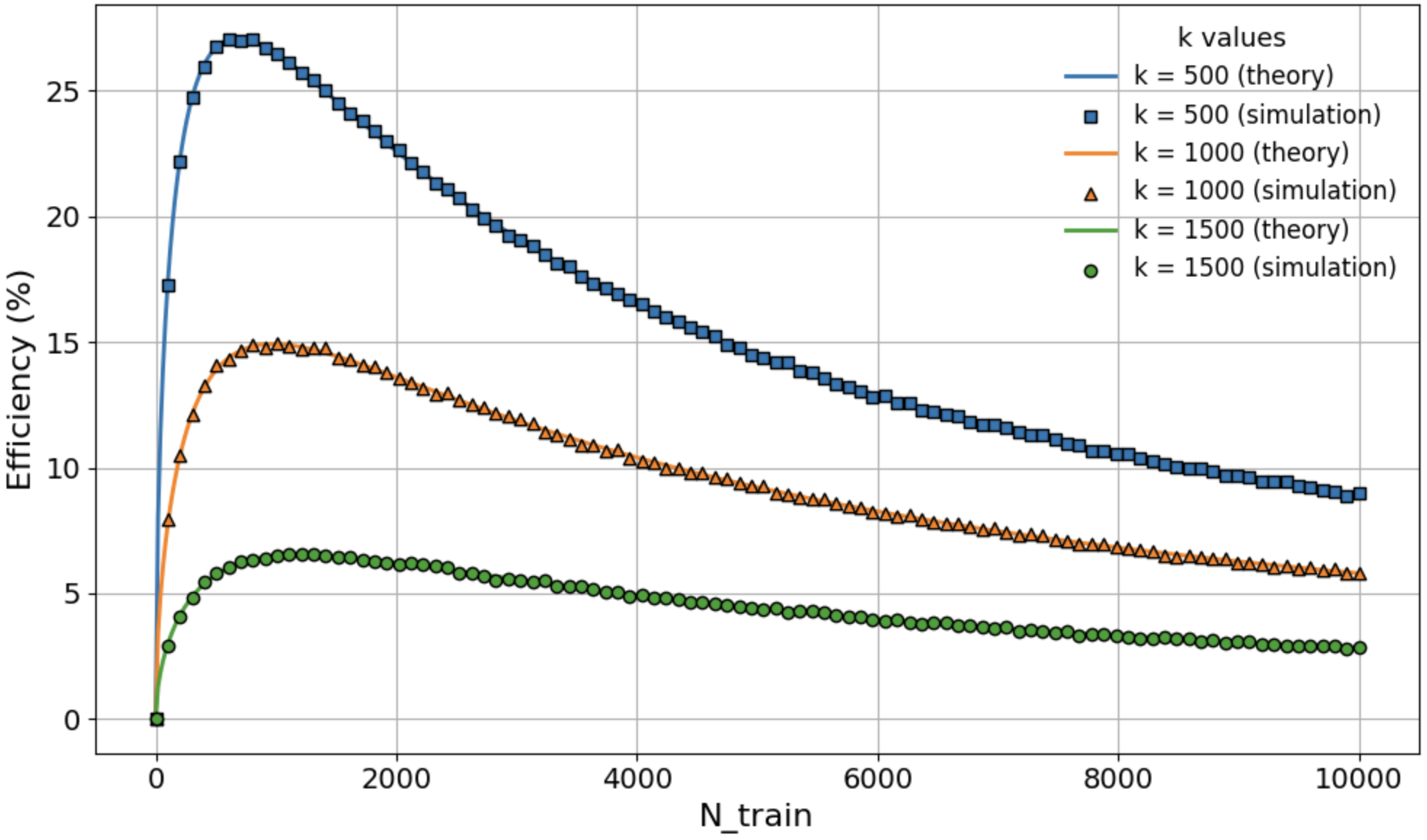}
    \vspace{-3mm}
    \caption{The theoretical setup. Efficiency of the data processing procedure versus the number of training samples $N_{\text{train}}$, for various values of the training imbalance factor, $\gamma$, and SNR of $\mathcal{S} = 1$.}
    \label{fig:VerSweepK}
\end{figure}

We now consider the same setting as in the empirical verification 
($d = 2000$, $k = 1000$, $\sigma = 1$, $\gamma \in \{0.25, 0.5, 1\}$, 
$\mathcal{S} \in \{0.75^2, 1, 1.5^2\}$), but per $N_{\text{train}}$, the data processing matrix $\boldsymbol{A}$ is learned from $50{,}000$ unlabeled samples using the algorithm described in the proof of Theorem~\ref{thm:thm3}. 
The corresponding results are shown in Figure~\ref{fig:LearnedA}, 
demonstrating the same trends as the theoretical efficiency. 
Moreover, as the number of unlabeled samples tends to infinity, the 
two curves coincide. To illustrate this, we also present results for the case that per $N_{\text{train}}$, the data processing matrix $\boldsymbol{A}$ is learned from $5{,}000{,}000$ unlabeled samples in 
Figure~\ref{fig:LearnedA_infiniteUnlabeledSamples}. Notice that as the amount of unlabeled samples available grows, the gap 
between the theoretical efficiency and the empirical efficiency is reduced.

\begin{figure}[t]
    \centering
    \begin{subfigure}[b]{0.49\linewidth}
        \centering
        \includegraphics[width=\linewidth]{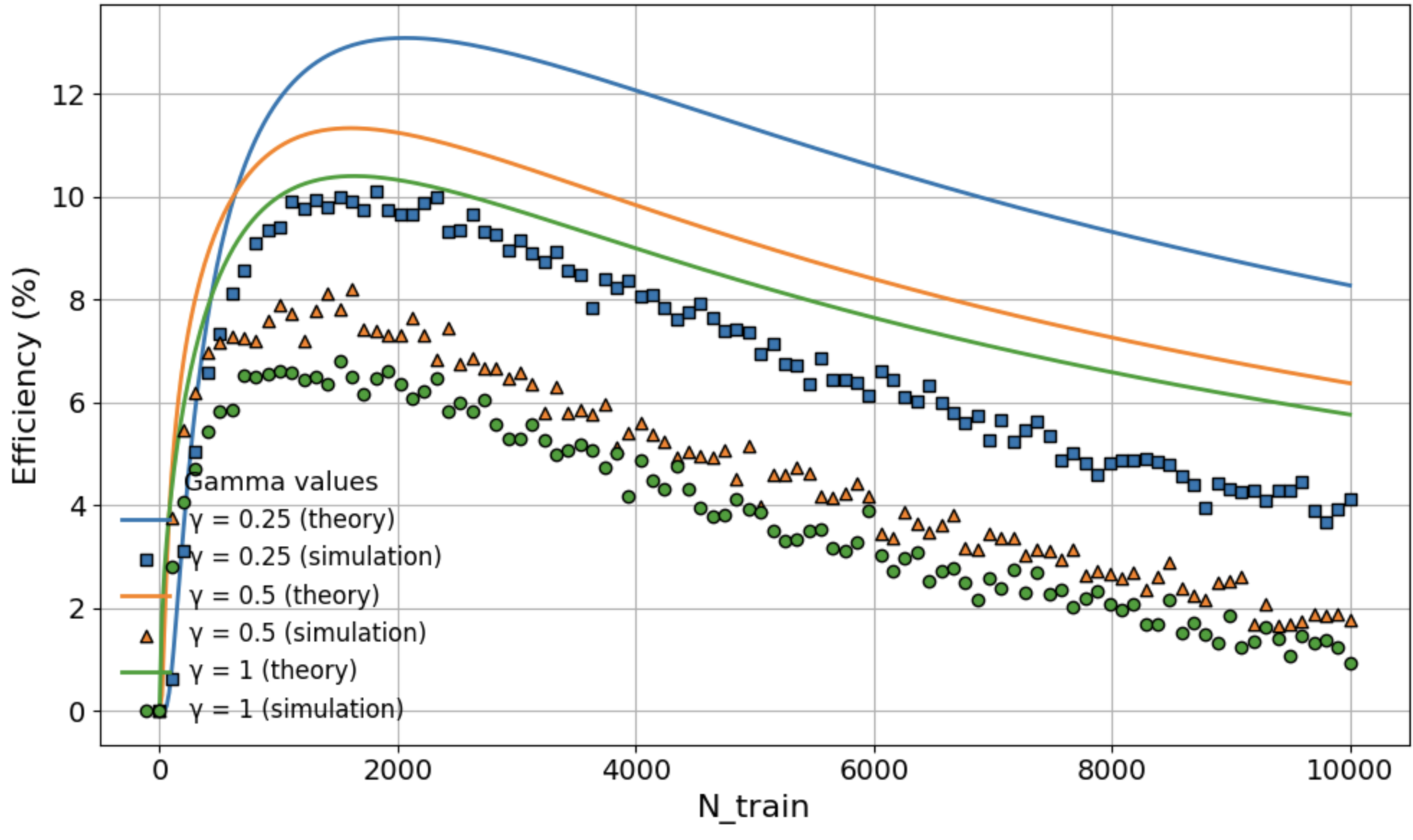}
        \caption{$\mathcal{S}=0.75^2$}
        \label{fig:S=0p5625_50000_unlabeled}
    \end{subfigure}
    \hfill
    \begin{subfigure}[b]{0.49\linewidth}
        \centering
        \includegraphics[width=\linewidth]{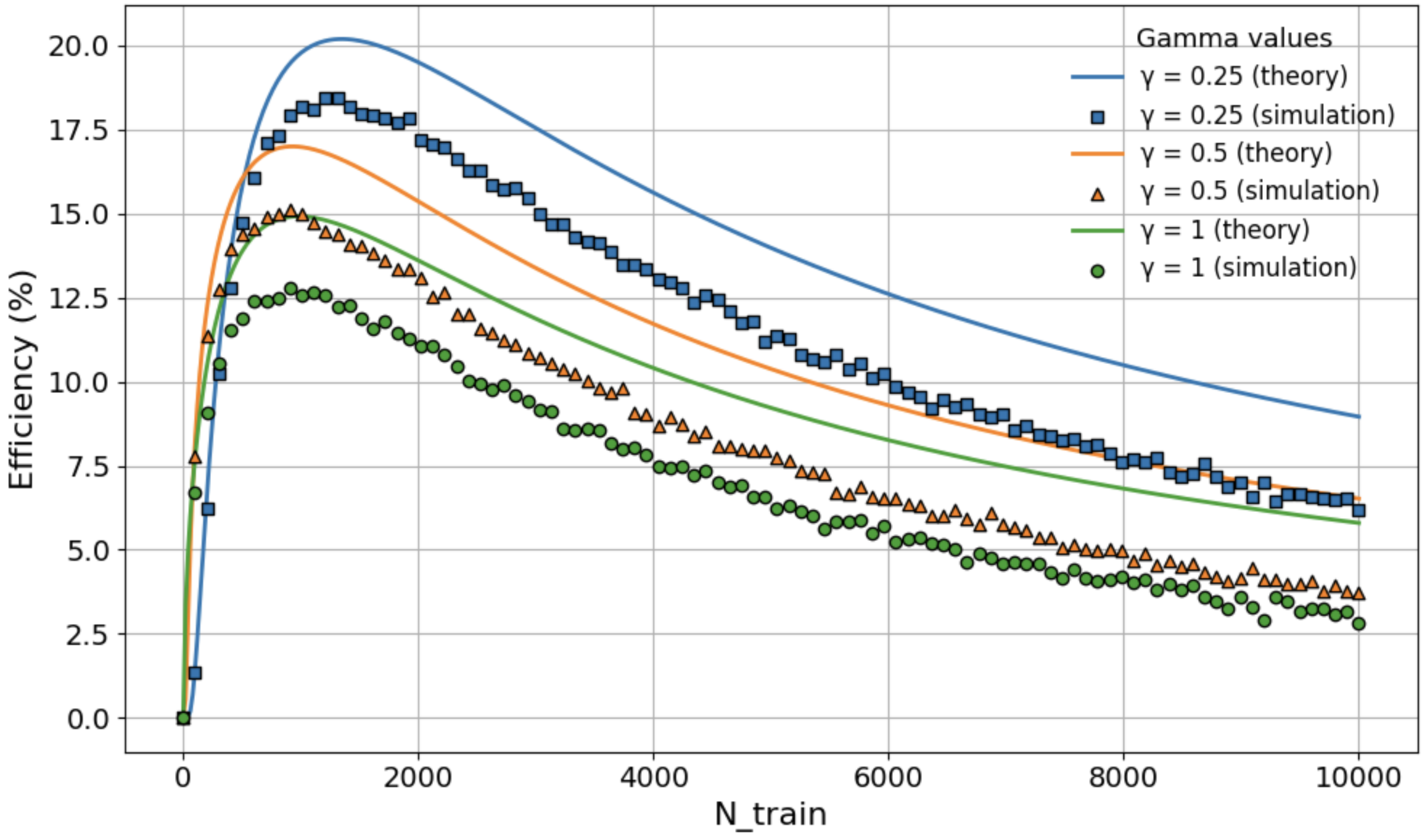}
        \caption{$\mathcal{S}=1$}
        \label{fig:S=1_50000_unlabeled}
    \end{subfigure}

    \vspace{0.5cm} %

    \begin{subfigure}[b]{0.49\linewidth}
        \centering
        \includegraphics[width=\linewidth]{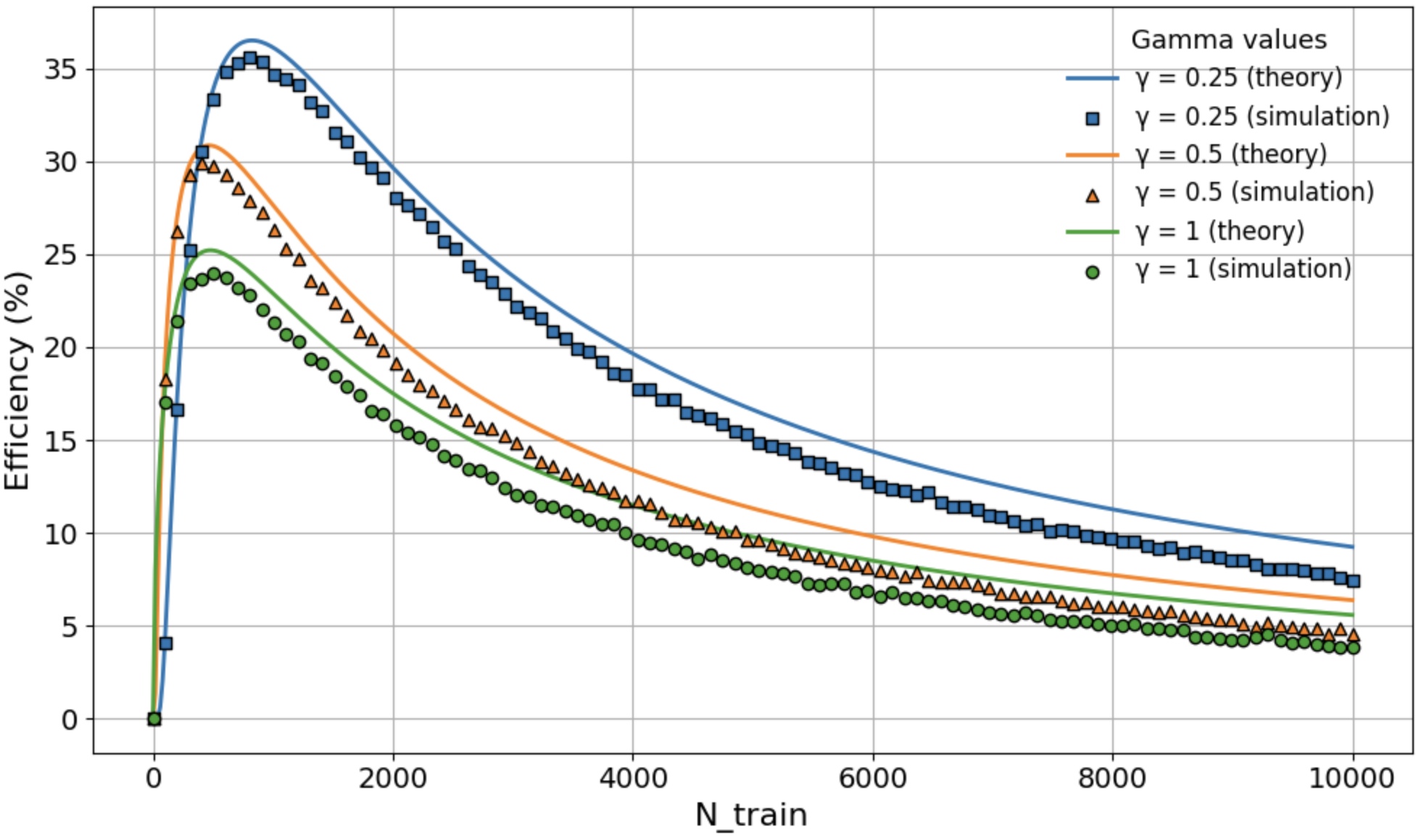}
        \caption{$\mathcal{S}=1.5^2$}
        \label{fig:S=2p25_50000_unlabeled}
    \end{subfigure}

\caption{Extended empirical verification - per $N_{\text{train}}, \boldsymbol{A}$ is learned from 50,000 unlabeled examples. Presented for (a) $\mathcal{S}=0.75^2$, (b) $\mathcal{S} =1$ and (c) $\mathcal{S}=1.5^2$.}

\label{fig:LearnedA}
\vspace{-5mm}

\end{figure}

\begin{figure}[t]
    \centering
    \begin{subfigure}[b]{0.49\linewidth}
        \centering
        \includegraphics[width=\linewidth]{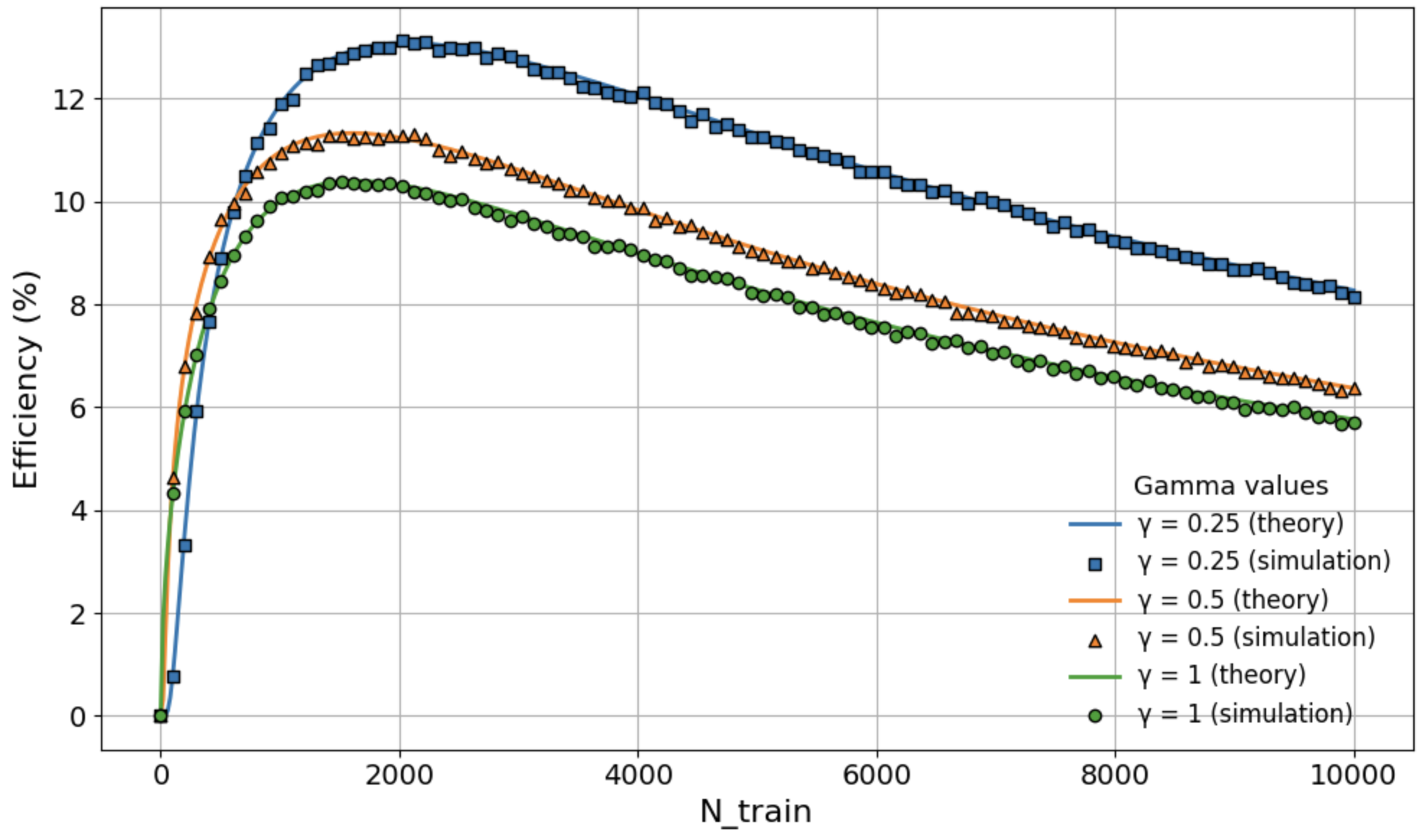}
        \caption{$\mathcal{S}=0.75^2$}
        \label{fig:S=0p5625_5000000_unlabeled}
    \end{subfigure}
    \hfill
    \begin{subfigure}[b]{0.49\linewidth}
        \centering
        \includegraphics[width=\linewidth]{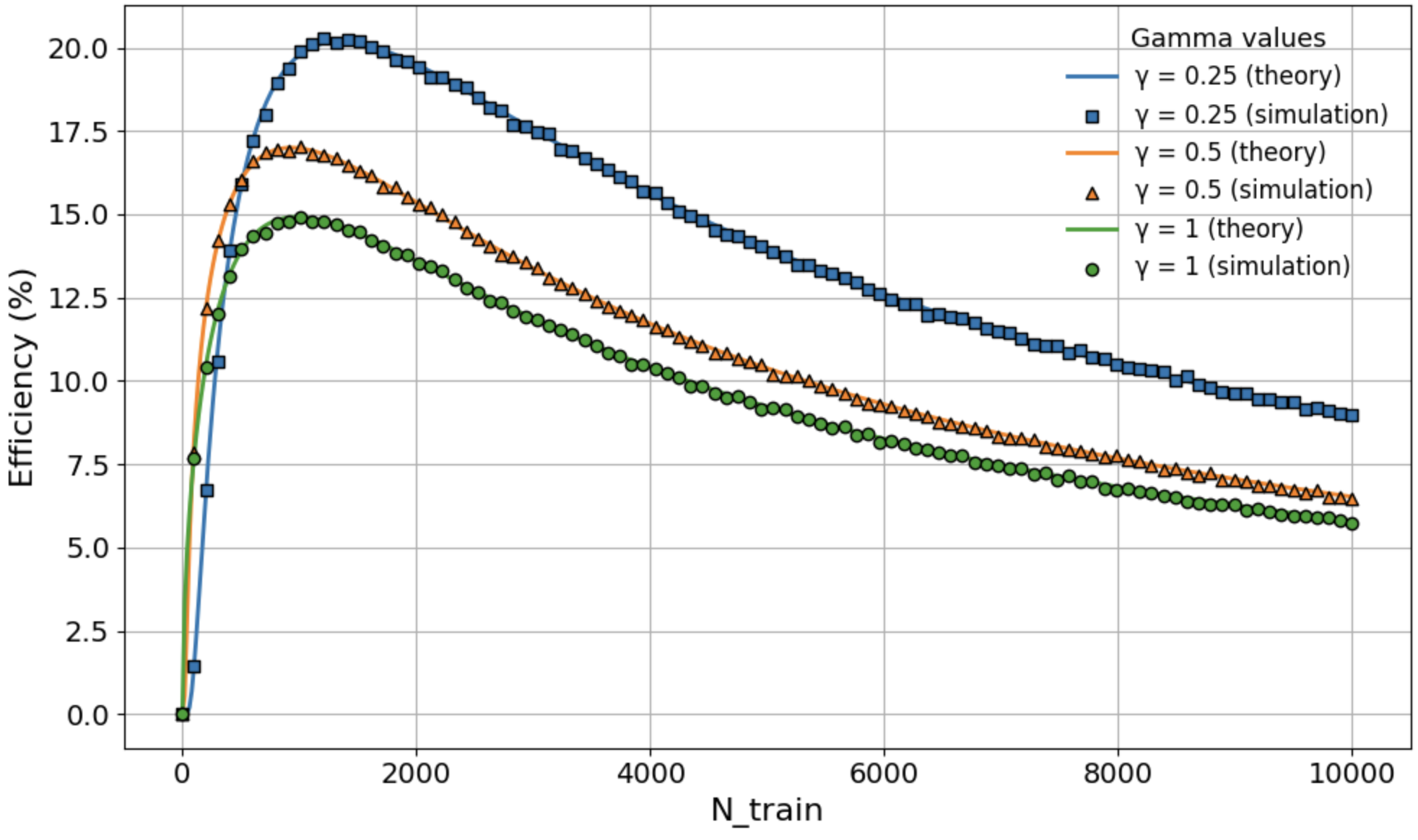}
        \caption{$\mathcal{S}=1$}
        \label{fig:S=1_5000000_unlabeled}
    \end{subfigure}

    \vspace{0.5cm} %

    \begin{subfigure}[b]{0.49\linewidth}
        \centering
        \includegraphics[width=\linewidth]{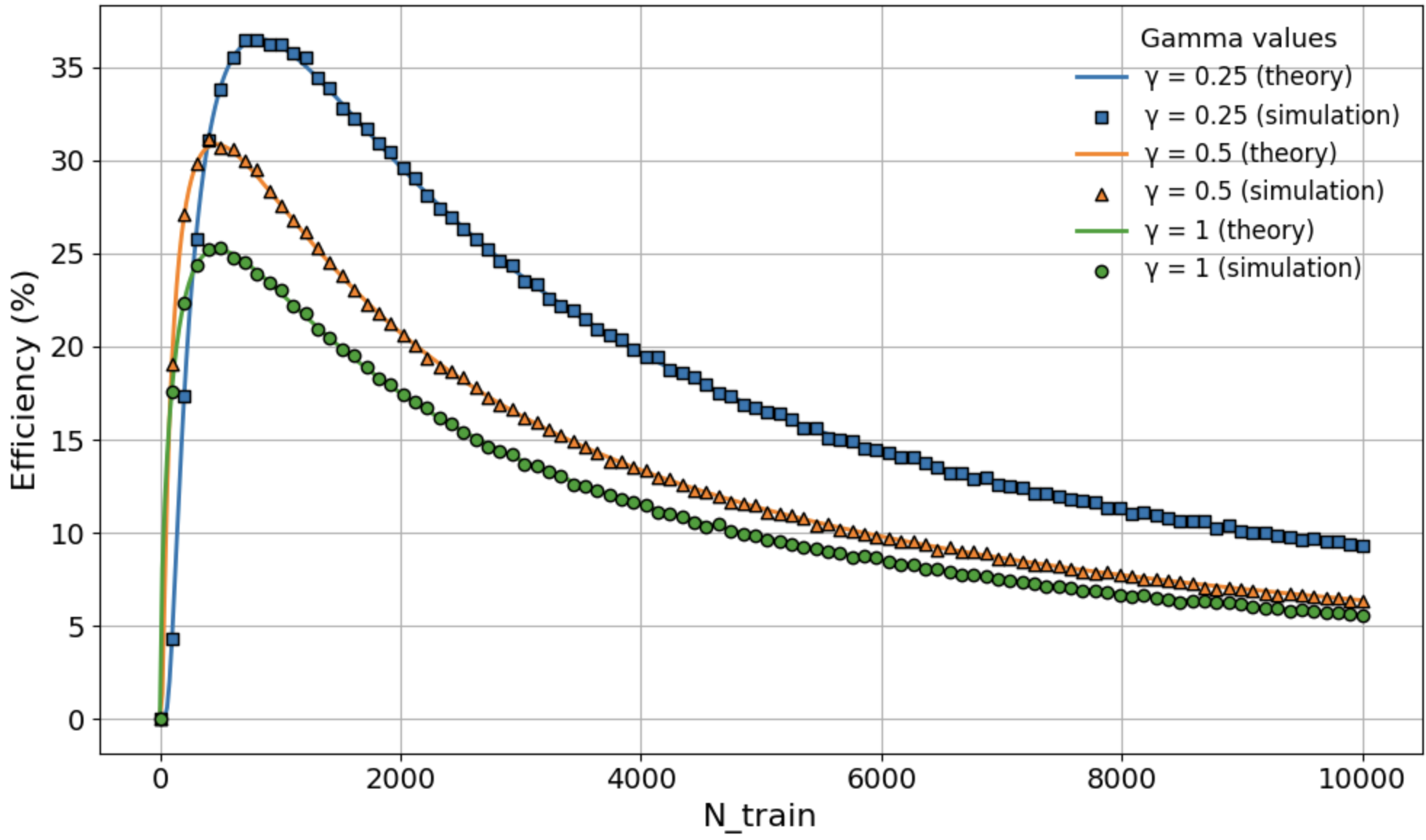}
        \caption{$\mathcal{S}=1.5^2$}
        \label{fig:S=2p25_5000000_unlabeled}
    \end{subfigure}

\caption{Extended empirical verification - per $N_{\text{train}}, \boldsymbol{A}$ is learned from 5,000,000 unlabeled examples. Presented for (a) $\mathcal{S}=0.75^2$, (b) $\mathcal{S} =1$ and (c) $\mathcal{S}=1.5^2$.}

\label{fig:LearnedA_infiniteUnlabeledSamples}

\end{figure}

{\color{black} Finally, we visualize the action of $\boldsymbol{A}: \mathbb{R}^2 \to \mathbb{R}$ on the GMM data in Figure~\ref{fig:A_visualization}. While our analysis considers the regime $d > k \gg 1$, as is common in practice, we use small values of $d=2$ and $k=1$ to enable visualization.

Recall that the plug-in classifier, before and after the data processing, depends only on the distance of a test sample from each of the empirical means.
Without processing, these empirical means are given by
$\widehat{\boldsymbol{\mu}}_j = \tfrac{1}{N_j} \sum \nolimits_{i=1}^{N_j} \vct{x}_{i,j}$ and hence distributed as $\widehat{\boldsymbol{\mu}}_j \sim \mathcal{N}\left(\boldsymbol{\mu}_j, \tfrac{\sigma^2}{N_j} \boldsymbol{I}_d\right)$.
Similarly, after processing by $\boldsymbol{A}:\mathbb{R}^d \to \mathbb{R}^k$, the empirical means obey $\boldsymbol{A}\widehat{\boldsymbol{\mu}}_j \sim  \mathcal{N}\left(\boldsymbol{A}\boldsymbol{\mu}_j, \tfrac{\sigma^2}{N_j} \boldsymbol{I}_k\right)$, where the semi-orthonormality $\boldsymbol{A} \boldsymbol{A}^\top = \boldsymbol{I}_k$ is used.
Consequently,
\begin{align*}
    \mathbb{E}\left[\norm{\widehat{\boldsymbol{\mu}}_j - \boldsymbol{\mu}_j}^2\right] &= \sum_{i=1}^d \mathbb{E}\left[\left([\widehat{\boldsymbol{\mu}}_j]_i - [\boldsymbol{\mu}_j]_i\right)^2\right]
    = \sum_{i=1}^d \frac{\sigma^2}{N_j}
    = \frac{\sigma^2}{N_j}d
\end{align*}
where we used
    $\left[\widehat{\boldsymbol{\mu}}_j\right]_i - \left[\boldsymbol{\mu}_j\right]_i \sim \mathcal{N}\left(0, \tfrac{\sigma^2}{N_j}\right)$.
Similarly, $\mathbb{E}\left[\norm{\boldsymbol{A}\widehat{\boldsymbol{\mu}}_j - \boldsymbol{A}\boldsymbol{\mu}_j}^2\right]  
    = \frac{\sigma^2}{N_j}k$.
    
The data processing lowers the dimension from $d$ to $k$, and thus improves the average squared error of the mean estimator by 
\begin{equation*}
    \frac{\sigma^2}{N}(d-k) > 0.
\end{equation*}
Since the classifier, before and after the data processing, depends only on the distance of the test sample from each of the empirical means, its accuracy increases when the accuracy of the empirical means improves while the distance between the means of the difference classes does not significantly reduce (i.e., $\|\boldsymbol{A}\widehat{\boldsymbol{\mu}}_2 - \boldsymbol{A}\widehat{\boldsymbol{\mu}}_1 \| \approx \| \widehat{\boldsymbol{\mu}}_2 - \widehat{\boldsymbol{\mu}}_1 \|$ ). 
The latter is accounted for by the property $\|\boldsymbol{A} \boldsymbol{\mu}\|=\| \boldsymbol{\mu} \|$ of the operator. 

Note that this behavior is observed in Figure~\ref{fig:A_visualization}: 
\begin{itemize}
    \item In the red class, the distance between the empirical mean and the real mean is 0.4632 before applying $\boldsymbol{A}$ and 0.39 after applying $\boldsymbol{A}$.

    \item In the blue class, the distance between the empirical mean and the real mean is 0.439 before applying $\boldsymbol{A}$ and 0.38 after applying $\boldsymbol{A}$.

    \item The distance between the empirical means of the different classes is 4.0374 before applying $\boldsymbol{A}$ and 4.01 after applying $\boldsymbol{A}$. Indeed, both are close to the distance between the real means, which is 4.
\end{itemize}

That is, $\boldsymbol{A}$ preserves the separation quality of the classes, while improving the estimation quality of $\boldsymbol{\mu}$.}

\begin{figure}[t]
    \centering
    \begin{subfigure}[b]{1\linewidth}
        \centering
        \includegraphics[width=0.5\linewidth]{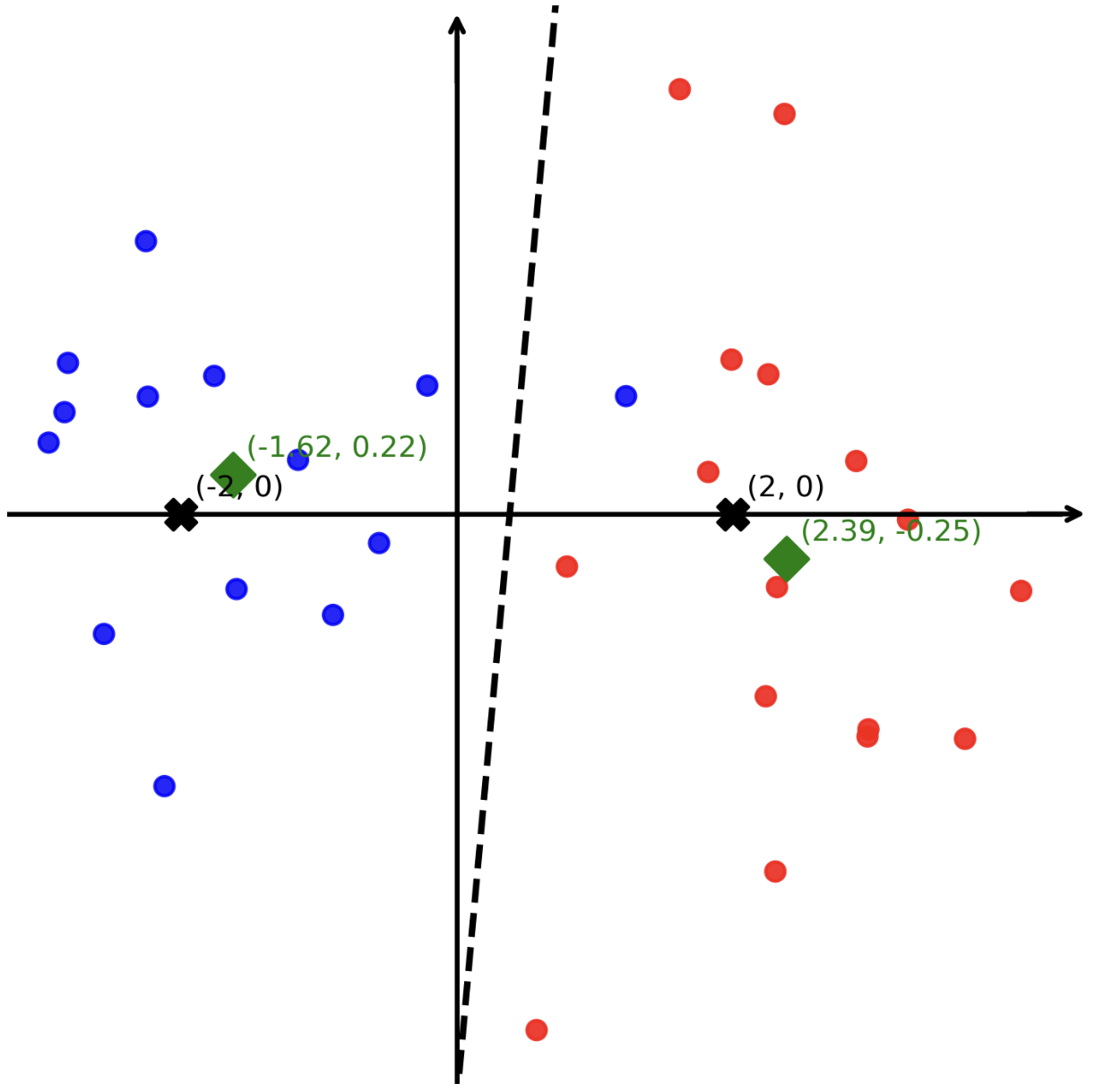}
        \caption{The original data, sampled from GMM in $\mathbb{R}^2$ with $\boldsymbol{\mu}_1 = -\boldsymbol{\mu}_2 = (2,0)^\top$, $\sigma=1$, and $N_{\text{train}} = 30$. The true means $\{ \boldsymbol{\mu}_1, \boldsymbol{\mu}_2 \}$ are marked by black `X's and the empirical means $\{ \hat{\boldsymbol{\mu}}_1, \hat{\boldsymbol{\mu}}_2 \}$ are marked by green diamonds. The learned decision boundary is marked by the dashed line (determined by the distance to the empirical means).}
        \label{fig:R2_synthetic}
    \end{subfigure}
\\
    \vspace{0.5cm}
    \begin{subfigure}[b]{1\linewidth}
        \centering
        \includegraphics[width=0.5\linewidth]{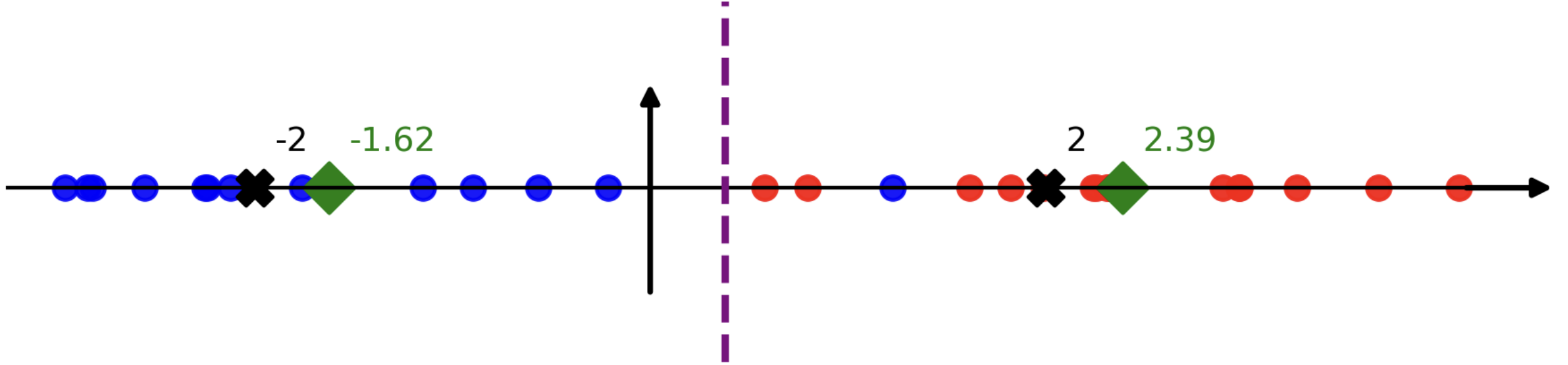}
        \caption{The data after applying $\boldsymbol{A} \in \mathbb{R}^{1\times 2}$ (in this case, projection onto the x-axis). As before, the true means $\{ \boldsymbol{A}\boldsymbol{\mu}_1, \boldsymbol{A}\boldsymbol{\mu}_2 \}$ are marked by black `X's and the empirical means $\{ \boldsymbol{A}\hat{\boldsymbol{\mu}}_1, \boldsymbol{A}\hat{\boldsymbol{\mu}}_2 \}$ are marked by green diamonds.}
        \label{fig:R1_synthetic_postA}
    \end{subfigure}

\caption{Visualization of the effect of $\boldsymbol{A}:\mathbb{R}^2 \to \mathbb{R}$. Note that the empirical means (green diamonds) are closer to the true means (black `X's) after the operation $\boldsymbol{A}$. The distance between the empirical means of the different classes remains similar.} %

\label{fig:A_visualization}

\end{figure}

\end{document}